\pgfplotsset{compat=1.17}
\definecolor{steelblue}{rgb}{0.27, 0.51, 0.71}
    \edef\sign{\pgfmathresult}%
    \edef\x{\pgfmathresult}%
    \edef\t{\pgfmathresult}%
    \edef\y{\pgfmathresult}%
\newcommand{\EE}{\mathbb{E}}
\newcommand{\RR}{\mathbb{R}}
\newcommand{\CC}{\mathbb{C}}
\newcommand{\NN}{\mathbb{N}}
\newcommand{\eps}{\varepsilon}
\DeclareMathOperator{\Tr}{Tr}
\newcommand\aug{\fboxsep=-\fboxrule\!\!\!\fbox{\strut}\!\!\!}
\DeclareMathOperator{\1}{\mathds{1}}
\DeclareMathOperator*{\argmin}{arg\,min}
\newcommand{\dd}{\mathrm{d}}
\DeclareMathOperator{\trace}{tr}
\newcommand{\dv}[2]{\frac{\dd#1}{\dd#2}}
\newcommand{\abs}[1]{\left\lvert#1\right\rvert}
\newcommand{\norm}[1]{\left\lVert#1\right\rVert}
\DeclarePairedDelimiterX{\infdivx}[2]{(}{)}{%
  #1\;\delimsize\|\;#2%
}
\DeclareMathOperator*{\plim}{plim}
\newcommand{\diag}[1]{\textsc{diag}\left(#1\right)}
\newcommand{\lp}{\left(} % left parenthesis
\newcommand{\rp}{\right)} % right parenthesis
\def\thesection{\arabic{section}}
\def\thesubsection{\arabic{section}.\arabic{subsection}}
\def\thesubsubsection{\arabic{section}.\arabic{subsection}.\arabic{subsubsection}}
\renewcommand{\p@subsection}{}
\renewcommand{\p@subsubsection}{}
\renewcommand\@makecaption[2]{%
  \par
  \vskip\abovecaptionskip
  \begingroup
   \small\rmfamily
    \begingroup
     \samepage
     \flushing
     \let\footnote\@footnotemark@gobble
     \@make@capt@title{#1}{#2}\par
    \endgroup
  \endgroup
  \vskip\belowcaptionskip
}
\newtheorem{theorem}{Theorem}[section]
\newtheorem{proposition}{Proposition}[section]
\newtheorem{lemma}{Lemma}[section]
\theoremstyle{definition}
\newtheorem{definition}{Definition}[section]
\theoremstyle{remark}
\newtheorem{remark}{Remark}[section]
\theoremstyle{remark}
\newtheorem{example}{Example}[section]
\theoremstyle{definition}
    \newwrite\bibnotes
    \def\bibnotesext{Notes.bib}
\write\bibnotes{@CONTROL{REVTEX41Control}}
\write\bibnotes{@CONTROL{%
    apsrev41Control,author="08",editor="1",pages="1",title="0",year="1"}}
\write\@auxout{\string\citation{apsrev41Control}}%
\begin{document}

\title{Precise asymptotic analysis of Sobolev training for random feature models}

\author{Katharine Fisher\,\orcidlink{0000-0002-9655-984X}}
\email{kefisher@mit.edu}
\affiliation{Center for Computational Science and Engineering, Massachusetts Institute of Technology, Cambridge, MA 02139, USA}

\author{Matthew T.C.\ Li\,\orcidlink{0009-0002-4049-1943}}
\email{mtcli@umass.edu}
\affiliation{Department of Mathematics and Statistics, University of Massachusetts Amherst, Amherst, MA 01003, USA}

\author{Youssef Marzouk\,\orcidlink{0000-0001-8242-3290}}
\email{ymarz@mit.edu}
\affiliation{Center for Computational Science and Engineering, Massachusetts Institute of Technology, Cambridge, MA 02139, USA}

\author{Timo Schorlepp\,\orcidlink{0000-0002-9143-8854}}
\email{timo.schorlepp@nyu.edu}
\affiliation{Courant Institute of Mathematical Sciences, New York University, New York, NY 10012, USA}

\date{\today}

\begin{abstract}
%%%%%%%%%%%%%%%%%%%%%%%%%%%%%%%%%%%%%%%%%%%%%%%%%%%
Gradient information is widely useful and available in applications, and is therefore natural to include in the training of neural networks.
Yet little is known theoretically about the impact of Sobolev training---regression with both function and gradient data---on the generalization error of highly overparameterized predictive models in high dimensions.
%%%%%%%%%%%%%%%
In this paper, 
we obtain a precise characterization of this training modality for random feature (RF) models 
in the limit where the number of trainable parameters, input dimensions, and training data tend proportionally to infinity.
Our model for Sobolev training reflects practical implementations
by sketching gradient data onto finite dimensional subspaces.
By combining the replica method from statistical physics with linearizations in operator-valued free probability theory, we derive a closed-form description for the generalization errors of the trained RF models.
%%%%%%%%%%%%%%%%
For target functions described by single-index models, we demonstrate that supplementing function data with additional gradient data does not universally improve predictive performance. Rather, the degree of overparameterization should inform the choice of training method.
More broadly, our results identify
settings where models perform optimally by interpolating noisy function and gradient data.
%%%%%%%%%%%%%%%%%%%%%%%%%%%%%%%%%%%%%%%%%%%%%%%%%%%
\keywords{random feature model, replica method, operator-valued free probability, precise asymptotic generalization, derivative-informed training}
\end{abstract}

\maketitle

\tableofcontents

%%%%%%%%%%%%%%%%%%%%%%%%%%%%%%%%%%%%%%%%%%%%%%%%%%%%%%%%%%%%%%%

\section{Introduction}
\label{sec:intro}

\begin{figure}
\centering
\includegraphics[width=.8\textwidth]{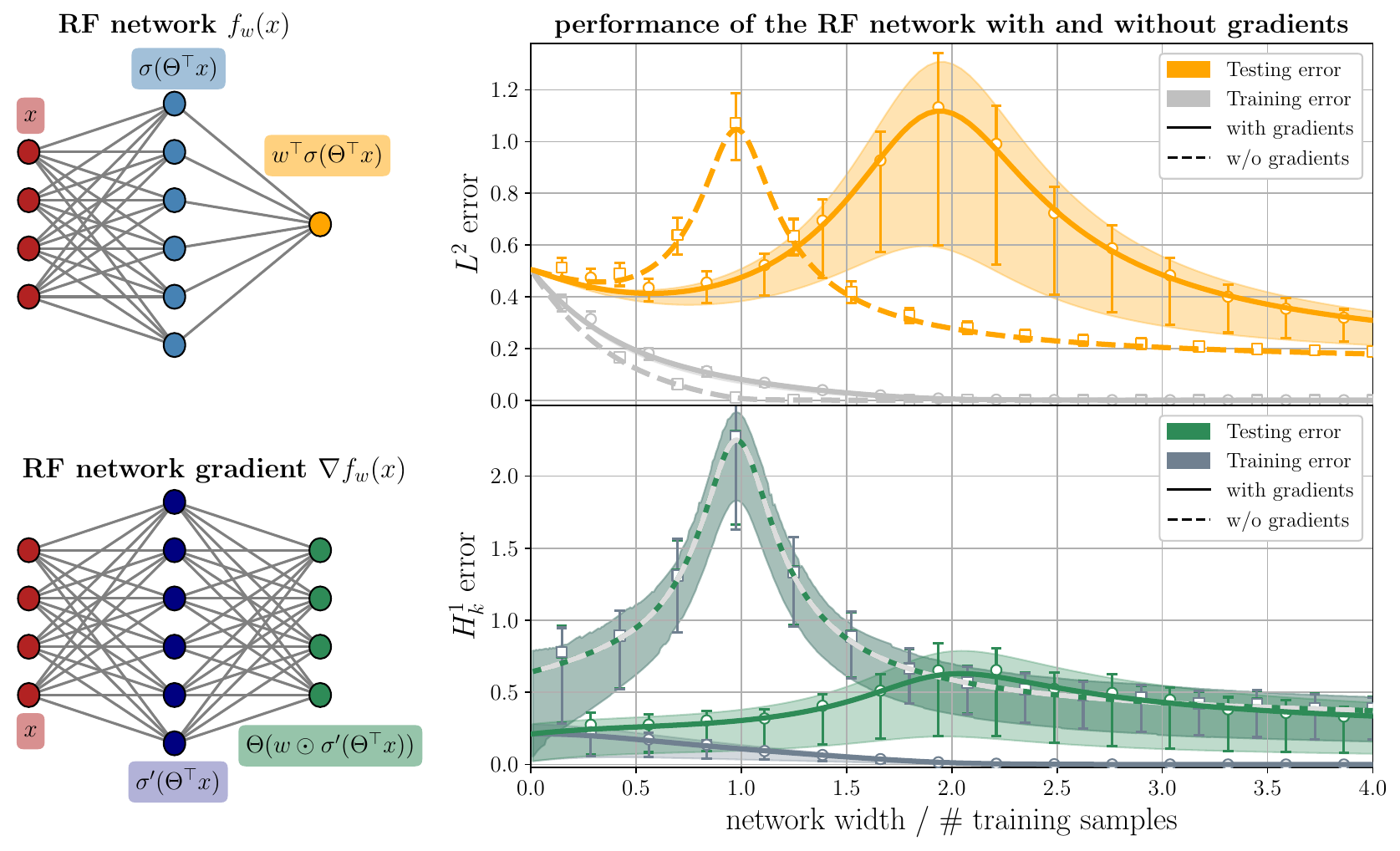}
\caption{Illustration of the single hidden-layer RF model (left column) and its generalization performance in the high-dimensional limit (right column). In the right subfigures, lines correspond to theoretical predictions, while squares and circles show the mean over 1000 Monte Carlo samples in dimension $d=100$ (with error bars at $25\%$ and $75\%$ quantiles of the data). The horizontal axis is $p/n$.
The dashed lines and squares correspond to least squares minimization of the readout weights $w$ using only function data, while solid lines and circles indicate Sobolev training where additional gradient information is used. Shaded regions cover the predicted $25\%$ and $75\%$ quantiles, while thick lines represent the mean. $L^2$ error (top right) refers to the mismatch in predicted function values, while the $H^1_k$ semi-norm error (bottom right) is the gradient mismatch when projected onto the $k$-dimensional subspace used for training. Numerical details (cf.~Section~\ref{sec:setup}): regularization $\lambda = 0.001$, no observational noise, ridge function $\phi(\omega) = \arctan(\omega) + 1 / \cosh(\omega)$, activation function $\sigma = \text{ReLU}$, $k =1$ gradient sketches, $\tau = 1$ gradient term weight, $n/d = 2.345$ number of samples per dimension.}
\label{fig:intro}
\end{figure}

Gradients of a function encode valuable information about its local structure, such as smoothness and sensitivities. An intuitive folklore is that if gradient data are available, they ought to be incorporated into the training of a predictive model.
In line with this reasoning, Sobolev training~\cite{czarnecki-osindero-jaderberg:2017} consists of matching neural network gradients to gradient data in the training loss, in addition to matching the network itself to function data through a standard $L^2$ loss.\footnote{An earlier term for Sobolev training is ``Hermite learning''~\cite{zhouDerivativeReproducingProperties2008,shiHermiteLearningGradient2010}, after Hermite interpolation.} 
This technique has been adopted in many scientific fields where gradients are a target of interest or are accessible either through direct observation, e.g., as in meteorology~\cite{sunUseNWPNowcasting2014} or econometrics~\cite{hallNonparametricEstimationWhen2007}, or through computation~\cite{plessix:2006,Margossian2019,Pulay2014}.
For instance, gradients of energy functions are routinely used to construct machine-learned interatomic potentials, which are crucial in multiscale materials modeling~\cite{behler:2016,zhang-han-wang-etal:2018,deringer-caro-csanyi:2019}.
Derivative informed neural operators (DINOs) find solution maps for high dimensional partial differential equations (PDEs) which empirically outperform  standard neural operators~\cite{o-leary-roseberry-chen-villa-etal:2024,luo-o-leary-roseberry-chen-etal:2025,qiu-bridges-chen:2024,cho-ryu-hwang:2024}.
Additional applications encompass engineering design~\cite{bouhlel-he-martins:2020}, elastoplasticity~\cite{vlassis-sun:2021,vlassis-zhao-ma-etal:2021}, computational finance~\cite{kichler-afghan-naumann:2024}, chaotic dynamical systems~\cite{park-yang-chandramoorthy:2024}, optimal control~\cite{nakamura-zimmerer-gong-kang:2021,onken-nurbekyan-li-etal:2022}, as well as canonical machine learning tasks such as model distillation and transfer learning~\cite{srinivas-fleuret:2018}, and many others~\cite{zagoruyko2016paying,hoffman-roberts-yaida:2019,atzmon-lipman:2020,tsay:2021,rosemberg-garcia-bent-etal:2025}.\\

These reported empirical successes reinforce the belief that gradient data produces better predictions, but theory has yet to delineate which---if any---prediction problems certifiably benefit from Sobolev training.  We address this gap by applying the replica method~\cite{mezard-parisi-virasoro:1987}, an analytical tool originating from the statistical physics of disordered systems, to derive the \emph{first asymptotically exact characterization of Sobolev training in a high-dimensional regime.}\\

As has been much remarked~\cite{zhang2017UnderstandingDL, zhangUnderstandingDeepLearning2021,belkin-hsu-ma-etal:2019,yang-yu-you-etal:2020}, theory has not fully demystified the impressive ability of neural networks to generalize to unseen data even under $L^2$ training. In particular, modern architectures can interpolate their training sets because their parameters vastly outnumber available data ~\cite{zhang2017UnderstandingDL, zhangUnderstandingDeepLearning2021,petersenMathematicalTheoryDeep2024}.  Models with the minimum capacity necessary to ``memorize'' training data fail to generalize, but increasing the size of these models allows them to find solutions with lower test error---thus ``benignly'' overfitting even noisy training sets~\cite{bartlett:2020}. This learning behavior creates a double descent curve, rigorously documented in neural architectures~\cite{belkin-hsu-ma-etal:2019,nakkiranDeepDoubleDescent2021}, kernel methods~\cite{rakhlinConsistencyInterpolationLaplace2019,liangMultipleDescentMinimumNorm2020}, and linear regression~\cite{bartlett:2020,hastie-montanari-rosset-etal:2022}. Crucially, the second descent may plateau to a lower error than that of any comparable underparameterized model. As a natural step towards understanding this behavior of $L^2$ training for nonlinear maps,
the random feature (RF) model~\cite{rahimi-recht:2007}, a two layer network where interior parameters are randomly selected and frozen, serves as a key exemplar for which theoretical results can be obtained~\cite{dhifallah-lu:2020,dascoli-sagun-biroli:2020,gerace-loureiro-krzakala-etal:2021,mei-montanari:2022,goldt-loureiro-reeves-etal:2022,hastie-montanari-rosset-etal:2022}.\\

Incorporating gradients via Sobolev training further challenges our intuition. Since gradients also carry implicit information about function values, it is not obvious whether optimal generalization requires overparameterized models. It is even unclear \textit{a priori} whether benign overfitting can still occur when the network interpolates both the function and gradient data, possibly in the presence of correlated observational noise. Consequently, we aim to elucidate whether the additional information from gradient data supports benign overfitting, and whether underparameterization---or even $L^2$ training---would be preferred over this modality. To investigate such questions, we extend to the Sobolev setting the techniques used to obtain precise asymptotic characterizations of the $L^2$ training and generalization errors of RF models when the input dimension $d$, the number of trainable parameters $p$, and the number of training data points $n$ are taken proportionally to infinity~\cite{dhifallah-lu:2020,dascoli-sagun-biroli:2020,gerace-loureiro-krzakala-etal:2021,mei-montanari:2022,goldt-loureiro-reeves-etal:2022,hastie-montanari-rosset-etal:2022}.\\

Our main contribution is an exact analysis of RF model predictions for various error metrics under Sobolev training. We mimic practical applications by training on $k$-dimensional projections, or sketches, of the target gradient~\cite{czarnecki-osindero-jaderberg:2017, o-leary-roseberry-chen-villa-etal:2024, cocola-hand:2020}, with $k = O(1)$. Under this assumption, we empirically establish a form of Gaussian universality\footnote{Also interchangeably referred to as ``Gaussian equivalence'' in the following.} for RF models, extending previous results that have been rigorously demonstrated for the $L^2$ setting~\cite{goldt-loureiro-reeves-etal:2022, hu-lu:2022}. This allows us to obtain asymptotic predictions by combining non-rigorous tools from statistical physics, namely the replica method~\cite{mezard-parisi-virasoro:1987,gerace-loureiro-krzakala-etal:2021,goldt-loureiro-reeves-etal:2022}, with rigorous tools from free probability theory~\cite{helton-far-speicher:2007,mingo-speicher:2017,belinschi-mai-speicher:2017,helton-mai-speicher:2018,adlam-pennington:2020,moniri-hassani:2024}. Specifically, we present a low-dimensional fixed point system which can be efficiently solved to produce generalization error as a function of network and training set size, input dimension, regularization strength, and activation function.\footnote{See \url{https://github.com/kefisher98/sobolev-random-features} for Python and Julia implementations of the fixed point system.}\\

As an informal illustration of our results, Figure~\ref{fig:intro} compares the generalization error of RF networks for $L^2$ and Sobolev training
for varying values of the ratio $p/n$,
validating our theoretical predictions against numerical training results. The error curves for both function and gradient prediction exhibit double descent, though our theory demonstrates that the location of the interpolation threshold is shifted under Sobolev training. Consequently, benefits from incorporating gradient data depend on the degree to which the model is overparameterized. Moreover, unlike previous results, our generalization errors are intrinsically random, even in the high-dimensional limit, as a consequence of training with random projections of gradients, as shown by the shaded inter-quantile region in the figure. The advantages of Sobolev training are largely limited to the underparameterized regime. Notably, for overparameterized models at the right horizon of the plot, gradient data only slightly improves gradient prediction and actually hurts function prediction. We will specify conditions on the observation model and network activation under which Sobolev training can improve gradient prediction for any network size, but ultimately we find that the performance of $L^2$ training cannot be exceeded for function prediction in the highly overparameterized regime.\\

Our theoretical results are relevant to a wide range of fields in science and engineering~\cite{sunUseNWPNowcasting2014,hallNonparametricEstimationWhen2007,behler:2016,zhang-han-wang-etal:2018,deringer-caro-csanyi:2019,o-leary-roseberry-chen-villa-etal:2024,luo-o-leary-roseberry-chen-etal:2025,qiu-bridges-chen:2024,cho-ryu-hwang:2024,bouhlel-he-martins:2020,vlassis-sun:2021,vlassis-zhao-ma-etal:2021,kichler-afghan-naumann:2024,park-yang-chandramoorthy:2024,srinivas-fleuret:2018,zagoruyko2016paying,hoffman-roberts-yaida:2019,atzmon-lipman:2020,tsay:2021,nakamura-zimmerer-gong-kang:2021,onken-nurbekyan-li-etal:2022,rosemberg-garcia-bent-etal:2025}. While it is by nature difficult to find negative empirical results in the literature, we show here that Sobolev training does not necessarily improve function or gradient prediction, depending on the hyperparameters chosen. We note that in the cited examples, the models considered are nonlinear functions of their trainable parameters, in contrast to the RF model analyzed in this work. Nevertheless, within the lazy training regime, RF models do provide reasonable approximations to deep nonlinear neural networks~\cite{misiakiewicz-montanari:2023}. Several recent papers have taken steps toward theoretically describing feature learning~\cite{ba-erdogdu-suzuki-etal:2022,cui-pesce-dandi-etal:2024,cui-krzakala-zdeborova:2023,pacelli-ariosto-pastore-etal:2023,baglioni-pacelli-aiudi-etal:2024,aiudi-pacelli-baglioni-etal:2025}, which is a higher fidelity model for modern neural networks. We leave extension of these ideas to Sobolev training as future work.

%%%%%%%%%%%%%%%%%%%%%%%%%%%%%%%%%%%%%%%%%%%%%%%%%%%%%%%%%%%%%%%
\subsection{Main contributions}

The main contributions of this work are as follows:
\begin{enumerate}
\item To the best of our knowledge,
we propose the first
 mathematical model for Sobolev training of neural networks for which generalization and training errors can be analytically computed. To this end, we augment the training loss of a RF model with a subspace-projected gradient term. We use this model to provide insight into several practically motivated questions:
	\begin{itemize}
		\item Does training with gradients improve generalization?
		\item Can the performance of conventional networks be matched by smaller, Sobolev-trained networks? 
		\item Is explicit regularization necessary when function and gradient observations have (possibly correlated) noise?
		\item What cost-benefit tradeoffs arise 
		if computing each projection of the gradient incurs a given cost?
	\end{itemize}
\item We apply the replica method to produce \emph{precise asymptotics} for the generalization error in the high dimensional limit. Novel technical components include:
\begin{itemize}
	\item a non-standard form of conditional Gaussian universality  to model correlations between the network and its gradients;
	\item conditioning of the replica method on a random variable given by the true gradient-subspace alignment $\varpi$;
	\item use of linear pencil machinery from operator-valued free probability to obtain a fully asymptotic description, i.e., with no need for Monte Carlo simulation in high but finite dimensions.
\end{itemize}
\item The influence of gradients is subtle in the high dimensional regimes of contemporary deep learning: we demonstrate that Sobolev training does not necessarily improve generalization to unseen tests within our model, even when data is noise-free. This result is particularly striking when we consider the prediction of \emph{gradients}.
\item The appendices accompanying this manuscript may be of independent interest to researchers with no prior exposure to either the replica method or free probability. For readers unfamiliar with the replica method, our exposition in Appendix~\ref{app:replica} belabors many technical details which are often left implicit by domain experts.  For readers unfamiliar with free probability,  we provide a condensed and practically oriented summary of the main results of~\cite{mingo-speicher:2017} in Appendix~\ref{app:free-prob-intro}.
\end{enumerate}

We describe our model and its asymptotic analysis in Section~\ref{sec:main-theory-res}. We then evaluate predictions and implications of our theory in Section~\ref{sec:results}.
A discussion of the limitations of our mathematical model and analysis are presented afterwards in Section~\ref{sec:concl}. The mathematical notation used in this paper is summarized in Appendix~\ref{App:notation}, and other appendices will be referenced throughout the main text.

\begin{remark}
We point out that parts of the calculations and results presented in this manuscript are non-rigorous
(as is typical in statistical physics, cf.~\cite{zdeborova:2020}),
but all of them have been extensively validated against numerical simulations. Specifically, beyond the use of the non-rigorous replica method itself (Appendix~\ref{app:replica}), we assume without proof that all overlap parameters and errors concentrate onto their expectations in the proportional asymptotics limit when conditioned on the alignment $\varpi$. The Gaussian universality result used within the replica calculation is partially based on numerical evidence (Appendix~\ref{app:GET}). Similarly, the simplifications of the replica-symmetric fixed-point system, in particular the asymptotic independence of~$\varpi$ and~$\zeta$, and the concentration of random matrix functions (Appendix~\ref{app:simplify-fp}), are based on heuristic arguments and numerics. Given these simplifications, the evaluation of~\eqref{eq:asymptotic_saddle_final_nonhat} based on operator-valued free probability follows the rigorously established methods of~\cite{mingo-speicher:2017} (Appendix~\ref{app:free-prob-intro}).
\end{remark}

%%%%%%%%%%%%%%%%%%%%%%%%%%%%%%%%%%%%%%%%%%%%%%%%%%%%%%%%%%%%%%%

\subsection{Related literature}

\subsubsection{Predicting generalization error}

Motivated by understanding the empirical success of overparameterized neural networks, one research direction in recent years has been to study the generalization errors of overparameterized ridge(-less) regression for linear predictors~\cite{bartlett:2020,bartlett:2021,hastie-montanari-rosset-etal:2022} and for kernels~\cite{rakhlinConsistencyInterpolationLaplace2019,liangMultipleDescentMinimumNorm2020}. 
On the other hand, theoretical predictions for finite-size neural networks remain elusive. Instead, existing results focus on the behavior of neural networks in asymptotic regimes. For example, it is known that randomly initialized deep neural networks are equivalent to Gaussian processes in the infinite width limit~\cite{neal2012bayesian,leeDeepNeuralNetworks2018a}. \citet{jacot-gabriel-hongler:2018} demonstrate that the gradient flow of such networks with one hidden layer also corresponds to a deterministic Gaussian process kernel, known as the neural tangent kernel (NTK), for which generalization properties can be analyzed. \citet{adlam-pennington:2020} consider the NTK in the proportional asymptotics limit and demonstrate that the error curves exhibit triple descent as a function of overparameterization. Later work by \citet{canatar-bordelon-pehlevan:2021} provides precise asymptotic characterizations of regression for any kernel.\\

Another approach to deriving generalization errors, which we follow in this work, models the learning problem as analogous to finding the minimum energy configuration of spin glass systems in the thermodynamic limit~\cite{advani-lahiri-ganguli:2013,bahri-kadmon-pennington:2020,decelle:2023,krzakala-zdeborova:2024,cui:2025}. This follows a rich history of leveraging ideas from statistical physics to understand learning theory, pioneered first for the Hopfield model~\cite{hopfield:1982,Amit1985}, and later yielding insights to the learning capacity of perceptrons~\cite{gardner-derrida:1989a,gardner-derrida:1989b}. These approaches enable the study of RF models using the replica method~\cite{mezard-parisi-virasoro:1987} in the proportional asymptotic limit, and precise asymptotic analyses reveal the role of non-linear activation functions in the peaks of the double descent curve, as well as demonstrating that such models have equivalent approximation capacity to linear functions of the inputs~\cite{dhifallah-lu:2020,dascoli-sagun-biroli:2020,gerace-loureiro-krzakala-etal:2021,mei-montanari:2022,goldt-loureiro-reeves-etal:2022,hastie-montanari-rosset-etal:2022}. Moreover, \citet{dascoli-sagun-biroli:2020} show that RF learning can also exhibit triple descent when the ratio of training data to input dimension grows. Other variants of single hidden-layer neural networks have since been studied: for example, \citet{erba-troiani-zdeborova-etal:2025} consider fixed readout weights and quadratic activation functions and equate the learning setup to compressed sensing with nuclear norm regularization \cite{erba-troiani-zdeborova-etal:2025}. We note that tools from statistical physics have also been extended to the study of linearized transformer architectures \cite{boncoraglio2025} and to diffusion model learning dynamics and sampling efficiency \cite{ghio-dandi-krzakala-etal:2024,biroli-bonnaire-de-bortoli-etal:2024,merger-goldt:2025,cui-pehlevan-lu:2025}.\\

In recent years, many works have examined different scaling regimes or nonlinear learning problems, providing a more complete picture of modern machine learning. Characterization of random matrix spectra beyond the linearly proportional asymptotics regime has made it possible to obtain precise asymptotics of RF models with more expressive capacity than linear functions~\cite{misiakiewicz:2022, hu-lu-misiakiewicz:2024, aguirre-lopez-franz-pastore:2025}. Mathematical models of feature learning, training of the hidden layer weights, have also been explored. Deep linear networks~\cite{li-sompolinsky:2021, Hanin2023F, ZavatoneVeth2022} provide a tool for examining models which are nonlinear in their parameters but retain linearity with respect to inputs. Another line of work considers two-stage gradient descent of RF models in the linearly proportional asymptotics regime, where \citet{ba-erdogdu-suzuki-etal:2022,cui-pesce-dandi-etal:2024} demonstrate that applying one sufficiently large gradient descent step to the hidden weights enables RF models to outperform the generalization of linear functions.
We also note that~\citet{cui-pesce-dandi-etal:2024} use the notion of \textit{conditional} Gaussian equivalence and replica calculations (conditional on the random spike of their RF model after one gradient step), which is analogous to the approach used in the present paper for the random subspace alignment $\varpi$ (cf.\ Appendix~\ref{app:GET} for a more detailed discussion of Gaussian universality).
\citet{cui-krzakala-zdeborova:2023,pacelli-ariosto-pastore-etal:2023,baglioni-pacelli-aiudi-etal:2024} consider deep models with nonlinear activation and trainable hidden parameters, in the setting where the widths of each layer tend toward infinity proportionally with the training set size. Building on \cite{pacelli-ariosto-pastore-etal:2023,baglioni-pacelli-aiudi-etal:2024}, \citet{aiudi-pacelli-baglioni-etal:2025} demonstrate that convolutional neural networks can achieve optimal generalization error at finite width (within the proportional asymptotics) in contrast to fully connected neural networks.

\subsubsection{Random matrix theory, free probability, and deep learning}

Much of the prior work surrounding theoretical predictions for neural network generalization involves applications of random matrix theory. The connection between 
random matrix theory and deep learning was first established in the seminal paper of \citet{karoui:2010} for kernel regression, later extended by \citet{peche:2019,pennington-worah:2019} to Gram matrices involving features arising from neural networks. Crucially, these works relate the spectra of random matrices in the proportional asymptotics limit to a fully asymptotic characterization given by their Stieltjes transforms (or, equivalently, Cauchy transforms).\\

In the present work, we derive a fixed point system involving traces of non-commutative random matrices which describes the precise asymptotics of Sobolev training for RF models.
The traces of these random matrices relate to their spectra, which has been studied through the lens of free probability theory, i.e., the study of non-commutative random elements~\cite{far-oraby-bryc-etal:2006,helton-far-speicher:2007,mingo-speicher:2017,belinschi-mai-speicher:2017,helton-mai-speicher:2018}.
Specifically, free probability theory provides an algorithm for linearizing rational functions of random matrices to produce a block matrix for which the operator valued Cauchy transform can be computed. This approach has been also used in the prediction of neural network generalization by~\citet{adlam-pennington:2020,misiakiewicz:2022,moniri-hassani:2024}, and  
these ideas are essential in providing a purely asymptotic characterization---in the sense that evaluating and solving it does not require any sampling in large but finite dimensions---of the fixed-point system that we derive.

\subsubsection{Existing theory for Sobolev training}

We are not aware of any previous work describing the generalization error of neural networks under Sobolev training in the proportional asymptotics limit. However, there are many results in the literature pertaining to Sobolev training in other idealized settings. For inputs with arbitrary distribution~$\mu$, \citet[Theorem 4]{hornik:1991} established that single hidden-layer neural networks with sufficiently large width are dense in the weighted $H^{s,m}(\mu)$ topology, given some additional regularity conditions on the activation functions. \citet[Theorem 4.1]{guring-kutyniok-petersen:2020} make this result quantitative for deep ReLU networks on the unit hypercube by proving upper bounds on the width and depth necessary to achieve arbitrary generalization accuracy in Sobolev norms with $s \leq 1$. For single hidden-layer ReLU networks with fixed readout weights and overparameterized width,
\citet{cocola-hand:2020} show that gradient flow over the hidden weights and biases converges to a global minimum. Furthermore, these minimizers interpolate the function and projected gradient training data. Under a similar setup, \citet{son-etal:2025} prove that Sobolev training improves the conditioning of the Hessian of the population risk over $L^2$ training, thus implicitly accelerating the convergence rate of gradient flow.  For Sobolev training with reproducing kernel Hilbert spaces (RKHSs) on compact metric spaces, \citet{ul-abdeen-jia-kekatos-etal:2023} provide sample complexity bounds for generalization and demonstrate regimes where gradient information improves over standard $L^2$ training.\\

The Sobolev norm also appears in the objective function when using neural networks as PDE solvers~\cite{raissi-perdikaris-karniadakis:2019}. However, derivative \emph{data} are not typically provided here: instead, the derivative term is often related to the function data by applying integration by parts to the PDE operator. In this context, \citet{lu-blanchet-ying:2022} prove statistical rates for solving elliptic inverse problems in an RKHS using Sobolev training, demonstrating implicit acceleration brought on by higher order regularity. \citet{yang-he:2024} also study machine learning PDE solvers with deep ``super ReLU'' networks in the underparameterized setting and prove generalization bounds which relate sample complexity to the width and depth of each network.

%%%%%%%%%%%%%%%%%%%%%%%%%%%%%%%%%%%%%%%%%%%%%%%%%%%%%%%%%%%%%%%
%%%%%%%%%%%%%%%%%%%%%%%%%%%%%%%%%%%%%%%%%%%%%%%%%%%%%%%%%%%%%%%

\section{Theoretical result: Generalization under subspace Sobolev loss in the proportional asymptotics regime}
\label{sec:main-theory-res}

%%%%%%%%%%%%%%%%%%%%%%%%%%%%%%%%%%%%%%%%%%%%%%%%%%%%%%%%%%%%%%%

\subsection{Setup}
\label{sec:setup}

Here, we describe the setup for which we state our theoretical results in Subsection~\ref{sec:error_expressions}. This does \textit{not} encompass the most general setting for which our results can be derived, and we comment on possible extensions---some of which are detailed in Appendix~\ref{app:replica}---below. Throughout, we consider shallow neural networks $f_w \colon \; \RR^d \to \RR$ with input dimension~$d$ and a single hidden layer of width~$p$. For~$p$ given random feature vectors $\Theta = [\theta_1, \dots, \theta_p] \in \RR^{d \times p}$ and trainable readout weights $w \in \RR^p$, we define
\begin{align}
	f_w(x) = w^\top \sigma \left(\Theta^\top x \right) =  \sum_{l = 1}^p w_l \; \sigma \left(\left \langle \theta_l, x \right \rangle\right)\,,
	\label{eq:neural-net-def}
\end{align}
where $\sigma\colon \; \RR \to \RR$ is an (almost everywhere) smooth activation function that is evaluated elementwise whenever applied to vectors or matrices. Let the random feature vectors be independent and identically distributed (iid) Gaussians $\Theta_{ij} \sim {\cal N}(0, 1/d)$, such that $\EE \left[\left \langle \theta_i, \theta_j \right \rangle \right] = \delta_{ij}$. The gradient of the network with respect to input $x$ is the linear combination of the features vectors $\theta_1, \dots, \theta_p \in \RR^d$ given by
\begin{align}
	\nabla f_w(x) = \sum_{l = 1}^p w_l \; \sigma' \left(\left \langle \theta_l, x \right \rangle \right) \theta_{l} = \Theta \; \textsc{diag} \left( \sigma' \left(\Theta^\top x \right) \right) w \,.
	\label{eq:neural-net-grad-def}
\end{align}
Our objective is to study the impact of incorporating derivative information on the generalization capabilities of the network and its gradient in a regression setting. To this end, we assume access to (possibly noisy) training data, consisting of function evaluations $y_i \in \RR$ and gradients $y_i' \in \RR^d$ of an underlying ground truth function at $n$ iid input samples $x_i \sim \mathcal{N}\left(0, I_d \right)$. We also assume---within the typical ``teacher-student'' setting---that there is a random true ``teacher'' feature vector $\theta_0 \sim \mathcal{N} \left(0,I_d / d \right)$ in $\RR^d$, with unit length $\EE \norm{\theta_0}^2 = 1$, such that data are generated according to
\begin{align}
\begin{cases}
	y_i &= \phi\left( \left \langle \theta_0,  x_i \right \rangle \right) +  \eta_i \,,\\
	y_i'  &=  \phi'\left( \left \langle \theta_0,  x_i \right \rangle \right) \theta_0 + \eta_i' \,,
\end{cases}
\label{eq:data-model}
\end{align}
where $\eta_i$ and $\eta_i'$ are potentially correlated noise vectors, and $\phi \colon \; \RR \to \RR$ is a fixed function. The training data hence stem from a ridge function, or single-index model, and the gradients lie parallel to the teacher vector $\theta_0$ for all samples (plus noise).\\

To employ our theoretical analysis, we consider the proportional asymptotics limit, denoted by plim, in which the input dimension $d$, number of samples $n$, and number of features $p$ jointly tend to infinity:
\begin{align}
\plim_{p \to \infty} \quad \Leftrightarrow \quad d,n,p\to \infty\,, \text{ with ratios } \alpha = n/p \text{ and } \gamma = d / p \text{ fixed.}
\label{eq:prop-asymp-def}
\end{align}
The parameters $\alpha, \gamma > 0$ fully characterize the problem in the proportional asymptotics limit with $\alpha^{-1} = p /n$, the ratio of the number of features to samples, denoting the degree of under- or over-parameterization. We shall see these regimes correspond respectively to $p/n<1$ and $p/n > 1$ for standard $L^2$ training, but change when additional gradient information is provided.\\

Instead of training with the full gradient $y_i' \in \RR^d$, we project (or ``sketch'') the gradient data with a known but random matrix $V_k \in \RR^{d \times k}$ into a space with finite and fixed dimension $k$. This projection is necessary for our theoretical framework, but is also inspired by practical considerations elaborated in both the paper on Sobolev training by~\citet{czarnecki-osindero-jaderberg:2017}, as well as DINOs~\cite{o-leary-roseberry-chen-villa-etal:2024}. We model each column vector~$v_1, \dots, v_k$ of $V_k$ to be independent and scaled as $\|v_i\| = O(\sqrt{d})$ as $d \to \infty$. Thus, the column vectors do \textit{not} have unit length, and for concreteness, we consider iid random vectors $v_i \sim {\cal N}(0, I_d)$ here. Roughly, this scaling ensures $v^\top_i \nabla f_w(x) = O(1)$, which balances the contributions from the projected gradients of  both the teacher and the network in the proportional asymptotics regime, even for independent $v_i$ and $\theta_j$ for $j = 0, 1, \dots, p$. This setting corresponds to an \textit{uninformed} choice of the subspace on which the network gradient is trained to match the teacher gradient. Another strategy is to adaptively select this subspace from data~\cite{o-leary-roseberry-chen-villa-etal:2024}, and we comment on this \emph{data-informed} extension in Appendix~\ref{app:data-informed-subspace}.\\

The projection of the gradient data naturally motivates the definition of the alignment parameter $\varpi=V_k^\top \theta_0 \in \RR^k$ (called ``varpi''). Conditioned on a fixed teacher feature, $\varpi$ is a $k$-variate Gaussian random variable. Further defining $\omega_i = \langle \theta_0, x_i \rangle$ and conditioning on $x_i, \theta_0,$ and $V_k$, the training data $\Upsilon_i = (y_i, V_k^\top y_i') \in \RR^{k+1}$ from~\eqref{eq:data-model} consists of samples
$
\Upsilon_i \mid  \omega_i, \varpi \sim P_{\text{data}}\,,
$
where the distribution $P_{\text{data}}$ on~$\RR^{k+1}$ encodes the randomness induced by noise $\eta_i$ and $\eta_i'$. In the current setting, we have
\begin{align}
P_{\text{data}} = {\cal N} \left(\begin{pmatrix}
\phi(\omega)\\ \varpi \phi'(\omega)
\end{pmatrix}, C_\eta \right) .
\label{eq:p-data-additive-gauss}
\end{align}

With this setup, the training problem for the network weights $w \in \RR^p$ consists of minimizing the empirical risk
\begin{align}
\eps_{\text{train}}(w) = \frac{1}{2n} \sum_{i = 1}^n \left[\left(y_i - f_w\left(x_i\right)\right)^2 + \tau \norm{V_k^\top \left( y_i' -  \nabla f_w\left(x_i\right) \right)}^2 \right] + \frac{\lambda}{2 \alpha} \norm{w}^2\,,
\label{eq:training-objective}
\end{align}
where $\tau > 0$ determines the relative weight of the gradient term. Choosing $\lambda>0$ in the Tikhonov regularization term  ensures the existence of the unique minimizer
\begin{align}
w^* &= \argmin_{w \in \RR^p} \eps_{\text{train}}(w) = \left[\alpha^{-1} \lambda I_p + K \right]^{-1} r \,. \label{eq:training-problem} 
\end{align}

The random matrix $K \in \RR^{p \times p}$ and random vector $r \in \RR^p$ are defined as
\begin{align}
\begin{cases}
K &= \frac{1}{n} \left( \sigma \left(\Theta^\top X \right) \left( \sigma \left(\Theta^\top X \right) \right)^\top + \tau (\Theta^\top V_k V_k^\top \Theta) \odot \left( \sigma' \left(\Theta^\top X \right) \left( \sigma' \left(\Theta^\top X \right) \right)^\top \right) \right) \,,\\[4pt]
r &= \frac{1}{n} \left( \sigma \left(\Theta^\top X \right) Y + \tau  \left( \sigma' \left( \Theta^\top X \right) \odot (\Theta^\top V_k V_k^\top Y') \right) \1_n \right)\,.
\end{cases}
\label{eq:K-matrix-def}
\end{align}
Here, we have summarized the training data as $X = [x_1, \dots, x_n] \in \RR^{d \times n}$, $Y = (y_1, \dots, y_n)^\top \in \RR^n$, and $Y' = [y_1', \dots, y_n'] \in \RR^{d \times n}$.  $\1_n = (1, \dots, 1)^\top \in \RR^n$ is the one-vector, and $\odot$ denotes the elementwise (Hadamard) product with respect to the standard basis of $\RR^p$ in which the model has been defined. For $\tau = 0$, the setup reduces to the standard~$L^2$ training previously analyzed in~\cite{gerace-loureiro-krzakala-etal:2021,mei-montanari:2022,goldt-loureiro-reeves-etal:2022}. We present our result for $\tau > 0$ in Subsection~\ref{sec:error_expressions} and comment on the $L^2$ training limit $\tau \downarrow 0$---which is discontinuous for some parameters introduced below---in Remark~\ref{rem:l2-case} afterwards. Here, and throughout, we also assume $\lambda > 0$, though we conjecture that our results remain valid even in the limit $\lambda \downarrow 0$, cf.~\cite{mei-montanari:2022}.\\

\begin{remark}
The factor $\alpha^{-1} = p/n$ in the Tikhonov regularization strength in~\eqref{eq:training-objective}
ensures that the effective regularization strength remains constant
as the width of the network relative to the training data set size changes. Using $\lambda/\alpha$ in~\eqref{eq:training-objective} is consistent with~\cite{gerace-loureiro-krzakala-etal:2021}, while $\lambda/\gamma$ is used in~\cite{mei-montanari:2022} to the same effect. Roughly, the factor of $1/\alpha$ makes all terms in~\eqref{eq:training-objective} have a common $1/n$ prefactor. More concretely, set $\tau = 0$, and suppose $\sigma(\Theta^\top X)$ has iid standard Gaussian components for simplicity. Then the spectral density of $K$ in~\eqref{eq:training-problem} becomes Marchenko--Pastur (MP) with parameter $1/\alpha$, cf.~\eqref{eq:mp-law}. Hence, the choice of $\lambda / \alpha$ in~\eqref{eq:training-objective} makes the regularization move together with the bulk of the spectrum of $K$ as $\alpha$ is varied in~\eqref{eq:training-problem}.
\end{remark}
\vspace{.5cm}
The minimizer~$w^*$ of~\eqref{eq:training-objective} is a random variable that depends on the realization of the training data and other random quantities in the problem. 
We can determine the optimal training error\footnote{Note that the training errors shown in Figure~\ref{fig:intro} are $2 \eps_{\text{train}}^{L^2}$ and $2 \eps_{\text{train}}^{H^1_k}$, in order to make the normalization comparable to the generalization error as defined in~\eqref{eq:gen-error-def}}
\begin{align}
\eps_{\text{train}}\left(w^* \right) = \eps_{\text{train}}^{L^2} + \tau \eps_{\text{train}}^{H^1_k} +  \frac{\lambda}{2 \alpha} \norm{w^*}^2\,,
\end{align}
but our main interest is to compute the generalization error of the trained network for a ``fresh'', independent sample from the data distribution:
\begin{align}
\eps_{\text{gen}}\left(w^* \right) := \EE_{x,y,y'}\left[\left(y - f_{w^*}\left(x\right)\right)^2 + \norm{V_k^\top \left( y' -\nabla f_{w^*}\left(x\right)\right)}^2 \right] = \eps_{\text{gen}}^{L^2} + \eps_{\text{gen}}^{H^1_k}\,.
\label{eq:gen-error-def}
\end{align}
In the proportional asymptotics limit, it is possible to express these errors as a function of only a finite number of low-dimensional summary statistics, i.e., we need not numerically compute the high-dimensional optimal readout weights~$w^*$. These summary statistics correspond
to ``replica-symmetric'' overlap parameters in the language of the replica method. In contrast to other works in the literature though, we find in our setting that the generalization error does \textit{not} concentrate onto its expectation. 
Concretely, the alignment parameter $\varpi$ does not concentrate as $d,n,p \to \infty$, but instead becomes asymptotically distributed as a standard normal $\varpi \sim {\cal N}(0, I_k)$ which is uncorrelated with all other parameters. 
Nevertheless, it remains possible to employ the replica method by conditioning the theoretical predictions on $\varpi$. Since we know the asymptotic law of this random variable, ultimately we obtain a full characterization of the probability distribution of the errors and can, for example, take the expectation over $\varpi$ or report any other summary statistics.\\

Before stating our theoretical results, we define the first two coefficients and the remainder term in the Hermite expansions of the activation function $\sigma$ and its derivative $\sigma'$, as
\begin{align}
\kappa_0 &= \EE \left[\sigma(\xi) \right]\,, \quad \kappa_1 = \EE \left[\xi \sigma(\xi) \right]\,, \quad \kappa_*^2 =  \EE \left[ \sigma(\xi)^2 \right] - \kappa_0^2 - \kappa_1^2\\
\kappa_0' &= \EE \left[\sigma'(\xi) \right] = \kappa_1\,, \quad \kappa_1' = \EE \left[\xi \sigma'(\xi) \right] = \EE \left[\sigma''(\xi) \right]\,, \quad \left(\kappa_*'\right)^2 =  \EE \left[ \left(\sigma'(\xi)\right)^2 \right] - \left(\kappa_0'\right)^2 - \left(\kappa_1'\right)^2\,,
\label{eq:hermite-coeff-def}
\end{align}
where $\xi \sim {\cal N}(0,1)$.
The coefficients~\eqref{eq:hermite-coeff-def} of $\sigma$, and analogous ones for the ridge function $\phi$ in~\eqref{eq:data-model}, fully characterize these functions in the limit~\eqref{eq:prop-asymp-def}. In other words, we can roughly think of them by effectively replacing the nonlinear function $\sigma$ by its linearization in terms of Hermite coefficients via the Gaussian equivalence relations
\begin{align}
\begin{cases}
\sigma\left(\Theta^\top x \right) &\approx \kappa_0 \1_p + \kappa_1 \Theta^\top x + \kappa_* \hat{\eta}\\
\sigma'\left(\Theta^\top x \right) &\approx \kappa_0' \1_p + \kappa_1' \Theta^\top x + \kappa_*' \hat{\eta}'
\end{cases}\,,
\label{eq:get-sigma-replacement}
\end{align}
where all higher-order terms are replaced by the independent Gaussian noises $\hat{\eta},\hat{\eta}' \sim {\cal N}\left(0, I_{p}\right)$, scaled to the same variance as the actual remainder term.\\

We do not assume that $\kappa_0$ vanishes---a typical simplifying assumption in the literature---so we can treat standard activation functions such as the rectified linear unit (ReLU) $\sigma(z) = \max\{0,z\}$ and sigmoid linear unit (SiLU) $\sigma(z) = z/\left(1 + e^{-z}\right)$. By the Gaussian equivalence relations~\eqref{eq:get-sigma-replacement} and~\eqref{eq:get-distribution} with overlap parameters~\eqref{eq:overlap-def} below, if $\kappa_0 = 0$, then the trained network $f_{w^*}$ is incapable of realizing anything other than mean-zero functions of $x$, i.e.,\ necessarily $\EE_{x \sim {\cal N}(0, I_d)} \left[f_{w^*}(x) \right] = 0$. Similarly, if $\kappa_1 = 0$, then $f_{w^*}(x)$ does not actually depend on $x$ in the proportional asymptotics limit.\\

The activation functions  considered in this paper are listed in Table~\ref{tab:activation} in Appendix~\ref{App:notation} along with their Hermite coefficients. An example of a parameter-dependent, non-polynomial activation function with Hermite coefficients that can be adjusted continuously is found in~\cite{pennington-worah:2019}. For a detailed analysis of the role of individual coefficients for the generalization capacities of the RF model under $L^2$-training, we refer to~\cite{dascoli-sagun-biroli:2020}. We remark that in principle, it is sufficient to consider activation functions $\sigma$ (and analogous $\phi$) of the form
\begin{align}
\sigma(z) = \sigma_0
+ \sigma_1 z
+ \frac{\sigma_2}{\sqrt{2!}}\left(z^2 - 1\right)
+ \frac{\sigma_3}{\sqrt{3!}} \left( z^3 - 3 z \right)
+ \frac{\sigma_4}{\sqrt{4!}} \left( z^4 - 6 z^2 + 3 \right) = \sum_{k=0}^4 \frac{\sigma_k}{\sqrt{k!}} \text{He}_k(z)
\label{eq:hermite-expansion-sigma}
\end{align}
with constants $\sigma_0, \dots, \sigma_4 \in \RR$ for the setting studied in this work since these fully exhaust the possible parameter space for the coefficients in~\eqref{eq:hermite-coeff-def} via $\kappa_0 = \sigma_0$, $\kappa_0' = \kappa_1 = \sigma_1$, $\kappa_1' = \sqrt{2} \sigma_2$, $\kappa_* = \sqrt{\sigma_2^2 + \sigma_3^2 + \sigma_4^2}$, $\kappa_*' = \sqrt{3 \sigma_3^2 + 4 \sigma_4^2}$.

%%%%%%%%%%%%%%%%%%%%%%%%%%%%%%%%%%%%%%%%%%%%%%%%%%%%%%%%%%%%%%%

\subsection{Asymptotic training and generalization error from fixed-point system}
\label{sec:error_expressions} 

To calculate the training and generalization errors in the proportional asymptotics limit, we require knowledge of the summary statistics listed below. In the following, subscripts $a$ denote scalar quantities, subscripts $b$ denote vectors in $\RR^k$, and subscripts $c$ are used for (symmetric) matrices in $\RR^{k \times k}$. Then we define
\begin{align}
\begin{cases}
s_a &= \kappa_0 \left \langle w^*, \1_p \right \rangle\\
s_b &= \kappa_0' V_k^\top \Theta w^*\\
f_a &= \kappa_1 \left \langle \theta_0, \Theta w^* \right \rangle\\
f_b &= \kappa_1' V_k^\top \Theta \diag{w^*} \Theta^\top \theta_0\\
q_a &= \left \langle w^*, \left[\kappa_*^2 I_p + \kappa_1^2 \Theta^\top \Theta \right] w^* \right \rangle\\
q_b &= \kappa_1 \kappa_1' V_k^\top \Theta \diag{w^*} \Theta^\top \Theta  w^*\\
q_c &= V_k^\top \Theta \diag{w^*} \left[\left(\kappa_*'\right)^2 I_p + \left( \kappa_1' \right)^2 \Theta^\top \Theta \right] \diag{w^*} \Theta^\top V_k\\
\end{cases}\,.
\label{eq:overlap-def}
\end{align}
The central idea is that in the proportional asymptotics regime~\eqref{eq:prop-asymp-def}, the RF model $f_{w^*}$, as defined in~\eqref{eq:neural-net-def}, and its projected gradient $V_k^\top \nabla f_{w^*}$, given by~\eqref{eq:neural-net-grad-def}, behave like noisy linear functions in $x$ for the purpose of calculating the training and generalization error. By comparing~\eqref{eq:get-sigma-replacement} with~\eqref{eq:neural-net-def} and~\eqref{eq:neural-net-grad-def}, this replacement yields Gaussian output of the network and its gradient (conditioned on all other random parameters in the setting) for input $x \sim {\cal N}(0, I_d)$ with mean and covariance determined by the overlap parameters~\eqref{eq:overlap-def}:
\begin{align}
\begin{pmatrix}
\omega = \left \langle \theta_0, x \right \rangle\\
f_{w^*}(x)\\
V_k^\top \nabla f_{w^*}(x)
\end{pmatrix} \sim {\cal N} \left(\begin{pmatrix}
0\\s_a\\s_b
\end{pmatrix}, \begin{pmatrix}
1 & f_a & f_b^\top\\[2pt]
f_a & q_a & q_b^\top\\[2pt]
f_b & q_b & q_c
\end{pmatrix} \right)\,.
\label{eq:get-distribution}
\end{align}
We discuss this linearization further and provide numerical evidence for its validity in Appendix~\ref{app:GET}.\\

The Gaussian equivalence theorem then yields the following deterministic expressions for the $L^2$ and $H^1_k$ seminorm generalization errors in the proportional asymptotics limit, conditioned on the alignment $\varpi \in \RR^k$:
\begin{align}
\label{eq:l2_gen_error}
\plim_{p \to \infty} \eps^{L^2}_{\text{gen}} \mid \varpi &= \EE\left[\left(\phi(\omega) - s_a\right)^2 \right] - 2 \EE[\phi'(\omega)] f_a + (C_{\eta})_{11} + q_a\,,\\
\label{eq:sobo_gen_error}
\plim_{p \to \infty} \eps^{H^1_k}_{\text{gen}} \mid \varpi
&= 
\EE \left[\lVert \varpi \phi'(\omega) - s_b \rVert^2\right] - 2\EE[\phi''(\omega)] \langle \varpi, f_b \rangle + \trace \left[ C_{\eta,2:k+1,2:k+1} \right] + \trace (q_c).
\end{align}
Note that here and in the following equations $\omega \sim {\cal N}(0,1)$, consistent with the marginal distribution in~\eqref{eq:get-distribution}. The network and projected network gradient means are given by $s = (s_a, s_b) \in \RR^{k+1}$ with
\begin{align}
s_a = \begin{cases}
0\,, \quad & \kappa_0 = 0\,,\\
\EE[\phi(\omega)]\,, \quad & \kappa_0 \neq 0\,,\\
\end{cases}
\qquad \quad 
s_b = \begin{cases}
0\,, \quad & \kappa_0' = 0\,,\\
\varpi \EE[\phi'(\omega)]\,, \quad & \kappa_0' \neq 0 \,, \\
\end{cases} \qquad \quad \text{for }\tau>0.
\label{eq:def-sab}
\end{align}
The remaining overlap parameters necessary to evaluate~\eqref{eq:l2_gen_error} and~\eqref{eq:sobo_gen_error} can be found by solving a deterministic system of low-dimensional equations---e.g.,\ numerically via fixed-point iteration---instead of using the definitions from~\eqref{eq:overlap-def} wherein high-dimensional random vectors and matrices must be computed. We obtain this system of equations by applying the saddle point method in the proportional limit within the replica calculation and subsequently taking the low-temperature limit, as detailed in Appendix~\ref{app:replica}. Hence, following the usual recipe of the replica method while conditioning all terms on $\varpi$ produces the solution in a relatively ``mechanical'' way. Since the training problem~\eqref{eq:training-problem} is strictly convex, a unique admissible solution (where the covariance matrix in~\eqref{eq:get-distribution} is positive semidefinite) to the fixed-point system is guaranteed to exist, and this solution corresponds to the replica-symmetric solution of the saddlepoint equations.\\

We collect the overlap parameters as
\begin{align}
	f = \begin{pmatrix}
	f_a \\[2pt] f_b, 
\end{pmatrix}, \quad	 q = \begin{pmatrix}
	q_a & q_b^\top \\[2pt]
	q_b & q_c
	\end{pmatrix}, \quad \Sigma = \begin{pmatrix}
	\Sigma_a & \Sigma_b^\top \\[2pt]
	\Sigma_b & \Sigma_c
	\end{pmatrix}\,,	
\end{align}
and we introduce analogous auxiliary parameters $\hat{f}$, $\hat{q}$, and $\hat{\Sigma}$ via
\begin{align}
\label{eq:update_hat_sobo_h1}
\begin{cases}
\hat{\Sigma} &= \alpha \left(I_{k+1} + D_\tau \Sigma \right)^{-1} D_\tau\\
\hat{f} &= \hat{\Sigma} \begin{pmatrix} \EE[\phi'(\omega)] \\ \varpi \EE[\phi''(\omega)] \end{pmatrix}\\
\hat{q} &= \alpha^{-1}\hat{\Sigma}\Bigg( C_\eta + q + \EE\left[ \left(  \begin{pmatrix} \phi(\omega)  \\ \varpi \phi'(\omega) \end{pmatrix}-s\right)^{\otimes 2} \right] \Bigg) \hat{\Sigma} - \alpha^{-1} \left( \hat{\Sigma} f \otimes \hat{f} + \hat{f} \otimes \left( \hat{\Sigma} f \right) \right)
\end{cases}
\end{align}
where $D_\tau = \diag{1, \tau, \dots, \tau} \in \RR^{(k+1) \times (k+1)}$. 
The hatted overlap parameters map back to $f$, $q$, and $\Sigma$ through
\begin{align}
\label{eq:update_nonhat_sobo_h1}
\begin{cases}
	\Sigma &= \displaystyle \plim_{p\to\infty} \frac{1}{p} \; \Bigg[ \begin{pmatrix} \kappa_1 \1_p^\top \\ \kappa_1' V_k^\top\Theta \end{pmatrix} \left(A^{-1} \odot \Theta^\top\Theta\right) \begin{pmatrix} \kappa_1 \1_p & \kappa_1' \Theta^\top V_k \end{pmatrix} +  \begin{pmatrix} \kappa_*^2 \trace \left[A^{-1} \right] & 0 \\ 0 & (\kappa_*')^2 V_k^\top\Theta \left(A^{-1} \odot I_p \right) \Theta^\top V_k \end{pmatrix} \Bigg]\\[12pt]
	f &= \displaystyle \plim_{p\to\infty} \quad \; \; \;\begin{pmatrix} \kappa_1 \1_p^\top \\ \kappa_1' V_k^\top\Theta \end{pmatrix} \left(A^{-1} \odot \left(\Theta^\top \theta_0 \right)^{\otimes 2} \right) \begin{pmatrix} \kappa_1 \1_p & \kappa_1' \Theta^\top V_k \end{pmatrix} \hat{f} \\[12pt]
	q &= \displaystyle \plim_{p\to\infty} \frac{1}{p} \; \Bigg[ \begin{pmatrix} \kappa_1 \1_p^\top \\ \kappa_1' V_k^\top\Theta \end{pmatrix} \left( \left(A^{-1}\Xi A^{-1} \right) \odot \Theta^\top\Theta \right) \begin{pmatrix} \kappa_1 \1_p & \kappa_1' \Theta^\top V_k \end{pmatrix}
	+ \begin{pmatrix} \kappa_*^2 \trace \left[ A^{-1}\Xi A^{-1} \right] & 0 \\ 0 & (\kappa_*')^2 V_k^\top\Theta  \left( \left(A^{-1}\Xi A^{-1} \right) \odot I_p \right)  \Theta^\top V_k \end{pmatrix}  \Bigg]
\end{cases}
\end{align}
where we have defined the random matrices
\begin{align}
\label{eq:def-random-matrices-for-saddlepoint-eqs}
\begin{cases}
	A = A\left(\hat{\Sigma}\right) &:=  \lambda I_p + \begin{pmatrix} \kappa_1 \1_p & \kappa_1' \Theta^\top V_k \end{pmatrix}   \hat{\Sigma} \begin{pmatrix} \kappa_1 \1_p^\top \\[2pt] \kappa_1' V_k^\top\Theta \end{pmatrix} \odot \Theta^\top \Theta + \begin{pmatrix} \kappa_* \1_p & \kappa_*' \Theta^\top V_k \end{pmatrix}  \begin{pmatrix} \hat{\Sigma}_a & 0 \\ 0 & \hat{\Sigma}_c \end{pmatrix} \begin{pmatrix} \kappa_* \1_p^\top \\[2pt] \kappa_*' V_k^\top\Theta \end{pmatrix} \odot I_p \\[12pt]
	\Xi = \Xi\left(\hat{f}, \hat{q}\right) &:=  \qquad \quad \; \begin{pmatrix} \kappa_1 \1_p & \kappa_1' \Theta^\top V_k \end{pmatrix} \hat{q}  \begin{pmatrix} \kappa_1 \1_p^\top \\[2pt] \kappa_1' V_k^\top\Theta \end{pmatrix} \odot \Theta^\top \Theta  +\begin{pmatrix} \kappa_* \1_p & \; \kappa_*' \Theta^\top V_k \end{pmatrix} \begin{pmatrix} \hat{q}_a & 0 \\ 0 & \hat{q}_c \end{pmatrix} \begin{pmatrix} \kappa_* \1_p^\top \\[2pt] \kappa_*' V_k^\top\Theta \end{pmatrix} \odot I_p \\[10pt]
	& \quad \quad + p \begin{pmatrix} \kappa_1 \1_p & \kappa_1' \Theta^\top V_k \end{pmatrix} \hat{f}^{\otimes 2}  \begin{pmatrix} \kappa_1 \1_p^\top \\[2pt] \kappa_1' V_k^\top\Theta \end{pmatrix} \odot \left( \Theta^\top \theta_0 \right)^{\otimes 2}\,.
\end{cases}
\end{align}
The scalings in the problem setup ensure that all overlap parameters remain $O(1)$ as $d,n,p\to\infty$.
After solving the system given by~\eqref{eq:update_hat_sobo_h1} and~\eqref{eq:update_nonhat_sobo_h1} numerically, in addition to the generalization errors~\eqref{eq:l2_gen_error} and~\eqref{eq:sobo_gen_error}, we obtain the training error at the optimal readout weights $w^*$ via
\begin{align}
\plim_{p \to \infty}  \eps_{\text{train}}^{L^2} \mid \varpi &= \frac{1}{2\alpha} \hat{q}_a\,, \label{eq:training-err-l2}\\
\plim_{p \to \infty}  \eps_{\text{train}}^{H^1_k} \mid \varpi &= \frac{1}{2\alpha} \trace \left[ \hat{q}_c \right]\,, \label{eq:training-err-h1k}\\
\plim_{p \to \infty}  \frac{\lambda}{2 \alpha} \norm{w^*}^2 \mid \varpi &= \frac{\lambda}{2 \alpha} \plim_{p \to \infty} \frac{1}{p} \trace \left[A^{-1} \Xi A^{-1} \right]\,.
\label{eq:training-err-reg}
\end{align}
\begin{remark}
\label{rem:saddlepoint-observations}
We collect a few observations on this result here:
\begin{enumerate}[label={(\alph*)}]
\item Despite their complicated appearance at first glance, the fixed point equations~\eqref{eq:update_hat_sobo_h1} and~\eqref{eq:update_nonhat_sobo_h1} have a relatively simple structure: since the random matrix $A$ in~\eqref{eq:def-random-matrices-for-saddlepoint-eqs} only depends on the parameter matrix $\hat{\Sigma}$, the equations for $\Sigma$ and $\hat{\Sigma}$ form a closed, nonlinear system of equations for the two unknown symmetric $(k+1) \times (k+1)$ matrices. In fact, we will show in Subsection~\ref{sec:eval}, that each $(k+1) \times (k+1)$ matrix depends on only two parameters. Once this system has been solved, the vectors $f$ and $\hat{f}$ are fully determined without further solves. Lastly, the matrices $q$ and $\hat{q}$ can then be found as the solution of a four-dimensional linear system of equations.
\item The remaining difficulties are (i) isolating the dependence of all parameters and results on the alignment $\varpi \sim {\cal N}(0, I_k)$, and (ii) evaluating the high-dimensional limits in~\eqref{eq:update_nonhat_sobo_h1} involving the random feature matrix $\Theta$ and subspace matrix $V_k$. Conceptually, it is crucial to be able to evaluate the high-dimensional limits in~\eqref{eq:update_nonhat_sobo_h1} through analytical or semi-analytical methods that only involve finite-dimensional quantities since only then is the system of equations~\eqref{eq:update_hat_sobo_h1} and~\eqref{eq:update_nonhat_sobo_h1} ``closed'' and actually low-dimensional. We show the resulting system of equations after these simplifications in Section~\ref{sec:eval}.
\item Suppose we consider a more general loss function than~\eqref{eq:training-objective}:
\begin{align}
\eps_{\text{train}}(w) = \frac{1}{n} \sum_{i = 1}^n \ell \left(y_i, \; f_w(x_i), \; V_k^\top y_i', \; V_k^\top \nabla f_w(x_i) \right) + \frac{\lambda}{2 \alpha} \norm{w}^2.
\label{eq:general-loss}
\end{align}
Given convex and differentiable $\ell$, this extension---relevant, e.g.,\ for classification tasks---only modifies the updates~\eqref{eq:update_hat_sobo_h1} for the auxiliary parameters and leaves all other results unchanged. In Appendix~\ref{app:replica}, we derive the general result for the training loss~\eqref{eq:general-loss} and only specify it to~\eqref{eq:training-objective} in the end, incurring no increased technical difficulties. Similarly, we can treat more general noise models $P_{\text{data}}$ than the additive Gaussian case~\eqref{eq:p-data-additive-gauss}, as well as more general random features than $\Theta_{ij} \overset{\text{iid}}{\sim} {\cal N}(0, 1/d)$ provided the random matrix $\Theta^\top \Theta$ has a well-defined spectral density in the proportional asymptotics limit. This flexibility of the replica approach is the main advantage over a direct computation of the high-dimensional limits of the overlap parameters in~\eqref{eq:overlap-def} which demand an explicit expression for the minimizer $w^*$.
\item The values of the activation function mean $\kappa_0$ and its derivative mean $\kappa_0'$ do not explicitly appear in the results, except for discontinuously determining the cases in the definition of~$s$ in~\eqref{eq:def-sab}. These cases correspond to the network being (in)capable of learning the mean of the data and its $\varpi$-conditioned gradient due to the choice of activation function.
\item As anticipated in Section~\ref{sec:setup}, we see from~\eqref{eq:def-sab} and~\eqref{eq:update_hat_sobo_h1} that the overlap parameters and generalization errors only depend on the data-generating ridge function $\phi$ and its derivative $\phi'$ through their low-order Hermite coefficients and remainder term,  analogously to~\eqref{eq:hermite-coeff-def}. Intuitively, only in cases where both $\EE[\omega \phi(\omega)]$ and $\EE[\omega \phi'(\omega)]$ are nonzero do the RF network and gradient actually learn to represent nontrivial (but still linear) functions of $x$. If either of these expectations are zero, the function or gradient data, respectively, effectively corresponds to being generated by a constant function plus noise. This observation will be important when interpreting the predictions described in Section~\ref{sec:results}.
\end{enumerate}
\end{remark}

\begin{remark}
\label{rem:l2-case}
In the limit $\tau \to 0$ the gradient data do not inform training, so we recover the usual $L^2$ training setup. Here, we have $D_\tau \to e_1^{\otimes 2}$ such that $\hat{\Sigma} \propto e_1^{\otimes 2}$, $\hat{f} \propto e_1$ and $\hat{q} \propto e_1^{\otimes 2}$ in~\eqref{eq:update_hat_sobo_h1}. This sparsity leads to a solution of the fixed-point equations with
\begin{align}
f_b = \hat{f_b} = q_b = \hat{q}_b = \hat{q}_c = \Sigma_b = \hat{\Sigma}_b = \hat{\Sigma}_c = 0\,,
\label{eq:zero-overlaps-l2}
\end{align}
recovering the fixed-point system for $\{f_a, \hat{f}_a, \; q_a, \hat{q}_a, \Sigma_a, \hat{\Sigma}_a\}$ from~\cite{gerace-loureiro-krzakala-etal:2021,goldt-loureiro-reeves-etal:2022}. Once obtained, these parameters determine $\Sigma_c$ and $q_c$. For two quantities, the limit $\tau \downarrow 0$ is discontinuous. First, as is apparent from the derivation of~\eqref{eq:update_hat_sobo_h1} in Appendix~\ref{app:replica}, the overlap parameter $s_b = \kappa_0' V_k^\top \Theta w^*$ is no longer determined through the replica-symmetric saddle-point equations when $\tau = 0$ and~\eqref{eq:def-sab} is invalid in this case. As detailed in Appendix~\ref{app:varpi-sb-l2}, we find instead for $\tau = 0$ that
\begin{align}
\begin{pmatrix}
\varpi\\
s_b
\end{pmatrix}
\sim {\cal N} \left(
\begin{pmatrix}
0\\
0
\end{pmatrix}, 
\begin{pmatrix}
1 & f_a\\
f_a & q_a - \kappa_*^2 \norm{w^*}^2
\end{pmatrix} \otimes I_k 
\right)\,.
\label{eq:varpi-sb-distr-l2-train}
\end{align}
Second, the $H^1_k$ training error is not determined by $\hat{q}_c = 0$ in this setting, but we have $\eps_{\text{train}}^{H^1_k} =  \eps_{\text{gen}}^{H^1_k}$ instead. These simplifications reduce the saddle-point equation~\eqref{eq:update_hat_sobo_h1} to
\begin{align}
\begin{cases}
\hat{\Sigma}_a &= \frac{\alpha}{1 + \Sigma_a}\\[2pt]
\hat{f}_a &= \frac{\alpha}{1 + \Sigma_a} \EE\left[ \phi'(\omega) \right]\\[2pt]
\hat{q}_a &= \frac{\alpha}{\left(1 + \Sigma_a \right)^2} \left(C_{\eta, 11} + \EE\left[(\phi(\omega) - s_a)^2 \right] + q_a - 2 f_a \EE \left[\phi'(\omega) \right] \right)\,.
\end{cases}
\label{eq:l2-hat-update}
\end{align}
Since the random matrices in~\eqref{eq:def-random-matrices-for-saddlepoint-eqs} reduce to
\begin{align}
\begin{cases}
A &= \left( \lambda + \kappa_*^2 \hat{\Sigma}_a \right) I_p + \kappa_1^2 \hat{\Sigma}_a \Theta^\top \Theta\,,\\[2pt]
\Xi &= \kappa_*^2 \hat{q}_a I_p + \kappa_1^2 \left(\hat{f}_a^2/\gamma + \hat{q}_a \right) \Theta^\top \Theta\,,
\end{cases}
\label{eq:l2-simplify-random-matrices}
\end{align}
we can easily evaluate the random matrix statistics in~\eqref{eq:update_nonhat_sobo_h1} in terms of the Stieltjes transform $g_\mu(z) := \int_{\RR} \tfrac{\dd \mu(t)}{t-z}$, $z \in \CC \setminus \text{supp}(\mu)$ of the spectral density $\mu$ of~$\Theta \Theta^\top \in \RR^{d \times d}$ in the proportional asymptotics limit:\footnote{We use the Stieltjes transform of $\Theta \Theta^\top$ instead of $\Theta^\top \Theta$ here, so that the result aligns with the convention used in~\cite{gerace-loureiro-krzakala-etal:2021}.}
\begin{align}
\begin{cases}
\Sigma_a &= \frac{\gamma}{\hat{\Sigma}_a} \left[1 - z g_\mu(-z) \right] + \gamma \frac{\kappa_*^2}{\hat{\Sigma}_a \kappa_1^2} \left[ z^{-1} \left(\gamma^{-1}-1\right) + g_\mu(-z) \right]\,,\\[2pt]
f_a &= \frac{\hat{f}_a}{\hat{\Sigma}_a} \left[1 - z g_\mu(-z) \right]\,, \\[2pt]
q_a &= \left(\hat{f}_a^2 + \gamma \hat{q}_a \right) \frac{1}{\hat{\Sigma}_a^2}\left[1 - 2zg_\mu(-z) + z^2 g'_\mu(-z) \right] + \gamma \frac{\kappa_*^4}{\kappa_1^4 \hat{\Sigma}_a^2} \hat{q}_a \left[z^{-2} (\gamma^{-1}-1) + g_\mu'(-z) \right] \\[2pt]
&\quad + \left(2\gamma \hat{q}_a + \hat{f}_a^2 \right) \frac{ \kappa_*^2}{\hat{\Sigma}_a^2 \kappa_1^2} \left[g_\mu(-z) - zg'_\mu(-z) \right]\,,
\end{cases}
\label{eq:l2-nonhat-update}
\end{align}
where $z = \left( \lambda + \kappa_*^2 \hat{\Sigma}_a \right) / \left(\kappa_1^2 \hat{\Sigma}_a \right)$. Notably, the Hadamard products in~\eqref{eq:update_nonhat_sobo_h1} and~\eqref{eq:def-random-matrices-for-saddlepoint-eqs} drop out immediately in this case by~\eqref{eq:zero-overlaps-l2}, and the remaining matrix traces can be expressed via Stieltjes transforms using standard algebraic manipulations as listed in Appendix~\ref{app:l2-stieltjes}. For random features $\Theta_{ij} \overset{\text{iid}}{\sim} {\cal N}(0,1/d)$, the corresponding spectral measure~$\mu$ is the MP law with Stieltjes transform~\cite{bai-silverstein:2010}
\begin{align}
g_\mu(z) = \frac{\tfrac{1}{\gamma}(1 - \gamma) - z + \sqrt{\left(z - \tfrac{1}{\gamma}\left(1 + \gamma\right)\right)^2 - 4 \tfrac{1}{\gamma} }}{2 z }\,.
\end{align}
After solving the low-dimensional system of equations~\eqref{eq:l2-hat-update} and~\eqref{eq:l2-nonhat-update} for the $a$-indexed overlap parameters, the remaining parameter~$q_c$ is determined from the corresponding right-hand side of~\eqref{eq:update_nonhat_sobo_h1} via
\begin{align}
q_c = \underbrace{\plim_{p \to \infty} \tfrac{1}{p} \trace \left[A^{-1} \Xi A^{-1} \right]}_{\overset{\eqref{eq:training-err-reg}}{=}\plim_{p \to \infty} \lVert w^* \rVert^2} \cdot  \underbrace{\plim_{p \to \infty} \tfrac{1}{p} \trace \left[\left(\kappa_1' \right)^2 \Theta^\top \Theta + \left(\kappa_*' \right)^2 I_p \right]}_{=\left(\kappa_1' \right)^2+\left(\kappa_*' \right)^2 } I_k\,,
\label{eq:trace-a-xi-a-l2}
\end{align}
as derived in more detail in Appendix~\ref{app:hada-trace-l2}.
We can then compute the distribution and summary statistics of the $H^1_k$ generalization error according to~\eqref{eq:sobo_gen_error} as
\begin{align}
\EE \left[\eps^{H^1_k}_{\text{gen}} \right] = k \left(\EE \left[(\phi'(\omega))^2 \right]  + q_a + \left(\left(\kappa_1' \right)^2+\left(\kappa_*' \right)^2 - \kappa_*^2 \right) \norm{w^*}^2 - 2 f_a \EE \left[\phi'(\omega) \right]\right) + \trace\left[ C_{\eta,2:k+1,2:k+1} \right]\,.
\label{eq:h1k-gen-l2-train}
\end{align}
Finally, the remaining trace in~\eqref{eq:trace-a-xi-a-l2}, which also appears in the optimal regularization term~\eqref{eq:training-err-reg} and the $H^1_k$ error~\eqref{eq:h1k-gen-l2-train}, can be expressed via Stieltjes transforms, similarly to~\eqref{eq:l2-nonhat-update}, as
\begin{align}
\plim_{p \to \infty}  \norm{w^*}^2 = \plim_{p \to \infty} \tfrac{1}{p} \trace \left[A^{-1} \Xi A^{-1} \right] = \gamma \frac{\kappa_*^2}{\kappa_1^4 \hat{\Sigma}_a^2} \hat{q}_a  \left[z^{-2} (\gamma^{-1}-1) + g_\mu'(-z) \right] +  \left(\hat{f}_a^2 + \gamma \hat{q}_a \right) \frac{1}{\kappa_1^2 \hat{\Sigma}_a^2}\left[g_\mu(-z) - zg'_\mu(-z) \right] \,.
\end{align}
\end{remark}

%%%%%%%%%%%%%%%%%%%%%%%%%%%%%%%%%%%%%%%%%%%%%%%%%%%%%%%%%%%%%%%

\subsection{Evaluation of the fixed-point system}
\label{sec:eval}

\subsubsection{Asymptotic simplifications of the fixed point system}
\label{sec:asym_rmf}

As stated in Remark~\ref{rem:saddlepoint-observations} (b), we can further simplify the fixed-point equations~\eqref{eq:update_hat_sobo_h1} and~\eqref{eq:update_nonhat_sobo_h1}. Technical details are deferred to Appendix~\ref{app:simplify-fp}. The result is that (i) the $\varpi$-dependence of the overlap parameters is explicitly given by
\begin{align}
\begin{cases}
s_a = s_a^{(0)}  &  \\
s_{b,i} = s_b^{(1)} \varpi_i,~i \in [k] & \\
\hat{\Sigma}_a = \hat{\Sigma}_a^{(0)} & \Sigma_a = \Sigma_a^{(0)} \\
\hat{\Sigma}_{c,ii} = \hat{\Sigma}_c^{(0)},~i \in [k] & \Sigma_{c,ii} = \Sigma_c^{(0)},~i\in [k]\\
\hat{f}_a = \hat{f}_a^{(0)} & f_a = f_a^{(0)} \\
\hat{f}_{b,i} = \hat{f}_{b}^{(1)} \varpi_i,~i \in [k] & f_{b,i} = f_b^{(1)} \varpi_i,~i \in [k] \\
\hat{q}_a = \hat{q}_a^{(2)} \|\varpi\|^2 + \hat{q}_a^{(0)} & q_a = q_a^{(2)} \|\varpi\|^2 + q_a^{(0)} \\
\trace{\hat{q}_{c}} = \hat{q}_c^{(2)} \|\varpi\|^2 + \hat{q}_c^{(0)}, & \trace{q_{c}} = q_c^{(2)}\|\varpi\|^2 + q_c^{(0)}\,.
\end{cases}
\label{eq:varpi-scaling}
\end{align}
in terms of $\varpi$-independent scalar coefficients, and (ii) the random matrix traces in~\eqref{eq:update_nonhat_sobo_h1} can be reduced and expressed without Hadamard products, resulting in the following system of equations, with $\Tr_p := \plim_{p \to \infty} \tfrac{1}{p}\trace$:
\begin{align}
&\begin{cases}
\Sigma_{a}^{(0)} &= \Tr_p \left[ A^{-1} M_{00} \right]\,,\\
    \Sigma_{c}^{(0)} &= \Tr_p \left[ A^{-1} D_1 M_{11} D_1 \right]\,,\\[5pt]
f_a^{(0)} &= \frac{1}{\gamma} \kappa_1^2 \Tr_p \left[ A^{-1} \Theta^\top \Theta \right] \hat{f}_a^{(0)}\,,\\
f_{b}^{(1)} &= \frac{1}{\gamma} (\kappa_1')^2 \Tr_p \left[ A^{-1} D_1 \Theta^\top \Theta D_1 \right] \hat{f}_{b}^{(1)}\,,\\[5pt]
	q_a^{(0)} &=  \kappa_1^2 \frac{1}{\gamma} \left(\hat{f}_a^{(0)}\right)^2 \Tr_p \left[ A^{-1}\Theta^\top\Theta A^{-1}M_{00} \right] + \hat{q}_a^{(0)} \Tr_p \left[ A^{-1} M_{00} A^{-1} M_{00} \right] + \hat{q}_{c}^{(0)} \Tr_p \left[ A^{-1}D_1 M_{11} D_1 A^{-1} M_{00} \right]\,, \\
	q_a^{(2)} &=  (\kappa_1')^2 \frac{1}{\gamma} \left(\hat{f}_{b}^{(1)} \right)^2 \Tr_p \left[ A^{-1} D_1\Theta^\top \Theta D_1 A^{-1} M_{00}\right] + \hat{q}_a^{(2)} \Tr_p \left[ A^{-1} M_{00} A^{-1} M_{00} \right] + \hat{q}_{c}^{(2)} \Tr_p \left[ A^{-1}D_1 M_{11} D_1 A^{-1} M_{00} \right]\,,  \\
	q_{c}^{(0)} &= k \cdot \kappa_1^2 \frac{1}{\gamma} \left(\hat{f}_a^{(0)} \right)^2 \Tr_p \left[ A^{-1}\Theta^\top\Theta A^{-1} D_1M_{11}D_1 \right] +  k \cdot \hat{q}_a^{(0)} \Tr_p \left[ A^{-1} M_{00} A^{-1}D_1M_{11}D_1 \right]\\
    &\quad\quad + \hat{q}_{c}^{(0)} \Tr_p \left[  A^{-1}D_1 M_{11} D_1 A^{-1}D_1M_{11}D_1 \right]  + (k-1) \cdot \hat{q}_c^{(0)} \Tr_p \left[  A^{-1}D_1 M_{11} D_1 A^{-1}D_2M_{11}D_2 \right]\,,  \\
	q_{c}^{(2)} &=  (\kappa_1')^2 \frac{1}{\gamma} \left(\hat{f}_{b}^{(1)} \right)^2 \Tr_p \left[A^{-1} D_1\Theta^\top \Theta D_1 A^{-1} D_1M_{11}D_1 \right] \\
    &\quad\quad+  (k-1)\cdot(\kappa_1')^2 \frac{1}{\gamma} \left(\hat{f}_{b}^{(1)}\right)^2 \Tr_p \left[ A^{-1} D_1\Theta^\top \Theta D_1 A^{-1} D_2M_{11}D_2 \right] \\
    &\quad\quad + k \cdot \hat{q}_a^{(2)} \Tr_p \left[ A^{-1} M_{00} A^{-1}D_1M_{11}D_1 \right] + \hat{q}_{c}^{(2)} \Tr_p \left[ A^{-1}D_1 M_{11} D_1 A^{-1} D_1M_{11}D_1 \right]  \\
    &\quad\quad + (k-1)\cdot \hat{q}_{c}^{(2)} \Tr_p \left[ A^{-1}D_2 M_{11} D_2 A^{-1} D_1M_{11}D_1 \right]\,,  
\end{cases} \label{eq:asymptotic_saddle_final_nonhat}\\
&\begin{cases}
\hat{\Sigma}_a^{(0)} &= \frac{\alpha}{1 + \Sigma_a^{(0)}}\,, \\
\hat{\Sigma}_c^{(0)} &= \frac{\alpha \tau}{1 + \Sigma_c^{(0)} \tau}\,, \\[7pt]
\hat{f}_a^{(0)} &= \hat{\Sigma}_a^{(0)}\cdot\EE\left[\phi'(\omega)\right]\,, \\
\hat{f}_b^{(1)} &= \hat{\Sigma}_c^{(0)}\cdot\EE \left[\phi''(\omega) \right]\,, \\[5pt]
\hat{q}_a^{(0)} &= \alpha^{-1} \hat{\Sigma}_a^{(0)} \left(C_{\eta,11} + q_a^{(0)} + \EE\left[\left(\phi(\omega) - s_a^{(0)}\right)^2 \right] \right) \hat{\Sigma}_a^{(0)} - 2 \alpha^{-1} \hat{\Sigma}_a^{(0)} f_a^{(0)} \hat{f}_a^{(0)} \,, \\
\hat{q}_a^{(2)} &= \alpha^{-1} \hat{\Sigma}_a^{(0)} q_a^{(2)} \hat{\Sigma}_a^{(0)}\,, \\
\hat{q}_c^{(0)} &= \alpha^{-1} \hat{\Sigma}_c^{(0)} \left(\trace{C_{\eta,2:k+1,2:k+1}} + q_c^{(0)} \right) \hat{\Sigma}_c^{(0)}\,,  \\
\hat{q}_c^{(2)} &= \alpha^{-1} \hat{\Sigma}_c^{(0)} \left(q_c^{(2)} + \EE\left[\left(\phi'(\omega) - s_b^{(1)}\right)^2\right] \right) \hat{\Sigma}_c^{(0)} - 2 \alpha^{-1} \hat{\Sigma}_c^{(0)} f_b^{(1)} \hat{f}_b^{(1)}\,.
\end{cases}
\label{eq:asymptotic_saddle_final_hat}
\end{align}
Here, $A =  \lambda I_p + \hat{\Sigma}_a^{(0)} M_{00} + \hat{\Sigma}_{c}^{(0)} \sum_{i \in[k]} D_i M_{11} D_i$, $M_{00}  = \kappa_1^2 I_p + \kappa_*^2 \Theta^\top \Theta$, $M_{11} = \left(\kappa_1'\right)^2 I_p + \left( \kappa_*' \right)^2 \Theta^\top \Theta$, and $D_i = \diag{\zeta_i}, i \in [k]$ where $\zeta_i \sim {\cal N}(0,I_p)$ are iid random vectors that are independent of $\Theta^\top \Theta$. Notably, the number of unknowns of the fixed-point system becomes independent of $k$, and it only needs to be solved once for a given set of hyperparameters to characterize the full distribution of the overlap parameters and generalization errors. Explicitly, this recovers from~\eqref{eq:l2_gen_error} and~\eqref{eq:sobo_gen_error} that the generalization errors
\begin{align}
\label{eq:l2_gen_error_varpi}
\plim_{p \to \infty} \eps^{L^2}_{\text{gen}} \mid \varpi &= \left( \EE\left[\left(\phi(\omega) - s_a\right)^2 \right] - 2 \EE[\phi'(\omega)] f_a^{(0)} + (C_{\eta})_{11} + q_a^{(0)} \right) + q_a^{(2)} \|\varpi\|^2 \;,\\
\label{eq:sobo_gen_error_varpi}
\plim_{p \to \infty} \eps^{H^1_k}_{\text{gen}} \mid \varpi
&= \left( \trace \left[ C_{\eta,2:k+1,2:k+1} \right] + q_c^{(0)} \right) + \left( 
\EE \left[\left( \phi'(\omega) - 1_{\kappa_0' \neq 0} \EE[\phi'] \right)^2\right] - 2\EE[\phi''(\omega)] f_b^{(1)} + q_c^{(2)} \right) \| \varpi \|^2 \;,
\end{align}
are shifted-and-scaled $\chi_k^2$ random variables with $k$ degrees of freedom as $\varpi \sim \mathcal{N}(0, I_k)$. Similar expressions hold for the training errors~\eqref{eq:training-err-l2} and~\eqref{eq:training-err-h1k}.

\subsubsection{Evaluating the remaining traces using operator-valued free probability}
\label{sec:linearization-main-text}

Here, we show how the traces of random matrices in the right-hand sides of~\eqref{eq:asymptotic_saddle_final_nonhat} can be evaluated \emph{without} Monte Carlo (MC) sampling of the random matrices $\Theta^\top \Theta \in \RR^{p \times p}$ and $D_i = \diag{\zeta_i} \in \RR^{p \times p}$, with $\Theta_{ij} \overset{\text{iid}}{\sim} {\cal N}(0,1/d)$ and $\zeta_i  \overset{\text{iid}}{\sim} {\cal N}(0, I_p)$, for large but finite $p$. Instead, they can be computed as solutions of another self-consistent fixed point system. Thus, equations~\eqref{eq:asymptotic_saddle_final_nonhat} and~\eqref{eq:asymptotic_saddle_final_hat} present a genuinely low-dimensional system of equations capturing the training and testing errors of the RF model.\\ 

We follow the ``lifting'' strategy of operator-valued free probability developed in~\cite{belinschi-mai-speicher:2017, helton-mai-speicher:2018}. The idea is to convert the rational functions of the elementary building blocks $\Theta^\top \Theta, D_1, \dots, D_k$ in the right-hand sides of~\eqref{eq:asymptotic_saddle_final_nonhat} to linear block-matrix pencils. We then compute the traces via the operator-valued Cauchy transform of each pencil, which requires solving a finite-dimensional fixed-point system for the so-called subordinator function. Our strategy differs from the approach of~\citet{adlam-pennington:2020} in a related analysis, where they linearize their random matrix functions to a Gaussian block matrix with free elements and solve the associated Dyson equation. This procedure is not possible here since the $D_i$'s are not free with respect to each other.\\

To keep our presentation self-contained, we defer to Appendix~\ref{app:free-prob-intro} our introduction to all necessary concepts mentioned above; this primer follows \citet{mingo-speicher:2017} and includes a number of toy examples for illustrative purposes. Instead, in this section we demonstrate our approach for a prototypical trace in the right-hand side of~\eqref{eq:asymptotic_saddle_final_nonhat} corresponding to the overlap parameter~$f_b^{(1)}$, namely 
\begin{align}
\label{eq:cauchy_trafo_example}
&\lim_{p \to \infty} \Tr_p \left[ A^{-1} D_1 \Theta^\top \Theta D_1 \right] \\
= &\lim_{p \to \infty} \frac{1}{p} \trace \left[ \left(\lambda I_p + \hat{\Sigma}_a^{(0)} \left(\kappa_*^2 I_p + \kappa_1^2 \Theta^\top \Theta \right) + \hat{\Sigma}_{c}^{(0)} \sum_{i \in[k]} D_i \left(\left(\kappa_*'\right)^2 I_p + \left( \kappa_1' \right)^2 \Theta^\top \Theta \right) D_i \right)^{-1} D_1 \Theta^\top \Theta D_1 \right]\\
=&  \; \varphi \left(\left(z_0 \cdot 1 + z_1 m + \sum_{i \in [k]} g_i \left(z_2 \cdot 1 + z_3 m \right) g_i \right)^{-1} g_1 m g_1 \right)\\
=& \;  - \varphi \left( \left( 0 \cdot 1 - (g_1 m g_1)^{-1} \left( z_0 \cdot 1 + z_1 m + \sum_{i \in [k]} g_i \left(z_2 \cdot 1 + z_3 m \right) g_i \right) \right)^{-1} \right) =: - \varphi \left( \left( 0 \cdot 1 - r  \right)^{-1} \right) = - G_r(0)\,.
\end{align}
Here, $\varphi$ is the limiting state function as in~\eqref{eq:limit-state-trace}, $m$ is an MP$\left(1/\gamma\right)$ element corresponding to the spectral limit of $\Theta^\top \Theta$, the elements $g_1, \dots, g_k$ correspond to the spectral limits of $D_1, \dots, D_k$, which are all free from~$m$, and $z_0=\lambda + \kappa_*^2 \hat{\Sigma}_a^{(0)}$, $z_1 = \kappa_1^2 \hat{\Sigma}_a^{(0)}$, $z_2 =  (\kappa_*')^2 \hat{\Sigma}_c^{(0)}$, and $z_3 = (\kappa_1')^2 \hat{\Sigma}_c^{(0)}$. Then, $G_r(z) \in \CC$ denotes the Cauchy transform of the rational function $r(m, g_1, \dots, g_k)$ at $z \in \CC$, which we evaluate at $z=0$.\\ 

Using the linearization algorithms in Appendices~\ref{sec:linearize-polynomial} and~\ref{sec:linearize-rational} as developed in~\cite{belinschi-mai-speicher:2017, helton-mai-speicher:2018}---which follow from the Schur complement formula---we now construct a block-matrix $\hat{r}$ that is affine-linear in all random elements and satisfies
\begin{align}
G_r(z) = G_{\hat{r}} \left(\begin{pmatrix}
\diag{z, 0, \dots, 0}
\end{pmatrix} \right)_{11}\,,
\end{align} 
where $G_{\hat{r}}(Z)$ denotes the operator-valued Cauchy transform of the block-matrix. Following the provided algorithms (see Appendix~\ref{app:linearization-examples} for a detailed demonstration for multiple toy examples), we obtain
\begin{align} 
&r = (g_1 m g_1)^{-1}\left(z_0 + z_1 m + \sum_{i\in[k]} g_i (z_2 + z_3 m) g_i\right) \notag \\
\xrightarrow{\text{lin}} 
& \quad \hat{r} = \begin{pmatrix} 
\begin{array}{c|cccccccccc}
0 & 0 & 0 & &  & \hdots &  &  0 & 1 & 0 & 0 \\
\hline
0 & 0 & 1 & 0 & g_1 & \hdots & 0 & g_k & 0 & 0& g_1 \\
0 &  &  &  &  & \hdots &  &  & 0  & -m  & 1 \\
0 &  &  &  &  & \hdots &  &  & g_1  & 1  & 0 \\
0 & z_0 + z_1 m & -1& & & & & \\
1 & -1 & 0 & & & & & \\
0 & & &  z_2 + z_3 m & -1 \\
g_1 & & & -1 & 0 \\
\vdots & & & & & \ddots \\
0 & & & & & & z_2 + z_3 m & -1 \\
g_k & & & & & & -1 & 0 
\end{array}
\end{pmatrix} = A \otimes 1 + B \otimes m + \sum_{i=1}^k C_i \otimes g_i\,,
\end{align}
with deterministic coefficient matrices $A = A(z_0, z_2),\; B  = B(z_1, z_3),\; C_1, \dots, C_k \in \CC^{l \times l}$ and a ``lifting dimension'' of $l = 6 + 2k$ in this particular example. Then, abbreviating $C =  C_1 \otimes g_1 + \dots + C_k \otimes g_k$ and observing that $A \otimes 1 + B \otimes m$ and $C$ are free, we can compute the operator-valued Cauchy transform of their sum $G_{\hat{r}}(Z) \in \CC^{l \times l}$ at any $Z \in \CC^{l \times l}$ from $G_{ A \otimes 1 + B \otimes m }$ and $G_{C}$ via a subordinator function $\mathfrak{s} \colon\; \CC^{l \times l} \to  \CC^{l \times l}$ as
\begin{align}
G_{\hat{r}}(Z) = G_{A \otimes 1 + B \otimes m}(\mathfrak{s}(Z))\,.
\label{eq:subord-main-text}
\end{align}
The subordinator $\mathfrak{s}(Z)$ in~\eqref{eq:subord-main-text} is found by solving the $l\times l$-dimensional fixed-point system
\begin{align}
\mathfrak{s}(Z) = H_{C}(H_{A \otimes 1 + B \otimes m}(\mathfrak{s}(Z)) + Z ) +Z\,,
\label{eq:fixed-point-subord}
\end{align}
cf.~\eqref{eq:operator-subord} and \eqref{eq:operator-subord-fixed-p} in Appendix~\ref{app:operator-free}, where $H(Z) := (G(Z))^{-1} - Z$. The fixed-point equation~\eqref{eq:fixed-point-subord} has a unique solution with $\text{Im}(\mathfrak{s}(Z)) \succ 0$ when $\text{Im}(Z) \succ 0$.\\

To evaluate the right-hand side of~\eqref{eq:fixed-point-subord}, we require access to $G_{ A \otimes 1 + B \otimes m }(Z) = G_{B \otimes m }(Z-A) $, which can be computed via a one-dimensional integral over the MP law
\begin{align}
G_{A \otimes 1 + B \otimes m}\left(Z \right) =  \int_\RR \left(Z - A - \lambda B \right)^{-1}\dd \mu(\lambda)\,,
\end{align}
as in~\eqref{eq:lifted-cauchy-int}. The integral is straightforward and efficient to evaluate via, e.g.,\ Gauss--Legendre quadrature for the compactly supported measure of the MP law $\mu$ in~\eqref{eq:mp-law}. We also require evaluations of
\begin{align}
G_{C}(Z) = \frac{1}{(2 \pi)^{k/2}} \int_{\RR^k} \left(Z- \left(w_1 C_1  + \dots + w_k C_k \right) \right)^{-1} \exp \left\{-\frac{1}{2} \norm{w}^2 \right\} \dd^k w \;,
\end{align}
which involves an expectation with respect to the $k$-dimensional standard normal distribution. The number of MC samples needed to resolve this expectation increases with $k$ though the computation is easily parallelized. Computing $(Z-C)^{-1}$ presents an additional challenge as we must invert an $l \times l$-dimensional matrix, where $l$ scales with $k$.  One can exploit potential structure in matrix factors $C_1$, $C_2$, \ldots, $C_k$ to lower the computational cost. For instance, in the example above we recognize 
$$
w_1 C_1 +\ldots + w_k C_k =  e_2 \otimes \begin{pmatrix} 0_{3 \times 1} \\ 0 \\ w_1 \\ \vdots \\ 0 \\ w_k  \\ 0 \\ 0 \\ w_1 \end{pmatrix} + e_3 \otimes \begin{pmatrix} 0_{(3+2k) \times 1} \\ w_1 \\ 0 \\ 0 \end{pmatrix} + \begin{pmatrix} 0_{6 \times 1} \\ 0 \\ g_1 \\ \vdots \\ 0 \\ g_k \end{pmatrix} \otimes e_1 =: U_3 V_3^\top \;,
$$ 
is rank-three with low-rank factors $U_3, V_3 \in \RR^{l \times 3}$, so 
$
(Z-C)^{-1} = Z^{-1} - Z^{-1} U_3 (I_3 + V_3^\top Z^{-1} U_3)^{-1} V_3^\top Z^{-1}
$
by the Woodbury matrix identity. The advantage to this representation is that $Z^{-1}$ only needs to be computed once across all MC samples of~$w$, and evaluating each sample involves only the inverse of a $3\times 3$ matrix, which can even be analytically computed via the method of cofactors. We validate our theoretical expression in the right-hand side of~\eqref{eq:cauchy_trafo_example} against MC evaluations of the left-hand side in Table~\ref{tab:cauchy_trafo}.

\begin{table}
\centering
\caption{Comparison of MC estimator of the left-hand side of~\eqref{eq:cauchy_trafo_example} for $k=1$ evaluated over $1000$ samples in finite dimensions against the operator-valued Cauchy transform approach for the right-hand side of~\eqref{eq:cauchy_trafo_example}. We evaluate $G_C(Z)$ in the subordinator equation~\eqref{eq:fixed-point-subord} using a degree~$150$ Gauss--Hermite quadrature. Other parameters: $n/d=2.345$, $p/n=0.5$, $\hat{\Sigma}_a^{(0)} = 0.2$, and $\hat{\Sigma}_c^{(0)} = 0.4$. For the Hermite coefficients corresponding to the different choices of $\sigma$, see Table~\ref{tab:activation}.}
\renewcommand{\arraystretch}{1.3}
\begin{tabular}{|c|ccc|} 
\hline
 & Linear pencil method & MC ($d=100$) & MC ($d=1000$) \\ \hline
$\sigma=\mathrm{ReLU}$ & $3.7750$ & $3.7594 \pm 0.0055$ &  $3.7750 \pm 0.0017$\\
$\sigma=\mathrm{erf}$ & $6.7234$ & $6.6792 \pm 0.0120$ &  $6.7209 \pm 0.0035$\\
\hline
\end{tabular}
\label{tab:cauchy_trafo}
\end{table}

\subsubsection{Verification of the theory through comparison with Monte Carlo simulations}
 
In summary, after fixing the Sobolev training hyperparameters ($\gamma$, $\alpha$, $\lambda$, $k$, \ldots), we solve equations~\eqref{eq:asymptotic_saddle_final_nonhat} and~\eqref{eq:asymptotic_saddle_final_hat} to determine the overlap parameters. These overlap parameters then allow us to theoretically predict the distributions and moments of the generalization errors, via~\eqref{eq:l2_gen_error} and~\eqref{eq:sobo_gen_error}, and the training errors, via~\eqref{eq:training-err-l2} through~\eqref{eq:training-err-reg}. \\

Algorithmically, this proceeds as follows:
\begin{enumerate}
\item  Solve the closed system for the four scalar parameters $\Sigma_a^{(0)}, \Sigma_c^{(0)}, \hat{\Sigma}_a^{(0)}, \hat{\Sigma}_c^{(0)}$. 

We solve this system using the root-finding algorithm ``excitingmixing'' in SciPy~\cite{virtanen-etal:2020}, which implements Newton's method with a tuned diagonal Jacobian approximation. We terminate the iterations once the relative tolerance of the residual is less than $10^{-2}$. Within each of these `outer' iterations, we evaluate the random matrix traces in the right-hand sides of~\eqref{eq:asymptotic_saddle_final_nonhat} using the linearization method outlined in Subsection~\ref{sec:linearization-main-text}. This involves solving another fixed point equation~\eqref{eq:fixed-point-subord} for each trace, which we achieve using damped fixed point iterations with damping factor $\gamma = 0.2$. These ``inner'' iterations  are terminated once the Frobenius norm between successive iterations of the subordinator is less than $10^{-8}$.

\item Compute $\hat{f}_a^{(0)}$ and $\hat{f}_b^{(1)}$, then compute $f_a^{(0)}$ and $f_b^{(1)}$. 

\item Solve the linear system of equations for $q_a^{(0)}, q_a^{(2)}, q_c^{(0)}, q_c^{(2)}, \hat{q}_a^{(0)}, \hat{q}_a^{(2)}, \hat{q}_c^{(0)}, \hat{q}_c^{(2)}$. 

We directly invert the $8 \times 8$ linear system to produce these $q$ overlap parameters. Note that since the noise covariances $C_{\eta,11}$ and $\trace C_{\eta,2:k+1,2:k+1}$ appear in the right-hand side of this system, we immediately obtain the overlap parameters for \emph{all} noise levels.

\item Assemble the $\varpi$-dependent overlap parameters~\eqref{eq:varpi-scaling} and compute the generalization errors (\eqref{eq:l2_gen_error} and~\eqref{eq:sobo_gen_error}) and training errors (\eqref{eq:training-err-l2} through~\eqref{eq:training-err-reg}) either via sampling $\varpi \sim {\cal N}(0, I_k)$, or by analytically computing moments of the $\chi^2_k$-distributed errors.
\end{enumerate}

Figure~\ref{fig:intro} (right) validates the theoretically predicted error curves against error curves obtained via MC simulations of~\eqref{eq:training-problem}. 
For a single realization of~$\varpi$, simulating a single error curve via MC over $71$ equispaced samples of $p/n \in [0.01, 4.0]$ on our machine with thirty-two 2.60GHz Intel Xeon CPUs and 300 Gb of RAM requires $\sim \mbox{3:47}$ minutes for $d=200$, $\sim \mbox{7:13}$ minutes for $d=500$, $\sim \mbox{18:03}$ minutes for $d=1000$, and $\sim \mbox{68:14}$ minutes for $d=2000$. The main computational bottleneck stems from inverting the dense $p \times p$ matrix in~\eqref{eq:K-matrix-def} whose size grows with~$d$, though we did not explore any preconditioning strategies with iterative solvers. For the same range of parameters, our theoretical predictions using the algorithm above takes $\sim \mbox{34:04}$ minutes to compute and yields the complete error distributions as a function of~$\varpi$. Evidently, extensive computing resources would be required in order to reproduce the parameter scans in Figure~\ref{fig:2d-errors-arctan-plus-reci-cosh-relu} below using MC simulations, particularly when resolving large $\alpha,\gamma$. \\  

Since the theoretical predictions correspond to the proportional asymptotics limit, the only numerical errors originate from the fixed point solves and the operator-valued Cauchy transform evaluations.  In contrast, the MC simulations exhibit finite-size errors from finite $d,n,p$, and statistical errors from finite realizations of~$\Theta$, $V_k$, $\eta$, and $\varpi$.  
Figure~\ref{fig:varpi-dependence} compares the marginal distributions at~$p/n=0.5$ of the $L^2$ and $H^1_k$ generalization error, as well as the marginal distributions of various observable overlap parameters, obtained from both theory and MC simulations. 
Although the MC simulations exhibit finite size effects we observe clear asymptotic convergence as $d \uparrow \infty$ to our theoretical predictions, thus validating our theoretical calculations.\\

\begin{figure}
\centering
\includegraphics[width = \textwidth]{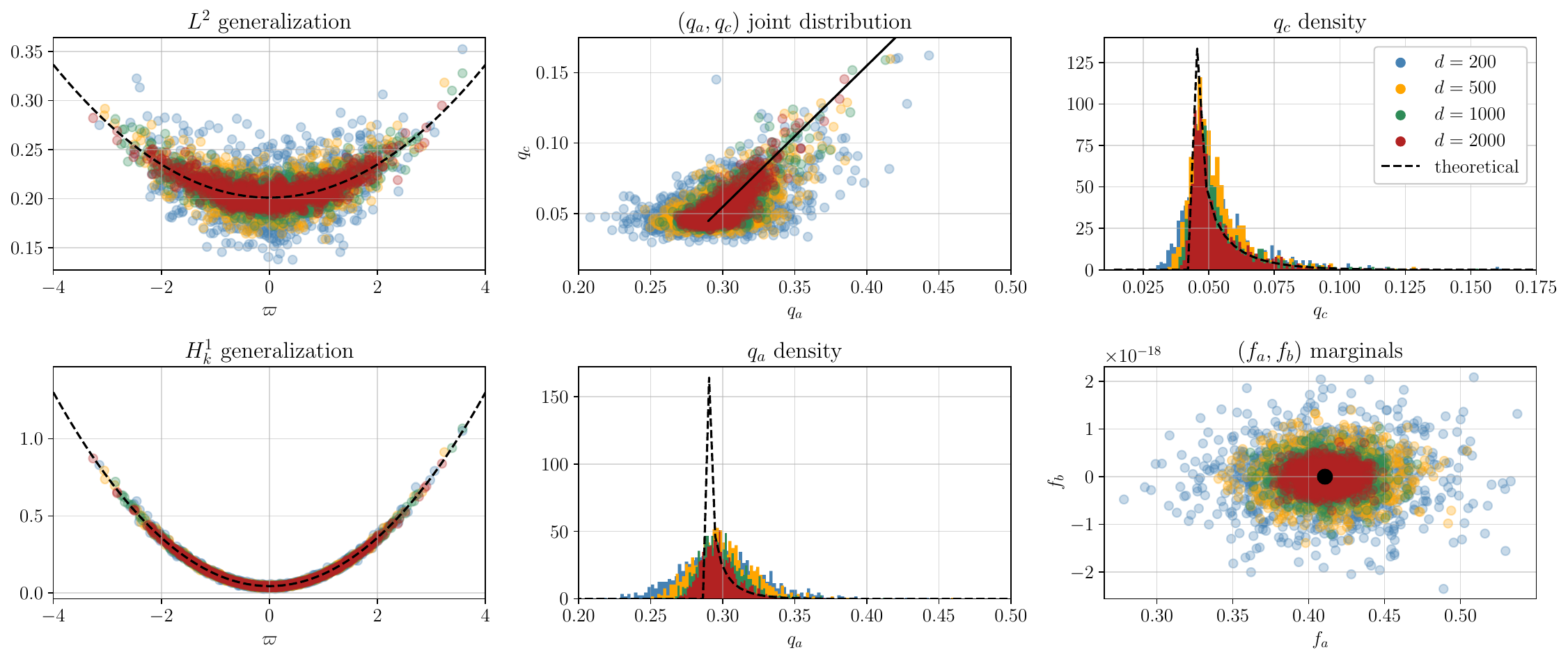}
\caption{Comparison of MC samples of~\eqref{eq:training-problem} to evaluate~\eqref{eq:gen-error-def} and~\eqref{eq:overlap-def} at $p/n=0.5$ and $n/d=2.345$ in finite dimensions $d \in \{200, 500, 1000, 2000\}$, against theoretical predictions~\eqref{eq:l2_gen_error_varpi}, \eqref{eq:sobo_gen_error_varpi}, and~\eqref{eq:asymptotic_saddle_final_hat}. Other parameters: $\sigma=\mathrm{erf}$, $\phi=\arctan$, $k=1$.  Left column: distribution of $L^2$ and $H^1_k$ generalization errors as a function of~$\varpi = V_k^\top \theta_0$. Center and right columns: marginal distributions of the $(f_a, f_b)$ and $(q_a, q_c)$ overlap parameters.}
\label{fig:varpi-dependence}
\end{figure}

%%%%%%%%%%%%%%%%%%%%%%%%%%%%%%%%%%%%%%%%%%%%%%%%%%%%%%%%%%%%%%%

\section{Predictions of the theory}
\label{sec:results}

%%%%%%%%%%%%%%%%%%%%%%%%%%%%%%%%%%%%%%%%%%%%%%%%%%%%%%%%%%%%%%%

\subsection{Expected generalization error landscapes as a function of $p/d$ and $n/d$\label{sec:landscapes}}

\begin{figure}
\centering
\includegraphics[width = \textwidth]{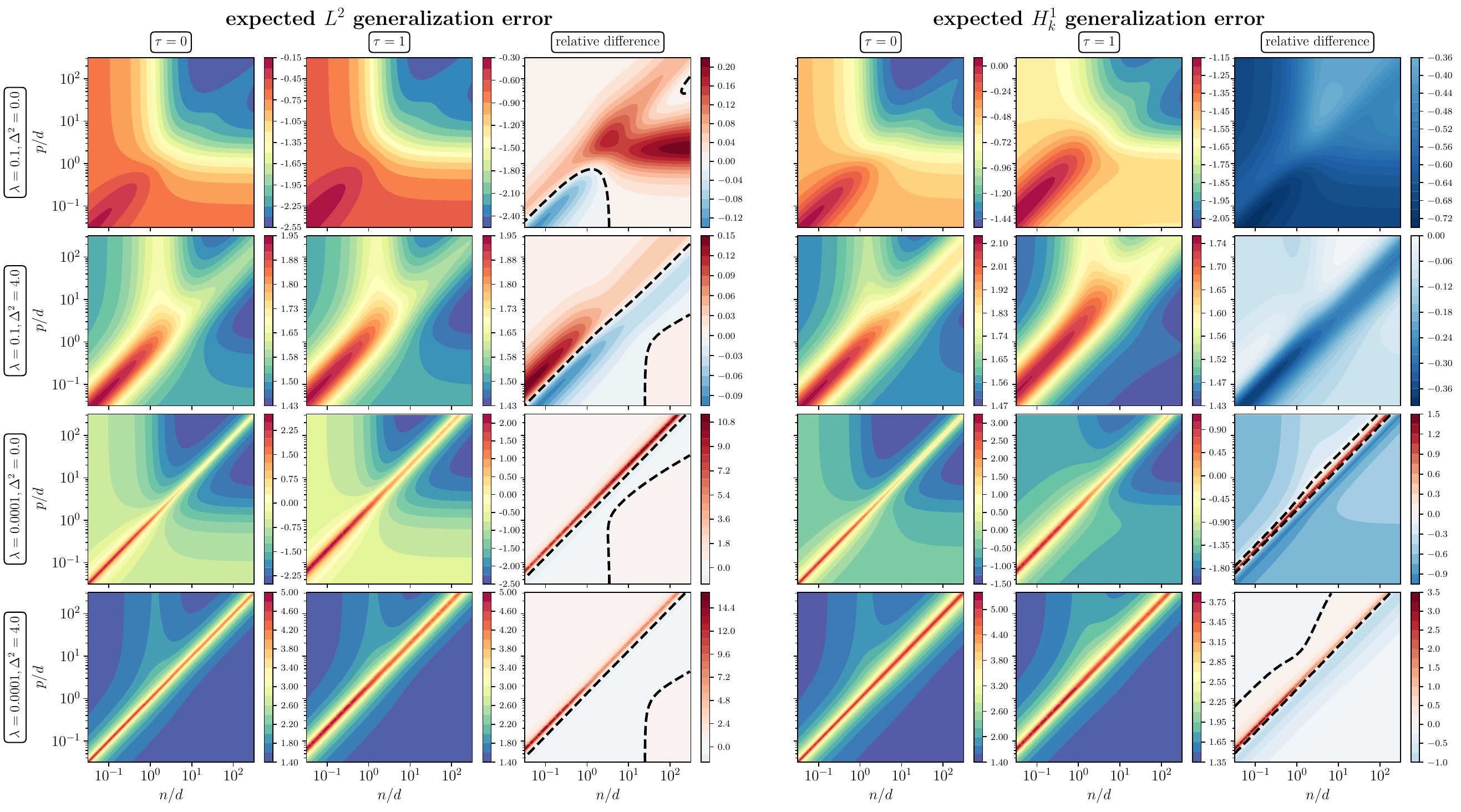}
\caption{Comparison of expected $L^2$ generalization error (left three columns) and $H^1_k$ generalization error (right three columns) of $L^2$ training ($\tau = 0$) and Sobolev training ($\tau = 1$) for $k = 1$ gradient projections as a function of the  number of training samples $n$ and network features $p$, normalized by the dimension $d$. Rows correspond to different regularization strengths $\lambda \in \{10^{-1}, 10^{-4} \}$ and observational noise levels $\Delta^2 \in \{0, 4\}$ for $y_i$ and $V_k^\top y_i'$. Other parameters: $\sigma = \text{ReLU}$, $\phi = \arctan + 1 / \cosh$.
All plots use the expected errors  $\EE [\eps_{\text{gen}}^{L^2} ]$ and $\EE [\eps_{\text{gen}}^{H^1_k} ]$ over the alignment $\varpi \sim {\cal N}(0,1)$ in the limit~\eqref{eq:prop-asymp-def}, as predicted from the theory presented in Section~\ref{sec:main-theory-res}.
The errors themselves are shown on a logarithmic color scale while their relative difference is shown on a linear color scale that is symmetric around zero. Negative relative differences, shown in blue in the third and sixth column, indicate regimes (delimited by the black dashed lines) where Sobolev training outperforms $L^2$ training.}
\label{fig:2d-errors-arctan-plus-reci-cosh-relu}
\end{figure}

To gain a broad overview of Sobolev training, in this section we follow the analysis of~\citet{dascoli-sagun-biroli:2020} and 
investigate two-dimensional ``error landscapes'' as functions of $n/d$ and $p/d$.  The one-dimensional error curves shown in Figure~\ref{fig:intro} in the introduction, and the following subsections below, correspond to vertical slices of such two-dimensional landscapes, i.e., varying $p/n$ at fixed $n/d$, modulo rescaling the axes. The work~\cite{dascoli-sagun-biroli:2020} generates these landscapes for $L^2$ training ($\tau = 0$) of RF models, using the theoretical results of~\cite{gerace-loureiro-krzakala-etal:2021,mei-montanari:2022}, and demonstrates that these capture the same behavior as fully-connected three-layer neural networks trained via stochastic gradient descent. They relate the error landscapes to spectral properties of $K$, i.e., the eigenvalues of~\eqref{eq:K-matrix-def} with $\tau =0$. We reveal similar insights here for Sobolev training $(\tau = 1)$, emphasizing the impact of gradient data on generalization. For the purpose of this comparison, we assume gradient data are obtained ``for free'' and compare $L^2$ and Sobolev training for the same~$n$; we provide comparison which normalizes against different costs for obtaining gradient data in Section~\ref{sec:varying-k}. For simplicity, we focus on $\tau = 0$ versus $\tau = 1$ for $k = 1$. We vary the remaining hyperparameters between large and small regularization $\lambda \in \{10^{-1}, 10^{-4}\}$, large noise vs.\ noiseless training $\Delta^2 \in \{4, 0\}$ with $C_\eta = \Delta^2 \cdot I_{k+1}$ in~\eqref{eq:p-data-additive-gauss}, and different activation functions $\sigma$ and ridge functions $\phi$.\\

Figure~\ref{fig:2d-errors-arctan-plus-reci-cosh-relu} shows the results for the prototypical activation function $\sigma = \text{ReLU}$
and for $\phi = \arctan + 1 / \cosh$. As discussed in Section~\ref{sec:main-theory-res}, the precise form of these functions is not important in the limit~\eqref{eq:prop-asymp-def}, but it does matter which of their low-order Hermite coefficients are nonzero. In this sense, $\sigma = \text{ReLU}$ and $\phi = \arctan + 1 / \cosh$ correspond to the generic case where $\kappa_0,\kappa_1,\kappa_*,\kappa_0',\kappa_1',\kappa_*'$ in~\eqref{eq:hermite-coeff-def}---and the corresponding coefficients for $\phi$---are all nonzero, so that both functions and their derivatives behave as noisy affine-linear functions with nonzero slope and offset. Additional results are provided in Appendix~\ref{app:landscapes} for even or odd $\phi$ and $\sigma$, in which case either the data/network function or gradient has zero slope or offset after linearization.\\

The left three columns of Figure~\ref{fig:2d-errors-arctan-plus-reci-cosh-relu} compare the expected $L^2$ generalization error $\EE [\eps_{\text{gen}}^{L^2} ]$ for different~$\lambda$ and~$\Delta^2$. Broadly speaking, incorporating gradient information into the training loss does not ``topologically'' alter the $L^2$ error landscape. However, a key difference is a shift in the interpolation peak along $p=n$ for $\tau = 0$ to $p = (k+1)n$ for $\tau = 1$. Effectively, gradient observations are treated as additional, independent data. Consequently, as shown by the relative difference plots in the third column in Figure~\ref{fig:2d-errors-arctan-plus-reci-cosh-relu}, the $L^2$ generalization error along the diagonal $p = n$ is larger for $\tau = 0$ than with Sobolev training, whereas the converse is true along the super-diagonal $p = (k+1)n$.
Otherwise, however, the expected $L^2$ generalization error landscapes obtained from Sobolev training demonstrate the same qualitative behavior documented in~\cite{dascoli-sagun-biroli:2020}: a large $\lambda$ regularizes the ``nonlinear peak'' at $p = n$ or $p = (k+1)n$, and there is an additional ``linear'' peak along $n = d$ which is implicitly regularized by the nonlinearity of the activation function. Generically, vertical slices exhibit the phenomenon of double descent~\cite{belkin-hsu-ma-etal:2019} while for certain regularization and signal to noise ratios the horizontal slices can demonstrate triple descent~\cite{dascoli-sagun-biroli:2020}.\\

In general, Figure~\ref{fig:2d-errors-arctan-plus-reci-cosh-relu} demonstrates the surprising result where providing additional gradient data does not uniformly improve (nor uniformly worsen) the $L^2$ generalization performance of RF models.  For the present case of $\sigma = \text{ReLU}$ and $\phi = \arctan + 1 / \cosh$, Sobolev training is advantageous for small networks relative the size of the training data, i.e., for under-parameterized models. This conclusion differs from the numerical results of~\citet[Section 4.1]{czarnecki-osindero-jaderberg:2017} where over-parameterized models trained with gradient data outperform the same models trained using only function data; however, since they only consider low-dimensional $(d=2)$ problems, their setup is far from the asymptotic regime we consider here.\\

\begin{figure}
\centering
\includegraphics[width = \textwidth]{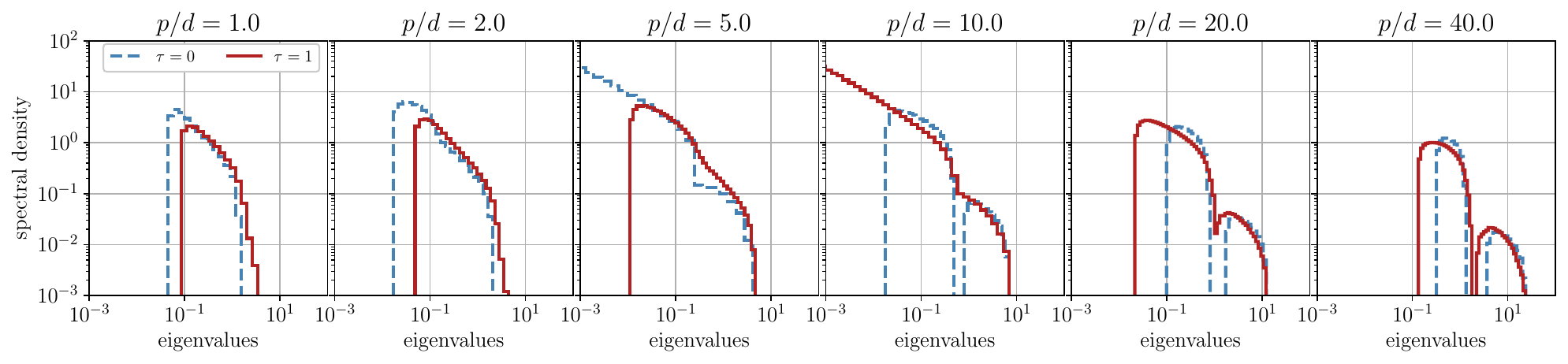}
\caption{Continuous part of the empirical spectral densities for one sample of the feature matrix $K$, defined in~\eqref{eq:K-matrix-def}, at different numbers of features $p/d$. We compare standard $L^2$ training ($\tau = 0$, dashed blue lines) to Sobolev training ($\tau = 1$, $k = 1$, solid red lines). Other parameters are: $n/d = 5$, $d = 1000$, $\sigma = \text{ReLU}$. The spectral gap to $0$ closes at $p = n$ for $L^2$-training and at $p = 2n$ for Sobolev training with $k = 1$.}
\label{fig:spectral-density}
\end{figure}

In contrast, the expected subspace gradient generalization error $\EE [\eps_{\text{gen}}^{H^1_k}]$ depends more strongly on $\tau = 0$ versus $\tau = 1$, as shown in the last three columns of Figure~\ref{fig:2d-errors-arctan-plus-reci-cosh-relu}. When no gradient data are provided ($\tau = 0$), the gradient generalization error is strongly correlated with the $L^2$ generalization error, whereas the two landscapes differ for $\tau = 1$ though less so for large noise levels $\Delta$. Similar to the $L^2$ generalization error, the gradient error also exhibits a peak along the interpolation threshold at $p=n$ for $\tau=0$ and $p=(k+1)n$ for $\tau=1$. In addition to the possibility of ``triple descent'' along horizontal slices as originally documented in~\cite{dascoli-sagun-biroli:2020}, we also observe ``triple descent'' along certain \emph{vertical} slices, i.e., also as a function of network size $p/d$ at fixed training set size $n/d$.\\

Notably, the rightmost columns of Figure~\ref{fig:2d-errors-arctan-plus-reci-cosh-relu} demonstrate an unexpected result: providing gradient data to the training set does \emph{not} uniformly improve the ability of RF models to predict gradients at new inputs. In other words, there are regimes in which one would prefer to disregard the provided gradient training data, rather than assimilating this extra information.
In Figure~\ref{fig:2d-errors-arctan-plus-reci-cosh-relu}, this occurs at small $\lambda$ and in the slightly over-parameterized regime due to the shifted interpolation peak. As the normalized number of features $p/d$ is increased further at fixed normalized sample size~$n/d$, Sobolev training outperforms $L^2$ training at gradient prediction in the massively overparameterized limit in the present case. The intuitive reason for the uniform improvement in gradient prediction of Sobolev training over $L^2$ training at large $\lambda$ is that for $\tau > 0$, the network gradient always correctly represents the gradient mean via the overlap parameter $s_b$, and large $\lambda$ regularizes the double descent peak.\\

As shown in Appendix~\ref{app:landscapes}, for instance for $\sigma = \text{ReLU}$, $\phi = \arctan$ (Figure~\ref{fig:2d-errors-arctan-relu}), independently of the regularization strength, and whether or not the samples are corrupted by additive noise, $L^2$ training in fact outperforms Sobolev training for gradient prediction at massive overparameterization and large $n/d$. This result stands in contrast with the existing literature on Sobolev training~\cite{czarnecki-osindero-jaderberg:2017, o-leary-roseberry-chen-villa-etal:2024} in which massively overparameterized neural networks benefit from incorporating gradient information. One reason for this discrepancy may be that in many scientific applications, observational data is actually sparse, e.g., due to expensive simulations required for each training sample, and hence $n/d \ll 1$. In addition to this, within the RF model considered here and for the odd $\phi$, the projected gradient data effectively behaves like a constant function (in $\langle \theta_0, x \rangle$) plus independent noise in the limit~\eqref{eq:prop-asymp-def}, and so it is not surprising that incorporating this data into the training can hurt generalization performance. Fundamentally, this behavior results from the lack of ``feature learning'' capabilities of RF models in the proportional asymptotics regime, and the corresponding choice of an uninformed subspace for the gradient projections.\\

The authors of \cite{dascoli-sagun-biroli:2020} also connect the two-dimensional generalization error landscape for $L^2$ training to the spectral density of the feature matrix $K$ in~\eqref{eq:K-matrix-def} with $\tau=0$, which can be computed analytically using tools from~\cite{pennington-worah:2019}.  
We perform the same analysis here, although we do not analyze the spectral density of $K$ theoretically, but instead we show the results of sampling $K$ in large but finite dimension $d = 1000$ for different $p/d$ at fixed $n/d = 5$.
Figure~\ref{fig:spectral-density} shows results for $k = 1$, $\tau = 0$ vs.\ $\tau = 1$, and $\sigma = \text{ReLU}$, and the progression from left to right corresponds to a vertical slice along Figure~\ref{fig:2d-errors-arctan-plus-reci-cosh-relu}. The key observation is that peaks in the generalization error landscape correspond to the ill-conditioning of~$K$, i.e., when the spectral gap of the bulk approaches $0$. Figure~\ref{fig:spectral-density} shows that the inclusion of gradient data prevents the spectral gap from closing at $p = n$, but shifts this closure to $p = (k+1)n$ instead.
For large~$p/d$ and other activation functions $\sigma \in \{ \text{SiLU}, \text{erf} \}$ (cf.\ Appendix~\ref{app:landscapes}, Figure~\ref{fig:spectral-density-app}), we observe that the bulk typically splits into three components for Sobolev training with $k =1$, which we attribute to the additional Hadamard product term in~\eqref{eq:K-matrix-def}, as opposed to only two components for standard $L^2$ training at $\tau = 0$. For ReLU specifically in Figure~\ref{fig:spectral-density}, we only see two bulk components: this is presumably due to the degeneracy of its Hermite coefficients, cf.\ Table~\ref{tab:activation}, making two of the bulk components coincide. 
We leave a more detailed spectral analysis of the feature matrix, which we believe is possible using the techniques from Section~\ref{sec:main-theory-res},
as well as more realistic models that include feature learning, to future work.\\

In total, we have shown in this section that we can quickly perform parameter scans using the theoretical predictions from Section~\ref{sec:main-theory-res} without statistical errors or sampling. The results presented here show that effect of gradients is more subtle than naively expected: even if gradients come ``for free'' and there is no observational noise, one should not always include them in the training loss. While the main effect is due to a shift of the interpolation threshold, only the full fixed point solves give a complete, quantitative description for the considered model. Given this overview, the following subsections will now discuss a few specific questions in more detail.\\

\subsection{Impact of observational noise on overfitting}
\label{sec:noise-influence}

\begin{figure}
            \centering
            \begin{tabular}{c c c}
            \includegraphics[width=0.45\linewidth]{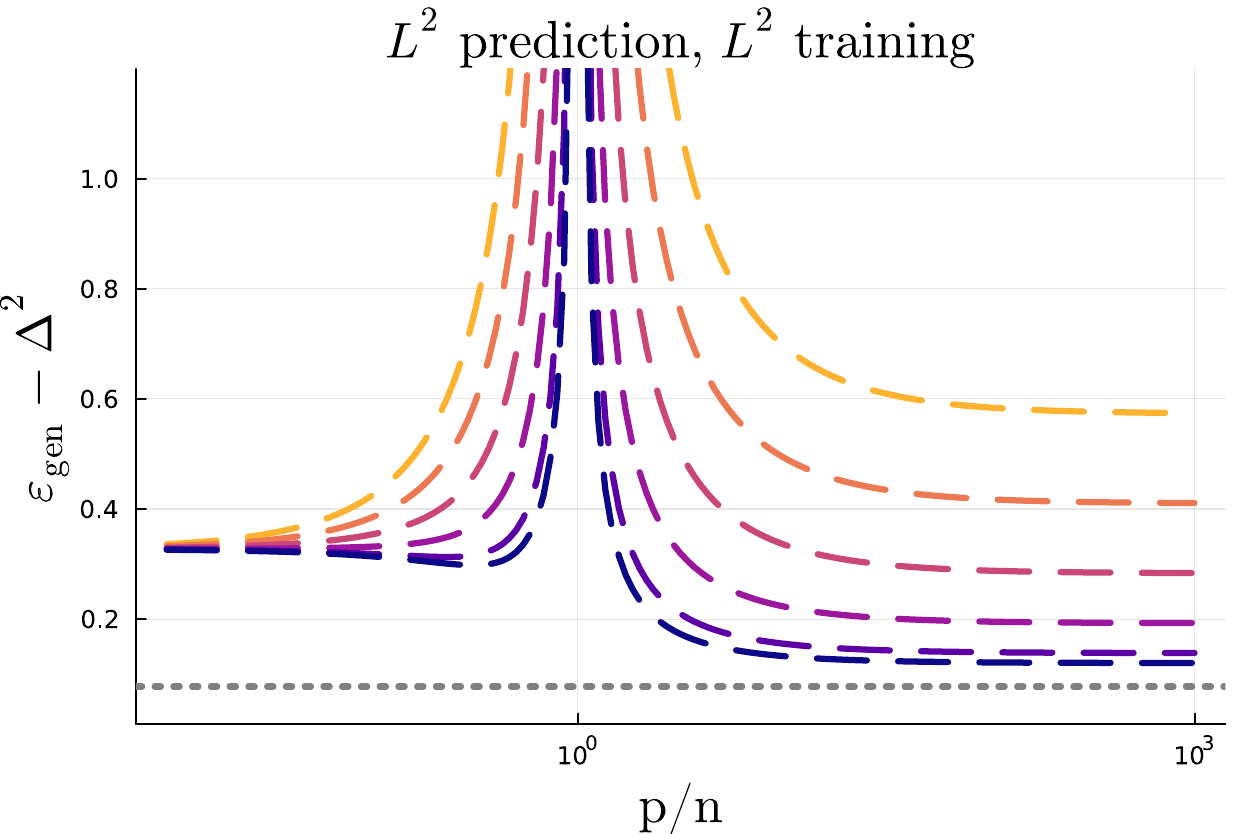}
            & & \includegraphics[width=0.45\linewidth]{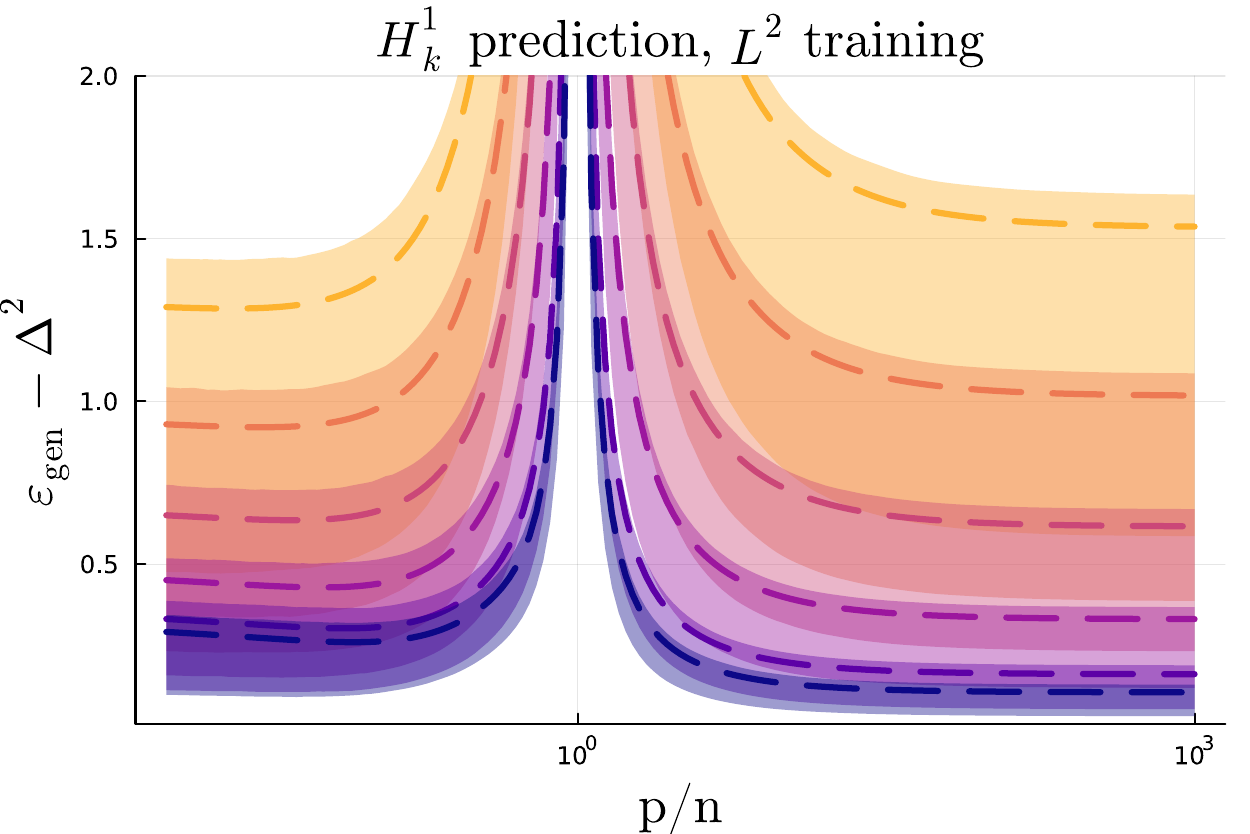} \\
            \includegraphics[width=0.45\linewidth]{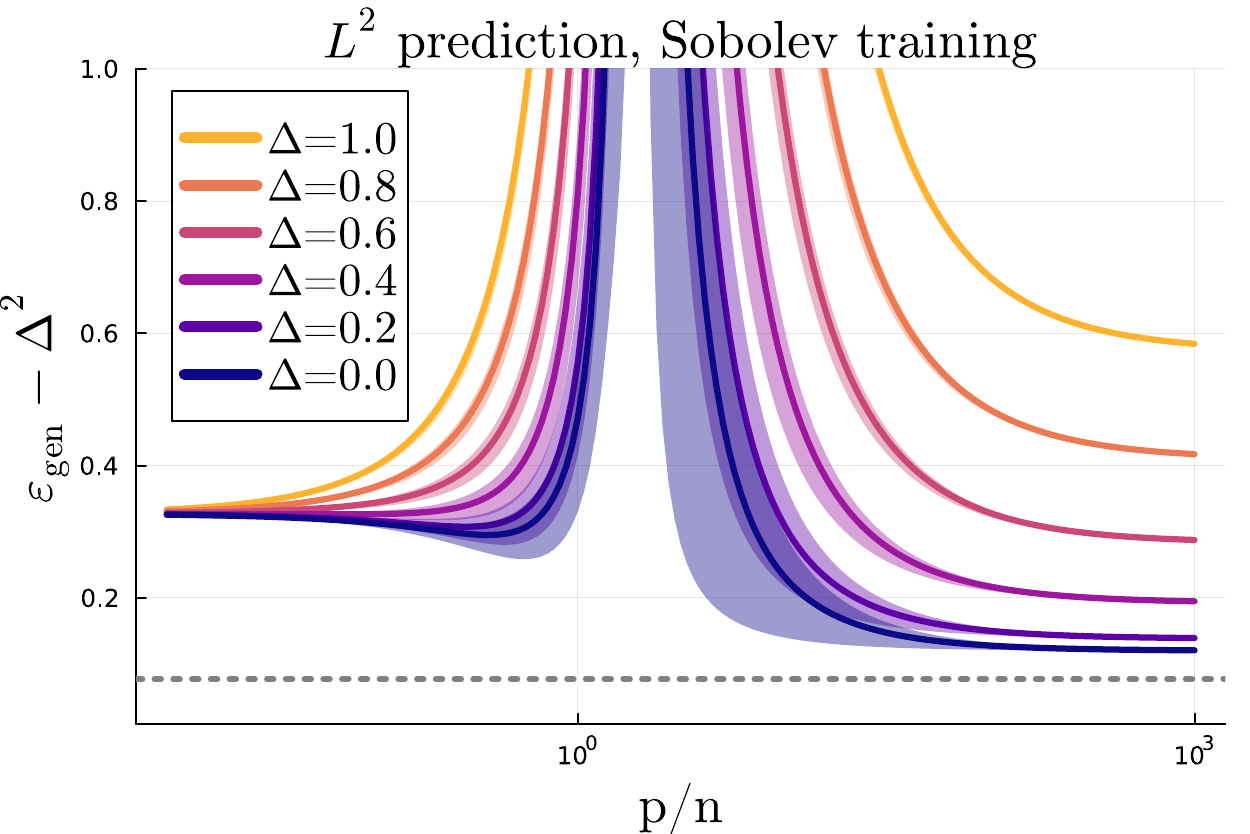}
            & & \includegraphics[width=0.45\linewidth]{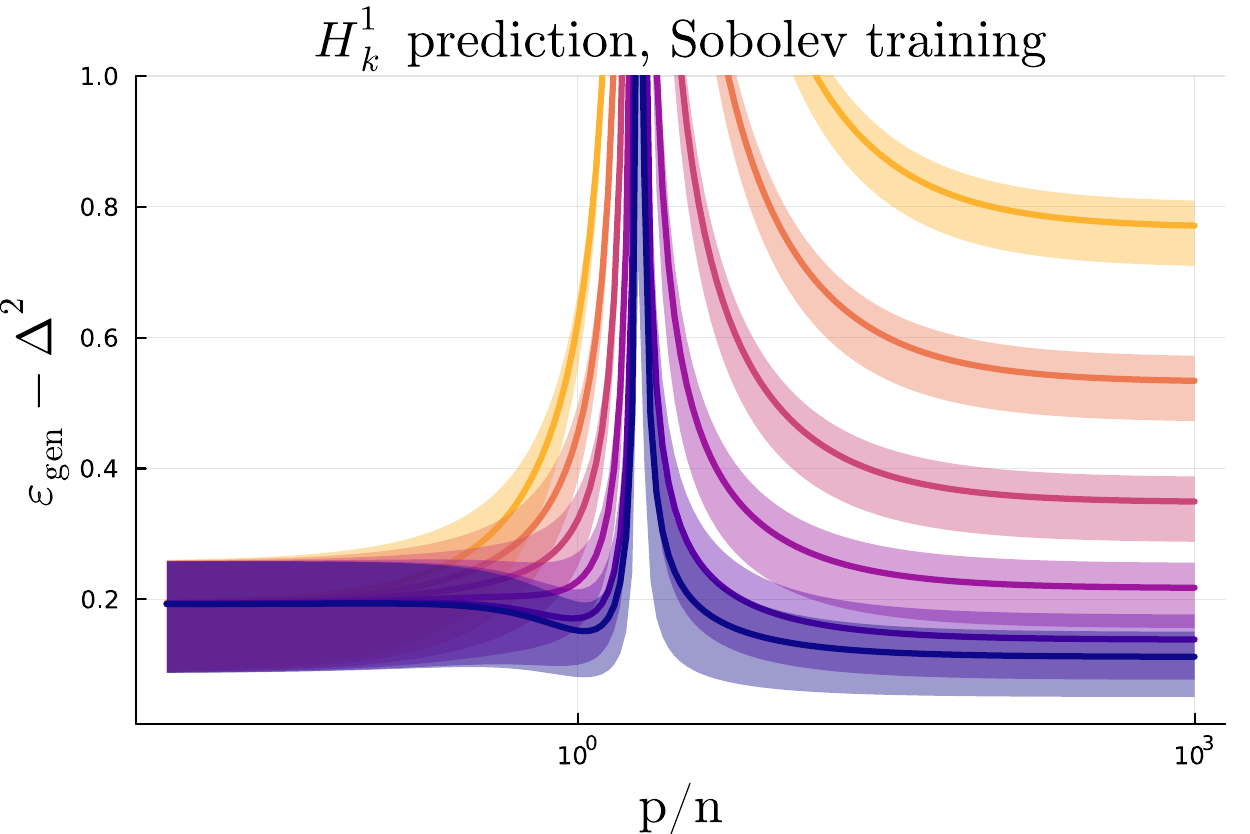} \\
            \end{tabular}
            \caption{Error against ground truth achieved by $L^2$ training (first row) and Sobolev training (second row) on unseen test cases given a range of noise levels in the training data: $\Delta\in\{0.0,0.2,0.4,0.6,0.8,1.0\}$. Left column: the $L^2$ error of network predictions against $\phi(\theta_0^\top x)$, averaged over $x$. A lower bound to accuracy is given by the gray dotted line which marks the magnitude of the nonlinear component of $\phi$. The distributions predicted by Sobolev training are induced by $\varpi=V_k^\top \theta_0$, and ribbons shade between the $20\%$ and $80\%$ quantiles. Right column: the $H_k^1$ error found by averaging the squared difference between the network gradient predictions and $V_k^\top \theta_0 \phi'(\theta_0^\top x)$ over $x$ when $k=1$. The ribbons cover between the $50\%$ and $75\%$ of the $\chi^2$ distribution resulting from the random gradient projection. Parameters: $n/d=2.345$, $\lambda=10^{-6}$, $\phi(\omega) = \omega / 2 - \exp \{-\omega^2/2\}$, and $\sigma=\textrm{SiLU}$.}
       \label{fig:diff_noise} 
       \end{figure}

A puzzling characteristic of deep neural networks is their ability to generalize even when provided with noisy training data~\cite{bartlett:2021} and no explicit regularization. Their success contravenes traditional statistical wisdom as these networks have far more parameters than training samples and consequently achieve near-zero training error since typically no explicit regularization is enforced. In essence, they ``memorize'' the noise in the data. This phenomenon is referred to as \emph{benign overfitting} and has been validated theoretically for simpler models such as linear regression by~\citet{bartlett:2020} and for RF models by~\citet{mei-montanari:2022}. Both works show instances in which overparameterization is necessary to achieve the best possible prediction errors within their respective model classes, even when there is label noise.\\

In this section, we explore whether benign overfitting occurs for Sobolev training by studying~\eqref{eq:training-objective} with~$\lambda \approx 0$. We use $\sigma = \text{SiLU}$, $\phi(\omega) = \omega / 2 - \exp \{-\omega^2/2\}$ here, such that all relevant Hermite coefficients are nonzero (see Appendix~\ref{app:noise} for other $\sigma$ and $\phi$). For simplicity, we again consider Gaussian additive noise $\eta \sim \mathcal{N}(0, \Delta^2)$ applied to $y$ and $\eta' \sim \mathcal{N}(0, \Delta^2 I_k)$ applied to $V_k^\top y'$. Surprisingly, we can assume that $\eta$ and $\eta'$ are independent without loss of generality as correlations between the function and gradient noises do not impact generalization. This insensitivity follows from our theoretical predictions: the expressions for the errors~\eqref{eq:l2_gen_error} and~\eqref{eq:sobo_gen_error}, as well as the overlap parameters $q_a^{(0)}$, $q_c^{(0)}$, $\hat{q}_a^{(0)}$, $\hat{q}_c^{(0)}$, only depend on the marginal noise variances. As a corollary, the overlap parameters which are independent of the noise need to be computed only once for each $\alpha$ and $\gamma$. Then, the generalization errors can be computed for all alignments $\varpi$ and noise strengths $\Delta$ for no additional computational cost, cf. Remark~\ref{rem:saddlepoint-observations}.\\

Figure~\ref{fig:diff_noise} compares the impact of noise levels $\Delta$ on prediction accuracy for $L^2$ and Sobolev training objectives. For a given overparameterization level $p/n$, each curve quantifies the squared error of the network against the noiseless ground truth---i.e.,~\eqref{eq:l2_gen_error} and~\eqref{eq:sobo_gen_error}  less $\Delta^2$ and $k \Delta^2$, respectively. Thus, any error which exceeds the noiseless case can be attributed to the noisy training data rather than an uncertain observation model. As $\lambda \approx 0$ in our setup, each network essentially ``overfits'' the noisy function (as well as the noisy projected gradient data, if available) past the interpolation threshold $p=(k+1)n$.\\

The top-left subfigure in Figure~\ref{fig:diff_noise} shows the $L^2$ generalization error under $L^2$ training. Unsurprisingly, increasing data noise decreases prediction accuracy at a given $p/n$. 
However, as already documented in~\cite{mei-montanari:2022}, we find that RF models exhibit benign overfitting, and the lowest error is achieved by overparameterized models when the noise level $\Delta$ is not too large. Additionally, we demonstrate the same behavior for the $L^2$ generalization under Sobolev training (bottom left), even though the model must additionally memorize the noise in the gradient observations. The dotted gray lines in both subfigures correspond to the approximation error $\EE[\phi(\xi)^2]-\EE[\phi(\xi)]^2 - \EE[\xi\phi(\xi)]^2$ of the best linear approximation to~$\phi$, where $\xi\sim\mathcal{N}(0,1)$ (note the relation to the Hermite coefficients of $\phi$, cf.~\eqref{eq:hermite-coeff-def}). By the Gaussian equivalence theorem, these lines lower bound the achievable accuracy of any RF model in the proportional asymptotics limit~\citep{mei-montanari:2022, ba-erdogdu-suzuki-etal:2022}.\\

$H^1_k$ generalization exhibits a greater difference between $L^2$ and Sobolev training. In the top right subfigure of Figure~\ref{fig:diff_noise}, we observe a similar benign overfitting phenomenon as with $L^2$ generalization under the same setup. However, the entire $H^1_k$ error curves under $L^2$ training ``drift upwards'' as the observational noise increases, which we can attribute to the network failing to learn the mean of the gradient. In contrast, the $H^1_k$ generalization error for Sobolev training (bottom right) does not display this drift. We do see, however, that the critical noise strength at which mean $H^1_k$ error for overparameterized models ceases to improve on the underparameterized regime, is different for the gradient error, which we can interpret as an increased sensitivity to observational noise in the gradients.
	
\subsection{Effect of varying the Tikhonov regularization strength $\lambda$}

\begin{figure}
\centering
\includegraphics[width = .8\textwidth]{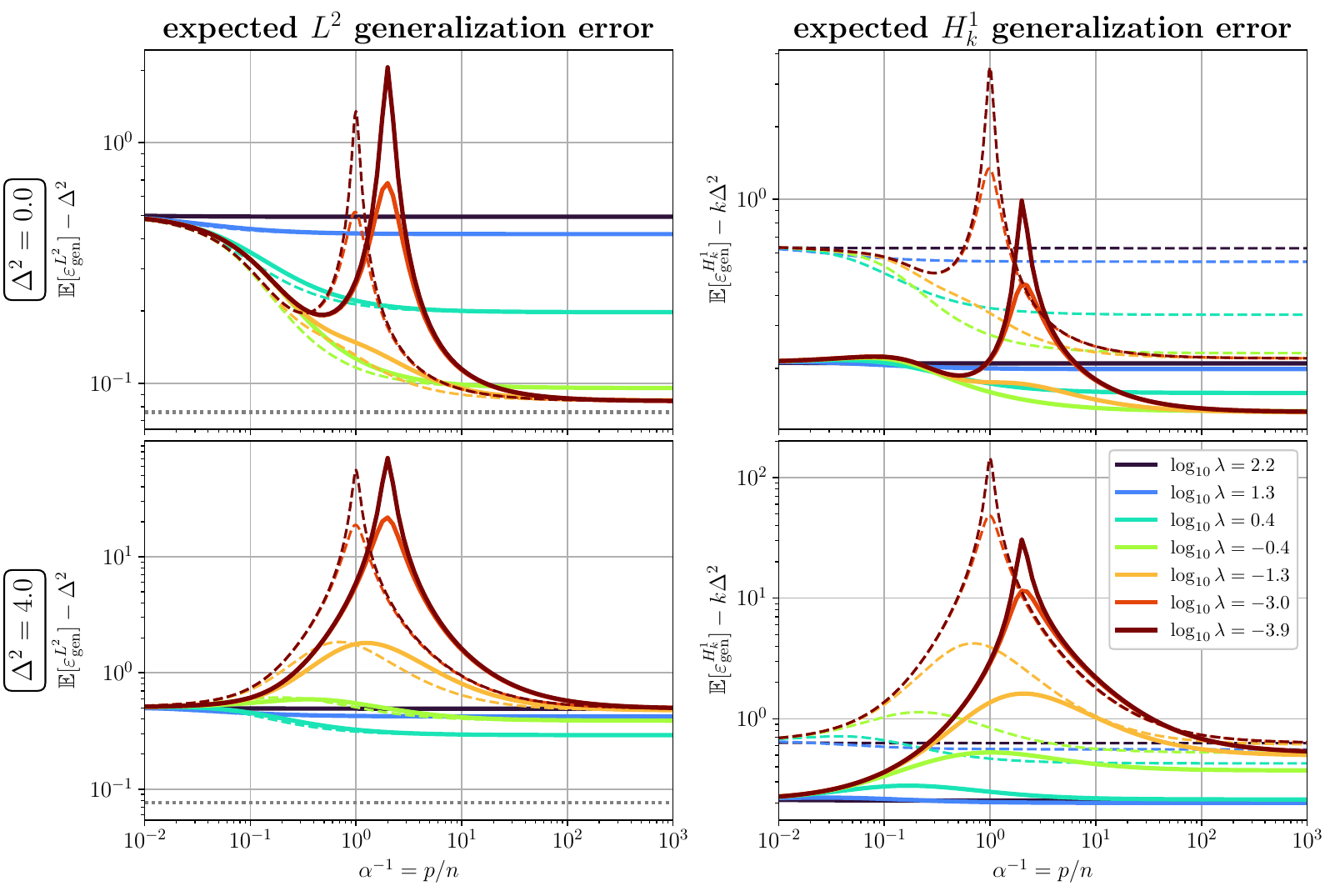}
\caption{Influence of the Tikhonov regularization parameter $\lambda > 0$ in~\eqref{eq:training-objective} on the expected $L^2$ (left column) and $H^1_k$ seminorm (right column) generalization errors of the RF network~\eqref{eq:neural-net-def}. Solid lines show the predictions for Sobolev training ($\tau = 1$), while dashed lines correspond to standard $L^2$ training ($\tau = 0$) without gradients. Top row: Noiseless training data $\Delta^2 = 0$, bottom row: large noise level $\Delta^2 = 4$. Other parameters: $n/d = 10$, $\sigma = \text{ReLU}$, $\phi = \arctan + 1 / \cosh$, $k = 1$. Note the irreducible component~$\Delta^2$ of the generalization errors has been subtracted in these figures for direct comparison of ground truth generalization. The dotted gray lines in the left column show the best achievable error $\EE[\phi(\xi)^2]-\EE[\phi(\xi)]^2 - \EE[\xi\phi(\xi)]^2$.}
\label{fig:lbda}
\end{figure}

In this section, we extend the qualitative analysis of~\citet{mei-montanari:2022} and explore which level of regularization~$\lambda$, if any, leads to optimal generalization errors for RF models. Figure~\ref{fig:lbda} considers this question for noiseless training data (top row) and additive Gaussian noise with variance~$\Delta^2 = 4$ (bottom row). The left column shows the $L^2$ generalization curves for various~$\lambda$, and we note that the curves for Sobolev training ($\tau=1$) are structurally similar to $L^2$ training $(\tau=0)$, modulo the shift in the interpolation threshold. Focusing on the lower envelope over all $L^2$ generalization curves, we observe that in the noiseless setting, the optimal choice of $\lambda$ varies with $p/n$. However, the lowest overall generalization error is attained by overparameterized networks $p/n\uparrow\infty$ with minimum norm regularization $\lambda \downarrow 0$, mirroring what has been empirically observed with deep neural networks.  In contrast, in the low signal-to-noise regime, there is a critical threshold of $\lambda$ which is uniformly optimal for all $p/n$ though once again overparameterization is necessary to achieve the lowest error. Including gradient training data does not result in a significant difference in the best achievable $L^2$ generalization error in either setting.\\

The effect of~$\lambda$ on $H^1_k$ generalization in the right column of Figure~\ref{fig:lbda} is qualitatively similar to the $L^2$ error, up to the following observations, that parallel our discussion in Section~\ref{sec:landscapes}: (i) the $H^1_k$ error curves for Sobolev training, even at small $p/n$ or large $\lambda$, are shifted downward by a constant compared to $L^2$ training, due to the network always learning to represent the gradient mean via $s_b$, (ii) in the present example of $\sigma = \text{ReLU}$, $\phi = \arctan + 1 / \cosh$, Sobolev training always outperforms $L^2$ training at large enough $p/n$, and (iii) an intermediate $\lambda$ and large $p/n$ is still optimal for $H^1_k$ prediction when using Sobolev training at small signal to noise ratio, but the benefit is less pronounced than for $L^2$ error.\\

Regarding the last observation (iii), we intuition that the difference being less pronounced is due to the additional ``noisiness'' of the random subspace projections in the following sense: Suppose we would train only on projected gradient data, with no function data,
and assume $k=1$ and $\Delta = 0$ for simplicity. Conditioned on~$\varpi = V_k^\top \theta_0$, the problem then reduces to the $L^2$ training setup with teacher function $\varpi \phi'$, except now the RF model has randomized activation functions $x \mapsto \left\langle \theta_j, v_k \right \rangle \sigma' \left( \left\langle \theta_j, x \right \rangle \right),~j=1,\ldots,p$, as each $\langle v_k, \theta_j \rangle$ is random. Equivalently, this can be viewed as using randomized Tikhonov regularization strengths~$\lambda / \langle v_k, \theta_j \rangle^2$ for each readout weight of a RF model with fixed activation~$\sigma'$.\\

In Appendix~\ref{app:lbda}, we show and discuss further results of varying $\lambda$ for odd $\phi = \arctan$ (with $\sigma = \text{ReLU}$, in Figure~\ref{fig:lbda-arctan-relu}) and even $\phi = 1 / \cosh$ (with $\sigma = \text{erf}$, in Figure~\ref{fig:lbda-reci-cosh-erf}). In line with our discussion above and in Section~\ref{sec:landscapes}, these results show that large regularization $\lambda \uparrow \infty$ is optimal whenever the linearized true function or gradient has vanishing $\EE \left[\omega \phi(\omega) \right]$, or $\EE[\omega \phi'(\omega)]$, respectively.

\subsection{Impact of gradient computation cost}
\label{sec:varying-k}

\begin{figure}
            \centering
            \begin{tabular}{c c c}
            \includegraphics[width=0.45\linewidth]{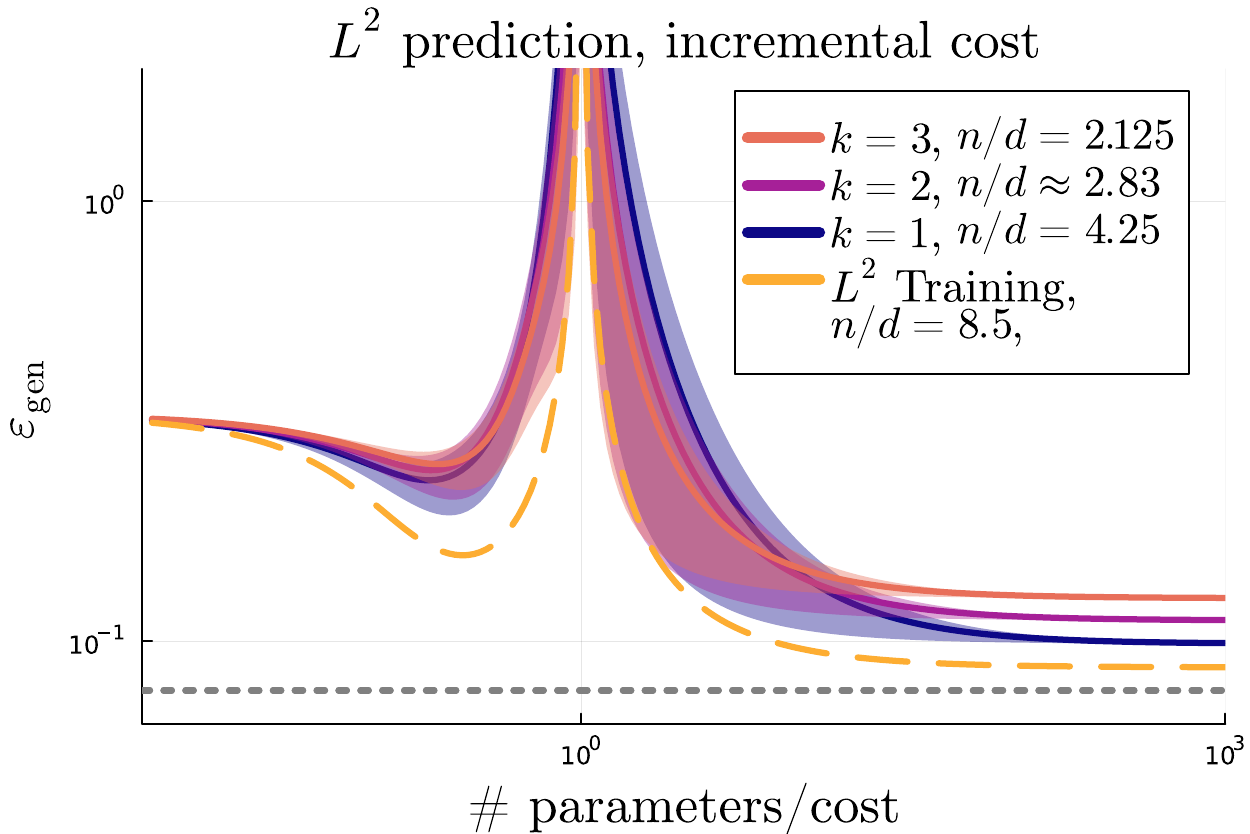}
            & & \includegraphics[width=0.45\linewidth]{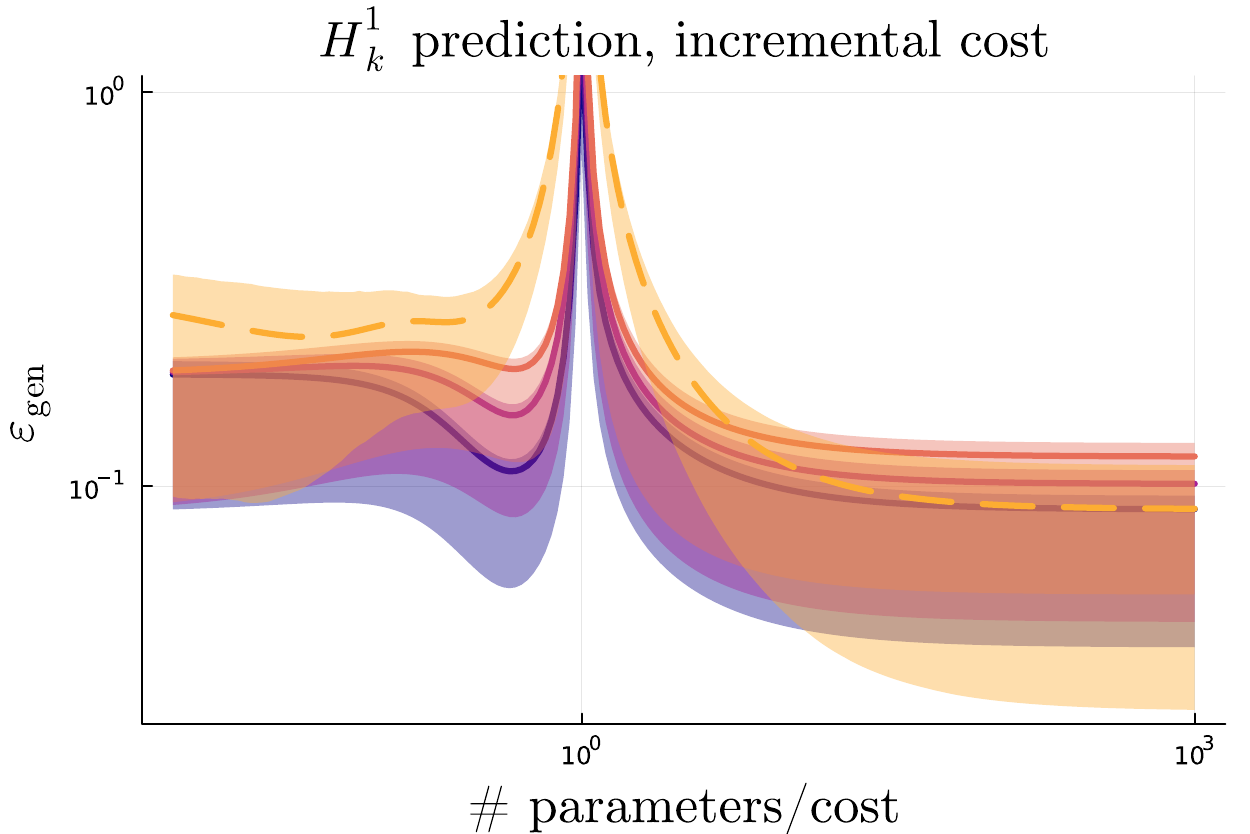} \\
            \end{tabular}
            \caption{Performance of Sobolev training (solid lines) with gradient projection dimension $k\in \{1,2,3\}$ against $L^2$ training baseline (gold, dashed). The computational cost of each projected dimension of the gradient is assumed to be equivalent to the cost of a function evaluation, in contrast to Figures~\ref{fig:intro},~\ref{fig:2d-errors-arctan-plus-reci-cosh-relu},~\ref{fig:diff_noise},~\ref{fig:lbda} where the cost of gradients is assumed negligible. Hence, $n/d=8.5$ for $L^2$ training, and $n/d=8.5/(k+1)$ for Sobolev training here. 
We consider the zero noise case with low regularization $\lambda=10^{-6}$ and activation function $\sigma=\textrm{SiLU}$. Left column: $L^2$ prediction error, with ribbons indicating the $20\%$ and $80\%$ quantiles, and dotted line showing the magnitude of nonlinearity in $\phi(\omega)=\omega/2 - \exp \{-\omega^2/2\}$. Right column: predictive distributions for $H_k^1$ generalization with $50\%$ and $70\%$ quantiles.}
\label{fig:k_effect}
       \end{figure}

Our previous experiments have demonstrated that the advantage of training with gradient data is conditional on the problem settings. Here, we determine whether Sobolev training is worthwhile given the computational \emph{cost} of sampling the training data. Figure~\ref{fig:k_effect} summarizes $L^2$ and $H^1_k$ generalization errors for an ``incremental cost'' model where each component of $V_k^\top y'$ incurs cost comparable to a new function sample, e.g., when using directional derivatives via finite difference stencils along direction~$V_k$. Other cost models and $\sigma,\phi$ are considered in Appendix~\ref{app:cost}. As a baseline, we assume that obtaining a single sample $y_i$ for $L^2$ training incurs a unit cost. Accordingly, the costs associated with the incremental model scale as $(k+1)n$. Thus, along a vertical slice of Figure~\ref{fig:k_effect}, the curves have different $\alpha = n/p$ to ensure a fair comparison.\\ 

More gradient information paradoxically ``harms'' function prediction in Figure~\ref{fig:k_effect} (left). Clearly, asymptotic $L^2$ generalization error is lowest here for models which, at a given cost, allow $n$ to be greatest. On the other hand, for $H_k^1$ prediction (Figure~\ref{fig:k_effect}, right), there is a marked benefit to assimilating $k=1$ gradient sketches for small models relative to the sampling cost. Nevertheless, counter-intuitively, incorporating additional sketches begins to harm gradient predictions for slightly larger models. This remains true in the overparameterized limit $p/n \uparrow\infty$. In this limit and under the cost counting model considered here, the only benefit of Sobolev training is to lower the probability of large~$H^1_k$ errors.

%%%%%%%%%%%%%%%%%%%%%%%%%%%%%%%%%%%%%%%%%%%%%%%%%%%%%%%%%%%%%%%

\section{Discussion and outlook}
\label{sec:concl}

We have introduced a simple statistical model for Sobolev training, based on random features and projections of gradient data onto random subspaces of fixed dimension $k$. 
Though this setting is considerably more complicated than the 
$L^2$ training of RF models~\cite{mei-montanari:2022,gerace-loureiro-krzakala-etal:2021,goldt-loureiro-reeves-etal:2022}, we showed that it remains possible to calculate generalization errors analytically, in the proportional asymptotics limit. Our approach involved conditioning on a random overlap parameter before applying the replica method, introducing a non-standard application of the Gaussian equivalence theorem, and using operator-valued free probability to linearize and evaluate traces of rational functions of random matrices.  We validated our theoretical predictions against MC sampling in high dimensions, demonstrating excellent agreement. 
Although portions of our presented calculations are non-rigorous, starting with the replica method itself, since our setting is convex we expect that these arguments could be made mathematically rigorous (cf.~\cite{hu-lu:2022}).\\

We discovered that introducing additional gradient data to the training loss shifts the interpolation threshold to $p = n(k+1)$, as if gradient observations were independent data. Our two-dimensional ``error landscapes'' (cf.~\cite{dascoli-sagun-biroli:2020}) and subsequent analysis (following~\cite{mei-montanari:2022}) showed that gradient data lowers the $L^2$ generalization error in some configurations, but not for all. The dominant effect here is the shift of the interpolation peak.  Counter-intuitively, we demonstrated that incorporating gradient information does not uniformly lower the $H^1_k$ generalization error of the RF network: in particular, Sobolev training with slightly overparameterized models can lead to less accurate predictions of new gradients. Furthermore, we showed that if only \textit{noisy} gradient observations are available (relevant, e.g., in applications where gradients are approximated via finite differences or least squares regression~\cite{czarnecki-osindero-jaderberg:2017,kissel-diepold:2020}, or in applications where gradients are obtained by differentiating pre-trained neural networks~\cite{czarnecki-osindero-jaderberg:2017, srinivas-fleuret:2018}), benign overfitting can still occur within the RF model despite the additional noise.\\

Our analysis highlighted two fundamental limitations that prevent RF models from assimilating gradient information. First, it has been previously documented that RF models in the proportional asymptotics limit are only able to capture the linear component of the data-generating single-index function~\cite{mei-montanari:2022, ba-erdogdu-suzuki-etal:2022}. Our formulation of the Gaussian equivalence theorem demonstrates that this extends to learning only the linear component of the projected gradients as well.  Accordingly, solely optimizing the readout weights and keeping hidden weights fixed precludes any \textit{feature learning} from data.
Second, we showed that the only projections onto vectors with norm~$\|v\| = O(\sqrt{d})$ provide a  compatible scaling for sketching the gradients of RF models. We further demonstrated that only ``un-informed'' subspace projections $V_k$, with columns sampled from~$\mathcal{N}(0,I_d)$ independently of the data, lead to a sensible loss function. RF models are thus unable to fully exploit the ``directional'' information in the gradient data of single-index models.\\ 

In future work, we intend to extend our model of Sobolev training to incorporate feature learning, and towards data-informed choices of subspaces for the gradient projection (as in, e.g.,~\cite{o-leary-roseberry-chen-villa-etal:2024}). Notably, we believe that the ``one large gradient step'' model for the hidden-layer weights in recent work~\cite{ba-erdogdu-suzuki-etal:2022,cui-pesce-dandi-etal:2024} could be extended to capture feature learning with Sobolev training. This strategy leads to the study of spiked random matrices, and we expect that our analysis could be adapted to this setting. It is also of interest to consider Sobolev training for Bayesian neural networks~\cite{pacelli-ariosto-pastore-etal:2023} and to characterize the influence of gradient data on the posterior predictive distribution. Different polynomial scaling regimes, e.g., $p/n=\alpha$ and $n = d^\kappa$, are also known to extend the $L^2$ approximation class of RF models beyond linear functions~\cite{aguirre-lopez-franz-pastore:2025,misiakiewicz:2022,hu-lu-misiakiewicz:2024}, and we anticipate that the same holds given gradient data. More generally, we could study extensions to multi-index data models, and an RF model that is more closely inspired by the task of approximating the solution map of a PDE using function and Jacobian data~\cite{o-leary-roseberry-chen-villa-etal:2024,nelsen-stuart:2021}.  Finally, we have so far only considered adding gradients to the training loss; sketches of higher derivatives, such as Hessian projections, may also be of interest to the machine learning PDE solver community.

\section*{Acknowledgments}

The authors would like to thank Murat A.\ Erdogdu, Michael F.\ Herbst, James Kermode, Bruno Loureiro, Thomas O'Leary-Roseberry, and the MIT UQ group for helpful discussions.
KF expresses gratitude for the support by the National Science Foundation Graduate Research Fellowship (Grant No.\ 1745302). KF and YM acknowledge support from the Department of Energy (DOE), National  Nuclear Security Administration PSAAP-III program (Award Number DE-NA0003965).
MTCL and YM acknowledge support from the DOE Office of Advanced Scientific Computing Research under award number DE-SC0023187.
TS acknowledges the financial support received from the Ruhr University Bochum Research School through a Gateway Fellowship during the initial stages of this work.
The authors thank the MIT Libraries for the resources provided.
The authors acknowledge the MIT Office of Research Computing and Data for providing high performance computing resources that have contributed to the research results reported within this paper.

%%%%%%%%%%%%%%%%%%%%%%%%%%%%%%%%%%%%%%%%%%%%%%%%%%%%%%%%%%%%%%%

\appendix

\def\thesection{\Alph{section}}
\def\thesubsection{\Alph{section}.\arabic{subsection}}
\def\thesubsubsection{\Alph{section}.\arabic{subsection}.\arabic{subsubsection}}

%%%%%%%%%%%%%%%%%%%%%%%%%%%%%%%%%%%%%%%%%%%%%%%%%%%%%%%%%%%%%%%

\section{Notation and table of mathematical symbols}
\label{App:notation}

We list some general notation for mathematical operations used throughout this paper in Table~\ref{tab:operators}. For a list and explanation of mathematical symbols and variables appearing repeatedly, see Table~\ref{tab:notation}. Lastly, Table~\ref{tab:activation} contains a few possible activation functions~$\sigma$ of the RF model~\eqref{eq:neural-net-def} considered in this work, as well as their Hermite coefficients~\eqref{eq:hermite-coeff-def}.

\begin{table}
\centering
\caption{Mathematical operators.}
\renewcommand{\arraystretch}{1.3}
\begin{tabular}{|c|c|c|c|} 
\hline
$\odot$ & Hadamard product $(A \odot B)_{ij} = A_{ij} B_{ij}$ & $v \otimes w = vw^\top$ & outer product of vectors $v,w$        \\ 
$v^{\otimes 2} = v \otimes v$ & outer product squared of vector $v$ & $A \otimes B$ & Kronecker product of matrices $A,B$\footnote{Note that this overloads the symbol $\otimes$ depending on the objects considered. The Kronecker product of two vectors $v,w$ is given by $\textsc{vec}(v \otimes w)$ in our notation.} \\
$\oplus $ & Kronecker sum $A \oplus B = A \otimes I + I \otimes B$ & $\textsc{rs}(v)$ & reshape vector $v$ into square matrix\\
$\textsc{vec}(M)$ & column-wise flattening of $M$ into vector &  $\textsc{vech}(S)$ &  flattening of upper triangle of symmetric matrix $S$\\
$\diag{v}$ & matrix with vector $v$ on the diagonal & $\Tr_p := \plim_{p \to \infty} \tfrac{1}{p}\trace$ & normalized trace in proportional asymptotic limit  \\
$\langle\cdot,\cdot\rangle_F$ & Frobenius inner product  & $\langle \cdot, \cdot \rangle_{\text{HF}}$ & half Frobenius inner product, cf.~\eqref{eq:hf-def}  \\ 
$G_r$ & Cauchy transform of $r$ & $H_r(z)=(G_r(z))^{-1}-z$ & shifted reciprocal Cauchy transform  \\
$g_\mu$ & Stieltjes transform of density $\mu$ & $\varphi$ & state function of free probability space   \\ 
$\boxplus$ & free additive convolution & $\mathfrak{s}(Z)$ & subordinator, cf.~\eqref{eq:fixed-point-subord} \\
 \hline
\end{tabular}
\label{tab:operators}
\end{table}

\begin{table}
\centering
\caption{Mathematical symbols and variables.}
\renewcommand{\arraystretch}{1.3}
\begin{tabular}{|c|c|c|c|} 
\hline
\multicolumn{1}{|c}{} & \multicolumn{1}{c}{Observation data, network, and training} & \multicolumn{1}{c}{} & \multicolumn{1}{c|}{} \\ \hline
$n$ & number of observation inputs         & $X=[x_1\dots x_n]$ & observation inputs/covariates \\ 
$d$ & dimension of inputs $x_i$            & $Y=(y_1,\dots,y_n)^\top$ & observation outputs for each $x_i$ \\
$Y' = [y_1'\dots y_n']$ & gradients of $y_i$ w.r.t.~$x_i$ &  $V_k$ & random projection of $y'$ to dimension $k$  \\ 
$\Upsilon_i=(y_i,V_k^\top y_i')$ & training data corresponding to $x_i$ & $\theta_0$ & teacher feature \\
 $\phi \colon \RR \to \RR$ & teacher (ridge) nonlinearity & $\phi'$ & derivative of teacher nonlinearity \\
  $\eta_i$ & noise applied to observation $y_i$ &$\eta_i'$ & noise applied to observation $y_i'$ \\ 
  $w$ & learnable network weights & $p$ & dimension of weights $w$ \\ 
  $\Theta=[\theta_1 \dots \theta_p]$ & random network features & $\sigma \colon \RR \to \RR$ & element-wise network activation  \\
  $\alpha$ & finite ratio $n/p$ & $\gamma$  & finite ratio $d/p$ \\
  $\omega_i=\theta_0^\top x_i$ & projection of input onto true feature & $\varpi=V_k^\top \theta_0$ & projection of true feature onto $V_k$ \\
  $\lambda > 0$ & regularization strength & $\tau \geq 0$ & weight of gradient observations in loss \\ 
  $P_{\text{data}}$ & observation distribution on $\RR^{k+1}$ & $C_\eta$ & observation noise covariance \\
$\varepsilon_{\text{train}}$ & Sobolev training error & $\varepsilon_{\text{gen}}$ & Sobolev generalization error \\
$\varepsilon^{L^2}$ & $L^2$ error & $\varepsilon^{H^1_k}$ & $V_k$-projected $H^1$ semi-norm error\\
$[n]$ & index set $\{1, 2, \ldots, n \}$ &  &  \\
  \hline
\multicolumn{1}{|c}{} & \multicolumn{1}{c}{Gaussian universality} & \multicolumn{1}{c}{} & \multicolumn{1}{c|}{} \\ \hline
$\kappa_0$ & constant Hermite coefficient of $\sigma$ & $\kappa_0'$ & constant Hermite coefficient of $\sigma'$ \\
$\kappa_1$ & linear Hermite coefficient of $\sigma$ & $\kappa_1'$ & linear Hermite coefficient of $\sigma'$ \\
$\kappa_*^2$ & magnitude of nonlinear component of $\sigma$ & $(\kappa_*')^2$ &   magnitude of nonlinear component of $\sigma'$ \\
$\hat{\eta}$ & noise of linearization of $\sigma$ & $\hat{\eta}'$ & noise of linearization of $\sigma'$ \\
\hline
\multicolumn{1}{|c}{} & \multicolumn{1}{c}{Overlap parameters and auxiliaries} & \multicolumn{1}{c}{} & \multicolumn{1}{c|}{} \\ \hline
$a$ & subscript of scalar overlaps & $b$ & subscript of $k$-dim.\ vector overlaps \\
$c$ & subscript of $k \times k$ matrix overlaps & $s=(s_a,s_b)^\top$ & network and network gradient mean \\
$f=(f_a,f_b)^\top$ & overlap of network with $\omega$ & $q=\begin{pmatrix} q_a & q_b^\top \\ q_b & q_c \end{pmatrix}$ & network covariance overlap \\ 
$\hat{s}=(\hat{s}_a,\hat{s}_b)^\top$ & auxiliary of $s$ & $\hat{f}=(\hat{f}_a,\hat{f}_b)^\top$ & auxiliary of $f$ \\ 
$\hat{q}=\begin{pmatrix} \hat{q}_a & \hat{q}_b^\top \\ \hat{q}_b & \hat{q}_c \end{pmatrix}$ & auxiliary of $q$ & $D_\tau=\diag{1,\tau,\dots,\tau}$ & weights of each element of $\Upsilon_i$ \\
$A \in \RR^{p \times p}$ & random matrix in fixed point system & $\Xi \in \RR^{p \times p}$ & random matrix in equation for $q$ \\
$\zeta=V_k^\top \Theta$ & projected random features & $D_i=\diag{\zeta_i}$  & diagonalization of $i^{th}$ column of $\zeta$   \\
$(0)$ & superscript of component constant in~$\varpi$ & $(2)$ & superscript of  component quadratic in~$\varpi$ \\
$M_{00}$ & $\kappa_1^2 I_p + \kappa_*^2\Theta^\top \Theta$  & $M_{11}$ & $(\kappa_1')^2I_p + (\kappa_*')^2\Theta^\top \Theta$ \\
\hline
\multicolumn{1}{|c}{} & \multicolumn{1}{c}{Operator-valued free probability} & \multicolumn{1}{c}{} & \multicolumn{1}{c|}{} \\ \hline
$m$ &  $\Theta^\top \Theta$ when $p,n,d\to\infty$ proportionally & $g_i$ & $D_i$ when $p,n,d\to\infty$ proportionally \\
 \hline
\end{tabular}
\label{tab:notation}
\end{table}

\begin{table}
\centering
\caption{Different permissible examples of activation functions $\sigma$ and their Hermite coefficients, as defined in~\eqref{eq:hermite-coeff-def}, for the setting considered in this work. We require $\sigma$ to be weakly differentiable, which excludes e.g.\ $\sigma = \text{sign}$, but does allow for ReLU for example. We do not require $\sigma$ to be an odd function. The coefficients listed with ``$\approx$'' were evaluated numerically, while all others are exact.}
\label{tab:activation}
\vspace{.2cm}
\renewcommand{\arraystretch}{1.5}
\begin{tabular}{|>{\centering\arraybackslash}m{2.5cm}|>{\centering\arraybackslash}m{3cm}|>{\centering\arraybackslash}m{3cm}|>{\centering\arraybackslash}m{1.6cm}|>{\centering\arraybackslash}m{1.6cm}|>{\centering\arraybackslash}m{1.6cm}|>{\centering\arraybackslash}m{1.6cm}|>{\centering\arraybackslash}m{1.6cm}|}
\hline
function & definition & sketch & $\kappa_0$ & $\kappa_1 = \kappa_0'$ & $\kappa_1'$ & $\kappa_*$ & $\kappa_*'$\\
\hline

\vfill Error function (erf) \vfill  &
\vfill $\sigma(z) = \frac{2}{\sqrt{\pi}} \int_0^z e^{-t^2} \dd t$ \vfill &
\begin{tikzpicture}[scale=0.6]
  \begin{axis}[domain=-6:6, samples=100, axis lines=middle, ymin=-1.1, ymax=1.1, xtick=\empty, ytick=\empty, width=5.2cm, height=4cm]
    \addplot[steelblue, very thick] {erf(x)};
  \end{axis}
\end{tikzpicture} &
\vfill $0$ \vfill &
\vfill $\frac{2}{\sqrt{3 \pi}}$ \vfill &
\vfill $0$ \vfill &
\vfill $\approx 0.2004$ \vfill &
\vfill $\frac{2}{\sqrt{\pi}} \sqrt{\frac{3 \sqrt{5} - 5}{15}}$ \vfill\\
\hline

\vfill Sigmoid Linear Unit (SiLU) \vfill &
\vfill $\sigma(z) = \frac{z}{1 + e^{-z}}$\vfill  &
\begin{tikzpicture}[scale=0.5]
  \begin{axis}[domain=-5:5, samples=100, axis lines=middle,  ymin=-0.4, ymax=6, xtick=\empty, ytick=\empty, width=6cm, height=4cm]
    \addplot[steelblue, very thick] {x/(1 + exp(-x))};
  \end{axis}
\end{tikzpicture} &
\vfill $\approx 0.2066$ \vfill &
\vfill $\frac{1}{2}$ \vfill &
\vfill $\approx 0.3508$ \vfill &
\vfill $\approx 0.2512$ \vfill &
\vfill $\approx 0.0799$ \vfill\\
\hline

\vfill Rectified Linear Unit (ReLU) \vfill &
\vfill $\sigma(z) = \max \{0, z \}$\vfill &
\begin{tikzpicture}[scale=0.5]
  \begin{axis}[domain=-5:5, samples=100, axis lines=middle, ymin=-0.4, ymax=6, xtick=\empty, ytick=\empty, width=6cm, height=4cm]
    \addplot[steelblue, very thick] {max(0,x)};
  \end{axis}
\end{tikzpicture} &
\vfill $\frac{1}{\sqrt{2 \pi}}$ \vfill &
\vfill $\frac{1}{2}$ \vfill &
\vfill $\frac{1}{\sqrt{2 \pi}}$ \vfill &
\vfill $\sqrt{\frac{1}{4} - \frac{1}{2 \pi}}$ \vfill &
\vfill $\sqrt{\frac{1}{4} - \frac{1}{2 \pi}}$ \vfill\\
\hline
\end{tabular}
\end{table}

%%%%%%%%%%%%%%%%%%%%%%%%%%%%%%%%%%%%%%%%%%%

\section{Choice of gradient subspaces}
\label{app:data-informed-subspace}

In equation~\eqref{eq:training-obj-sobo}, we project the gradient data and the network gradient predictions onto a subspace spanned by the columns of a known matrix~$V_k \in \RR^{d \times k}$.  
This setup is inspired by practical considerations since the paper on Sobolev training by~\citet{czarnecki-osindero-jaderberg:2017}, as well as DINOs~\cite{o-leary-roseberry-chen-villa-etal:2024}, advocate for sketching gradients in this manner to lower computational costs.
However, this projection is also necessary for our theory since the replica method can only be applied with a fixed and finite number of overlap parameters. Accordingly, we require $k = O(1)$ in the asymptotic limit.\\

Although \cite{czarnecki-osindero-jaderberg:2017, o-leary-roseberry-chen-villa-etal:2024} recommend projecting the gradients onto columns~$v$ of $V_k$ that have unit norm, this choice does \emph{not} enable RF models to assimilate gradient information in high dimensions. Figure~\ref{fig:unit-Vk-scaling} illustrates the MC simulations of generalization errors of RF models at fixed $n/d=2.345$ and $k=1$ across $d = \{200, 500, 1000, 2000\}$ with $v \sim \mathcal{N}(0, d^{-1} I_d)$ so that $\lim_{d \to\infty} \| v \|=1$ almost surely. As we can observe, the $L_2$ generalization errors approach the theoretical predictions obtained from $L_2$ training as $d\uparrow\infty$, meaning the model behaves equivalently to the setting where gradient data are not provided at all. Phrased differently, under this $V_k$ scaling the projections of the gradient data and the network gradient predictions tend to zero in the proportional asymptotics limit. Thus, in the figure, we observe the $H^1_k$ generalization errors also approach zero though this trend occurs for any model with $O(p^{-1/2})$ readout weight entries.\\

\begin{figure}
    \includegraphics[width=0.85\linewidth]{./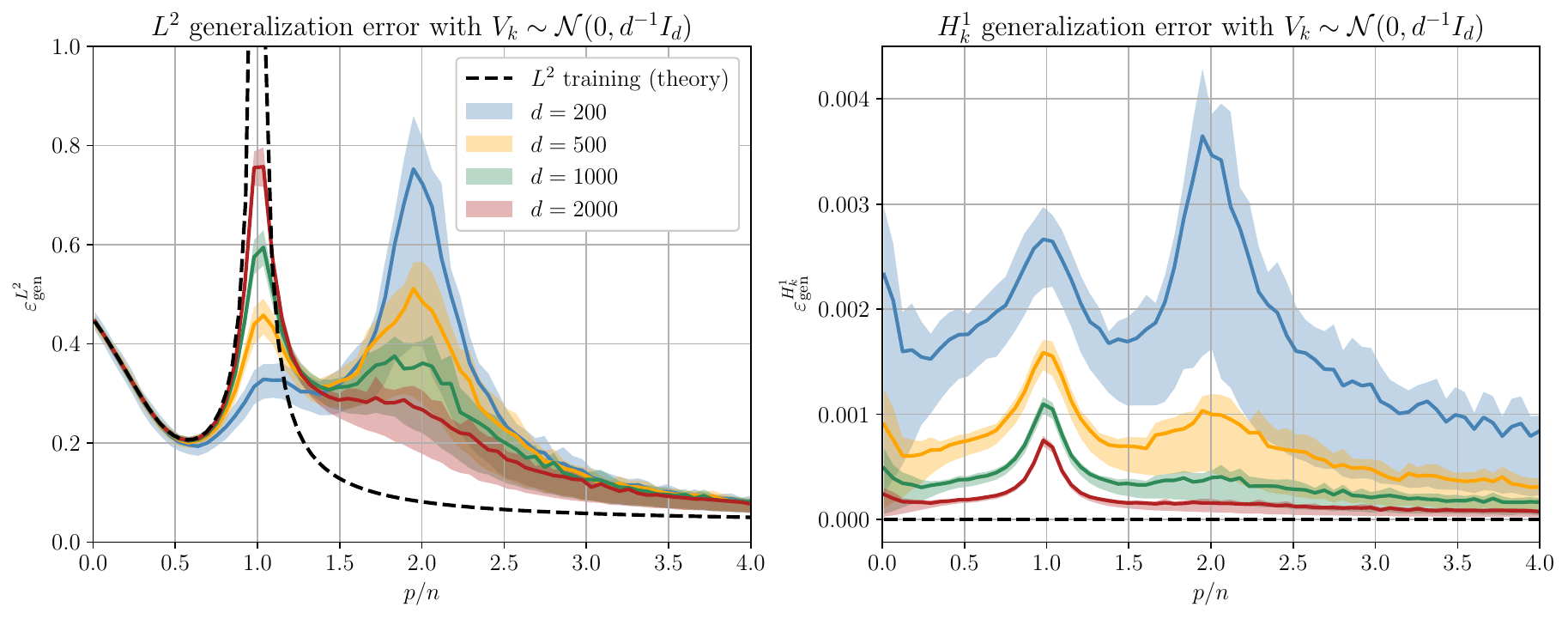}
     \caption{Generalization error curves when $V_k,~k=1$, is sampled from~$\mathcal{N}(0, d^{-1} I_d)$. Other hyperparameters: $n/d = 2.345$, $\lambda = 10^{-6}$, $\sigma=\mathrm{erf}$, and $\phi=\arctan$. The solid lines represent the mean, and the shaded regions represent 25\% and 75\% quantiles of the error distributions for 500 samples. Left: $L^2$ generalization error. Right: $H^1_k$ generalization error. The black dashed lines show the theoretical predictions for the same problem setup under $L^2$ training.}
     \label{fig:unit-Vk-scaling} 
\end{figure}

Instead, it is necessary to have $\| v \| = O(\sqrt{d})$. This constraint ensures $v^\top \nabla_x f_w(x) = O(1)$ so that all terms in~\eqref{eq:training-obj-sobo} have commensurate scaling in the asymptotic limit. In the main text, we choose the columns to be sampled iid from the $d$-dimensional standard Gaussian; we refer to this model as a \emph{data uninformed} subspace. \\

In contrast, \cite{o-leary-roseberry-chen-villa-etal:2024} construct the \emph{data-informed} subspace $V_k$ to span the $k$-leading eigenspace of the matrix $\mathbb{E}[(y')(y')^\top]$, estimated via MC with training data and document improved generalization performance of their neural network model. For gradient data arising from single-index teachers with teacher vector~$\theta_0$, we can model this construction by sampling each column~$v$ of $V_k$ as 
\[
v = \sqrt{d} \varpi \theta_0 + \mathcal{N}(0, (1-\varpi^2) I_d).
\]
We refer to the normalized projection $\frac{1}{\sqrt{d}} v^\top \theta_0 \to \varpi \in [-1, 1]$ as the subspace alignment~(see also \cite{cui-pesce-dandi-etal:2024}), and we interpret the Gaussian noise term as modeling errors incurred from estimating the eigenvectors with finite samples.\\

Although~$\|v\| = O(1)$ and $v^\top \nabla_x f_w = O(1)$ remain as before, unfortunately this data-informed subspace yields $v^\top y' = O(\sqrt{d})$. As a result, the training objective, e.g., the squared Sobolev $H^1_k$ norm, must be adjusted as 
\[
\ell(y_i, f_w(x_i), V_k^\top y_i', V_k^\top \nabla f_w(x_i)) = \frac{1}{2}(y_i - f_w(x_i))^2 + \frac{1}{2}\|\frac{1}{\sqrt{d}} V_k^\top y_i' - V_k^\top \nabla f_w(x_i) \|^2 \;,
\]
cf. equation~\eqref{eq:loss-sobo-standard}, which does not normalize the gradient projection by~$\sqrt{d}$. Evidently, this loss function promotes misspecified gradient models since the idealized outcome ``$y_i' = \nabla f_w(x_i)$'' does not minimize the seminorm component. We believe this to be a fundamental limitation of RF models with proportionally asymptotic scaling $d,n,p\to\infty$ with linear ratios $\alpha = n/p$ and $\gamma = d/p$ fixed. It is an interesting direction for future work to investigate theoretical models which are able to capture the benefit of data informed gradient subspaces.

%%%%%%%%%%%%%%%%%%%%%%%%%%%%%%%%%%%%%%%%%%%

\section{Gaussian equivalence theorem for gradient observations}
\label{app:GET}

A key step in deriving the fixed point system~\eqref{eq:update_hat_sobo_h1} and~\eqref{eq:update_nonhat_sobo_h1} is to replace the teacher and student networks with asymptotically equivalent expressions (in distribution) that are affine in the pre-activation features $\theta_0^\top x$, $\theta_1^\top x$, \ldots, $\theta_p^\top x$. This is the content of the so-called Gaussian equivalence theorem (GET). For $L^2$ training, the GET adopts the form
\begin{align}
\label{eq:ell2-GET}
	\begin{pmatrix}
		w^\top \sigma\lp \Theta^\top x \rp \\
		\theta_0^\top x
	\end{pmatrix}  \; \xrightarrow[p,d\to\infty]{\; \mathcal{D} \;} \; \begin{pmatrix}
	w^\top \lp \kappa_0 \1_p + \kappa_1 \Theta^\top x + \kappa_* \hat{\eta} \rp \\
		\theta_0^\top x
\end{pmatrix}	 \;,
\end{align}
where $\hat{\eta} \sim \mathcal{N}(0, I_p)$ and the $\kappa$ coefficients are given by~\eqref{eq:hermite-coeff-def}.\\

The convergence of~\eqref{eq:ell2-GET} is rigorously established by \citet{goldt-loureiro-reeves-etal:2022} and \citet{hu-lu:2022}.
Denoting the post-activation features $a_1 = \sigma(\theta_1^\top x)$, \ldots, $a_p = \sigma(\theta_p^\top x)$, both approaches essentially rely on decorrelating $\{a_1,\dots,a_{i-1},a_{i+1},\dots,a_{p}\}$ from $a_i$ to establish a central limit theorem, though the larger structure of their proof techniques differs. In particular, \citet{goldt-loureiro-reeves-etal:2022} focus on low dimensional projections of the features and their Gaussian equivalents. They proceed by bounding the maximum sliced distance between the laws of these objects, and they allow for arbitrary nonlinear activations for each feature, up to a smoothness condition. \citet{hu-lu:2022} use Lindeberg's method to construct an interpolating path between the features and their Gaussian counterparts, and bound differences between points along this path. These authors demonstrate that the training and generalization error produced by the network features converge in probability to the corresponding objects for the Gaussian features, and moreover, the first two moments of these features match. \\

For Sobolev training, the form of GET that we require is
\begin{align}
\label{eq:sobo-GET}
	\begin{pmatrix}
		w^\top \sigma\lp \Theta^\top x \rp \\
		V_k^\top \Theta \;  \diag{  \sigma'\lp \Theta^\top x } \rp w \\
		\theta_0^\top x
	\end{pmatrix}  \; \xrightarrow[p,d\to\infty]{\; \mathcal{D} \;} \; \begin{pmatrix}
	w^\top \lp \kappa_0 \1_p + \kappa_1 \Theta^\top x + \kappa_* \hat{\eta} \rp \\
	V_k^\top \Theta \;  \diag{  \kappa_0' \1_p + \kappa_1' \Theta^\top x + \kappa_*' \hat{\eta}'  }  w \\
		\theta_0^\top x
\end{pmatrix}	 ,
\end{align}
where $\hat{\eta},\hat{\eta}'\sim\mathcal{N}(0,I_p)$ are independent, and the $\kappa$ coefficients are once again given by~\eqref{eq:hermite-coeff-def}.
The shared pre-activation features $\Theta^\top x$ between the network $w^\top \sigma\lp \Theta^\top x \rp$ and its gradient $V_k^\top \Theta \;  \diag{  \sigma'\lp \Theta^\top x } \rp w$ pose a critical obstruction towards rigorously establishing~\eqref{eq:sobo-GET}, though we can obtain partial results in this direction. For instance, by applying Theorem 2 of \citet{goldt-loureiro-reeves-etal:2022}, we can conclude
\begin{align}
\label{eq:h1k-GET}
	\begin{pmatrix}
		V_k^\top \Theta \;  \diag{  \sigma'\lp \Theta^\top x } \rp w \\
		\theta_0^\top x
	\end{pmatrix}  \; \xrightarrow[p,d\to\infty]{\; \mathcal{D} \;} \; \begin{pmatrix}
	V_k^\top \Theta \;  \diag{  \kappa_0' \1_p + \kappa_1' \Theta^\top x + \kappa_*' \hat{\eta}'  }  w \\
		\theta_0^\top x
\end{pmatrix}	 .
\end{align}
Unfortunately, \eqref{eq:h1k-GET} and \eqref{eq:ell2-GET} are not sufficient to imply \eqref{eq:sobo-GET}, and we must also demonstrate
\begin{align}
\label{eq:problem-GET}
	\begin{pmatrix}
		w^\top \sigma\lp \Theta^\top x \rp \\
		V_k^\top \Theta \;  \diag{  \sigma'\lp \Theta^\top x } \rp w 
	\end{pmatrix}  \; \xrightarrow[p,d\to\infty]{\; \mathcal{D} \;} \; \begin{pmatrix}
	w^\top \lp \kappa_0 \1_p + \kappa_1 \Theta^\top x + \kappa_* \hat{\eta} \rp \\
	V_k^\top \Theta \;  \diag{  \kappa_0' \1_p + \kappa_1' \Theta^\top x + \kappa_*' \hat{\eta}'  }  w 
\end{pmatrix} \;.
\end{align}
The correlations between these marginals precludes us from similarly applying Theorem 2 in \citet{goldt-loureiro-reeves-etal:2022}, though we speculate that a modification of the proof technique could sufficiently strengthen it the result to apply to our setting. In the present work, we do not pursue this technical modification, but instead, we provide numerical justification for \eqref{eq:sobo-GET} in Section~\ref{emp_GET}.\\

We note several advancements on the work of \citet{goldt-loureiro-reeves-etal:2022} and \citet{hu-lu:2022} have been put forward in the intervening years. \citet{montanari-saeed:2022} extend the Gaussian equivalence theorem for fixed features to loss functions and regularization that may be non-convex. Leveraging the notion of exponentially concentration vectors, \citet{Seddik2020} demonstrate that successive Lipschitz transformations applied to Gaussian data yield features that have a Gram matrix equivalent to that of a Gaussian mixture model. Both \citet{schroder2023} and \citet{bosch-panahi-hassibi:2023} establish Gaussian equivalence for deep RF models. \citet{cui-krzakala-zdeborova:2023} and \citet{pacelli-ariosto-pastore-etal:2023} conjecture about the next step: deep Gaussian equivalence. In particular, \citet{pacelli-ariosto-pastore-etal:2023} argue that Gaussian equivalence should apply to networks where interior parameters are trainable as a consequence of an extension of the Breuer--Major theorem \citep{Breuer1983} applied by \citet{bardet2013}. Picking up on this thread, \citet{camilli-tieplova-bergamin-etal:2025} prove deep Gaussian equivalence using an interpolation argument. While this body of work has contributed significantly to the understanding of neural network learning, each result requires at most weak correlation in features or training data points. Consequently, to the best of our knowledge, existing work on Gaussian equivalence does not rigorously establish \eqref{eq:sobo-GET}.

\subsection{Empirical support for Sobolev Gaussian equivalence\label{emp_GET}}

Here, we present numerical evidence for the statistical behavior of the RF model and low dimensional projections of its gradients in the proportional asymptotics limit. Figure \ref{fig:sobo-GET} shows a representative result of our experiments to verify~\eqref{eq:sobo-GET}. We consider $d=500$, $\alpha=p/n=2.5$, and $\gamma=d/p=0.1706$. For an error function nonlinearity combined with a fixed set of features, $\theta_0,\theta_1,\dots,\theta_p$, and projection, $V_k$, we solve the ridge regression problem with the Sobolev norm to find the optimal weights, $w_*$. Then, we compute
\begin{align}
	\label{eq:sobo-GET-rf}
	v_{\text{RF}} &= \begin{pmatrix}
	f_{w_*}(x) \\ V_k^\top\nabla f_{w_*}(x) \\ \omega 
\end{pmatrix} = \begin{pmatrix}
	{w_*}^\top \sigma\lp \Theta^\top x \rp \\
		V_k^\top \Theta \;  \diag{  \sigma'\lp \Theta^\top x } \rp {w_*} \\
		\theta_0^\top x
	\end{pmatrix}, \qquad \\ 
	\label{eq:sobo-GET-GET} v_{\text{GET}} &= \begin{pmatrix}
	f_{w_*}^{\text{lin}}(x) \\ V_k^\top\nabla f^{\text{lin}}_{w_*}(x) \\ \omega 
\end{pmatrix} = \begin{pmatrix}
	{w_*}^\top \lp \kappa_0 \1_p + \kappa_1 \Theta^\top x + \kappa_* \hat{\eta} \rp \\
	V_k^\top \Theta \;  \diag{  \kappa_0' \1_p + \kappa_1' \Theta^\top x + \kappa_*' \hat{\eta}'  }  {w_*} \\
		\theta_0^\top x
\end{pmatrix}		, 
\end{align}
for $2000$ samples of $x$ drawn independently from $\mathcal{N}(0,I_p)$. The subplots of Figure \ref{fig:sobo-GET} compares the marginals of \eqref{eq:sobo-GET-rf} and \eqref{eq:sobo-GET-GET} as well as every pairwise point distribution. We see that the limiting Gaussian distribution posited by \eqref{eq:sobo-GET} accurately characterizes the behavior of the RF model for large $n$, $p$, and $d$.\\

	\begin{figure}
		\includegraphics[width=0.7\linewidth]{./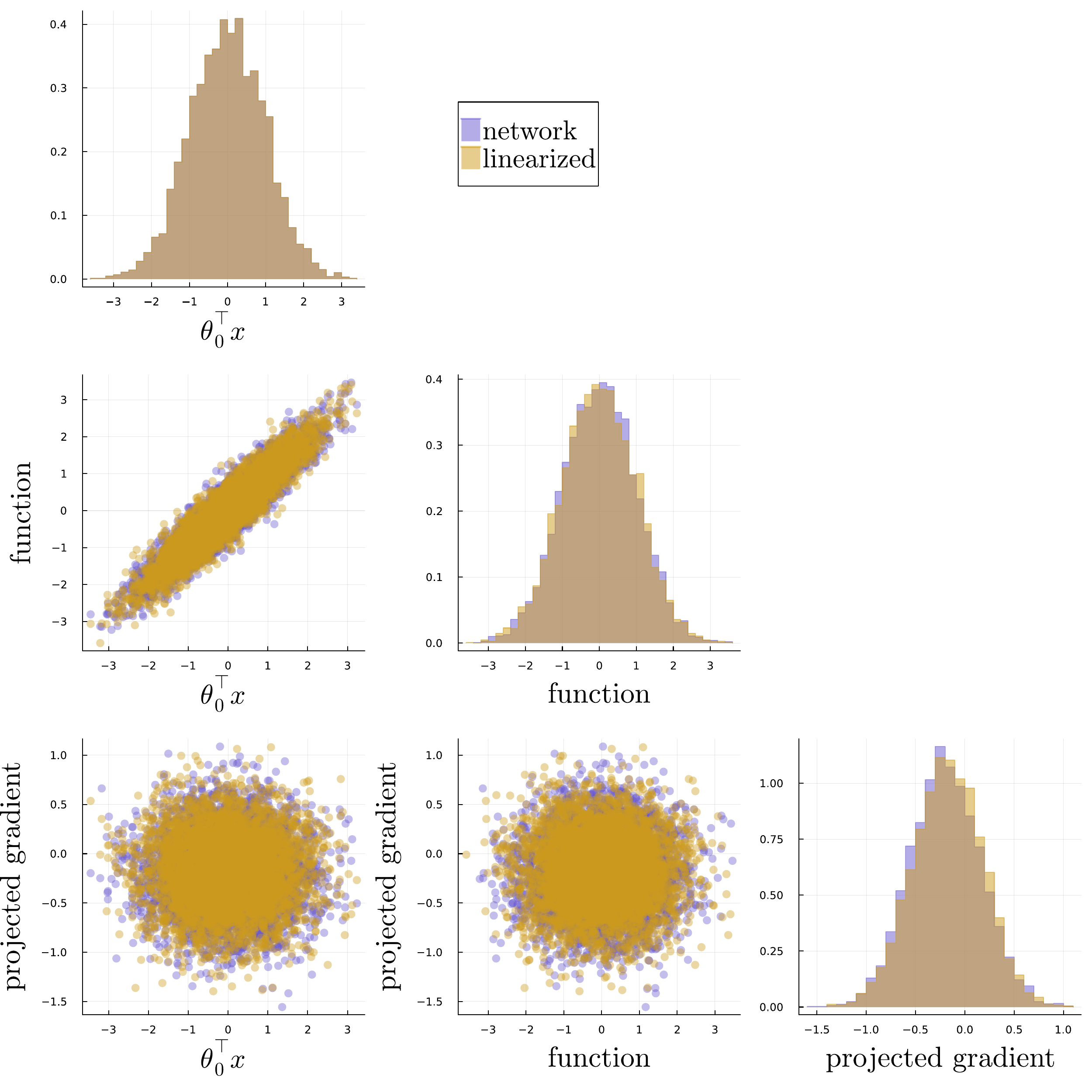}
		 \caption{\label{fig:sobo-GET} Comparison of the RF model (indigo) and the equivalent Gaussian model (gold). The marginals of $\omega=\theta_0^\top x$, $f_w(x)$ and $V_k^\top \nabla f_w(x)$ (and the corresponding linearized models) are shown on the diagonal, and the off diagonal plots show the corresponding pairwise distributions. }
	\end{figure}
	
	We now highlight the dependence of the equivalent Gaussian model on independent noise vectors, $\hat{\eta}$ and $\hat{\eta}'$, which correspond to the models for $y$ and $V_k^\top y'$, respectively. Since the only source of randomness in $v_{\text{RF}}$ is $x$, and this is shared by $f_w(x)$ and $V_k^\top \nabla f_w(x)$, it is not obvious whether $\hat{\eta}$ should be independent from, or equivalent to, $\hat{\eta}'$. We verify that choosing these to be independent produces Gaussian equivalence for two choices of nonlinearity: the error function and the Sigmoid Linear Unit (SiLU). Figure \ref{fig:GET-eta} compares the case where $\hat{\eta}=\hat{\eta}'$ (left column) to the case where the two noise vectors are independent (right column). The difference between the two columns is slight for the error function (first row), but we can distinguish a slight positive correlation in the samples of $v_{\text{RF}}$ that is captured by $v_{\text{GET}}$ when $\hat{\eta}\neq\hat{\eta}'$ and missed when $\hat{\eta}=\hat{\eta}'$. The disparity is much clearer when we consider SiLU (second row). Choosing $\hat{\eta}$ to be independent from $\hat{\eta}'$ thus produces the expected equivalent Gaussian distribution.

	\begin{figure}
            \centering
            \begin{tabular}{c c c}
            \includegraphics[width=0.45\linewidth]{./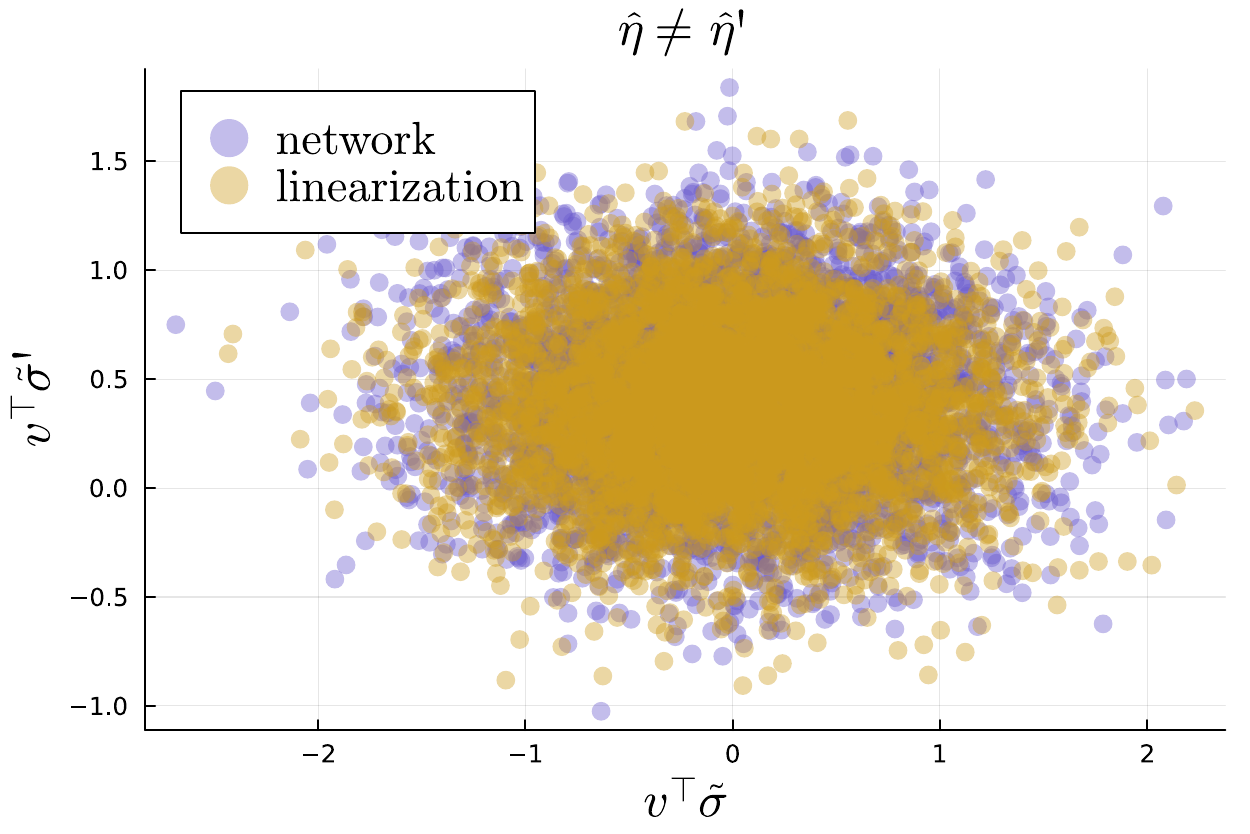}
            & & \includegraphics[width=0.45\linewidth]{./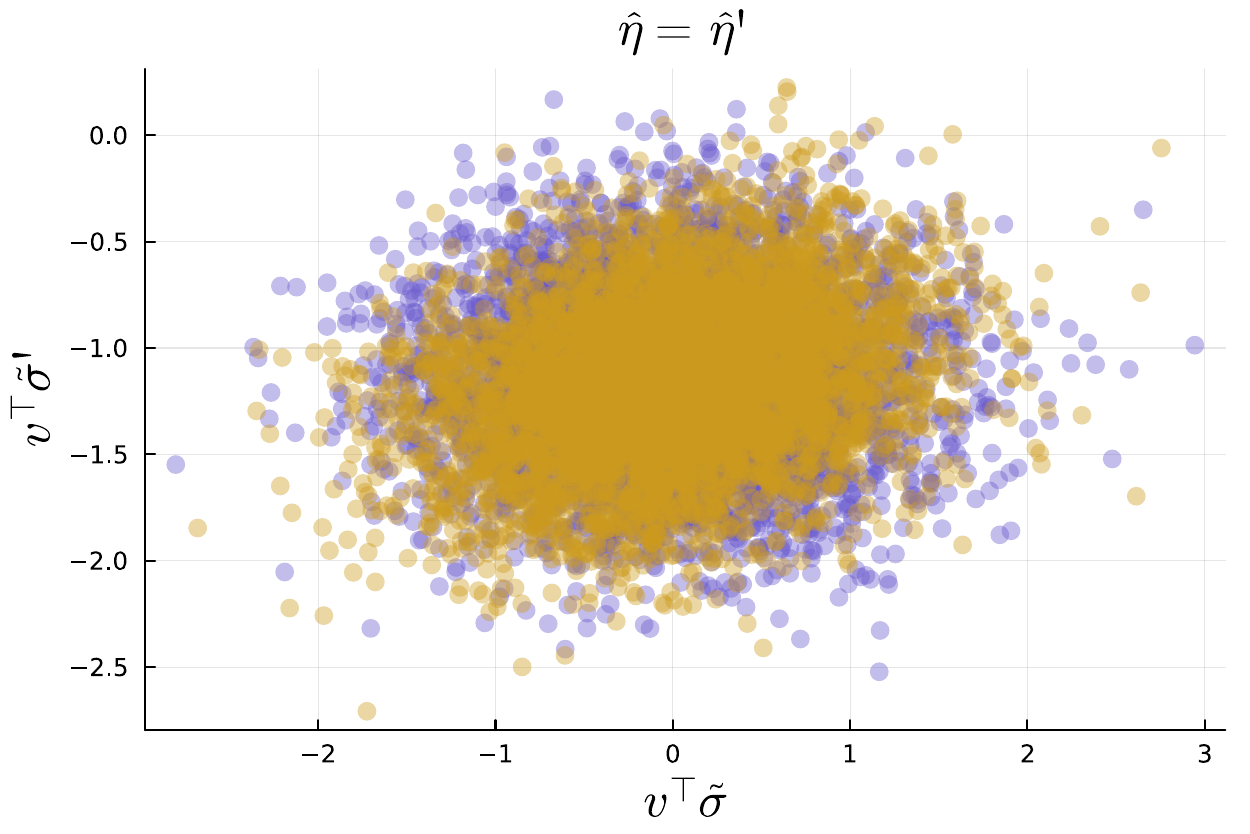} \\
            \includegraphics[width=0.45\linewidth]{./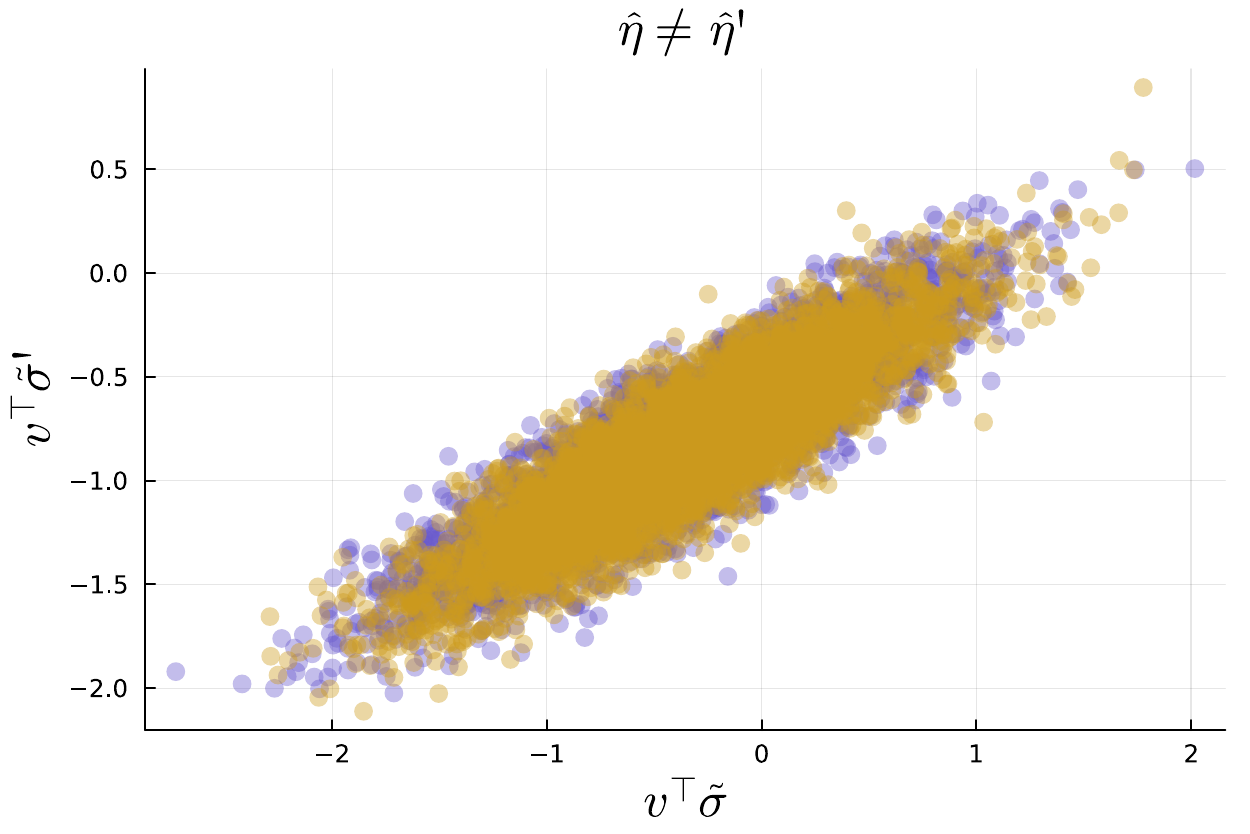}
            & & \includegraphics[width=0.45\linewidth]{./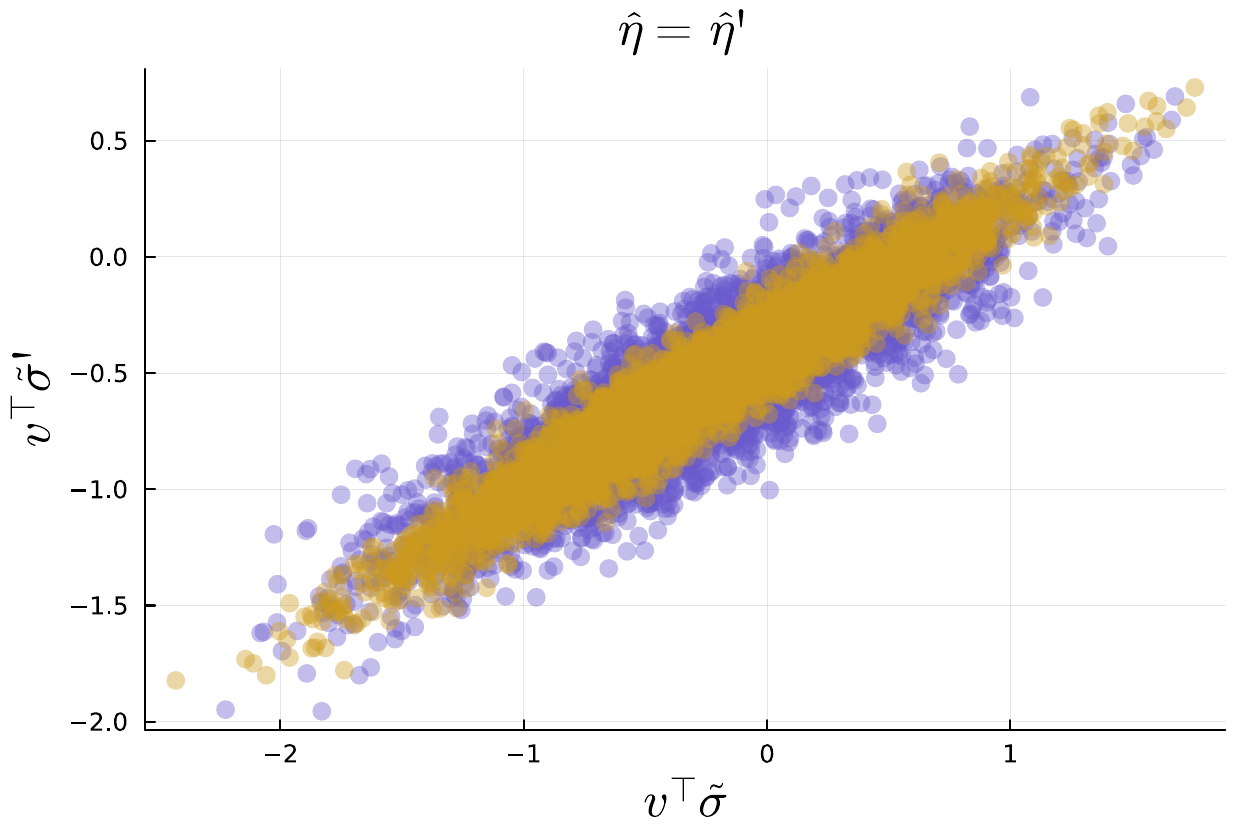} \\
            \end{tabular}
            \caption{\label{fig:GET-eta} Let $\tilde{\sigma}$ be a stand in for the post-activation features of either $f_w(x)$ or the corresponding Gaussian model, equation~\eqref{eq:sobo-GET-GET} (distinguished by the colors indigo and gold, respectively). We compare the joint distribution obtained for the network and its linearization for $v^\top \tilde{\sigma}$ and $v^\top \tilde{\sigma'}$ where $v\sim \mathcal{N}(0,\frac{1}{p}I_p)$. For the left column, we take noise $\hat{\eta}=\hat{\eta}'$ in the Gaussian equivalent model, and in the right column, we consider independent noise. In the first row, we let the error function be the activation in the RF network, and in the second row, we consider the SiLU function.}
    \end{figure}

%%%%%%%%%%%%%%%%%%%%%%%%%%%%%%%%%%%%%%%%%%%%%%%%%%%%%%%%%%%%%%%

\section{Replica calculation for subspace Sobolev-type losses}
\label{app:replica}

The replica-based approach~\citep{mezard-parisi-virasoro:1987} we present in this section is standard in the literature, and our presentation closely follows~\citet{gerace-loureiro-krzakala-etal:2021} and \citet{goldt-loureiro-reeves-etal:2022} where similar neural network models are analyzed, though without gradient data. Still, beyond just the necessary calculations, we have included comments and explanations along the way that hopefully make the exposition accessible to a broader audience with no prior exposure to replica techniques.\\ 

We consider the empirical risk minimizer $w^* = \argmin_w \eps_{\text{train}}(w)$ of the regularized loss function
\begin{align}
\eps_{\text{train}}(w) = \frac{1}{n} \sum_{i = 1}^n \left[ \ell \left(y_i,\; f_w(x_i),\; V_k^\top y_i',\; V_k^\top \nabla f_w(x_i) \right) \right] + \frac{\lambda}{2 \alpha} \norm{w}^2\,,
\label{eq:training-obj-sobo}
\end{align}
where $\ell : \RR \times \RR \times \RR^k \times \RR^k \to \RR$ is convex with respect to~$w$, and the $n$ data samples are generated as described in Section~\ref{sec:setup} of the main text. As an example, we have in mind the squared Sobolev $H^1_k$ norm
\begin{align}
\label{eq:loss-sobo-standard}
\ell(y_i, \;f_w(x_i),\; V_k^\top y_i',\;V_k^\top \nabla f_w(x_i)) = \frac{1}{2} (y_i - f_w(x_i))^2 + \frac{1}{2} \left\| V_k^\top y_i' - V_k^\top \nabla f_w(x_i) \right\|^2\,,
\end{align}
though the method applies more generally. Our goal is to calculate, in the proportional asymptotics limit~\eqref{eq:prop-asymp-def}, the expected $H^1_k$ generalization error
\begin{align}
\eps_\text{gen}\left(w^*\right) = \eps_{\text{gen}}^{L^2} \left(w^*\right) + \eps_{\text{gen}}^{H^1_k} \left(w^*\right) =  \EE_{x_0, y_0, y_0'} \left[\left(y_0 - f_{w^*}(x_0)\right)^2 + \norm{V_k^\top \left( y_0' - \nabla f_{w^*}(x_0) \right)}^2 \right]
\label{eq:gen-error-def-appendix}
\end{align}
using a new data sample $(x_0,y_0,y_0')$, independent of the training samples,  while restricting the gradient to the same $k$-dimensional subspace defined via $V_k \in \RR^{d \times k}$ as used for the ``training'' of $w^*$. While the $H^1$ generalization error on the full space would also be interesting, for the reasons discussed in Subsection~\ref{sec:setup}, we restrict ourselves to calculating the subspace error. \\

As a trick that will simplify the calculations later on, we introduce the generalization error on iid copies $\left(x_{0,i}, \, y_{0,i}, \,y_{0,i}' \right)$, $i \in [n]$ as 
\begin{align}
\eps^{n}_{\text{gen}}(w) = \eps^{n, L^2}_{\text{gen}}(w) + \eps^{n, H^1_k}_{\text{gen}}(w)  
= \frac{1}{n} \sum_{i = 1}^n \left( y_{0,i} - f_{w}(x_{0,i}) \right)^2 + \frac{1}{n} \sum_{i = 1}^n \norm{V_k^\top \left( y_{0,i}' - \nabla f_{w}(x_{0,i}) \right)}^2 \,.
\end{align}
Clearly, we have
\begin{equation}
\eps_\text{gen}\left(w^*\right) = \EE_{X_0, Y_0, Y_0'} \left[\eps^{n}_{\text{gen}}\left(w^*\right) \right] \,,
\end{equation}
where $X_0 = [x_{0,1}, \dots , x_{0,n}] \in \RR^{d \times n}$, $Y_0 = (y_{0,1}, \dots, y_{0,n})^\top \in \RR^n$, and $Y_0' = (y_{0,1}', \dots, y_{0,n}') \in \RR^{d \times n}$. The advantage is that the generalization error, when written in this way, becomes structurally similar to the training error.\\

We assume throughout all of the following calculations that---conditional on the alignment $\varpi = V_k^\top \theta_0$---the training and generalization errors, as well as all overlap parameters~\eqref{eq:overlap-def} to be introduced below, concentrate onto their (conditional) expectations in the proportional asymptotics limit~\eqref{eq:prop-asymp-def}.

\subsection{Defining a distribution with inverse temperature $\beta$ for the weights}

The first step consists of mapping the problem to the standard framework of statistical mechanics. For this purpose, we introduce a Gibbs distribution with inverse temperature $\beta > 0$ for the weights $w \in \RR^p$ and consider the corresponding canonical partition function $Z_\beta(h) > 0$ with a ``homogeneous external field'' $h \geq 0$ given by 
\begin{align}
Z_\beta(h) &:= \int_{\RR^p} \exp\left\{-\beta n \left( \eps_{\text{train}}(w) + h \eps_{\text{gen}}^{n}(w)\right)\right\} \dd w \nonumber\\
&= \int_{\RR^p} \exp\left\{ -\beta \sum_{i \in [n]} \ell \left(y_i,\; f_w(x_i),\; V_k^\top y_i',\; V_k^\top \nabla f_w(x_i) \right) -\beta h n \, \epsilon_\text{gen}^{n}(w)\right \}\exp\bigg\{-p \frac{\beta\lambda}{2} \norm{w}^2\bigg\} \dd w \nonumber\\
&= \left(\frac{2 \pi}{\beta \lambda p} \right)^{p/2} \int_{\RR^p} \exp\left\{ -\beta \sum_{i \in [n]} \ell \left(y_i,\; f_w(x_i),\; V_k^\top y_i',\; V_k^\top \nabla f_w(x_i) \right) -\beta h n \epsilon_\text{gen}^{n}(w)\right \} \dd \rho(w)\,,
\label{eq:partition-grad}
\end{align}
where $\rho$ denotes the normalized Gaussian probability measure for $w$ stemming from the Tikhonov regularization term in~\eqref{eq:training-obj-sobo}. Taking the low-temperature limit $\beta \to \infty$ and setting $h=0$, the Gibbs distribution
$Z_\beta^{-1}(0) \exp \left\{-\beta n \eps_{\text{train}}(w) \right\} \dd w$
then concentrates onto the unique minimizer $w^*$ of the training loss, thus recovering the empirical risk minimization setup in this limit. We are then interested in the so-called free energy density $f_\beta$ in the proportional asymptotics limit:
\begin{align}
f_\beta(h) := - \plim_{p\to\infty} \frac{1}{p} \left[ \log \left(\left(\frac{p}{2 \pi} \right)^{p/2} Z_{\beta}(h) \right) \right] = - \plim_{p\to\infty} \frac{1}{p} \left[ \log \hat{Z}_{\beta}(h) \right]\,,
\label{eq:free-ener-def-grad}
\end{align}
where we write $\hat{Z}_{\beta}(h) := \left(p / (2 \pi) \right)^{p/2} Z_{\beta}(h)$ for a conveniently rescaled partition function (the additional prefactor $\left(p / (2 \pi) \right)^{p/2}$ in $\hat{Z}_\beta$ makes the free energy density itself well-defined in the proportional asymptotics limit and can hence be thought of as removing an otherwise logarithmically diverging additive constant;  as such, it does not change any of the derivatives or saddle-point equations we actually need to compute in subsequent sections). As mentioned above, we assume the free energy is \emph{self-averaging} in the proportional asymptotics limit if conditioned on $\varpi$, meaning 
\begin{equation}
f_\beta(h) = \EE_{V_k, \theta_0, \Theta, X,Y,Y',X_0,Y_0,Y_0' \; \mid \; \varpi}[f_\beta(h)],
\end{equation}
where we assume we can freely interchange limits and expectations in the right hand side. We will often abbreviate $\EE_{V_k, \theta_0, \Theta, X,Y,Y',X_0,Y_0,Y_0' \; \mid \; \varpi} = \EE_{\mid \varpi} = \EE$ for brevity whenever it should be clear from context which expectation is taken. The free energy density~\eqref{eq:free-ener-def-grad} is the key quantity to compute since  the training and generalization errors can be formally obtained from $f_\beta$ via differentiation with 
\begin{align}
\plim_{p \to \infty}  \eps_{\text{train}}(w^*) \mid \varpi &= \frac{1}{\alpha} \lim_{\beta \to \infty} \partial_\beta f_\beta (0)
\label{eq:training-err-derivative-grad}\\
\plim_{p \to \infty} \eps_{\text{gen}}(w^*) \mid \varpi &= \frac{1}{\alpha} \lim_{\beta \to \infty} \frac{1}{\beta} \partial_h f_\beta(0)\,.
\label{eq:generalization-err-derivative-grad}
\end{align}
The task is hence to compute the free energy density~\eqref{eq:free-ener-def-grad} in the high-dimensional limit. Once found, differentiating it with respect to the temperature or external field and taking the low-temperature limit yields the training and generalization error that we want to compute. Structurally, we are dealing with a free energy density $f_\beta = \EE \log Z_\beta$ with two nested expectations. This setup is analogous to disordered systems in statistical physics with quenched disorder: the weights $w$ play the role of, e.g.,\ spins with Hamiltonian $\eps_{\text{train}}(w) + h \eps_{\text{gen}}(w)$, and the training data and other model parameters lead to random parameters in the Hamiltonian. The proportional asymptotics limit corresponds to the thermodynamic limit of large system size. Consequently, we can evaluate the free energy density using the well-known \textit{replica trick} as shown below.

\subsection{Replica trick}

A standard approach to calculating the free energy density in this setup is to convert expectations of logarithms into expectations of moments following the evident identity 
\begin{align}
\EE \left[ \log \hat{Z}_{\beta}(h) \right] = \lim_{R \downarrow 0} \frac{1}{R}\log \EE \left[\hat{Z}_{\beta}^R(h)\right] = \lim_{R \downarrow 0} \dv{}{R} \log \EE \left[\hat{Z}_{\beta}^R(h)\right]\,.
\end{align}
The basic idea to calculate~$f_\beta$ is then to exchange the limits $\plim_{p \to \infty}$ and $\lim_{R \downarrow 0}$ and evaluate the expectation $\EE\left[\hat{Z}_{\beta}^R \right]$ at fixed integer~$R$ in the proportional asymptotics limit~\eqref{eq:prop-asymp-def} using the saddlepoint method with $p$ as a large parameter:
\begin{align} 
f_\beta(h) &= - \plim_{p\to\infty} \frac{1}{p} \EE \left[ \log \hat{Z}_{\beta}(h) \right]
= - \plim_{p\to\infty} \frac{1}{p} \lim_{R \downarrow 0} \frac{1}{R}\log \EE \left[\hat{Z}_{\beta}^R(h)\right] = - \lim_{R \downarrow 0} \frac{1}{R} \plim_{p\to\infty} \frac{1}{p}\log \EE \left[\hat{Z}_{\beta}^R(h)\right]\,.
\label{eq:fbeta-grad}
\end{align}
The limit $R \downarrow 0$ is then evaluated afterwards via analytic continuation from integer $R$ to noninteger $R$ for a suitably parameterized ansatz for the saddlepoint evaluation.\\

Momentarily restricting to integer $R \in \NN$ allows us to express the powers of the partition function by introducing \textit{replicas} $w^{(r)}$, $r = 1,\dots, R$, of the weight vector so that (setting $h = 0$ for now for brevity)
\begin{align}
&\EE \left[\hat{Z}_{\beta}^R(0) \right]
 = (\beta \lambda)^{-pR/2} \EE \bigg[  \int_{\RR^p}  \dots \int_{\RR^p} \prod_{r \in [R]} \prod_{i\in[n]} L_{\beta}\left(y_i, \; f_{w^{(r)}}(x_i), \; V_k^\top y_i', \; V_k^\top \nabla f_{w^{(r)}}(x_i) \right) \dd\rho \left(w^{(r)}\right) \bigg] \nonumber\\
 &\overset{\text{training data iid}}{=}  (\beta \lambda)^{-pR/2} \int \prod_{r \in [R]} \dd\rho\left(w^{(r)}\right) \,\left( \EE_{x,y,y' \mid  \theta_0, \Theta, V_k} \bigg[   \prod_{r \in [R]} L_{\beta}\left(y,\; f_{w^{(r)}}(x),\; V_k^\top y', \;V_k^\top \nabla f_{w^{(r)}}(x) \right) \bigg] \right)^n\,.
\label{eq:ezr-simple-grad}
\end{align}
Here, we have introduced the notation
\begin{align}
L_{\beta}\left(y, \; \widetilde{y}, \; V_k^\top y', \; V_k^\top \widetilde{y}\,' \right) &:= \exp \left\{-\beta \,  \ell \left(y,\; \widetilde{y}, \; V_k^\top y', \; V_k^\top \widetilde{y}\,' \right) \right\}
\end{align}
for the exponentiated and temperature-weighted loss function.
A key observation is that the expectation over the training data in~\eqref{eq:ezr-simple-grad} admits a low-dimensional representation that can instead be expressed as an expectation over $O(1)$ many random variables. We can thus rewrite~\eqref{eq:ezr-simple-grad} as
\begin{align}
&\EE \left[\hat{Z}_{\beta}^R(0) \right] = (\beta \lambda)^{-pR/2} \times \nonumber\\
&\qquad \quad \times \EE_{\theta_0, \Theta, V_k, W \mid \varpi} \bigg[ 
    \bigg( \int_{\RR^{k+1}} \dd \Upsilon \; \int_{\RR^{(k+1)R + 1}} \dd \nu \left(\omega, \Upsilon^{(1:R)} \mid \theta_0, \Theta, V_k, W \right) P_\text{data}\left(\Upsilon \mid \varpi, \omega \right)     L^\Pi_{\beta}\left(\Upsilon, \Upsilon^{(1:R)}\right) \bigg)^n \bigg],
    \label{eq:ezr-grad}
\end{align}
with replicated weight matrix $W = \begin{bmatrix} w^{(1)} & \ldots & w^{(R)} \end{bmatrix} \in \RR^{p \times R}$ distributed according to the product measure $\dd\rho^{\otimes R}(W) = \prod_{r \in [R]} \dd\rho \left(w^{(r)} \right)$,
and where we defined
\begin{align}
\begin{cases}
\omega &= \left \langle \theta_0, x \right \rangle \in \RR \,,\\ 
\Upsilon &= \left(y, V_k^\top y' \right)^\top \in \RR^{k + 1} \,,
\end{cases}
\end{align}
so that $P_\text{data}$ describes the conditional distribution of the data $\Upsilon$ given the teacher vector projection $\omega$ of $x$ and the subspace alignment $\varpi = V_k^\top \theta_0$. Additionally, we write
\begin{equation}
\Upsilon^{(r)} = \left(y^{(r)}, \; (V_k^\top y')^{(r)} \right)^\top := \left( f_{w^{(r)}}(x), \; V_k^\top \nabla f_{w^{(r)}}(x) \right)^\top = \left(\sigma \left( \Theta^\top x \right)^\top w^{(r)}, \; V_k^\top \Theta \textsc{diag}\left(\sigma'\left(\Theta^\top x \right) \right) w^{(r)} \right) \in \RR^{k + 1} \,,
\end{equation}
for the corresponding network output and its projected gradient under the replicated weight vector $w^{(r)}$. We further abbreviated all network outputs as the tuple $\Upsilon^{(1:R)} = (\Upsilon^{(1)}, \ldots, \Upsilon^{(R)})$ and the product exponentiated loss function
\begin{equation}
L^\Pi_{\beta}\left(\Upsilon, \; \Upsilon^{(1:R)}\right) := \prod_{r \in [R]} L_{\beta}\left(y, \; y^{(r)}, \; V_k^\top y',\; (V_k^\top y')^{(r)}\right)\,.
\end{equation} 
Lastly, $\nu$ denotes the joint distribution of $\left(\omega,  \Upsilon^{(1:R)}\right)$ for fixed $W$, $\Theta$, $\theta_0$, and $V_k$, so that the \textit{only} randomness comes from marginalizing over $x \sim \mathcal{N}\left(0, I_d \right)$. Finally, reintroducing a nonzero external field $h \neq 0$ for the generalization error term leads to the following, analogous result:
\begin{align}
&\EE \left[\hat{Z}_{\beta}^R(h) \right] = (\beta \lambda)^{-pR/2} \times \nonumber\\
&\qquad \quad \times \EE_{\theta_0, \Theta, V_k, W \mid \varpi} \bigg[ 
    \bigg( \int_{\RR^{k+1}} \dd \Upsilon \; \int_{\RR^{(k+1)R + 1}} \dd \nu \left(\omega, \Upsilon^{(1:R)} \mid \theta_0, \Theta, V_k, W \right) P_\text{data}\left(\Upsilon \mid \varpi, \omega \right)     L^\Pi_{\beta}\left(\Upsilon, \Upsilon^{(1:R)}\right) \bigg)^n \times \nonumber\\
   & \qquad \qquad \qquad \times  \bigg( \int_{\RR^{k+1}} \dd \Upsilon_0 \; \int_{\RR^{(k+1)R + 1}} \dd \nu \left(\omega_0, \Upsilon^{(1:R)}_0 \mid \theta_0, \Theta, V_k, W \right) P_\text{data}\left(\Upsilon_0 \mid \varpi, \omega_0 \right)  \tilde{L}^\Pi_{\beta h}\left(\Upsilon_0, \Upsilon^{(1:R)}_0\right) \bigg)^n \bigg]\,.
   \label{eq:ezr-grad-with-h}
\end{align}
Here, the subscript-$0$ variables in the last line correspond to the independent and iid copies of the data for the generalization error, and the only difference is in the exponentiated loss function
\begin{equation}
\tilde{L}^\Pi_{\beta h}\left(\Upsilon_0, \Upsilon^{(1:R)}_0\right) := \prod_{r \in [R]} \tilde{L}_{\beta h}\left(y_0,\; y^{(r)}_0,\; V_k^\top y'_0,\; (V_k^\top y'_0)^{(r)}\right)
\end{equation} 
with
\begin{align}
\tilde{L}_{\beta h}\left(y_0,\; \widetilde{y}_0,\; V_k^\top y'_0,\; V_k^\top \widetilde{y}_0\,' \right) &:= \exp \left\{-\beta h \left[ \left(y_0 - \widetilde{y}_0\right)^2 + \norm{V_k^\top \left( y_0' - \widetilde{y}_0\,' \right)}^2 \right] \right\}\,.
\end{align}

\subsection{Identifying the necessary overlap parameters under the Gaussian equivalence theorem}

Rewriting the replicated partition function as in~\eqref{eq:ezr-grad-with-h} allows us to apply the Gaussian equivalence theorem to the measure $\nu$ in the proportional asymptotics limit. Conditional on a fixed realization of the feature matrix~$\Theta$, teacher vector~$\theta_0$, and projector~$V_k$, this theorem states that the random variables
\begin{align}
\begin{cases}
z &= \sigma\left(\Theta^\top x\right) \in \RR^p \\
z' &= \sigma'\left(\Theta^\top x\right) \in \RR^p
\end{cases}
\end{align}
can, for the purposes of calculating finite-dimensional summary statistics, be replaced in the proportional asymptotics regime by the Gaussian random variables
\begin{align}
\begin{cases}
\widetilde{z} &= \kappa_0 \1_p + \kappa_1 \Theta^\top x + \kappa_* \hat{\eta}\,,\\
\widetilde{z}\,' &= \kappa_0' \1_p + \kappa_1' \Theta^\top x + \kappa_*' \hat{\eta}'\,,
\end{cases}
\end{align}
with independent Gaussian noises $\hat{\eta}, \hat{\eta}' \sim \mathcal{N}(0, I_p)$ independent of everything else, and Hermite coefficients $\kappa$ and $\kappa'$ as defined in~\eqref{eq:hermite-coeff-def}.
We transform $z$ to the \emph{replicated function data} $y^{(r)} = \widetilde{z}\,^\top w^{(r)}$ and $z'$ to \emph{replicated gradient data} 
$(V_k^\top y')^{(r)} = V_k^\top \Theta \, \textsc{diag}(\widetilde{z}\,')\, w^{(r)}$.
In other words, in the proportional asymptotics limit and conditioned on all random variables other than $(x,\hat{\eta},\hat{\eta}')$, the random variables
$\left( \omega, \;y^{(1:R)}, \;\left( V_k^\top y' \right)^{(1:R)} \right) \in \RR^{(k+1)R + 1}$ are equivalent in law to a Gaussian random variable with mean and covariance
\begin{align}
\mu = \begin{pmatrix}
0 \\[3pt] \kappa_0 W^\top \1_p \\[3pt] \kappa_0' V_k^\top \Theta w^{(1)} \\ \vdots \\[3pt] \kappa_0' V_k^\top \Theta w^{(R)} 
\end{pmatrix}\,, \qquad
\label{eq:overlap-grad}
\Sigma =  \begin{pmatrix} \norm{\theta_0}^2  & \kappa_1  \theta_0^\top \Theta W & \Sigma_{31}^\top \\[6pt]
\kappa_1  W^\top \Theta^\top \theta_0 & W^\top \left(\kappa_1^2 \Theta^\top \Theta + \kappa_*^2 I_p \right) W & \Sigma_{32}^\top \\[6pt]
\Sigma_{31} & \Sigma_{32} & \Sigma_{33} \end{pmatrix} \;,
\end{align}
where 
\begin{equation}
\Sigma_{31} = \begin{pmatrix} \Sigma_{31}^{(1)} \\ \vdots \\ \Sigma_{31}^{(R)} \end{pmatrix} \in \RR^{kR}, \quad \Sigma_{32} = \begin{pmatrix} \Sigma_{32}^{(1)} \\ \vdots \\ \Sigma_{32}^{(R)}\end{pmatrix} \in \RR^{kR \times R}, \quad \Sigma_{33} =  \begin{pmatrix} \Sigma_{33}^{(1,1)} & \ldots & \Sigma_{33}^{(1,R)} \\
\vdots & \ddots & \\
\Sigma_{33}^{(R,1)} & & \Sigma_{33}^{(R,R)}\end{pmatrix} \in \RR^{kR \times kR},
\end{equation}
with
\begin{align}
\begin{cases}
\Sigma_{31}^{(r)} &= \kappa_1' V_k^\top \Theta \textsc{diag}(w^{(r)}) \Theta^\top \theta_0 \in \RR^{k}\\[4pt]
\Sigma_{32}^{(r)} &= \kappa_1 \kappa_1' V_k^\top \Theta \;\textsc{diag}(w^{(r)})  \Theta^\top \Theta W \in \RR^{k \times R}\\[4pt]
\Sigma_{33}^{(r,r')} &= V_k^\top \Theta \;\textsc{diag}(w^{(r)}) \left(  (\kappa_1')^2  \Theta^\top \Theta + (\kappa_*')^2 
I_p \right)\textsc{diag}(w^{(r')}) \Theta^\top V_k \in \RR^{k \times k}\,.
\end{cases}
\end{align}

\subsection{Saddlepoint form of the replicated partition function under the Gaussian equivalence theorem}

Since $\nu \to {\cal N}(\mu, \Sigma)$ in~\eqref{eq:ezr-grad-with-h} becomes Gaussian in the proportional asymptotics limit, the measure will only depend on the finite-dimensional parameters in $\mu$ and $\Sigma$. Consequently, we can factor out the dependency on these parameters as follows: we introduce the matrices
\begin{equation}
\mu' = \begin{pmatrix} 0 \\[4pt] S_a' \\[4pt] S_b' \end{pmatrix}, \quad  \Sigma' = \begin{pmatrix} \rho_a & F_a'^\top & F_b'^\top \\[6pt] F_a' & Q_a' & Q_b'^\top \\[6pt] F_b' & Q_b' & Q_c' \end{pmatrix},
\end{equation}
with overlap parameters $\rho_a \in \RR$, $S_a', F_a' \in \RR^{R}$, $S_b', F_b' \in \RR^{kR}$, $Q_a' = (Q_a')^\top \in \RR^{R \times R}$, $Q_b' \in \RR^{kR \times R}$, and $Q_c' = (Q_c')^\top \in \RR^{kR \times kR}$. These terms will act as integration variables below. We collect them into the tuple of parameters
\begin{equation}
t = ( \rho_a, S_a',\, S_b',\, F_a',\, F_b',\, \textsc{vech}(Q_a'),\, Q_b',\, \textsc{vech}(Q_c')) \in \RR^{d_t}\,,
\end{equation}
with dimension
\begin{align}
d_t = \underbrace{1}_{\rho_a} + \underbrace{R}_{S_a'} + \underbrace{kR}_{S_b'} + \underbrace{R}_{F_a'} + \underbrace{kR}_{F_b'} + \underbrace{\frac{R (R+1)}{2}}_{\textsc{vech}(Q_a')} + \underbrace{kR^2}_{Q_b'} + \underbrace{R \frac{k(k+1)}{2} + \frac{R (R-1)}{2} k^2}_{\textsc{vech}(Q_c')}\,,
\end{align}
where we have taken care to build in the symmetry requirements for $Q_a'$ and block-symmetry of $Q_c'$. We collect the true parameters $\mu, \Sigma$ for given realizations of $\theta_0, \Theta, W$ as
\begin{align}
T\left( \theta_0, \Theta, V_k, W \right) = (\mu(\theta_0, \Theta,  V_k, W),  \,\textsc{vech}(\Sigma(\theta_0, \Theta,  V_k, W)))\,.
\end{align}
Note that from the definition of the mean in~\eqref{eq:overlap-grad}, if $\kappa_0 = 0$, then it will not be necessary to introduce $S_a'$. Similarly, if $\kappa_0' = 0$, the variable $S_b'$ is not needed. All of the following calculations are carried out assuming that $\kappa_0, \kappa_0' \neq 0$; to convert to cases where either is zero, one would simply remove the variables $S_a'$ or $S_b'$, respectively, or their replica-symmetric forms $s_a$ and $s_b$ below.\\

Now, by inserting the Dirac delta identity
\begin{equation}
\label{eq:dirac_overlap-grad}
1 = \int_{\RR^{d_t}} \dd t \; \delta( \mu'(t) - \mu(\theta_0, \Theta,  V_k, W) ) \, \delta( \textsc{vech}(\Sigma'(t)) - \textsc{vech}(\Sigma(\theta_0, \Theta,  V_k, W))) = \int_{\RR^{d_t}} \dd t \; \delta(t - T(\theta_0, \Theta,  V_k, W))
\end{equation}
into~\eqref{eq:ezr-grad-with-h}, we obtain (again with $h=0$ temporarily for brevity)
\begin{align}
&\EE\left[\hat{Z}_{\beta}^R(0)\right]
=  (\beta \lambda)^{-pR/2} \times \nonumber\\
&\times \int_{\RR^{d_t}} \dd t \bigg( \int_{\RR^{k+1}} \dd \Upsilon \int_{\RR^{(k+1)R + 1}} \dd \nu \left(\omega, \Upsilon^{(1:R)} \,|\, t\right) P_\text{data}\left(\Upsilon \,|\, \varpi, \omega\right)     L^\Pi_{\beta}\left(\Upsilon, \Upsilon^{(1:R)}\right) \bigg)^n  \EE_{\mid \varpi}[\delta(t - T(\theta_0, \Theta,  V_k, W))].
\label{eq:ezr-interm-delta-grad}
\end{align}
Finally, we replace the delta function by integration over the dual Fourier variables
\begin{align}
\label{eq:dual_overlap-grad}
\hat{t} = \left(\hat{\rho}_a, \, \hat{S}_a', \;\hat{S}_b', \;\hat{F}_a', \;\hat{F}_b', \; \textsc{vech}(\hat{Q}_a'), \; \hat{Q}'_b, \; \textsc{vech}(\hat{Q}_c') \right) \in i\RR^{d_t} \;.
\end{align}
Assuming the forward Fourier transform convention $\mathcal{F}[f](k) = \int_\RR f(x)\exp\left\{ikx\right\}\dd x$, we have
\begin{align}
\delta(t - T(\theta_0, \Theta, V_k, W)) = (2\pi i)^{-d_t} \int_{i\RR^{d_t}} \exp \left\{ -\langle \hat{t}, t - T(\theta_0, \Theta, V_k, W) \rangle \right\} \dd \hat{t},
\end{align}
with the vectorized (i.e., flattened) inner product $\left \langle \cdot, \cdot \right \rangle$ above.\\

Altogether, these results let us express the replicated partition function~\eqref{eq:ezr-interm-delta-grad} as
\begin{align}
\EE\left[\hat{Z}_{\beta}^R(0)\right]  
&=  (2\pi i)^{-d_t} (\beta \lambda)^{-pR/2} \int_{\RR^{d_t}} \dd t  \int_{i\RR^{d_t}} \dd\hat{t} \; \exp \left\{-\langle t, \hat{t}\rangle\right\} \EE\left[ e^{\langle \hat{t}, T(\theta_0, \Theta, V_k, W) \rangle} \right] \times \nonumber \\
&\qquad\times 
    \bigg( \int_{\RR^{k+1}} \dd \Upsilon \int_{\RR^{(k+1)R + 1}} \dd \rho \left(\omega, \Upsilon^{(1:R)} \mid t\right) P_\text{data}\left(\Upsilon \mid \omega, \varpi \right) L^\Pi_{\beta}\left(\Upsilon, \Upsilon^{(1:R)}\right) \bigg)^n \nonumber\\
&= (2\pi i)^{-d_t}  \int_{\RR^{d_t}} \dd t \int_{i\RR^{d_t}}  \dd \hat{t} \; \exp \left\{ p \Phi^{(R)} (t, \hat{t}) \right\}
\label{eq:zr-laplace-form-grad}
\end{align}
with rate function 
\begin{align}
\Phi^{(R)}\left(t, \hat{t} \right) = \Psi_y^{(R)}(t) + \Psi_w^{(R)}\left(\hat{t} \right) - \frac{1}{p} \left \langle t, \hat{t} \right \rangle
\label{eq:phi-r-grad}
\end{align}
and potentials
\begin{align}
\begin{cases}
\Psi^{(R)}_y(t) &= \alpha \log \lp \int_{\RR^{k+1}} \dd \Upsilon \int_{\RR^{(k+1)R + 1}} \dd \nu \left( \omega, \Upsilon^{(1:R)} \mid t \right) \; L^\Pi_{\beta} \left( \Upsilon, \Upsilon^{(1:R)} \right)  P_\text{data}\left(\Upsilon \mid \omega, \varpi \right)  \rp \,,\\[8pt]
\Psi^{(R)}_w\left(\hat t\right) &= \frac{1}{p} \log \left( ( \beta \lambda)^{-pR/2} \EE_{\mid \varpi} \left[ \exp\{\langle \hat{t}, T(\theta_0, \Theta, V_k, W) \rangle\} \right] \right)\,.
\end{cases}
\label{eq:repl-potentials}
\end{align}
Reinstating $h \neq 0$ adds a third potential
\begin{align}
\Psi^{(R)}_{y_0}(t) = \alpha \log \lp \int_{\RR^{k+1}} \dd \Upsilon_0 \int_{\RR^{(k+1)R + 1}} \dd \nu \left( \omega_0, \Upsilon^{(1:R)}_0 \mid t \right) \; \tilde{L}^\Pi_{\beta h} \left( \Upsilon_0, \Upsilon^{(1:R)}_0 \right)  P_\text{data}\left(\Upsilon_0 \mid \omega_0, \varpi \right)\rp
\end{align}
to the rate function $\Phi^{(R)}\left(t,\hat{t}\right)$. A saddlepoint evaluation of~\eqref{eq:zr-laplace-form-grad} in the asymptotic scaling limit will now yield
\begin{align}
\plim_{p\to\infty} \frac{1}{p} \log \EE\left[\hat{Z}_{\beta}^R\right] =  \text{crit}_{t,\hat{t} \in \CC^{d_t}} \; \Phi^{(R)}\left(t,\hat{t}\right)\,,
\label{eq:saddle-ezr-grad}
\end{align}
where $\text{crit} \; \Phi^{(R)}$ denotes the value of the function $\Phi^{(R)}$ at its critical point.  We proceed with the evaluation of the right-hand side now, by inserting a replica-symmetric ansatz for $t$ and $\hat{t}$, simplifying $\Phi^{(R)}$ in this case, and then taking the limit  
\begin{align}
f_\beta = - \lim_{R \downarrow 0} \frac{1}{R} \; \text{crit}_{t,\hat{t} \in \CC^{d_t}} \; \Phi^{(R)}\left(t,\hat{t}\right)
\label{eq:fbeta-saddle-grad}
\end{align}
for this ansatz. In fact, we will derive the optimality conditions directly for the saddle-point of $\Phi := \lim_{R \downarrow 0}\Phi^{(R)}/R$ within the replica-symmetric ansatz in this limit as detailed below. In the end, taking $\beta \to \infty$ recovers the original training problem setup.
 
\subsection{Replica-symmetric ansatz for the saddlepoint problem}
\label{sec:sobolev_replica_ansatz}

We consider the replica-symmetric ansatz for the overlap parameters 
\begin{equation}
\label{eq:rep-sym-mu}
\mu_\textrm{sym} = \begin{bmatrix} 0 \\ s_a\1_R \\ \textsc{vec}\left(\1_R \otimes\, s_b\right) \end{bmatrix} \in \RR^{1 + R + kR}, \quad
\hat{\mu}_\textrm{sym} = \sqrt{p} \cdot \begin{bmatrix} 0 \\ \hat{s}_a\1_R \\ \textsc{vec}\left(\1_R \otimes \,\hat{s}_b\right) \end{bmatrix} \in \RR^{1 + R + kR}
\end{equation}
and
\begin{align}
\label{eq:rep-sym-sigma}
	\Sigma_\textrm{sym} \ = 
    \ \begin{bmatrix}
	\rho_a      & f_a \1_R^\top    & \textsc{vec}\left(\1_R^\top \otimes f_b^\top\right) \\
	f_a \1_R & \1_R^{\otimes 2} q_{a} + I_R \left(r_a - q_a \right) & \1_R^{\otimes 2} \otimes q_{b}^\top + I_R \otimes \left(r_b - q_b \right)^\top \\
	\textsc{vec}\left(\1_R \otimes f_b\right) & \1_R^{\otimes 2} \otimes q_b + I_R \otimes \left(r_b - q_b \right) & \1_R^{\otimes 2} \otimes q_c + I_R \otimes \left(r_c - q_c \right) \\
	\end{bmatrix} \in \RR^{(1 + R + kR) \times (1 + R + kR)},
\end{align}
as well as
\begin{align}
\label{eq:rep-sym-sigma-hat}
	\hat{\Sigma}_\textrm{sym} \ = 
    p \cdot \begin{bmatrix}
	\gamma \hat{\rho}_a      & \hat{f}_a \1_R^\top    & \textsc{vec}\left(\1_R^\top \otimes \hat{f}_b^\top\right) \\
	\hat{f}_a \1_R & \1_R^{\otimes 2} \hat{q}_a + I_R \left(\hat{r}_a - \hat{q}_a \right) & \1_R^{\otimes 2} \otimes \hat{q}_b^\top + I_R \otimes \left(\hat{r}_b - \hat{q}_b \right)^\top \\
	\textsc{vec}\left(\1_R \otimes \hat{f}_b\right) & \1_R^{\otimes 2} \otimes \hat{q}_b + I_R \otimes \left(\hat{r}_b - \hat{q}_b \right) & \1_R^{\otimes 2} \otimes \hat{q}_c + I_R \otimes \left(\hat{r}_c - \hat{q}_c \right) \\
	\end{bmatrix}  \in \RR^{(1 + R + kR) \times (1 + R + kR)},
\end{align}
where $\rho_a, s_a, f_a, q_a, r_a \in \RR$, and $s_b, f_b, q_b, r_b\in\RR^k$, and $q_c = q_c^\top \in\RR^{k\times k}$, $r_c = r_c^\top \in\RR^{k\times k}$ and same for their hatted counterparts. Note that we have separated diagonal and off-diagonal terms using $r$ and $q$ here. We collect these terms into the replica-symmetric parameter tuples
\begin{align}
\begin{cases}
t_\mathrm{sym} &= \left( \rho_a, s_a, s_b, f_a, f_b, q_a, r_a, q_b, r_b, q_c, r_c\right) \in \RR^{d_\mathrm{sym}} \\ 
\hat{t}_\mathrm{sym} &= \left(\hat{\rho}_a, \hat{s}_a, \hat{s}_b, \hat{f}_a, \hat{f}_b, \hat{q}_a, \hat{r}_a, \hat{q}_b, \hat{r}_b, \hat{q}_c, \hat{r}_c \right) \in \RR^{d_\mathrm{sym}},
\end{cases}
\end{align}
and define the tuples of corresponding mean and covariances $T_\mathrm{sym} = ( \mu_\mathrm{sym}, \Sigma_\mathrm{sym})$ and 
$\hat{T}_\mathrm{sym} = (\hat{\mu}_\mathrm{sym}, \hat{\Sigma}_\mathrm{sym})$. The number of replica-symmetric overlap parameters is $d_\mathrm{sym} = 5 + 5k  + k^2$ for any $R \in \NN$. Note that the particular choice of scaling by $p$ or $d$ (or functions thereof) for the auxiliary parameters in the ansatz is, in principle, arbitrary at this stage and is chosen in such a way that the expressions for the saddlepoint equations are $O(1)$ and simplify later on.\\

We now simplify the potentials $\Psi_y^{(R)}$ and $\Psi_w^{(R)}$ using this ansatz in Sections~\ref{sec:sobo-psi-y-r} and~\ref{sec:sobo-psi-w-r}. In particular, our goal for these next two subsections is to transform them into a form that is suitable for taking $R \downarrow 0$, i.e., where $R$ is just a parameter that can also take non-integer values. Note that the potential $\Psi_{y_0}^{(R)}$ for the generalization error is structurally very similar to $\Psi_y^{(R)}$. We will hence again only present the subsequent calculations for $\Psi_y^{(R)}$ and immediately give the result for $\Psi_{y_0}^{(R)}$ afterwards.\\

Once the potentials have been simplified, we need to make sure that our ansatz for the critical $t$ and $\hat{t}$ is consistent with the known limit $\lim_{R \downarrow 0}\EE \left[\hat{Z}_{\beta}^R\right] = 1$,
meaning that, by~\eqref{eq:zr-laplace-form-grad}, we must guarantee 
\begin{align}
\lim_{R \downarrow 0} \text{crit}_{t_\mathrm{sym}, \hat{t}_\mathrm{sym}} \Phi^{(R)}(t_\mathrm{sym}, \hat{t}_\mathrm{sym}) = \lim_{R \downarrow 0} \Phi^{(R)}\left(t_\mathrm{sym}^*(R), \hat{t}_\mathrm{sym}^*(R) \right) = 0
\label{eq:phi-r-consistent}
\end{align}
for the critical point $\left(t_\mathrm{sym}^*(R), \hat{t}_\mathrm{sym}^*(R)\right)$ (where we emphasize the $R$ dependence) determined through
\begin{align}
\nabla_{t_\mathrm{sym}}\Phi^{(R)}\left(t_\mathrm{sym}^*(R), \hat{t}_\mathrm{sym}^*(R) \right) = \nabla_{\hat{t}_\mathrm{sym}}\Phi^{(R)}\left(t_\mathrm{sym}^*(R), \hat{t}_\mathrm{sym}^*(R) \right) = 0\,.
\label{eq:phi-r-optimality}
\end{align}
If this consistency condition holds (which is true for the ansatz introduced above, as we show below in Section~\ref{sec:sobo-consistency}), we further have for~\eqref{eq:fbeta-saddle-grad} that the limit $R \downarrow 0$ becomes
\begin{align}
f_\beta &= - \lim_{R \downarrow 0} \frac{1}{R} \; \text{crit}_{t_\mathrm{sym},\hat{t}_\mathrm{sym}} \; \Phi^{(R)}\left(t_\mathrm{sym},\hat{t}_\mathrm{sym}\right) \nonumber\\
&= - \left. \frac{\dd}{\dd R} \right|_{R = 0} \left( \Phi^{(R)}\left(t_\mathrm{sym}^*(R),\hat{t}_\mathrm{sym}^*(R)\right) \right)\\ &=   -\left(\left. \frac{\dd}{\dd R}\right|_{R = 0} \Phi^{(R)} \right)\left(t_\mathrm{sym}^*\left(R \downarrow 0\right), \hat{t}_\mathrm{sym}^*\left(R \downarrow 0\right)\right)\,.
\end{align}
The last equality holds because of the optimality conditions:
\begin{align}
\frac{\dd}{\dd R} \left( \Phi^{(R)}\left(t_\mathrm{sym}^*(R), \hat{t}_\mathrm{sym}^*(R) \right) \right) &= \left(\frac{\dd \Phi^{(R)}}{\dd R} \right) \left( t_\mathrm{sym}^*(R), \hat{t}_\mathrm{sym}^*(R) \right) + \left \langle \nabla_{t_\mathrm{sym}} \Phi^{(R)} \left( t_\mathrm{sym}^*(R), \hat{t}_\mathrm{sym}^*(R)\right), \frac{\dd t_\mathrm{sym}^*(R)}{\dd R} \right \rangle \nonumber\\
&\quad+ \left \langle \nabla_{\hat{t}_\mathrm{sym}} \Phi^{(R)} \left( t_\mathrm{sym}^*(R), \hat{t}_\mathrm{sym}^*(R)  \right), \frac{\dd \hat{t}_\mathrm{sym}^*(R)}{\dd R} \right \rangle \\ &= \left(\frac{\dd \Phi^{(R)}}{\dd R} \right) \left( t_\mathrm{sym}^*(R), \hat{t}_\mathrm{sym}^*(R) \right)\,.
\end{align}
We will then make the following standard assumption in this setting:
\begin{align}
f_\beta = -\Phi\left(t_\mathrm{sym}^*\left(R \downarrow 0\right), \hat{t}_\mathrm{sym}^*\left(R \downarrow 0\right)\right) \overset{!}{=} - \text{crit}_{t_\mathrm{sym}, \hat{t}_\mathrm{sym}} \Phi\left(t_\mathrm{sym}, \hat{t}_\mathrm{sym}\right)\,,
\label{eq:limit-exchange-fbet}
\end{align}
where we introduced
\begin{align}
\Phi \left(t_\mathrm{sym}, \hat{t}_\mathrm{sym}\right) := \left( \left. \frac{\dd}{\dd R}\right|_{R = 0} \Phi^{(R)} \right) \left(t_\mathrm{sym}, \hat{t}_\mathrm{sym}\right)\,.
\end{align}
This assumption is convenient because it reduces the calculation of the free energy density to the study of the critical points of~$\Phi$, i.e., directly in the limit $R \downarrow 0$, instead of $\Phi^{(R)}$, thus simplifying subsequent calculations. We calculate the necessary expressions to get $\Phi$ in our problem in Sections~\ref{sec:psi-y-sobo} and~\ref{sec:psi-w-sobo}. Note that for unique critical points, equation~\eqref{eq:limit-exchange-fbet} is equivalent to saying that the limit as $R \downarrow 0$ of the critical point $\left(t^*(R), \hat{t}^*(R) \right)$ of the function $\Phi^{(R)}$ converges to the critical point of the function $\Phi$. This result is not \emph{a priori} obvious since, by differentiating the optimality condition for $\Phi^{(R)}$ in~\eqref{eq:phi-r-optimality} with respect to $R$ and letting $R \downarrow 0$, we find
\begin{align}
&\left( \begin{array}{c}
\nabla_{t_\mathrm{sym}}\Phi^{(R)}\left(t_\mathrm{sym}^*(R), \hat{t}_\mathrm{sym}^*(R) \right)\\
\nabla_{\hat{t}_\mathrm{sym}}\Phi^{(R)}\left(t_\mathrm{sym}^*(R), \hat{t}_\mathrm{sym}^*(R) \right) 
\end{array} \right) = \left(\begin{array}{c}
0\\0
\end{array} \right)\nonumber \\
&\Rightarrow\left( \begin{array}{c}
\nabla_{t_\mathrm{sym}}\Phi\left(t_\mathrm{sym}^*\left(R \downarrow 0\right), \hat{t}_\mathrm{sym}^*\left(R \downarrow 0\right) \right)\\
\nabla_{\hat{t}_\mathrm{sym}}\Phi\left(t_\mathrm{sym}^*\left(R \downarrow 0\right), \hat{t}_\mathrm{sym}^*\left(R \downarrow 0\right) \right)
\end{array} \right) + \nabla^2 \Phi^{\left(R \downarrow 0 \right)}\left(t_\mathrm{sym}^*\left(R \downarrow 0\right), \hat{t}_\mathrm{sym}^*\left(R \downarrow 0\right) \right) \left( \begin{array}{c}
\frac{\dd t_\mathrm{sym}^*(R)}{\dd R} \\
 \frac{\dd \hat{t}_\mathrm{sym}^*(R)}{\dd R}
\end{array} \right)_{R \downarrow 0} = \left(\begin{array}{c}
0\\0
\end{array} \right)\,,
\end{align}
which means that for~\eqref{eq:limit-exchange-fbet} to hold, the second term above, i.e., the Hessian of $\Phi^{(R)}$ applied to the derivative of the critical point with respect to $R$, has to vanish in the limit $R \downarrow 0$. In general, it is easy to construct counter-examples where, for instance,~\eqref{eq:limit-exchange-fbet} does not hold. One such case: suppose
$
\Phi^{(R)}(t_\mathrm{sym}) = \frac{1}{2} (t_\mathrm{sym} - R)^2
$
for $t_\mathrm{sym} \in  \RR$, then $t_\mathrm{sym}^*(R) = R$, hence $\Phi^{(R)}\left(t_\mathrm{sym}^*(R) \right) \equiv 0$ for any $R$, so the consistency condition~\eqref{eq:phi-r-consistent} holds trivially, $\Phi(t_\mathrm{sym}) = -t_\mathrm{sym}$. The left-hand side in~\eqref{eq:limit-exchange-fbet} evaluates to $0$, but the right-hand side is not defined as $\Phi$ does not have any critical points. Nevertheless, we will just assume that~\eqref{eq:limit-exchange-fbet} holds for our particular setting following standard practice.

\subsection{Simplifying $\Psi_y^{(R)}$ for the replica-symmetric ansatz}
\label{sec:sobo-psi-y-r}

Recalling the definition
$
\Upsilon^{(1:R)} = \left( \Upsilon^{(1)}, \ldots, \Upsilon^{(R)} \right) = \left(y^{(1)}, \, (V_k^\top y')^{(1)}, \, \ldots, \, y^{(R)}, \, (V_k^\top y')^{(R)}\right) \in \RR^{(k+1)R}
$
of the replicated network outputs for convenience, here we transform the potential
\begin{align}
\Psi^{(R)}_y(t) &= \alpha \log \lp \int_{\RR^{k+1}} \dd \Upsilon \int_{\RR} \dd \nu(\omega \,|\, t) \int_{\RR^{(k+1)R}} \dd \nu(\Upsilon^{(1:R)} \,|\, \omega, t)  L^\Pi_{\beta} \left(\Upsilon,  \Upsilon^{(1:R)} \right)  P_\text{data}\left(\Upsilon \,|\, \omega, \varpi \right)  \rp 
\end{align}
as defined in~\eqref{eq:repl-potentials} within the replica-symmetric ansatz so that $R$ becomes a parameter that can take on non-integer values. We will simplify the innermost expectation with respect to the Gaussian variables $\Upsilon^{(1:R)} \,|\, \omega, t$ in particular. Regrouping the replica-symmetric overlap parameters into
\begin{align}
t_\mathrm{sym} :=& \lp \rho_a,  \ s= \begin{pmatrix}
s_a \\ s_b
\end{pmatrix}, \ f=\begin{pmatrix}
f_a \\ f_b
\end{pmatrix}, \ q = \begin{pmatrix}
q_a & q_b^\top \\ q_b & q_c
\end{pmatrix}, \  r = \begin{pmatrix}
r_a & r_b^\top \\ r_b & r_c
\end{pmatrix}  \rp\,,
\end{align}
we have 
\begin{align}
\dd\nu\left(\Upsilon^{(1:R)} \,|\, \omega, t_\mathrm{sym}\right) &= (2\pi)^{-\frac{R(k+1)}{2}} \left( \det \Sigma^{(1:R)}_{\omega, t_\mathrm{sym}} \right)^{-1/2} \times \nonumber\\
& \qquad \qquad  \times \exp \left\{-\frac{1}{2}\left(\Upsilon^{(1:R)} - \mu^{(1:R)}_{\omega, t_\mathrm{sym}}\right)^\top \left(\Sigma^{(1:R)}_{\omega, t_\mathrm{sym}}\right)^{-1} \left(\Upsilon^{(1:R)} - \mu^{(1:R)}_{\omega, t_\mathrm{sym}}\right) \right\} \dd \Upsilon^{(1:R)}\,,
\end{align}
with conditional mean and covariance
\begin{align}
\begin{cases}
\mu^{(1:R)}_{\omega, t_\mathrm{sym}} = \textsc{vec}\left(\1_R \otimes \left(s + \frac{\omega}{\rho_a} f \right)\right)\,,\\
\Sigma^{(1:R)}_{\omega, t_\mathrm{sym}} = \1_R^{\otimes 2} \otimes \left(q - \frac{1}{\rho_a} f^{\otimes 2}\right) +  I_R \otimes (r-q)  \,.
\end{cases}
\end{align}
Expanding the quadratic form in the exponent, we have 
\begin{align}
&\biggl\langle \Upsilon^{(1:R)} - \mu^{(1:R)}_{\omega, t_\mathrm{sym}}, \, \left(\Sigma^{(1:R)}_{\omega, t_\mathrm{sym})}\right)^{-1} \left(\Upsilon^{(1:R)} - \mu^{(1:R)}_{\omega, t_\mathrm{sym}}\right) \biggr\rangle \nonumber \\
&= \sum_{r' \in [R]} \sum_{r'' \in [R]} \left \langle \Upsilon^{(r'')} - \mu^{(r'')}_{\omega, t_\mathrm{sym}}, \widetilde{q} \left(\Upsilon^{(r')} - \mu^{(r')}_{\omega, t_\mathrm{sym}}\right) \right \rangle + \sum_{r' \in [R]} \left \langle \Upsilon^{(r')} - \mu^{(r')}_{\omega, t_\mathrm{sym}}, \left( \widetilde{r} - \widetilde{q} \right) \left(\Upsilon^{(r')} - \mu^{(r')}_{\omega, t_\mathrm{sym}}\right) \right \rangle.
\end{align}
where we write
\begin{align}
	\left(\Sigma^{(1:R)}_{\omega, t_\mathrm{sym}}\right)^{-1} &=: \1_R^{\otimes 2} \otimes \, \widetilde{q} + I_R \otimes \, \left( \widetilde{r} - \widetilde{q} \right)\,,
\end{align}
to separate the off-diagonal and diagonal terms of the inverse conditional covariance matrix.
We then replace the double-summation over~$r$ with a single summation using a multi-dimensional Hubbard--Stratonovich transformation: 
\begin{align}
\exp\left\{- \frac{1}{2} \sum_{r' \in [R]} \sum_{r'' \in [R]} \left \langle \Upsilon^{(r'')} - \mu^{(r'')}_{\omega, t_\mathrm{sym}},  \widetilde{q} \left(\Upsilon^{(r')} - \mu^{(r')}_{\omega, t_\mathrm{sym}}\right) \right\rangle\right\} =  \EE_{\xi \sim \mathcal{N}(0, I_{k+1})} \left[\prod_{r' \in [R]} \exp\left\{ \left \langle \Upsilon^{(r')} - \mu^{(r')}_{\omega, t_\mathrm{sym}}, \sqrt{- \widetilde{q}} \; \xi \right \rangle \right\} \right].
\end{align}
Altogether, these results let us express the conditional Gaussian density of $\Upsilon^{(1:R)} \mid \omega,t_\mathrm{sym}$ as
\begin{align}
\dd\nu\left(\Upsilon^{(1:R)} \mid \omega, t_\mathrm{sym}\right) &= (2\pi)^{-\frac{R(k+1)}{2}} \left( \det \Sigma^{(1:R)}_{\omega, t_\mathrm{sym}} \right)^{-1/2} \EE_{\xi \sim \mathcal{N}(0, I_{k+1})}\left[\prod_{r' \in [R]}  \exp\left\{\left \langle \Upsilon^{(r')} - \mu^{(r')}_{\omega, t_\mathrm{sym}}, \sqrt{- \widetilde{q}} \; \xi \right \rangle \right\} \right] \times\nonumber \\
&\quad \times \prod_{r' \in [R]} \exp\left\{-\frac{1}{2} \left \langle \Upsilon^{(r')} - \mu^{(r')}_{\omega, t_\mathrm{sym}}, \left( \widetilde{r} - \widetilde{q} \right) \left(\Upsilon^{(r')} - \mu^{(r')}_{\omega, t_\mathrm{sym}} \right) \right \rangle \right\}\dd \Upsilon^{(1:R)}\,,
\end{align}
resulting in
\begin{align}
 & \ \exp \left\{ \frac{1}{\alpha} \Psi_y^{(R)}\left(t_\mathrm{sym}\right) \right\} \nonumber = (2\pi)^{-\frac{R(k+1)}{2}} \left( \det \Sigma^{(1:R)}_{\omega, t_\mathrm{sym}} \right)^{-1/2} \EE_{\xi \sim \mathcal{N}\left(0,I_{k+1} \right)} \EE_{\omega \sim \mathcal{N}(0,\rho_a)} \Bigg[  \int_{\RR^{k+1}} \dd \Upsilon\; P_\text{data}(\Upsilon \mid \omega, \varpi) \times \nonumber \\
 & \qquad  \qquad\times  \Bigg( \int_{\RR^{k+1}} \dd \tilde{\Upsilon} \; L_\beta(\Upsilon, \tilde{\Upsilon}) \exp\left\{ \left \langle \Tilde{\Upsilon} - \mu_{\omega, t_\mathrm{sym}}, \sqrt{- \widetilde{q}} \; \xi \right \rangle \right\} \exp\left\{-\frac{1}{2} \left \langle \tilde{\Upsilon} - \mu_{\omega, t_\mathrm{sym}}, \left( \widetilde{r} - \widetilde{q} \right)  \left(\tilde{\Upsilon} - \mu_{\omega, t_\mathrm{sym}} \right) \right \rangle \right\} \Bigg)^R \Bigg]\,.
\label{eq:psi-y-pre-limit-r0-grad}
\end{align}
The integration variable $\tilde{\Upsilon}$ denotes any of the $R$ decoupled replicas. Expressing the potential $\Psi_y^{(R)}$ in this way will now allow us to take non-integer $R$ in the subsequent sections. Of course, a similar expression holds for $\Psi^{(R)}_{y_0}$ upon replacing $L_\beta$ with $\tilde{L}_{\beta h}$.

\subsection{Simplifying $\Psi_w^{(R)}$ for the replica-symmetric ansatz}
\label{sec:sobo-psi-w-r}

Next, we simplify the second potential $\Psi_w^{(R)}\left(\hat{t}_\mathrm{sym} \right)$ in~\eqref{eq:phi-r-grad} so that $R$ may take on non-integer values. The potential is defined as 
\begin{align}
\Psi^{(R)}_w\left(\hat t_\mathrm{sym}\right) &= \frac{1}{p} \log \left( \left(\beta \lambda \right)^{-pR/2} \EE_{\mid \varpi}\left[ e^{\langle \hat{T}_\mathrm{sym}, T(\theta_0, \Theta, V_k, W) \rangle} \right] \right)
\end{align}
within the replica-symmetric ansatz, where we understand the inner product notation to mean
\begin{align}
& \left \langle \hat{T}_\mathrm{sym}, T(\theta_0, \Theta, V_k, W) \right \rangle := \left \langle \hat{\mu}_\mathrm{sym}, \,\mu(\theta_0, \Theta, V_k, W) \right \rangle + \left \langle \textsc{vech}(\hat{\Sigma}_\mathrm{sym}), \; \textsc{vech}(\Sigma(\theta_0, \Theta, V_k, W)) \right \rangle \nonumber\\[8pt]
&~= d \hat{\rho}_a\|\theta_0\|^2 + \sqrt{p} \hat{s}_a \sum_{r \in [R]} \kappa_0 \left \langle w^{(r)}, \1_p \right \rangle + \sqrt{p} \sum_{r \in [R]} \left \langle \hat{s}_b, \kappa_0' V_k^\top \Theta w^{(r)} \right \rangle  + p \sum_{r \in [R]} \hat{f}_a \kappa_1 \left \langle \Theta^\top \theta_0, w^{(r)} \right \rangle \nonumber\\
&~+ p \sum_{r \in [R]} \left \langle \hat{f}_b, \kappa_1' V_k^\top \Theta \textsc{diag}\left(w^{(r)}\right) \Theta^\top \theta_0 \right \rangle + p\sum_{r \neq r' \in [R]} \left \langle \hat{q}_b, \kappa_1\kappa_1' V_k^\top \Theta \;\textsc{diag}\left(w^{(r)}\right) \Theta^\top \Theta w^{(r')} \right \rangle \nonumber\\
&~+ p\sum_{r \in [R]}\left \langle \hat{r}_b, \kappa_1\kappa_1'  V_k^\top \Theta \; \textsc{diag}\left(w^{(r)}\right) \Theta^\top \Theta  w^{(r)} \right \rangle \nonumber\\
&~+ p \sum_{r \in [R]} \sum_{\substack{r' \in [R],\\r' > r}} \hat{q}_a \left( \kappa_1^2 \left \langle \Theta w^{(r)}, \Theta w^{(r')} \right \rangle + \kappa_*^2 \left \langle w^{(r)}, w^{(r')} \right \rangle \right) + p\sum_{r \in [R]}
\hat{r}_a \left( \kappa_1^2 \left \langle \Theta w^{(r)}, \Theta w^{(r)} \right\rangle + \kappa_*^2 \left \langle w^{(r)}, w^{(r)} \right \rangle \right) \nonumber\\
&~+ p\sum_{r \in [R]} \sum_{\substack{r' \in [R],\\r' > r}} \left \langle \hat{q}_c, V_k^\top \Theta \;\textsc{diag}\left(w^{(r)}\right) \left((\kappa_1')^2 \Theta^\top \Theta + (\kappa_*')^2 I_p \right)\textsc{diag}\left(w^{(r')}\right)\Theta^\top V_k   \right \rangle_{\text{F}} \nonumber\\
&~+ p \sum_{r \in [R]} \left \langle \hat{r}_c, V_k^\top \Theta \;\textsc{diag}\left(w^{(r)}\right) \left( (\kappa_1')^2 \Theta^\top \Theta + (\kappa_*')^2 I_p \right)\textsc{diag}\left(w^{(r)}\right)  \Theta^\top V_k  \right \rangle_{\text{HF}}  \;.
\end{align}
Note that additional care must be taken not to double count the symmetric entries for the matrices along the block diagonal. To this purpose, above we use the ``half Frobenius inner product'' $\langle \cdot, \cdot \rangle_{\text{HF}} $, which is related to the traditional Frobenius inner product $\langle \cdot, \cdot \rangle_{\text{F}}$ with $\langle A, B \rangle_{\text{F}} = \sum_{i,j = 1}^k A_{ij} B_{ij}$ for $A,B \in \RR^{k\times k}$ via
\begin{equation}
\label{eq:hf-def}
2 \langle P, Q \rangle_{\text{HF}}  =  \langle P, Q \rangle_{\text{F}}  + \langle P, Q \odot I_k \rangle_{\text{F}}\,, \text{ for } P=P^\top,\;Q=Q^\top \in \RR^{k \times k}\,,
\end{equation}
where $\odot$ denotes the Hadamard product with $(A \odot B)_{ij} = A_{ij} B_{ij}$.
Using this definition, as well as the analogous scalar-valued identity 
$\sum_j \sum_{i < j} s_{ij} = \frac{1}{2} \sum_j\sum_{i\neq j} s_{ij}$ for symmetric $s$, we express the above as
\begin{align}
& \left \langle \hat{T}_\mathrm{sym}, T(\theta_0, \Theta, V_k, W) \right \rangle \nonumber\\
&= d\hat{\rho}_a\|\theta_0\|^2 + \sqrt{p}\hat{s}_a \sum_{r \in [R]} \kappa_0 \left \langle w^{(r)}, \1_p \right \rangle + \sqrt{p}\sum_{r \in [R]} \left \langle \hat{s}_b, \kappa_0' V_k^\top \Theta w^{(r)} \right \rangle  + p\sum_{r \in [R]} \hat{f}_a \kappa_1 \left \langle \Theta^\top \theta_0, w^{(r)} \right \rangle \nonumber\\
&~+ p\sum_{r \in [R]} \left \langle \hat{f}_b, \kappa_1' V_k^\top \Theta \textsc{diag}\left(w^{(r)}\right) \Theta^\top \theta_0 \right \rangle + p\sum_{r,r' \in [R]} \left \langle \hat{q}_b, \kappa_1\kappa_1' V_k^\top \Theta \;\textsc{diag}\left(w^{(r)}\right) \Theta^\top \Theta   w^{(r')} \right \rangle \nonumber\\
&~+ p\sum_{r \in [R]} \left \langle \hat{r}_b - \hat{q}_b, \kappa_1\kappa_1' V_k^\top \Theta \;\textsc{diag}\left(w^{(r)}\right)  \Theta^\top \Theta  w^{(r)} \right \rangle \nonumber\\
&~+ \frac{1}{2}p\sum_{r, r' \in [R]} \hat{q}_a \left( \kappa_1^2 \left \langle \Theta w^{(r)}, \Theta w^{(r')} \right \rangle + \kappa_*^2 \left \langle w^{(r)}, w^{(r')} \right \rangle \right) + p\sum_{r \in [R]}
\left(\hat{r}_a - \frac{1}{2}\hat{q}_a\right) \left( \kappa_1^2 \left \langle \Theta w^{(r)}, \Theta w^{(r)} \right \rangle + \kappa_*^2 \left \langle w^{(r)}, w^{(r)} \right \rangle \right) \nonumber\\
&~+ \frac{1}{2}p\sum_{r, r' \in [R]} \left \langle \hat{q}_c, V_k^\top \Theta \;\textsc{diag}\left(w^{(r)}\right)  \left((\kappa_1')^2 \Theta^\top \Theta + (\kappa_*')^2 I_p\right)\textsc{diag}\left(w^{(r')}\right) \Theta^\top V_k  \right \rangle_{\text{F}} \nonumber\\
&~+ \frac{1}{2}p\sum_{r \in [R]} \left \langle \hat{r}_c \odot I_k, V_k^\top \Theta \;\textsc{diag}\left(w^{(r)}\right)  \left( (\kappa_1')^2 \Theta^\top \Theta + (\kappa_*')^2 I_p \right) \textsc{diag}\left(w^{(r)}\right) \Theta^\top V_k  \right \rangle_{\text{F}} \nonumber \\
&~+ \frac{1}{2}p \sum_{r \in [R]} \left \langle \hat{r}_c - \hat{q}_c, V_k^\top \Theta \;\textsc{diag}\left(w^{(r)}\right)  \left ((\kappa_1')^2 \Theta^\top \Theta + (\kappa_*')^2 I_p\right) \textsc{diag}\left(w^{(r)}\right) \Theta^\top V_k \right \rangle_{\text{F}}.
\end{align}
We make the $w$-dependencies clear by re-arranging the terms above to yield
\begin{align}
& \left \langle \hat{T}_\mathrm{sym}, T(\theta_0, \Theta, V_k, W) \right \rangle \nonumber\\
&= d\hat{\rho}_a\|\theta_0\|_2^2 + \sqrt{p}\hat{s}_a \sum_{r \in [R]} \kappa_0 \left \langle \1_p, w^{(r)} \right \rangle + \sqrt{p}\sum_{r \in [R]}\left \langle \kappa_0' \Theta^\top V_k\hat{s}_b, w^{(r)} \right \rangle  + p\sum_{r \in [R]} \hat{f}_a \kappa_1 \left \langle \Theta^\top \theta_0, w^{(r)} \right \rangle \nonumber\\
&~+ p\kappa_1' \sum_{r \in [R]} \left \langle (\Theta^\top V_k\hat{f}_b) \odot (\Theta^\top \theta_0), w^{(r)} \right \rangle + \frac{1}{2}p\sum_{r,r' \in [R]} \left \langle w^{(r)}, 2 \kappa_1\kappa_1'\Theta^\top \Theta  \textsc{diag}\left(\Theta^\top V_k \hat{q}_b\right) w^{(r')} \right \rangle \nonumber\\
&~+ \frac{1}{2}p\sum_{r \in [R]} \left \langle w^{(r)}, 2 \kappa_1\kappa_1' \Theta^\top \Theta \textsc{diag}(\Theta^\top V_k (\hat{r}_b - \hat{q}_b)) w^{(r)} \right \rangle \nonumber\\
&~+ \frac{1}{2}p\sum_{r, r' \in [R]} \hat{q}_{a1} \left \langle w^{(r)}, \left(\kappa_1^2\Theta^\top \Theta + \kappa_*^2 I_p \right) w^{(r')} \right \rangle + \frac{1}{2}p \sum_{r \in [R]}
\left(2 \hat{r}_a - \hat{q}_a \right) \left \langle w^{(r)}, \left(\kappa_1^2 \Theta^\top \Theta +  \kappa_*^2 I_p \right) w^{(r)} \right \rangle\nonumber\\
&~+ \frac{1}{2}p\sum_{r,r' \in [R]} \left \langle w^{(r)}, \left(\Theta^\top V_k \hat{q}_c V_k^\top \Theta \right) \odot \left((\kappa_1')^2 \Theta^\top \Theta + (\kappa_*')^2 I_p\right) w^{(r')} \right \rangle \nonumber \\
&~+ \frac{1}{2}p \sum_{r \in [R]} \left \langle w^{(r)}, \left(\Theta^\top V_k \left(\hat{r}_c \odot I_k+ \hat{r}_c - \hat{q}_c \right) V_k^\top \Theta \right) \odot \left((\kappa_1')^2 \Theta^\top \Theta + (\kappa_*')^2 I_p\right) w^{(r)} \right \rangle  \;.
\end{align}
To shorten the notation, we introduce the auxiliary definitions
\begin{align}
\begin{cases}
J_{\hat{s}_a} &= \kappa_0 \hat{s}_a \1_p \\
J_{\hat{s}_b} &= \kappa_0' \Theta^\top V_k \hat{s}_b \\
J_{\hat{f}_a} &= \sqrt{p} \hat{f}_a \kappa_1 \Theta^\top \theta_0 \\
J_{\hat{f}_b} &= \sqrt{p} \kappa_1' (\Theta^\top V_k\hat{f}_b) \odot (\Theta^\top \theta_0)
\end{cases}
\label{eq:def-j-vectors}
\end{align}
and
\begin{align}
\begin{cases}
A_{\hat{q}_b} &= 2 \kappa_1\kappa_1'\Theta^\top \Theta   \textsc{diag}\left(\Theta^\top V_k \hat{q}_b\right)  \\
A_{\hat{r}_b - \hat{q}_b} &= -2 \kappa_1\kappa_1'\Theta^\top \Theta  \textsc{diag}\left(\Theta^\top V_k (\hat{r}_b - \hat{q}_b) \right)  \\
A_{\hat{q}_a} &= \hat{q}_a \left(\kappa_1^2\Theta^\top \Theta + \kappa_*^2 I_p \right)  \\
A_{2 \hat{r}_a - \hat{q}_a} &= -\left(2 \hat{r}_a - \hat{q}_a\right)  \left(\kappa_1^2\Theta^\top \Theta + \kappa_*^2 I_p \right)  \\
A_{\hat{q}_c} &= \left(\Theta^\top V_k \hat{q}_c V_k^\top \Theta \right) \odot \left((\kappa_1')^2 \Theta^\top \Theta + (\kappa_*')^2 I_p\right)  \\
A_{2 \hat{r}_c - \hat{q}_c} &= -\left(\Theta^\top V_k \left(\hat{r}_c \odot I_k + \hat{r}_c - \hat{q}_c \right) V_k^\top \Theta \right) \odot \left((\kappa_1')^2 \Theta^\top \Theta + (\kappa_*')^2 I_p\right) \\
&= -\left(\Theta^\top V_k \left(\hat{r}_c \odot \left(I_k + \1_k^{\otimes 2} \right) - \hat{q}_c \right) V_k^\top \Theta \right) \odot \left((\kappa_1')^2 \Theta^\top \Theta + (\kappa_*')^2 I_p\right)\,.
\end{cases}
\label{eq:def-a-matrices}
\end{align}
In order to decouple the replicas, we again make use of Hubbard--Stratonovich transformations for all terms that involve double summations. This yields
\begin{align}
&\exp \left\{\left \langle \hat{T}_\mathrm{sym}, T(\theta_0, \Theta, V_k, W) \right \rangle  \right\} = \exp \left\{ d\hat{\rho}_a \norm{\theta_0}^2  \right\} \times \nonumber\\
& \qquad \qquad \times \EE_{\eta_{\hat{q}_b},\eta_{\hat{q}_a},\eta_{\hat{q}_c} \sim \mathcal{N}\left(0,I_p\right)} \bigg[\prod_{r \in [R]} \bigg(\exp \left\{-\frac{1}{2} p \left \langle w^{(r)},\left[A_{\hat{r}_b - \hat{q}_b} + A_{2 \hat{r}_a - \hat{q}_a} + A_{2 \hat{r}_c - \hat{q}_c} \right] w^{(r)} \right \rangle + \sqrt{p} \left \langle w^{(r)}, J \right \rangle\right\}
\bigg)  \bigg]
\label{eq:multidim-hs-trafo-grad}
\end{align}
with ``source'' term $J := J_{\hat{s}_a} + J_{\hat{s}_b} + J_{\hat{f}_a} + J_{\hat{f}_b} + A_{\hat{q}_b}^{1/2} \eta_{\hat{q}_b} + A_{\hat{q}_a}^{1/2} \eta_{\hat{q}_a} + A_{\hat{q}_c}^{1/2} \eta_{\hat{q}_c}$.
All in all, we then obtain
\begin{align}
&\exp \left\{p \Psi_w^{(R)}\left(\hat{t}_\mathrm{sym} \right) \right\} =  \exp \left\{ d \hat{\rho}_a \norm{\theta_0}^2 \right\} \times \nonumber\\
& \; \; \times \EE_{\eta_{\hat{q}_{b1}},\eta_{\hat{q}_{a1}},\eta_{\hat{q}_{c1}} \sim \mathcal{N}\left(0,I_p\right)} \bigg[ \bigg( \EE_w \bigg[ \left( \beta \lambda \right)^{-p/2}  \exp \left\{-\frac{1}{2} p  \left \langle w,\left[ A_{\hat{r}_b - \hat{q}_b} + A_{2 \hat{r}_a - \hat{q}_a} + A_{2 \hat{r}_c - \hat{q}_c} \right] w \right \rangle + \sqrt{p} \left \langle J, w \right \rangle \right\} \bigg] \bigg)^R \bigg]\,,
\label{eq:psi-w-pre-limit-r0-grad}
\end{align}
such that again the $R$-dependence is explicit and allows for taking non-integer values of $R$.

\subsection{Consistency check for the replica-symmetric ansatz and determining $\rho_a, \hat{\rho}_a$}
\label{sec:sobo-consistency}

We conclude from the results~\eqref{eq:psi-y-pre-limit-r0-grad} and~\eqref{eq:psi-w-pre-limit-r0-grad} of the calculations of the previous two sections, as well as
\begin{align}
	\frac{1}{p}\langle t, \hat{t} \rangle  =& \gamma \rho_a \hat{\rho}_a + \frac{R}{\sqrt{p}} s_a \hat{s}_a + \frac{R}{\sqrt{p}} \left \langle s_b, \hat{s}_b \right \rangle  + R f_a \hat{f}_a + R \left \langle f_b, \hat{f}_b \right \rangle + \frac{R(R-1)}{2} q_a\hat{q}_a \nonumber\\
	&+  R r_a \hat{r}_a + R(R-1) \left \langle q_b, \hat{q}_b \right \rangle + R \left \langle  r_b, \hat{r}_b\right \rangle + \frac{R(R-1)}{2}\langle q_c, \hat{q_c} \rangle_{\text{F}} + R \langle r_c, \hat{r}_c \rangle_{\text{HF}} \,,
	\label{eq:overlap-param-product-sobo}
\end{align}
that for this ansatz we have for the rate function~\eqref{eq:phi-r-grad}:
\begin{align}
\lim_{R \downarrow 0} \Phi^{(R)}\left(t_\mathrm{sym}, \hat{t}_\mathrm{sym} \right) = - \gamma \rho_a \hat{\rho}_a + \frac{1}{p} \log \left(\EE_{\theta_0} \left[\exp \left\{ d \hat{\rho}_a \norm{\theta_0}^2 \right\} \right] \right) = -\gamma \left[\rho_a \hat{\rho}_a + \frac{1}{2} \log \left(1 - 2 \hat{\rho}_a  \right) \right]
\label{eq:phi-r-to-zero}
\end{align}
for any parameters $t_\mathrm{sym}, \hat{t}_\mathrm{sym}$. As argued in Section~\ref{sec:sobolev_replica_ansatz}, we need the limit~\eqref{eq:phi-r-to-zero} to be $0$ at the critical $t_\mathrm{sym}^*(R \downarrow 0)$ and $\hat{t}_\mathrm{sym}^*(R \downarrow 0)$ for our ansatz to be consistent. Luckily, by setting the derivative of the right-hand side of~\eqref{eq:phi-r-to-zero} with respect to $\rho_a$ to $0$, we see that
$\hat{\rho}_a^*\left(R \downarrow 0\right) = 0$,
which results in $\lim_{R \downarrow 0} \Phi^{(R)}\left(t_\mathrm{sym}^*(R), \hat{t}_\mathrm{sym}^*(R) \right) = 0$ as desired. Furthermore, by setting the derivative of the right-hand side of~\eqref{eq:phi-r-to-zero} with respect to $\hat{\rho}_a$ to $0$, we also obtain $\rho_a^*\left(R \downarrow 0\right) = 1$,
as we should, since $\rho_a$ corresponds to $\norm{\theta_0}^2$ which concentrates onto its expectation $1$ in the proportional asymptotics limit.
As we assume with~\eqref{eq:limit-exchange-fbet} that the critical points of $\Phi^{(R)}$ converge to the critical point of $\Phi$, we will immediately use $\rho_a = 1$ and $\hat{\rho}_a = 0$ for all of the following calculations and optimality conditions for $\Phi$.

\subsection{Calculating $\Psi_y$}
\label{sec:psi-y-sobo}

As discussed in Section~\ref{sec:sobolev_replica_ansatz}, we now want to obtain the $R$-derivative of the rate function~\eqref{eq:phi-r-grad} at $R = 0$ to subsequently calculate the optimality conditions for the evaluation of $\text{crit}_{t_\mathrm{sym}, \hat{t}_\mathrm{sym}} \left( \lim_{R \downarrow 0}  \frac{1}{R}  \Phi^{(R)} \right)(t_\mathrm{sym},\hat{t}_\mathrm{sym})$
in~\eqref{eq:limit-exchange-fbet}. We start by considering the derivative of~$\Psi_y^{(R)}$, as found in~\eqref{eq:psi-y-pre-limit-r0-grad}, in this section.
Using the identity $\lim_{R \downarrow 0} \frac{1}{R} \log \EE\left[ A(R)^R \right] = \EE\left[\log A(0)\right]$ to interchange logarithms and expectations in~\eqref{eq:psi-y-pre-limit-r0-grad}, we obtain 
\begin{align}
&\frac{1}{\alpha}\Psi_y(t_\mathrm{sym}) := \frac{1}{\alpha} \lim_{R \downarrow 0} \Psi_y^{(R)}(t_\mathrm{sym}) =  -\frac{k+1}{2}\log(2\pi) - \lim_{R \downarrow 0}\frac{1}{2R} \log\det\left(\Sigma^{(1:R)}_{\omega, t_\mathrm{sym}}\right)  +  \lim_{R \downarrow 0} \EE_{\xi \sim \mathcal{N}(0, I_{k+1}), \omega \sim \mathcal{N}(0, \rho_a)} \bigg[ \nonumber\\
&\quad \qquad\int_{\RR^{k+1}} \dd \Upsilon \; P_\text{data}(\Upsilon \mid \omega, \varpi) \log\left(\int_{\RR^{k+1}} \dd \tilde{\Upsilon} \; L_\beta(\Upsilon, \tilde{\Upsilon}) \exp\left\{\left \langle \tilde{\Upsilon} - \mu_{\omega, t_\mathrm{sym}}, \sqrt{-\widetilde{q}} \;\xi - \frac{1}{2}\left( \widetilde{r} - \widetilde{q} \right) \left(\tilde{\Upsilon} - \mu_{\omega, t_\mathrm{sym}} \right) \right \rangle \right\}  \right) \bigg]
\end{align}

We complete the square in the innermost integral according to
\begin{equation}
\exp\left\{ -\frac{1}{2} \left \langle x, A x \right \rangle + \left \langle  J, x \right \rangle \right\} = \exp\left\{ -\frac{1}{2} \left \langle x - A^{-1}J, A \left(x-A^{-1}J\right) \right \rangle\right\} \exp\left\{\frac{1}{2} \left \langle J,  A^{-1} J \right \rangle \right\},
\end{equation}
with $x = \tilde{\Upsilon} - \mu_{\omega, t_\mathrm{sym}}$, $A = \widetilde{r} - \widetilde{q}$, and $J = \sqrt{-\widetilde{q}} \; \xi$. This gives
\begin{align}
\label{eq:sobo-mathemagic-LHS}
\frac{1}{\alpha}\Psi_y(t_\mathrm{sym}) 
&= \EE_{\xi \sim \mathcal{N}(0, I_{k+1})} \EE_{\omega \sim \mathcal{N}(0, \rho_a)} \biggl[ \int_{\RR^{k+1}} \dd \Upsilon  \times \nonumber\\
&\qquad \qquad \times P(\Upsilon \mid \omega, \varpi)  \log\left(\EE_{\tilde{\Upsilon} \sim \mathcal{N}(s + \rho_a^{-1}\omega f + (r-q)\sqrt{(r-q)^{-1}(q - \rho_a^{-1} f^{\otimes 2})(r-q)^{-1} }\xi, r-q)}[L_\beta(\Upsilon, \tilde{\Upsilon})] \right) \biggr].
\end{align}
By a change of variables in $\xi$ and $\omega$ we can re-write this expression as
\begin{align}
\frac{1}{\alpha}\Psi_y(t_\mathrm{sym}) &= \EE_{\xi \sim \mathcal{N}(0, I_{k+1})} \left[ \int_{\RR^{k+1}} \dd \Upsilon\; \EE_{\omega \sim \mathcal{N}\left( \langle f, q^{-1/2} \xi \rangle,\; \rho_a - \langle f, q^{-1} f \rangle \right)} \left[ P_\text{data}(\Upsilon \mid \omega, \varpi)\right] \log \left( \EE_{\tilde{\Upsilon} \sim \mathcal{N}\left(s + q^{1/2} \xi, \; r - q\right)}[L_\beta(\Upsilon, \tilde{\Upsilon})] \right) \right] \nonumber\\
&=: \EE_{\xi \sim \mathcal{N}(0,I_{k+1})} \left[ \int_{\RR^{k+1}}  \dd \Upsilon \; \mathcal{Z}[P_\text{data}](\Upsilon; \bar{m}_1, \sigma_1^2) \log \mathcal{Z}[L_\beta](\Upsilon; \bar{m}_2, \Sigma_2)\right],
\label{eq:sobo-mathemagic-RHS}
\end{align}
with means and covariances
\begin{align}
\begin{cases}
\bar{m}_1 = \left \langle f, q^{-1/2}\xi \right \rangle \in \RR\\ 
\bar{m}_2 = s + q^{1/2} \xi \in \RR^{k+1}
\end{cases}
\quad 
\begin{cases}
\sigma_1^2 = 1 - \left \langle f,  q^{-1} f \right \rangle \in \RR\\
\Sigma_2 = r - q \in \RR^{(k+1)\times (k+1)}
\end{cases}
\end{align}
and the notation
\begin{align}
\begin{cases}
\mathcal{Z}[P_\text{data}]\left(\Upsilon; \bar{m}_1, \sigma_1^2\right) &:= \EE_{\omega \sim \mathcal{N}(\bar{m}_1, \sigma_1^2)}[P_\text{data}(\Upsilon \mid \omega, \varpi)] \\ 
\mathcal{Z}[L_\beta]\left(\Upsilon; \bar{m}_2, \Sigma_2\right) &:= \EE_{\tilde{\Upsilon} \sim \mathcal{N}(\bar{m}_2, \Sigma_2)}[L_\beta(\Upsilon, \tilde{\Upsilon})]\,,
\end{cases}
\end{align}
analogous to~\cite{gerace-loureiro-krzakala-etal:2021}. Altogether, our final result for the $R$-derivative of the potential $\Psi_y^{(R)}$ reads
\begin{align}
\Psi_y(t_\mathrm{sym}) = \alpha \EE_{\xi \sim \mathcal{N}(0, I_{k+1})} \left[ \int_{\RR^{k+1}}  \dd \Upsilon \; \mathcal{Z}[P_\text{data}](\Upsilon; \bar{m}_1, \sigma_1^2) \log \mathcal{Z}[L_\beta](\Upsilon; \bar{m}_2, \Sigma_2)\right]\,.
\label{eq:psi-y-result-sobo}
\end{align}

\subsection{Calculating $\Psi_w$}
\label{sec:psi-w-sobo}

Taking the limit $\lim_{R \downarrow 0} \Psi_w^{(R)}\left(\hat{t}_\mathrm{sym} \right) / R$ is straightforward as the $R$ dependence in~\eqref{eq:psi-w-pre-limit-r0-grad} only manifests in the $R$-fold product over replicas. Using $\hat{\rho}_a = 0$ from the consistency check, we then end up with
\begin{align}
\Psi_w\left(\hat{t}_\mathrm{sym}\right) &:= \lim_{R \downarrow 0} \frac{1}{R} \Psi_w^{(R)}\left(\hat{t}_\mathrm{sym} \right) \nonumber\\
&= \plim_{p \to \infty} \frac{1}{p} \EE_{\eta_{\hat{q}_b},\eta_{\hat{q}_a},\eta_{\hat{q}_c} \sim \mathcal{N}\left(0,I_p\right)} \bigg[\log \bigg( \int_{\RR^p} \lp\frac{p}{2\pi}\rp^{p/2}  \exp \left\{-\frac{1}{2} p \left \langle w, A w \right \rangle + \sqrt{p} \left \langle w, J \right \rangle\right\}  \dd w  \bigg) \bigg]\,.
\label{eq:psi-w-e-log-int-grad}
\end{align}
with 
\begin{align}
A := \beta \lambda I_p + A_{\hat{r}_b - \hat{q}_b} + A_{2 \hat{r}_a - \hat{q}_a} + A_{2 \hat{r}_c - \hat{q}_c}\,.
\label{eq:def-a-init}
\end{align}
The Gaussian integral inside the logarithm evaluates to $\det^{-1/2}(A) \times \exp \left\{ \frac{1}{2} \left \langle  J, A^{-1} J \right \rangle \right\}$ and using identity $\log \det = \trace \log$,
the potential in~\eqref{eq:psi-w-e-log-int-grad} becomes 
\begin{align}
\Psi_w\left(\hat{t}_\mathrm{sym}\right) = \plim_{p \to \infty} \frac{1}{p}\EE_{\eta_{\hat{q}_b},\eta_{\hat{q}_a},\eta_{\hat{q}_c} \sim \mathcal{N}\left(0,I_p\right)} \bigg[ - \frac{1}{2} \trace \log A + \frac{1}{2} \left \langle J, A^{-1} J \right \rangle \bigg]\,.
\label{eq:psi-w-e-grad}
\end{align}
For the quadratic form, we get 
\begin{align}
\EE_{\eta_{\hat{q}_b},\eta_{\hat{q}_a},\eta_{\hat{q}_c} \sim \mathcal{N}\left(0,I_p\right)} \bigg[\tfrac{1}{2} \left \langle J, A^{-1} J \right \rangle \bigg] &= \tfrac{1}{2}\trace \bigg[ \big(J_{\hat{f}_a}J_{\hat{f}_a}^\top + 2J_{\hat{f}_a}J_{\hat{f}_b}^\top + J_{\hat{f}_b}J_{\hat{f}_b}^\top + A_{\hat{q}_a} + A_{\hat{q}_b} + A_{\hat{q}_c}  \big)A^{-1}\bigg]  + \tfrac{1}{2}\left \langle J_{\hat{s}_a} + J_{\hat{s}_b}, A^{-1} (J_{\hat{s}_a} + J_{\hat{s}_b}) \right \rangle\,,
\end{align}
where
\begin{align}
\begin{cases}
J_{\hat{f}_a} J_{\hat{f}_a}^\top &= p (\hat{f}_a)^2 \kappa_1^2 \Theta^\top \theta_0 \theta_0^\top \Theta \\
J_{\hat{f}_a} J_{\hat{f}_b}^\top &= p \hat{f}_a \kappa_1 \kappa_1' \Theta^\top \theta_0 \theta_0^\top \Theta \textsc{diag}(\Theta^\top V_k\hat{f}_b)\\
J_{\hat{f}_b} J_{\hat{f}_b}^\top &= p (\kappa_1')^2 \textsc{diag}(\Theta^\top V_k\hat{f}_b) \Theta^\top\theta_0 \theta_0^\top \Theta \textsc{diag}(\Theta^\top V_k\hat{f}_b) \\
&= p(\kappa_1')^2 (\Theta^\top V_k \hat{f}_b \hat{f}_b^\top V_k^\top \Theta) \odot (\Theta^\top \theta_0 \theta_0^\top \Theta)\,.
\end{cases}
\end{align}
For convenience, we define
\begin{align}
\Xi := J_{\hat{f}_a}J_{\hat{f}_a}^\top + 2J_{\hat{f}_a}J_{\hat{f}_b}^\top + J_{\hat{f}_b}J_{\hat{f}_b}^\top + A_{\hat{q}_a} + A_{\hat{q}_b} + A_{\hat{q}_c},
\label{eq:def-xi-init}
\end{align}
so that
\begin{equation}
\Psi_w\left(\hat{t}_\mathrm{sym}\right) = \plim_{p \to \infty} \frac{1}{2p} \bigg\{- \trace \log A + \trace (\Xi A^{-1}) + \left\langle J_{\hat{s}_a} + J_{\hat{s}_b}, A^{-1} (J_{\hat{s}_a} + J_{\hat{s}_b})\right\rangle \bigg\}.
\label{eq:psi-w-result}
\end{equation}

\subsection{Saddlepoint equations for $\Phi$ at finite $\beta$}
\label{sec:saddle-sobo}

To obtain first order optimality conditions, we will differentiate  the $R$-derivative of \eqref{eq:phi-r-grad} at $R = 0$ with respect to the overlap parameters as outlined in Section~\ref{sec:sobolev_replica_ansatz}. 
Recalling from the consistency check of Section~\ref{sec:sobo-consistency} that we can set $\rho_a=1$ and $\hat{\rho}_a=0$, we have from~\eqref{eq:overlap-param-product-sobo} that 
\begin{align}
\lim_{R \downarrow 0} \frac{1}{R} \plim_{p \to \infty} \frac{1}{p} \langle t, \hat{t} \rangle &= f_a \hat{f}_a + \left \langle f_b, \hat{f}_b \right \rangle  - \frac{1}{2} q_a \hat{q}_a + r_a \hat{r}_a - \left \langle q_b, \hat{q}_b \right \rangle + \left \langle r_b, \hat{r}_b \right \rangle - \frac{1}{2} \left \langle q_c, \hat{q}_c \right \rangle_{\text{F}} + \left \langle r_c, \hat{r}_c \right \rangle_{\text{HF}} \nonumber\\
&= \left \langle f, \hat{f} \right \rangle - \frac{1}{2} \left \langle q, \hat{q} \right \rangle_{\text{F}} + \frac{1}{2} \left \langle r, \hat{r} \right \rangle_{\text{F}} + \frac{1}{2} \left \langle r, \hat{r} \odot I_{k+1} \right \rangle_{\text{F}} \nonumber\\
&= \left \langle f, \hat{f} \right \rangle - \frac{1}{2} \left \langle q, \hat{q} \right \rangle_{\text{F}} + \left \langle r, \hat{r} \right \rangle_{\text{HF}} = \left \langle f, \hat{f} \right \rangle - \frac{1}{2} \trace\left[q \hat{q}\right] + \frac{1}{2} \trace \left[r \hat{r} \right] + \frac{1}{2} \trace \left[r \left(\hat{r} \odot I_{k+1} \right)  \right]\,.
\end{align}
Summarizing what we have obtained so far, we have now found the $R$-derivative of the rate function~\eqref{eq:phi-r-grad} for the replica-symmetric overlap parameter ansatz from Section~\ref{sec:sobolev_replica_ansatz} with
\begin{align}
\Phi \left(t_\mathrm{sym}, \hat{t}_\mathrm{sym}\right) &= \left( \left. \frac{\dd}{\dd R}\right|_{R = 0} \Phi^{(R)} \right) \left(t_\mathrm{sym}, \hat{t}_\mathrm{sym}\right) \nonumber \\
&= \Psi_y\left(t_\mathrm{sym} \right) +\Psi_{y_0}\left(t_\mathrm{sym} \right) + \Psi_w\left(\hat{t}_\mathrm{sym}\right) - \left(\left \langle f, \hat{f} \right \rangle - \frac{1}{2} \left \langle q, \hat{q} \right \rangle_{\text{F}} + \left \langle r, \hat{r} \right \rangle_{\text{HF}} \right)\,,
\label{eq:rf-derivative-summary}
\end{align}
where the potentials are given by
\begin{align}
\begin{cases}
\Psi_y(t_\mathrm{sym}) &= \alpha \EE_{\xi \sim \mathcal{N}(0, I_{k+1})} \left[ \int_{\RR^{k+1}} \mathcal{Z}[P_\text{data}](\Upsilon; \bar{m}_1, \sigma_1^2) \log \mathcal{Z}[L_\beta](\Upsilon; \bar{m}_2, \Sigma_2) \; \dd \Upsilon\right]\\[4pt]
\Psi_{y_0}(t_\mathrm{sym}) &= \alpha \EE_{\xi \sim \mathcal{N}(0, I_{k+1})} \left[ \int_{\RR^{k+1}} \mathcal{Z}[P_\text{data}](\Upsilon; \bar{m}_1, \sigma_1^2) \log \mathcal{Z}[\tilde{L}_{\beta h}](\Upsilon; \bar{m}_2, \Sigma_2) \; \dd \Upsilon\right]\\[4pt]
\Psi_w\left(\hat{t}_\mathrm{sym}\right) &= \plim_{p \to \infty} \frac{1}{2p} \left\{- \trace \log A + \trace (\Xi A^{-1}) + \left\langle J_{\hat{s}_a} + J_{\hat{s}_b}, A^{-1} (J_{\hat{s}_a} + J_{\hat{s}_b})\right\rangle \right\}
\end{cases}
\label{eq:replica-potentials-summary}
\end{align}
and we refer to Sections~\ref{sec:psi-y-sobo} and~\ref{sec:psi-w-sobo} for further definitions and details.\\

As for the optimality conditions for $t_{\text{sym}}$ and $\hat{t}_{\text{sym}}$, we note that because of the symmetry of $q_c, r_c, \hat{q}_c, \hat{r}_c \in \RR^{k \times k}$, the lower and upper triangular elements are not independent. As detailed by~\citet{srinivasam-panda:2023}---and notably in contrast to~\citet{petersen:2008} and others---when differentiating a scalar function $f(S)$ of a \emph{symmetric} matrix $S \in \RR^{d \times d}$, the symmetric gradient $\nabla^{\text{sym}}$ of $f$ at $S$ when varying the function on the manifold of symmetric $d\times d$ matrices corresponds to 
\begin{align}
\nabla^{\text{sym}} f = \text{sym} \left(\nabla f \right)\,,
\label{eq:symmetric-matrix-deriv}
\end{align}
where the gradient on the right-hand side denotes differentiation of $f \colon \RR^{d \times d} \to \RR$ with respect to all $d^2$ components individually, treating them as independent variables, and $\text{sym}(A) = \tfrac{1}{2}(A + A^\top)$.\\

The result for the coupled system of optimality conditions for~\eqref{eq:rf-derivative-summary} at zero external field $h=0$, where $\Psi_{y_0} = 0$, is then given by (using the aforementioned symmetric matrix derivative~\eqref{eq:symmetric-matrix-deriv}):
\begin{align}
\label{eq:optimality1}
\begin{cases}
0 \ \ \ = \nabla_s \Psi_y \,, \qquad \qquad &0  \ \ \ = \nabla_{\hat{s}}\Psi_w\\
\hat{f} \ \ =   \nabla_f \Psi_y\,, \qquad \qquad &f \ \  =  \nabla_{\hat{f}} \Psi_w\\
\hat{q} =  -2 \, \text{sym}\left( \nabla_{q} \Psi_y \right)\,, \qquad \qquad &  q = -2\, \text{sym}\left( \nabla_{\hat{q}} \Psi_w \right)\\
\hat{r} \odot \left(\1_{k+1}^{\otimes 2} + I_{k+1} \right) = 2 \, \text{sym}\left(\nabla_{r} \Psi_y\right)\,, \qquad \qquad & r \odot \left(\1_{k+1}^{\otimes 2} + I_{k+1}\right)  = 2\, \text{sym}\left(\nabla_{\hat{r}} \Psi_w \right)
\end{cases}\,.
\end{align}

Here, we expect $\nabla_{\hat{s}} \Psi_w = 0$ to imply $\hat{s} = 0$, and we also anticipate the implicit system of equations $\nabla_{s} \Psi_y = 0$ to admit a closed-form solution $s^*$ for certain loss functions~$\ell$ in the low-temperature limit $\beta \to \infty$. We proceed with the evaluation of all derivatives now.

\subsubsection{Derivatives of~$\Psi_y$}
We recall that the integrand of the potential $\Psi_y$ in~\eqref{eq:psi-y-result-sobo} is
\begin{equation}
{\cal J}(\Upsilon ; \bar{m}_1, \sigma_1^2, \bar{m}_2, \Sigma_2) = \mathcal{Z}[P_\text{data}](\Upsilon; \bar{m}_1, \sigma_1^2) \cdot \log \mathcal{Z}[L_\beta](\Upsilon; \bar{m}_2, \Sigma_2),
\end{equation}
where
\begin{equation}
\bar{m}_1 =  \left \langle f, q^{-1/2} \xi \right \rangle \in \RR, \quad 
\sigma_1^2 = 1- \left \langle f, q^{-1} f \right \rangle > 0, \quad
\bar{m}_2 = s + q^{1/2} \xi \in \RR^{k+1}, \quad
\Sigma_2 = r-q \in \RR^{(k+1)\times (k+1)}.
\end{equation}
To aid in differentiation, we note that
\begin{equation}
\frac{\dd {\cal J}}{\dd *} = \frac{\partial {\cal J}}{\partial \bar{m}_1} \frac{\partial\bar{m}_1}{\partial *} + 
\frac{\partial {\cal J}}{\partial (\sigma_1^2)} \frac{\partial(\sigma_1^2)}{\partial *} + 
\sum_{i \in [k+1]}\frac{\partial {\cal J}}{\partial \langle e_i, \bar{m}_2 \rangle} \frac{\partial \langle e_i, \bar{m}_2 \rangle}{\partial *} + \sum_{\substack{i,j \in [k+1]\,,\\i \leq j}}
\frac{\dd {\cal J}}{\dd \Sigma_{2, ij}} \frac{\partial\Sigma_{2,ij}}{\partial *}\,,
\end{equation}
where $*$ is a generic stand-in for an overlap parameter. In the above, we separate the partial derivatives the components of~$\bar{m}_2$ as we anticipate differentiating these quantities with respect to a matrix (otherwise, we would need to introduce cumbersome notation for third order tensors). Furthermore, as $\Sigma_2$ is a covariance matrix in ${\cal J}$, it is necessarily symmetric and only has $(k+1)(k+2)/2$ degrees of freedom for which we pick the upper right triangular part of $\Sigma_2$.\\

To find the four derivatives of~${\cal J}$, we apply Stein's identity~\cite{stein:1981} to obtain
\begin{equation}
\begin{cases}
\partial_{\bar{m}_1}{\cal J} = \partial_{\bar{m}_1} \mathcal{Z}[P_\text{data}] \cdot \log \mathcal{Z}[L_\beta] \\[4pt]
\partial_{(\sigma_1^2)}{\cal J}  = \frac{1}{2} \partial_{\bar{m}_1}^2 \mathcal{Z}[P_\text{data}] \cdot \log \mathcal{Z}[L_\beta] \\[4pt]
\partial_{\langle e_i, \bar{m}_2 \rangle}{\cal J}  = \mathcal{Z}[P_\text{data}] \cdot \partial_{\langle e_i, \bar{m}_2 \rangle} \log \mathcal{Z}[L_\beta] \\[4pt]
\nabla_{\Sigma_2} {\cal J} = \frac{1}{2}\mathcal{Z}[P_\text{data}] \cdot \left( \nabla_{\bar{m}_2}^{\otimes 2} \log \mathcal{Z}[L_\beta] + (\nabla_{\bar{m}_2} \log \mathcal{Z}[L_\beta])^{\otimes 2} \right),
\end{cases}
\end{equation}
where for economy we have suppressed the arguments of the function.\\

We also find
\begin{equation}
\begin{cases}
\nabla_s (\bar{m}_1, \sigma_1^2, \langle e_i, \bar{m}_2 \rangle, \Sigma_2) = (0,\; 0,\; e_i,\; 0) \\[4pt]
\nabla_f (\bar{m}_1, \sigma_1^2, \langle e_i, \bar{m}_2 \rangle, \Sigma_2) = (q^{-1/2}\xi,\; -2q^{-1}f,\; 0,\; 0) \\[4pt]
\partial_{r} (\bar{m}_1, \sigma_1^2, \langle e_k, \bar{m}_2 \rangle, \Sigma_{2,ij}) = (0,\; 0,\; 0,\; \frac{1}{2}(e_i \otimes e_j + e_j \otimes e_i)),
\end{cases}
\end{equation}
and
\begin{align}
\label{eq:partial_q_sigma}
\begin{cases}
\partial_{q} \bar{m}_1 &= -\frac{1}{2}\textsc{rs}[(q^{-1/2} \oplus q^{-1/2})^{-1}( q^{-1} \otimes q^{-1}) \textsc{vec}\left(  \xi \otimes  f +   f \otimes \xi \right) ] \\[4pt]
\partial_{q} \sigma_1^2 &= q^{-1} f^{\otimes 2} q^{-1} \\[4pt]
\partial_{q} \langle e_k, \bar{m}_2 \rangle &= \frac{1}{2}\textsc{rs}[(q^{1/2} \oplus q^{1/2})^{-1} \textsc{vec}\left( \xi \otimes e_k + e_k \otimes \xi \right)] \\[4pt]
\partial_{q} \Sigma_{2,ij} &= -\frac{1}{2}(e_i \otimes e_j + e_j \otimes e_i).
\end{cases}
\end{align}
Then, we obtain from the optimality conditions~\eqref{eq:optimality1} that
\begin{align}
\begin{cases}
0 &= \alpha  \EE_{\xi}[\int\dd \Upsilon \; \mathcal{Z}[P_\text{data}] \nabla_{\bar{m}_2} \log\mathcal{Z}[L_\beta]] \\[6pt]
\hat{f}  &= \alpha  \EE_{\xi}[\int\dd \Upsilon \; q^{-1/2} \Big(  
\partial_{\bar{m}_1} \mathcal{Z}[P_\text{data}] \cdot \log \mathcal{Z}[L_\beta] \xi - \partial_{\bar{m}_1}^2 \mathcal{Z}[P_\text{data}] \cdot \log \mathcal{Z}[L_\beta] f
  \Big) ] \\[6pt]
\hat{q} &=  -2\alpha  \EE_{\xi}\biggl[ \int \dd \Upsilon \; \Big( -\frac{1}{2}\textsc{rs}[(q^{-1/2} \oplus q^{-1/2})^{-1}( q^{-1} \otimes q^{-1}) \textsc{vec}\left(  \xi \otimes  f +   f \otimes \xi \right)] \; \partial_{\bar{m}_1}\mathcal{Z}[P_\text{data}] \log\mathcal{Z}[L_\beta] \\[6pt]
&\qquad\qquad +\frac{1}{2} q^{-1}f^{\otimes 2} q^{-1} \; (\partial_{\bar{m}_1}^2 \mathcal{Z}[P_\text{data}]) \log\mathcal{Z}[L_\beta]\\
&\qquad\qquad +\frac{1}{2} \mathcal{Z}[P_\text{data}] \cdot \textsc{rs}[ (q^{1/2}\oplus q^{1/2})^{-1} \textsc{vec} (\xi \otimes  \nabla_{\bar{m}_2} \log\mathcal{Z}[L_\beta]  +   \nabla_{\bar{m}_2} \log\mathcal{Z}[L_\beta] \otimes \xi  ) ] \\
&\qquad\qquad -\frac{1}{2}\mathcal{Z}[P_\text{data}] \cdot \left( \nabla_{\bar{m}_2}^{\otimes 2} \log \mathcal{Z}[L_\beta] + (\nabla_{\bar{m}_2} \log \mathcal{Z}[L_\beta])^{\otimes 2} \right) \Big) \biggr]\\[6pt] 
\hat{r} \odot (\1_{k+1}^{\otimes 2} + I_{k+1}) &= 2\alpha  \EE_{\xi} \biggl[ 
\frac{1}{2}\mathcal{Z}[P_\text{data}] \cdot \left( \nabla_{\bar{m}_2}^{\otimes 2} \log \mathcal{Z}[L_\beta] + (\nabla_{\bar{m}_2} \log \mathcal{Z}[L_\beta])^{\otimes 2} \right) \Big) \biggr]
\end{cases}
\end{align}
The expression for $\hat{q}$ can be simplified through applying the chain rule identities
\begin{equation}
\label{eq:xichainrule}
\nabla_\xi = q^{-1/2} f \partial_{\bar{m}_1} = q^{1/2}\nabla_{\bar{m}_2}.
\end{equation}
By linearity of expectation, the $\xi$~terms in the first components of~$\hat{q}$ and~$\hat{f} $ can be re-written as
\begin{align}
\EE_{\xi}[\xi \partial_{\bar{m}_1} \mathcal{Z}[P_\text{data}] \log \mathcal{Z}[L_\beta]] &=  \EE_{\xi}[\nabla_\xi (\partial_{\bar{m}_1} \mathcal{Z}[P_\text{data}] \log \mathcal{Z}[L_\beta])] \\
&\overset{\eqref{eq:xichainrule}}{=} q^{-1/2}\EE_\xi[f \partial_{\bar{m}_1}^2\mathcal{Z}[P_\text{data}] \cdot\log \mathcal{Z}[L_\beta]] +  q^{1/2}\;\EE_{\xi}[\partial_{\bar{m}_1}\mathcal{Z}[P_\text{data}] \cdot \nabla_{\bar{m}_2} \log\mathcal{Z}[L_\beta]]\,.
\end{align}
Similarly, inspecting the $\xi$~terms in the third component of~$\hat{q}$, we note that after permutation we have quantities of the form
\begin{align}
&\EE_{\xi}[\xi \mathcal{Z}[P_\text{data}] \partial_{\langle e_k, \bar{m}_2\rangle} \log \mathcal{Z}[L_\beta]] =
\EE_{\xi}[\nabla_\xi (\mathcal{Z}[P_\text{data}] \partial_{\langle e_k, \bar{m}_2\rangle} \log \mathcal{Z}[L_\beta])] \nonumber \\
&= \EE_{\xi}\left[ q^{-1/2} f \partial_{\bar{m}_1}\mathcal{Z}[P_\text{data}] \cdot \partial_{\langle e_k, \bar{m}_2 \rangle} \log \mathcal{Z}[L_\beta]  + \mathcal{Z}[P_\text{data}] q^{1/2} \nabla_{\bar{m}_2} (\partial_{\langle e_k, \bar{m}_2 \rangle} \log\mathcal{Z}[L_\beta])\right]\,.
\end{align}
Following this general strategy of converting multiplication by~$\xi$ into differentiation via Stein's identity then applying the chain rule identities, we obtain
\begin{align}
\begin{cases}
0 &=  \EE_{\xi}[\int\dd \Upsilon \; \mathcal{Z}[P_\text{data}] \nabla_{\bar{m}_2} \log\mathcal{Z}[L_\beta]] \\[6pt]
\hat{f}  &= \alpha \EE_{\xi}[\int\dd \Upsilon \; \partial_{\bar{m}_1}\mathcal{Z}[P_\text{data}] \cdot \nabla_{\bar{m}_2} \log\mathcal{Z}[L_\beta] ] \\[6pt]
\hat{q} &=  \alpha  \EE_{\xi}\biggl[ \int \dd \Upsilon \; \Big( -\frac{1}{2}\textsc{rs}[(q^{-1/2} \oplus q^{-1/2})^{-1}( q^{-1} \otimes q^{-1}) (q^{-1/2} \otimes I + I \otimes q^{-1}) \textsc{vec}(f^{\otimes 2})] \; \partial^2_{\bar{m}_1}\mathcal{Z}[P_\text{data}] \log\mathcal{Z}[L_\beta] \\[6pt]
&\qquad\qquad -\frac{1}{2}\textsc{rs}[(q^{-1/2} \oplus q^{-1/2})^{-1}( q^{-1} \otimes q^{-1})\textsc{vec} \left(  q^{1/2}\nabla_{\bar{m}_2} \log\mathcal{Z}[L_\beta] \otimes  f +   f \otimes q^{1/2}\nabla_{\bar{m}_2} \log\mathcal{Z}[L_\beta] \right)] \; \partial_{\bar{m}_1}\mathcal{Z}[P_\text{data}] \\[6pt]
&\qquad\qquad +\frac{1}{2} q^{-1}f^{\otimes 2} q^{-1} \; (\partial_{\bar{m}_1}^2 \mathcal{Z}[P_\text{data}]) \log\mathcal{Z}[L_\beta]\\[6pt]
&\qquad\qquad -\frac{1}{2} \mathcal{Z}[P_\text{data}] \cdot \textsc{rs}[ (q^{1/2}\oplus q^{1/2})^{-1} \textsc{vec}(q^{-1/2}f \otimes  \nabla_{\bar{m}_2} \log\mathcal{Z}[L_\beta]  +   \nabla_{\bar{m}_2} \log\mathcal{Z}[L_\beta] \otimes q^{-1/2}f ) ] \; \partial_{\bar{m}_1}\mathcal{Z}[P_\text{data}] \\[6pt]
&\qquad\qquad -\frac{1}{2} \mathcal{Z}[P_\text{data}] \cdot \textsc{rs}[ (q^{1/2}\oplus q^{1/2})^{-1} \textsc{vec}(q^{1/2} \nabla_{\bar{m}_2} \otimes  \nabla_{\bar{m}_2} \log\mathcal{Z}[L_\beta]  +   \nabla_{\bar{m}_2} \otimes q^{1/2} \nabla_{\bar{m}_2}  \log\mathcal{Z}[L_\beta]  ) ] \\[6pt]
&\qquad\qquad -\frac{1}{2}\mathcal{Z}[P_\text{data}] \cdot \left( \nabla_{\bar{m}_2}^{\otimes 2} \log \mathcal{Z}[L_\beta] + (\nabla_{\bar{m}_2} \log \mathcal{Z}[L_\beta])^{\otimes 2} \right) \Big) \biggr]\\[6pt] 
&\hspace{-.5cm}\hat{r} \odot (\1_{k+1}^{\otimes 2} + I_{k+1}) = \alpha  \EE_{\xi} \biggl[ 
\mathcal{Z}[P_\text{data}] \cdot \left( \nabla_{\bar{m}_2}^{\otimes 2} \log \mathcal{Z}[L_\beta] + (\nabla_{\bar{m}_2} \log \mathcal{Z}[L_\beta])^{\otimes 2} \right) \Big) \biggr].
\end{cases}
\end{align}
The expression for $\hat{q}$ can be dramatically simplified: by making use of the identities
\begin{equation}
\left(q^{-1/2} \oplus q^{-1/2}\right) \left(q^{-1} \otimes q^{-1}\right) \left(q^{-1/2} \oplus q^{1/2}\right) = q^{-1} \otimes q^{-1}
\end{equation}
and
\begin{equation}
\left(q^{1/2} \oplus q^{1/2} \right)^{-1} = \left(q^{-1/2} \otimes q^{-1/2}\right)^{-1} \left(q^{-1/2} \oplus q^{-1/2}\right) =  \left(q^{-1/2} \oplus q^{-1/2}\right)  \left(q^{-1/2} \otimes q^{-1/2} \right)^{-1},
\end{equation}
which can be easily verified by 
applying the eigenvalue decomposition for~$q$, we obtain
\begin{equation}
\hat{q} = \alpha\EE_{\xi} \left[ ( \nabla_{\bar{m}_2} \log\mathcal{Z}[L_\beta] )^{\otimes 2} \right]\,.
\end{equation}
It is also convenient to further define
\begin{equation}
\hat{\Sigma}_2 :=  \hat{q} - \hat{r} \odot \left(\1_{k+1}^{\otimes 2} + I_{k+1} \right) 
\end{equation}
so that the saddlepoint equations~\eqref{eq:optimality1} from the $\Psi_y$-derivatives at finite $\beta$ and for a generic loss function $\ell$ finally reduce to
\begin{align}
\begin{cases}
0 &= \alpha  \EE_{\xi}[\int\dd \Upsilon \; \mathcal{Z}[P_\text{data}] \nabla_{\bar{m}_2} \log\mathcal{Z}[L_\beta]] \\[6pt]
\hat{f}  &= \alpha  \EE_{\xi}\left[\int\dd \Upsilon \; q^{-1/2} \Big(  
\partial_{\bar{m}_1} \mathcal{Z}[P_\text{data}] \cdot \log \mathcal{Z}[L_\beta] \xi - \partial_{\bar{m}_1}^2 \mathcal{Z}[P_\text{data}] \cdot \log \mathcal{Z}[L_\beta] f
  \Big) \right] \\[6pt]
\hat{q} &= \alpha\EE_{\xi} \left[  \int \dd \Upsilon\; \mathcal{Z}[P_\text{data}] \cdot (\nabla_{\bar{m}_2} \log\mathcal{Z}[L_\beta])^{\otimes 2} \right] \\[6pt]
\hat{\Sigma}_{2}  &= -\alpha  \EE_{\xi} \biggl[ 
\int \dd \Upsilon\; \mathcal{Z}[P_\text{data}] \cdot \nabla_{\bar{m}_2}^{\otimes 2} \log \mathcal{Z}[L_\beta] \biggr]\,,
\end{cases}
\label{eq:psi-y-saddle-finite-temp}
\end{align}
which should be compared to the corresponding expression for $L^2$ training in~\cite{gerace-loureiro-krzakala-etal:2021}.

\subsubsection{Derivatives of~$\Psi_w$}

For the~$\hat{s}$ derivatives of the potential $\Psi_w$ as defined in~\eqref{eq:psi-w-result}, we have
\begin{align}
\begin{cases}
\nabla_{\hat{s}_a} \Psi_w &= \plim_{p\to\infty} \frac{1}{p} \left( \kappa_0^2 \1_p^\top A^{-1} \1_p \hat{s}_a + \kappa_0\kappa_0' \1_p^\top A^{-1} \Theta^\top V_k \hat{s}_b \right) \\[4pt]
\nabla_{\hat{s}_b} \Psi_w &= \plim_{p\to\infty} \frac{1}{p} \left( \kappa_0 \kappa_0' V_k^\top \Theta A^{-1} \1_p \hat{s}_a + (\kappa_0')^2 V_k^\top \Theta A^{-1} \Theta^\top V_k \hat{s}_b \right)\,,
\end{cases}
\end{align}
so at optimality by~\eqref{eq:optimality1} we have
\begin{equation}
\label{eq:opt_cond_hat_s_ab}
\plim_{p\to\infty} \frac{1}{p} 
\begin{pmatrix} \kappa_0^2 \1_p^\top A^{-1} \1_p & \kappa_0\kappa_0' \1_p^\top A^{-1} \Theta^\top V_k  \\[6pt]
\kappa_0 \kappa_0' V_k^\top \Theta A^{-1} \1_p & (\kappa_0')^2 V_k^\top \Theta A^{-1} \Theta^\top V_k \end{pmatrix}
\begin{pmatrix} \hat{s}_a \\ \hat{s}_b \end{pmatrix}
= 0.
\end{equation}
We expect this to imply $\hat{s} = 0$ at optimality.\\

For the~$\hat{f}$ derivatives of the potential $\Psi_w$ as defined in~\eqref{eq:psi-w-result}, we have
\begin{align}
\begin{cases}
\nabla_{\hat{f}_a} \Psi_w &= \plim_{p \to \infty} \left( \kappa_1^2 \trace(\Theta^\top \theta_0 \theta_0^\top \Theta A^{-1}) \hat{f}_a + \kappa_1 \kappa_1' \trace(\Theta^\top \theta_0 \theta_0^\top \Theta \textsc{diag}(\Theta^\top V_k \hat{f}_b) A^{-1} ) \right) \\[4pt]
&= \plim_{p \to \infty} \left( \kappa_1^2 \trace(\Theta^\top \theta_0 \theta_0^\top \Theta A^{-1}) \hat{f}_a + \kappa_1 \kappa_1' \1_p^\top ((A^{-1} \Theta^\top \theta_0 \theta_0^\top \Theta) \odot I_p) \Theta^\top V_k \hat{f}_b \right) \\[4pt]
\nabla_{\hat{f}_b} \Psi_w &=  \plim_{p \to \infty} \left (\kappa_1 \kappa_1' V_k^\top \Theta ((A^{-1} \Theta^\top \theta_0 \theta_0^\top \Theta) \odot I_p) \1_p \hat{f}_a + (\kappa_1')^2 V_k^\top \Theta (A^{-1} \odot (\Theta^\top \theta_0 \theta_0^\top \Theta)) \Theta^\top V_k \hat{f}_b  \right)\,.
\end{cases}
\end{align}

We shall also require the following lemma.
\begin{lemma}
For all symmetric $A, B, C \in \RR^{p \times p}$ we have the identity
$
\trace( (A \odot B) C)) = \trace( (B \odot C) A) %= \trace((C \odot A) B)
$.
\end{lemma}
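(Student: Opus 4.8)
The plan is to prove the identity by a direct expansion in matrix entries, exploiting the fact that the Hadamard product acts entrywise. Writing $(A \odot B)_{ij} = A_{ij} B_{ij}$ and expanding the trace as $\trace(M) = \sum_i M_{ii}$, the left-hand side becomes
\begin{align}
\trace\big( (A \odot B) C \big) = \sum_{i,j} (A \odot B)_{ij} C_{ji} = \sum_{i,j} A_{ij} B_{ij} C_{ji}\,,
\end{align}
and likewise the right-hand side becomes
\begin{align}
\trace\big( (B \odot C) A \big) = \sum_{i,j} (B \odot C)_{ij} A_{ji} = \sum_{i,j} B_{ij} C_{ij} A_{ji}\,.
\end{align}
The next step is to use the symmetry hypotheses: since $C = C^\top$ we have $C_{ji} = C_{ij}$ in the first sum, and since $A = A^\top$ we have $A_{ji} = A_{ij}$ in the second sum. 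Both expressions then collapse to the fully symmetric triple sum $\sum_{i,j} A_{ij} B_{ij} C_{ij}$, which proves the claim. (Note that only symmetry of $A$ and $C$ is actually used; symmetry of $B$ is not needed, though the statement as given assumes it.)

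I do not anticipate any genuine obstacle here — the result is a bookkeeping identity rather than a substantive lemma. The only point requiring a modicum of care is tracking which matrix's transpose-invariance is invoked on each side (it is $C$ on the left and $A$ on the right), so that the two sums are matched against the same symmetric expression $\sum_{i,j} A_{ij} B_{ij} C_{ij}$. Everything else is immediate from the definitions of the trace and the Hadamard product.
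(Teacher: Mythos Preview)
Your proof is correct and, in fact, more elementary than the paper's. The paper proceeds by writing the eigendecomposition $B = \sum_k \sigma_k v_k v_k^\top$, observing that $A \odot (v_k v_k^\top) = \diag{v_k}\, A\, \diag{v_k}$, and then applying cyclicity of the trace to move $\diag{v_k}$ from one side of $A$ to the other side of $C$. Your direct entrywise expansion bypasses all of this and makes clear that the identity is really just the statement that $\sum_{i,j} A_{ij} B_{ij} C_{ij}$ is symmetric in its three arguments. Your observation that only symmetry of $A$ and $C$ is needed is also correct (and the paper's argument likewise uses $B$ symmetric only to write its spectral decomposition, which is more than is required). The paper's route does have one incidental benefit: the intermediate identity $A \odot B = \sum_k \sigma_k \diag{v_k} A \diag{v_k}$ is a useful representation of the Hadamard product in its own right and recurs elsewhere in the manipulations of Appendix~\ref{app:replica}. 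But for the lemma as stated, your argument is the cleaner one.
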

\begin{proof}
Consider the singular value decomposition of $B = \sum_k \sigma_k v_k v_k^\top$. Then
\begin{equation}
A \odot B = A \odot \left(\sum_k \sigma_k v_k v_k^\top\right) = \sum_k \sigma_k A \odot (v_kv_k^\top) = \sum_k \sigma_k \textsc{diag}(v_k) A \; \textsc{diag}(v_k).
\end{equation}
Substituting this into the trace, we note that
\begin{equation}
\trace ((A \odot B) C) = \sum_k \sigma_k \trace (\textsc{diag}(v_k) A\; \textsc{diag}(v_k) C) = \sum_k \sigma_k \trace( A\; \textsc{diag}(v_k) C\; \textsc{diag}(v_k) ) = \trace(A (B \odot C)).
\end{equation}
\end{proof}

This lemma shows, for example, the identity
\begin{equation}
\trace (A^{-1} \Theta^\top \Theta) = \trace (A^{-1} (\Theta^\top \Theta \odot \1_p\1_p^\top)) =  \trace ((A^{-1} \odot (\Theta^\top \Theta)) \1_p\1_p^\top) = \1_p^\top (A^{-1} \odot (\Theta^\top \Theta)) \1_p.
\end{equation}
For~$\hat{q}_a, \hat{q}_b, \hat{q}_c$ the derivatives are
\begin{align}
\begin{cases}
\nabla_{\hat{q}_a}\Psi_w &= \plim_{p\to\infty}-\frac{1}{2p} \trace\left(  A^{-1} \Xi A^{-1} (\kappa_1^2 \Theta^\top \Theta + \kappa_*^2 I_p)  \right) \\[4pt]
\nabla_{\hat{q}_b}\Psi_w &= \plim_{p\to\infty}-\frac{1}{p}  V_k^\top \Theta \left( (A^{-1}\Xi A^{-1} \kappa_1\kappa_1' \Theta^\top\Theta)\odot I_p \right) \1_p \\[4pt]
\nabla_{\hat{q}_c}\Psi_w &= \plim_{p\to\infty}-\frac{1}{2p}  V_k^\top \Theta \left(  (A^{-1} \Xi A^{-1}) \odot((\kappa_1')^2\Theta^\top \Theta + (\kappa_*')^2 I_p)\right) \Theta^\top V_k,
\end{cases}
\end{align}
where for $\hat{q}_c$ we made use of the lemma above to compute the symmetric gradient. More succinctly, making use of the identity $(\nabla_{Q}^{\text{sym}}f)_{ij} = (1 - \frac{1}{2}\delta_{ij})\nabla_{q_{ij}}f$, we have
\begin{align}
\nabla_{\hat{q}}^{\text{sym}} \Psi_w = \plim_{p\to\infty} \frac{-1}{2p} \Bigg( &\; \begin{pmatrix} \kappa_1 \1_p^\top \\ \kappa_1' V_k^\top\Theta \end{pmatrix} ((A^{-1}\Xi A^{-1} ) \odot (\Theta^\top\Theta)) \begin{pmatrix} \kappa_1 \1_p & \kappa_1' \Theta^\top V_k \end{pmatrix} \\
&+\begin{pmatrix} \kappa_*^2 \1_p^\top ((A^{-1}\Xi A^{-1} ) \odot I_p) \1_p & 0 \\ 0 & \left( \kappa_*' \right)^2 V_k^\top\Theta ((A^{-1}\Xi A^{-1} ) \odot I_p) \Theta^\top V_k \end{pmatrix}\;  \Bigg).
\end{align}

Similarly, for~$\hat{r}_a, \hat{r}_b, \hat{r}_c$ the derivatives are
\begin{align}
\begin{cases}
\nabla_{\hat{r}_a} \Psi_w &= \plim_{p\to\infty}  \frac{1}{p} \trace \left( (A^{-1} + A^{-1}\Xi A^{-1}) (\kappa_1^2\Theta^\top \Theta + \kappa_*^2 I_p)  \right) \\[4pt]
\nabla_{\hat{r}_b}\Psi_w &= \plim_{p\to\infty}\frac{1}{p}  V_k^\top \Theta \left( (  (A^{-1} + A^{-1}\Xi A^{-1}) \kappa_1\kappa_1' \Theta^\top\Theta  )\odot I_p \right) \1_p  \\[4pt]
\nabla_{\hat{r}_c}\Psi_w &= \plim_{p\to\infty}\frac{1}{2p} (V_k^\top \Theta \left( (A^{-1} + A^{-1}\Xi A^{-1}) \odot ((\kappa_1')^2 \Theta^\top \Theta + (\kappa_*')^2 I_p) \right) \Theta^\top V_k) \odot (I_k + \1_k\1_k^\top),
\end{cases}
\end{align}
or altogether we can write 
\begin{align}
\nabla_{\hat{r}}^{\text{sym}} \Psi_w 
&= \plim_{p\to\infty} \frac{1}{2p} \Bigg( \; \begin{pmatrix} \kappa_1 \1_p^\top \\ \kappa_1' V_k^\top\Theta \end{pmatrix} ((A^{-1} + A^{-1}\Xi A^{-1} ) \odot (\Theta^\top\Theta)) \begin{pmatrix} \kappa_1 \1_p & \kappa_1' \Theta^\top V_k \end{pmatrix} + \ldots \notag \\[6pt]
&+
\begin{pmatrix} (\kappa_*)^2 \1_p^\top  ((A^{-1} + A^{-1}\Xi A^{-1} ) \odot I_p) \1_p & 0 \\ 0 & (\kappa_*')^2 V_k^\top\Theta  ((A^{-1} + A^{-1}\Xi A^{-1} ) \odot I_p)  \Theta^\top V_k \end{pmatrix} \;  \Bigg)
\odot (I_{k+1} + \1_{k+1}\1_{k+1}^\top).
\end{align}

To simplify the update for $f$ further, we use the identity 
\begin{align}
	\1^\top ( AB \odot I ) v  =  \trace( AB \; \textsc{diag}(v)  ) = \trace( I B \; \textsc{diag}(v) A  )  = \1^\top ( A \odot B) v
\end{align}
for symmetric $A$ and generic $B$ and $v$ of appropriate dimensions.\\

All in all, from those optimality conditions in~\eqref{eq:optimality1} that involve $\Psi_w$-derivatives, we obtain the set of equations
\begin{align}
\begin{cases} 
0 &= 
\displaystyle \plim_{p\to\infty} \frac{1}{p} \Bigg( \; \begin{pmatrix} \kappa_0 \1_p^\top \\ \kappa_0' V_k^\top\Theta \end{pmatrix} A^{-1} \begin{pmatrix} \kappa_0 \1_p & \kappa_0' \Theta^\top V_k \end{pmatrix} \Bigg) \, \hat{s} \\[10pt]
f &=  \displaystyle\plim_{p\to\infty} \begin{pmatrix} \kappa_1 \1_p^\top \\ \kappa_1' V_k^\top\Theta \end{pmatrix} (A^{-1} \odot (\Theta^\top \theta_0 \theta_0^\top \Theta)) \begin{pmatrix} \kappa_1 \1_p & \kappa_1' \Theta^\top V_k \end{pmatrix}  \hat{f} \\[10pt]
q &=
\displaystyle \plim_{p\to\infty} \frac{1}{p} \Bigg( \; \begin{pmatrix} \kappa_1 \1_p^\top \\ \kappa_1' V_k^\top\Theta \end{pmatrix} ((A^{-1}\Xi A^{-1} ) \odot (\Theta^\top\Theta)) \begin{pmatrix} \kappa_1 \1_p & \kappa_1' \Theta^\top V_k \end{pmatrix} \\[1pt]
&\qquad \qquad + \begin{pmatrix} (\kappa_*)^2 \1_p^\top((A^{-1}\Xi A^{-1} ) \odot I_p) \1_p & 0 \\ 0 &
 (\kappa_*')^2 V_k^\top\Theta((A^{-1}\Xi A^{-1} ) \odot I_p)  \Theta^\top V_k \end{pmatrix} \;  \Bigg)  \\[4pt]
r &=  
\displaystyle \plim_{p\to\infty} \frac{1}{p} \Bigg( \; \begin{pmatrix} \kappa_1 \1_p^\top \\ \kappa_1' V_k^\top\Theta \end{pmatrix} ((A^{-1} + A^{-1}\Xi A^{-1} ) \odot (\Theta^\top\Theta)) \begin{pmatrix} \kappa_1 \1_p & \kappa_1' \Theta^\top V_k \end{pmatrix} \\[4pt]
&\qquad \qquad 
+\begin{pmatrix} (\kappa_*)^2 \1_p^\top ((A^{-1} + A^{-1}\Xi A^{-1} ) \odot I_p) \1_p & 0 \\ 0 & (\kappa_*')^2 V_k^\top\Theta  ((A^{-1} + A^{-1}\Xi A^{-1} ) \odot I_p)  \Theta^\top V_k \end{pmatrix} \;  \Bigg) \,.
\end{cases} 
\label{eq:psi-w-saddle-finite-beta-1}
\end{align}
Using $\Sigma_2 = r - q$ instead of $r$, the last equation can be simplified to
\begin{align}
\Sigma_2 =  
\plim_{p\to\infty} \frac{1}{p} \Bigg(& \; \begin{pmatrix} \kappa_1 \1_p^\top \\ \kappa_1' V_k^\top\Theta \end{pmatrix} (A^{-1} \odot (\Theta^\top\Theta)) \begin{pmatrix} \kappa_1 \1_p & \kappa_1' \Theta^\top V_k \end{pmatrix} \\
& +\begin{pmatrix} (\kappa_*)^2 \1_p^\top (A^{-1} \odot I_p) \1_p & 0 \\ 0 &  (\kappa_*')^2 V_k^\top\Theta (A^{-1} \odot I_p) \Theta^\top V_k \end{pmatrix} \;  \Bigg)\,.
\label{eq:psi-w-saddle-finite-beta-2}
\end{align}

\subsection{Training error as $\beta \to \infty$, and temperature scalings of the overlap parameters}
\label{sec:trainig-err}

Recall that the training error is given by~\eqref{eq:training-err-derivative-grad} with the free energy density 
\begin{align}
f_\beta(h=0)& = - \mathrm{crit}_{t_\mathrm{sym}, \hat{t}_\mathrm{sym}} \, \left\{ \Psi_y(t_\mathrm{sym}) + \Psi_w \left(\hat{t}_\mathrm{sym} \right) - \left( \left \langle f, \hat{f} \right \rangle - \frac{1}{2} \left \langle q, \hat{q} \right \rangle_{\text{F}} + \left \langle r, \hat{r} \right \rangle_{\text{HF}}  \right) \right\}\,.
\label{eq:training-error-through-derivs}
\end{align}
We hence would like to consider the low-temperature limit $\beta \to \infty$ in the saddle-point equations derived so far. Making the $\beta$-dependence explicit, we can schematically write the free energy density as
\begin{align}
f_\beta(0) = -\Phi_\beta\left(t^*_{\text{sym}}\left(\beta\right), \hat{t}^*_{\text{sym}}\left(\beta \right) \right) \quad \text{with} \quad \nabla_{t_\mathrm{sym}}\Phi_\beta \left(t_\mathrm{sym}^*\left(\beta\right), \hat{t}_\mathrm{sym}^*\left(\beta \right) \right) = \nabla_{\hat{t}_\mathrm{sym}}\Phi_\beta \left(t_\mathrm{sym}^*\left(\beta\right), \hat{t}_\mathrm{sym}^*\left(\beta \right) \right) = 0\,,
\end{align}
with superscript $*$ denoting the critical point. By using the chain rule and optimality conditions, similar to Section~\ref{sec:sobolev_replica_ansatz}, we have
$
\partial_\beta f_\beta(0) = - (\partial_\beta \Phi_\beta )\left(t_\mathrm{sym}^*\left(\beta\right), \hat{t}_\mathrm{sym}^*\left(\beta \right) \right)\,,
$
meaning that we only have to explicitly differentiate $\Phi$ in $\beta$ and only need to insert the solution of the saddlepoint equations, without differentiating through them. Hence
\begin{align}
\partial_\beta f_\beta(0) = - \left(\partial_\beta \Psi_w \right)\left(\hat{t}^*_{\text{sym}} \right) - \left( \partial_\beta \Psi_y \right) \left(t^*_{\text{sym}} \right)\,.
\end{align}
Calculating these derivatives at finite $\beta$, then substituting the optimal overlap parameters and taking the limit $\beta \to \infty$, we find, since the only explicit $\beta$-dependence within $\Psi_w$ as given by~\eqref{eq:psi-w-result} is in $A$, that
\begin{align}
	\label{eq:psiw-deriv-interm-sobo}
	\partial_\beta \Psi_w = \plim_{p\to\infty} \frac{-\lambda}{2p} \bigg(\trace\left[A^{-1}\right] + \trace \left[ A^{-1} \Xi A^{-1} \right] + \left\langle J_{\hat{s}_a} + J_{\hat{s}_b}, A^{-2} (J_{\hat{s}_a} + J_{\hat{s}_b}) \right\rangle \bigg)\,.
\end{align}
For the derivative $\partial_\beta \Psi_y$ from~\eqref{eq:psi-y-result-sobo}, we note that
\begin{align}
    \partial_\beta \mathcal{Z}\left[L_\beta\right]\left(\Upsilon; \Bar{m}_2, \Sigma_2\right) = -\EE_{\tilde{\Upsilon} \sim \mathcal{N}\left(s + q^{1/2}\xi, r - q \right)} \left[  \ell \left(\Upsilon, \tilde{\Upsilon} \right)   \exp \left\{ - \beta \ell \left(\Upsilon, \tilde{\Upsilon} \right) \right\}\right]\,,
\end{align}
such that
\begin{align}
  \partial_\beta \Psi_y = - \alpha \EE_\xi \left[ \int_\RR \dd \Upsilon \; \mathcal{Z}[P_\text{data}] \left(\Upsilon;\Bar{m}_1,\sigma_1^2\right) \frac{   \int_\RR \frac{\dd \tilde{\Upsilon} \; \ell \left(\Upsilon, \tilde{\Upsilon}\right)    \exp \left\{ \frac{-1}{2} \left(\tilde{\Upsilon} -  s+\sqrt{s}\xi \right)^\top (r - q)^{-1}\left(\tilde{\Upsilon} -  s+\sqrt{s}\xi \right) - \beta \ell \left(\Upsilon,  \tilde{\Upsilon}\right)  \right\}     }{(2 \pi)^{(k+1)/2} \det(r - q)^{1/2}}  }{   \int_\RR \frac{\dd \tilde{\Upsilon} \;  \exp \left\{ \frac{-1}{2} \left(\tilde{\Upsilon} -  s+\sqrt{s}\xi \right)^\top (r - q)^{-1}\left(\tilde{\Upsilon} -  s+\sqrt{s}\xi \right) - \beta \ell \left(\Upsilon,  \tilde{\Upsilon}\right)  \right\}     }{(2 \pi)^{(k+1)/2} \det(r - q)^{1/2}} }   \right]\,.
  \label{eq:psiy-deriv-interm-sobo}
\end{align}
The form of~\eqref{eq:psiy-deriv-interm-sobo} is essential in positing an ansatz for the critical overlap parameters  
$t_\mathrm{sym}^*(\beta \to \infty)$ and $\hat{t}_\mathrm{sym}^*(\beta \to \infty)$. We defer these computations to Subsection~\ref{sec:beta-scaling-sobo} below. This ansatz then permits us to obtain semi-analytical simplifications for the training error in the proportional asymptotics limit. Specifically,
using the scaling relations introduced in Subsection~\ref{sec:beta-scaling-sobo} below, in the low-temperature limit we have
$r - q = \Sigma_2  = \Sigma_2^\infty / \beta$
whereas the other parameters in~\eqref{eq:psiy-deriv-interm-sobo} do not scale with $\beta$. Applying Laplace's method for the two $\tilde{\Upsilon}$ integrals in the numerator and denominator of~\eqref{eq:psiy-deriv-interm-sobo} as $\beta \to \infty$ (while dropping the $\infty$ superscripts) then leads to
\begin{align}
\lim_{\beta \to \infty} \partial_\beta \Psi_y = - \alpha \EE_{\xi \sim \mathcal{N}(0,I_{k+1})} \left[\int_\RR \dd \Upsilon \; \mathcal{Z}[P_\text{data}]\left(\Upsilon;\Bar{m}_1,\sigma_1^2\right) \cdot \ell \left(\Upsilon, \tilde{\Upsilon}_\ell^* \left(\Upsilon; \bar{m}_2, \Sigma_2 \right) \right) \right]
\label{eq:dbetapsiw-general}
\end{align}
where we defined the minimizer
\begin{align}
	\label{eq:sobo-minimizer}
    \tilde{\Upsilon}_\ell^* \left(\Upsilon; \bar{m}_2, \Sigma_2 \right) = \argmin_{\tilde{\Upsilon} \in\RR^{k+1}} \left[ \frac{1}{2}\left(\tilde{\Upsilon} - \bar{m}_2\right)^\top \Sigma_2^{-1} \left(\tilde{\Upsilon} - \bar{m}_2\right) +  \ell(\Upsilon, \tilde{\Upsilon}) \right]\,,
\end{align}
and the parameters are (all of which are $O(1)$ in $\beta$)
\begin{align}
\bar{m}_1 = \langle f,q^{-1/2} \xi\rangle\,, \quad \sigma_1^2 =  1- \langle f, q^{-1} f\rangle \,, \quad \bar{m}_2 = s + q^{1/2}\xi \,, \quad \Sigma_2 = r - q\,.
\end{align}

It is possible to further simplify the expression for the limit of $\partial_\beta \Psi_y$ for specific loss functions~$\ell$ and data distributions~$P_\mathrm{data}$. We detail these calculations for the Gaussian observation model and Sobolev training below in Section~\ref{sec:add-gauss-standard-sobo}.

\subsubsection{Optimal overlap parameters as $\beta \to \infty$}
\label{sec:beta-scaling-sobo}

We will posit an ansatz for the optimal overlap parameters in the $\beta \to \infty$ limit here. This reparameterization yields an effective low-temperature system of saddle point equations which only needs to be solved once, instead of for each element of an increasing sequence of $\beta$ realizations. We will also consider the scaling of derived parameters
\begin{align}
\begin{cases}
	\Sigma_2 &= r - q \\
	\hat{\Sigma}_2 &= \hat{q} -  \hat{r} \odot \left(\1_{k+1} \1_{k+1}^\top + I_{k+1}\right)
\end{cases}
\end{align}
To propose this ansatz, we first examine $\partial_\beta \Psi_w$ as given in~\eqref{eq:psiw-deriv-interm-sobo}. The only explicit dependence of this expression on the inverse temperature~$\beta$ is through the matrix~$A = \beta \lambda I_p + \dots$ as defined through~\eqref{eq:def-a-init} and~\eqref{eq:def-a-matrices}. Consequently, we expect that both $A$ and $J$ (the latter was defined in~\eqref{eq:def-j-vectors}) scale linearly with $\beta$. We can then expect 
\begin{align}
\begin{cases}
	\hat{s} = \beta \hat{s}^\infty\,, \qquad \quad &\hat{q} = \beta^2  \hat{q}^\infty\,,\\
	\hat{f} = \beta \hat{f}^\infty\,, \qquad \quad &\hat{\Sigma}_2 = \beta \hat{\Sigma}_2^\infty\,.
\end{cases}
\end{align}
We now consider $\partial_\beta \Psi_y$ in~\eqref{eq:psiy-deriv-interm-sobo}. We will only obtain a nontrivial result, as calculated above using Laplace's method in~\eqref{eq:dbetapsiw-general}, if the integrals with respect to $\tilde{\Upsilon}$ will contract about their value at \eqref{eq:sobo-minimizer}. This behavior will occur only if $r-q$ is of order~$\beta^{-1}$ while the other overlap parameters in \eqref{eq:psiy-deriv-interm-sobo} are constant in $\beta$. Thus, we define
\begin{align}
\begin{cases}
	s =  s^\infty\,, \qquad \quad &q =  q^\infty \\
	f =  f^\infty\,, \qquad \quad &\Sigma_2 = \frac{1}{\beta} \Sigma_2^\infty
\end{cases}
\end{align}
Recalling the definitions
\begin{align}
\begin{cases}
	A &=  \beta\lambda I_p + \begin{pmatrix} \kappa_1 \1_p & \kappa_1' \Theta^\top V_k \end{pmatrix} \hat{\Sigma}_2 \begin{pmatrix} \kappa_1 \1_p^\top \\[2pt] \kappa_1' V_k^\top\Theta \end{pmatrix} \odot \Theta^\top \Theta + \begin{pmatrix} \kappa_* \1_p & \kappa_*' \Theta^\top V_k \end{pmatrix} \begin{pmatrix}
\hat{\Sigma}_{2,a} & 0\\
0 & \hat{\Sigma}_{2,c}
	\end{pmatrix} \begin{pmatrix} \kappa_* \1_p^\top \\[2pt] \kappa_*' V_k^\top\Theta \end{pmatrix} \odot I_p \\
	\Xi &= p \cdot \begin{pmatrix} \kappa_1 \1_p & \kappa_1' \Theta^\top V_k \end{pmatrix} \lp \hat{f}\hat{f}^\top \rp \begin{pmatrix} \kappa_1 \1_p^\top \\[2pt] \kappa_1' V_k^\top\Theta \end{pmatrix} \odot \left( \Theta^\top \theta_0 \theta_0^\top \Theta \right) + \begin{pmatrix} \kappa_1 \1_p & \kappa_1' \Theta^\top V_k \end{pmatrix}  \hat{q}  \begin{pmatrix} \kappa_1 \1_p^\top \\[2pt] \kappa_1' V_k^\top\Theta \end{pmatrix} \odot \Theta^\top \Theta  \nonumber \\
    &\qquad \qquad +\begin{pmatrix} \kappa_* \1_p & \kappa_*' \Theta^\top V_k \end{pmatrix} \begin{pmatrix}
    \hat{q}_a & 0\\
    0 & \hat{q}_c
    \end{pmatrix} \begin{pmatrix} \kappa_* \1_p^\top \\[2pt] \kappa_*' V_k^\top\Theta \end{pmatrix} \odot I_p 
    \end{cases}
\end{align}
from~\eqref{eq:def-a-init}, \eqref{eq:def-xi-init}, \eqref{eq:def-a-matrices} and \eqref{eq:def-j-vectors}, we can further define $A = \beta A^{\infty}$ and $\Xi = \beta^2 \Xi^\infty$.
In particular, we then see from~\eqref{eq:psiw-deriv-interm-sobo} that
$
\lim_{\beta \to \infty }\partial_\beta \Psi_w = \plim_{p\to\infty} \tfrac{-\lambda}{2p} \trace \left[ A^{-1} \Xi A^{-1} \right] 
$
in terms of the zero-temperature parameters as long as $\hat{s} = 0$, which leads to~\eqref{eq:training-err-reg} for the regularization term at optimality in the main text.\\

We now use the zero-temperature parameters to construct a set of corresponding saddle point equations.  All overlap parameters in the following are also in the $\beta \to \infty$ regime but we suppress the $\infty$ superscripts for concision. We can then write from~\eqref{eq:psi-w-saddle-finite-beta-1} and~\eqref{eq:psi-w-saddle-finite-beta-2} that 
\begin{align}
\begin{cases}
\label{eq:lowtempsobosaddle}
	f &= \displaystyle \plim_{p\to\infty} \begin{pmatrix} \kappa_1 \1_p^\top \\ \kappa_1' V_k^\top\Theta \end{pmatrix} (A^{-1} \odot (\Theta^\top \theta_0 \theta_0^\top \Theta)) \begin{pmatrix} \kappa_1 \1_p & \kappa_1' \Theta^\top V_k \end{pmatrix} \hat{f} \\[10pt]
	q &= \displaystyle \plim_{p\to\infty} \frac{1}{p} \Bigg( \ \begin{pmatrix} \kappa_1 \1_p^\top \\ \kappa_1' V_k^\top\Theta \end{pmatrix} ((A^{-1}\Xi A^{-1} ) \odot (\Theta^\top\Theta)) \begin{pmatrix} \kappa_1 \1_p & \kappa_1' \Theta^\top V_k \end{pmatrix} \\[10pt]
	& \qquad  \qquad   \qquad  \qquad +  \begin{pmatrix} (\kappa_*)^2 \1_p^\top ((A^{-1}\Xi A^{-1} ) \odot I_p) \1_p & 0 \\
	0 &  (\kappa_*')^2 V_k^\top\Theta ((A^{-1}\Xi A^{-1} ) \odot I_p) \Theta^\top V_k \end{pmatrix} \  \Bigg) \\[10pt]
	\Sigma_2 &= \displaystyle \plim_{p\to\infty} \frac{1}{p} \Bigg( \ \begin{pmatrix} \kappa_1 \1_p^\top \\ \kappa_1' V_k^\top\Theta \end{pmatrix} (A^{-1} \odot (\Theta^\top\Theta)) \begin{pmatrix} \kappa_1 \1_p & \kappa_1' \Theta^\top V_k \end{pmatrix} \\[10pt]
	& \qquad   \qquad \qquad   \qquad + \begin{pmatrix} (\kappa_*)^2 \1_p^\top (A^{-1} \odot I_p) \1_p & 0 \\ 
	0 & (\kappa_*')^2 V_k^\top\Theta (A^{-1} \odot I_p)  \Theta^\top V_k \end{pmatrix} \ \Bigg).
\end{cases}
\end{align}
With this, we have derived~\eqref{eq:update_nonhat_sobo_h1} in the main text. Notably, this set of equations does not depend on the choice $\ell$. Similarly, we use the fact that ${\cal N}(\bar{m}_1,\Sigma_2)$ concentrates in the low temperature limit to find from~\eqref{eq:psi-y-saddle-finite-temp} that
\begin{align}
\begin{cases}
0 &=  \alpha \EE_{\xi,\omega,\Upsilon}[ \Sigma_2^{-1} (\tilde{\Upsilon}_\ell^* - \bar{m}_2) ] \\[4pt]
\hat{f} &= \alpha \EE_{\xi,\omega,\Upsilon}\left[ \frac{\omega- \bar{m}_1}{\sigma_1^2}  \Sigma_2^{-1} (\tilde{\Upsilon}_\ell^* - \bar{m}_2)  \right] = \alpha \Sigma_2^{-1}\EE_{\xi,\omega}\left[ \partial_{\omega} \EE_{\Upsilon \mid \omega}[\tilde{\Upsilon}_\ell^*] \right]\\[4pt]
\hat{q}  &=  \alpha \; \Sigma_2^{-1}\EE_{\xi,\omega,\Upsilon}[(\tilde{\Upsilon}_\ell^* - \bar{m}_2)^{\otimes 2}] \Sigma_2^{-1}   \\[4pt]
\hat{\Sigma}_2  &=  \alpha  \Sigma_2^{-1} (I_{k+1} - \EE_{\xi,\omega,\Upsilon}\left[ \nabla_{\bar{m}_2} \tilde{\Upsilon}_\ell^{*\top} \right] ).
\end{cases}
\label{eq:psi-y-saddle-inf-beta-gen}
\end{align}
where $\xi\sim {\cal N}(0,I_{k+1})$ and $\omega\sim {\cal N}(\bar{m}_1,\sigma_1^2)$. The second equality for~$\hat{f}$ is obtained by recognizing $\omega$ is conditionally independent of~$\bar{m}_2$ given~$\xi$ and applying Stein's identity. Note that the first condition provides an implicit optimality condition for $s$. Here, it now only remains to specify $\ell$ to the standard subspace Sobolev loss~\eqref{eq:loss-sobo-standard} in order to arrive at~\eqref{eq:update_hat_sobo_h1} from the main text.

\subsection{Calculating the generalization error}

We can evaluate the generalization error~\eqref{eq:gen-error-def-appendix} from~\eqref{eq:generalization-err-derivative-grad} and~\eqref{eq:rf-derivative-summary} via
\begin{align}
\plim_{p \to \infty} \eps_{\text{gen}}(w^*) \mid \varpi = -\frac{1}{\alpha} \lim_{\beta \to \infty} \frac{1}{\beta}\left( \left. \frac{\partial}{\partial h} \right|_{h = 0} \Psi_{y_0} \right)\left(t_\mathrm{sym}^*(h=0), \hat{t}_\mathrm{sym}^*(h=0)\right)\,,
\end{align}
which requires, by the same reasoning as in previous Sections~\ref{sec:sobolev_replica_ansatz} and~\ref{sec:trainig-err}, only the partial derivative of the rate function $\Phi$ in $h$. Starting from the expression for the potential $\Psi_{y_0}$ given in~\eqref{eq:replica-potentials-summary}, we differentiate in $h$ at $h=0$ to obtain
\begin{align}
&\plim_{p \to \infty} \eps_{\text{gen}}(w^*) \mid \varpi = \lim_{\beta \to \infty} \EE_{\xi \sim {\cal N}(0, I_{k+1})} \left[ \int_{\RR^{k+1}} \dd \Upsilon \; \mathcal{Z}[P_\text{data}](\Upsilon; \bar{m}_1, \sigma_1^2) \; \EE_{\tilde{\Upsilon} \sim \mathcal{N}(\bar{m}_2, \Sigma_2/\beta)} \left[\norm{\Upsilon - \tilde{\Upsilon}}^2 \right] \right] \nonumber\\
&= \lim_{\beta \to \infty} \EE_{\xi \sim {\cal N}(0, I_{k+1})} \left[ \EE_{\omega \sim {\cal N}(\langle f, q^{-1/2} f \rangle, 1 - \langle f, q^{-1} f \rangle)} \left[ \EE_{\Upsilon \sim P_{\text{data}}(\cdot \mid \omega, \varpi)} \left[ \EE_{\tilde{\Upsilon} \sim {\cal N}(s + q^{1/2} \xi, \Sigma_2 / \beta)} \left[ \norm{\Upsilon - \tilde{\Upsilon}}^2 \right] \right] \right] \right]
\label{eq:gen-error-from-replica-interm}
\end{align}
where we have already made the $\beta$-scaling of all quantities explicit. By calculating the mean and covariance of the jointly normal random variables $(\xi, \omega, \tilde{\Upsilon})$ in this expression, we find that as $\beta \to \infty$, we can write~\eqref{eq:gen-error-from-replica-interm} as
\begin{align}
\plim_{p \to \infty} \eps_{\text{gen}}(w^*) \mid \varpi = \EE_{(\omega,\tilde{\Upsilon})} \left[ \EE_{\Upsilon \sim P_{\text{data}}(\cdot \mid \omega, \varpi)} \left[ \norm{\Upsilon - \tilde{\Upsilon}}^2 \right] \right]  \quad \text{ with } \quad (\omega,\tilde{\Upsilon})^\top \sim {\cal N} \left(\begin{pmatrix}
0\\s
\end{pmatrix}, \begin{pmatrix}
1 & f^\top\\
f & q
\end{pmatrix} \right)\,.
\label{eq:gen-error-from-replica}
\end{align}
Of course, this corresponds to the definition of the $H^1_k$ generalization error~\eqref{eq:gen-error-def-appendix} where we have effectively simply replaced the network output using the Gaussian equivalence theorem, as discussed around~\eqref{eq:get-distribution} in the main text already. While we could have arrived at this conclusion immediately on an intuitive level, as we did in the main text, the systematic derivation of the generalization error via an external field $h$ in the partition function makes it clear why exactly the overlap parameters as determined from the replica-symmetric saddle-point equations are indeed related to the generalization error. Finally, for the specific case of an additive Gaussian observation model~\eqref{eq:p-data-additive-gauss}, it is then straightforward to see that~\eqref{eq:gen-error-from-replica} implies~\eqref{eq:l2_gen_error} and~\eqref{eq:sobo_gen_error} in the main text.

\subsection{Specifying the setup to standard subspace Sobolev loss and additive Gaussian noise observations}
\label{sec:add-gauss-standard-sobo}

Here, we want to simplify the saddle-point equations~\eqref{eq:psi-y-saddle-inf-beta-gen} and the training error term~\eqref{eq:dbetapsiw-general} as much as possible for the standard loss function~\eqref{eq:loss-sobo-standard} given by $\ell(\Upsilon,\tilde{\Upsilon})= \frac{1}{2} \| \Upsilon -\tilde{\Upsilon}\|^2$. The minimizer in~\eqref{eq:sobo-minimizer} becomes
\begin{align}
	\tilde{\Upsilon}^*_\ell(\Upsilon;\bar{m}_2,\Sigma_2) \ =  \ (\Sigma_2^{-1} + I_{k+1})^{-1} \left(  \Sigma_2^{-1} \bar{m}_2 + \Upsilon \right).
\end{align}
If we further assume the Gaussian observation model~\eqref{eq:p-data-additive-gauss}
we compute the necessary quantities in~\eqref{eq:psi-y-saddle-inf-beta-gen} as
\begin{align}
\begin{cases}
	\EE_{\Upsilon \mid \omega} \left[ \frac{1}{2} \nabla_{\Upsilon}\| \Upsilon-\tilde{\Upsilon}^*_\ell\|^2 \right]  = \left(  \Sigma_2 + I_{k+1} \right)^{-2}  \left( \begin{pmatrix} \phi(\omega) \\ \varpi \phi'(\omega) \end{pmatrix} -  \bar{m}_2 \right) \\[6pt]
	\EE_{\Upsilon \mid \omega}\left[  \Sigma_2^{-1} (\tilde{\Upsilon}_\ell^* - \bar{m}_2) \right]  = (\Sigma_2 + I_{k+1})^{-1}\left(  \begin{pmatrix} \phi(\omega)\\  \varpi \phi'(\omega) \end{pmatrix} - \bar{m}_2  \right) \\[6pt]
	 \Sigma_2^{-1} \EE_{\Upsilon\mid \omega}\left[(\tilde{\Upsilon}_\ell^* - \bar{m}_2)^{\otimes 2} \right]  \Sigma_2^{-1}  =   (\Sigma_2 + I_{k+1})^{-1}\left( C_\eta + \left( \begin{pmatrix} \phi(\omega)\\  \varpi \phi'(\omega) \end{pmatrix} - \bar{m}_2 \right)^{\otimes 2} \right) (\Sigma_2 + I_{k+1})^{-1} 
	 \end{cases}
\end{align}
Putting everything together, we can then simplify~\eqref{eq:psi-y-saddle-inf-beta-gen} to
\begin{align}
\label{eq:update_hat_sobo_h1-appendix}
\begin{cases}
s_a &= \mathbb{E}[\phi(\omega)] \1_{\kappa_0 \neq 0}\\
s_b &= \varpi \mathbb{E}[  \phi'(\omega)] \1_{\kappa_0' \neq 0}\\
\hat{f} &= \alpha (\Sigma_2 + I_{k+1})^{-1} \begin{pmatrix} \EE[\phi'] \\ \varpi \EE[\phi''] \end{pmatrix} \\[10pt]
\hat{q} &= \alpha (\Sigma_2 + I_{k+1})^{-1} \Bigg( C_\eta + \EE\left[ \lp \begin{pmatrix} \phi(\omega)  \\ \varpi \phi'(\omega) \end{pmatrix} - \bar{m}_2 \rp^{\otimes 2}  \right] \Bigg)  (\Sigma_2 + I_{k+1})^{-1} \\
\hat{\Sigma}_2 &= \alpha \left( \Sigma_2^{-1} - \Sigma_2^{-1} (\Sigma_2^{-1} + I_{k+1})^{-1} \Sigma_2^{-1} \right) = \alpha (I_{k+1} + \Sigma_2)^{-1}
\end{cases}
\end{align}
Note that the expectations in~\eqref{eq:update_hat_sobo_h1-appendix} can all be reduced to one-dimensional Gaussian integrals  with respect to $\omega \sim {\cal N}(0,1)$. Indeed, we have
\begin{align}
&\EE_{\xi \sim \mathcal{N}(0,I_{k+1})}\EE_{\omega \sim {\cal N}(\langle f, q^{-1/2} \xi \rangle, 1 - \langle f, q^{-1} f \rangle)}\left[ \lp \begin{pmatrix} \phi(\omega)  \\ \varpi \phi'(\omega) \end{pmatrix} - s - q^{1/2} \xi \rp^{\otimes 2} \right] \nonumber
\\
&= q + \EE_{\omega \sim \mathcal{N}(0,1)} \left[ \left( \begin{pmatrix} \phi(\omega)\\ \varpi \phi'(\omega) \end{pmatrix} -s \right)^{\otimes 2} \right] - f \otimes
\EE_{\omega \sim \mathcal{N}(0,1)} \left[\begin{pmatrix}
\phi'(\omega)\\\varpi \phi''(\omega)
\end{pmatrix} \right] - \EE_{\omega \sim \mathcal{N}(0,1)} \left[\begin{pmatrix}
\phi'(\omega)\\\varpi \phi''(\omega)
\end{pmatrix} \right] \otimes f\,,
\end{align} 
which finally leads us to~\eqref{eq:update_hat_sobo_h1} in the main text (note that we kept a general weight $\tau>0$ instead of $\tau = 1$ for the derivative term of the loss function in the main text, but the corresponding saddlepoint equations for general $\tau$ can be derived from straightforward modifications of the calculations presented in this section). As for the training error~\eqref{eq:training-error-through-derivs}, for the loss function $\ell(\Upsilon,\tilde{\Upsilon})= \frac{1}{2} \| \Upsilon -\tilde{\Upsilon}\|^2$, the expression~\eqref{eq:dbetapsiw-general} becomes
\begin{equation}
\lim_{\beta \to \infty} \partial_\beta \Psi_y = -\frac{\alpha}{2} \trace \Bigg( (\Sigma_2+ I_{k+1})^{-2} \left(C_\eta + \mathbb{E}_{\xi \sim {\cal N}(0, I_{k+1})} \mathbb{E}_{\omega \mid \xi \sim \mathcal{N}(\bar{m}_1, \sigma_1^2)} \bigg[\left( \begin{pmatrix} \phi(\omega) \\ \varpi \phi'(\omega) \end{pmatrix} - \bar{m}_2\right)^{\otimes 2}\bigg] \right) \Bigg) = -\frac{1}{2}\left(\hat{q}_a + \trace[\hat{q}_c] \right)
\end{equation}
using~\eqref{eq:update_hat_sobo_h1-appendix}, which hence leads us to~\eqref{eq:training-err-l2} and~\eqref{eq:training-err-h1k} in the main text for the training error at optimality. To recognize that $\hat{q}_a$ indeed corresponds to the $L^2$ part of the training error and $ \trace[\hat{q}_c]$ to the $H^1_k$ semi-norm part, as claimed in the main text, we could have perturbed $L_\beta$ throughout all derivations of this section as
\begin{equation}
	\label{eq:externalized_Lbeta}
	L_{\beta}(h_1,h_2) := \exp\Bigg\{ -\beta(\Upsilon-\tilde{\Upsilon})^\top \begin{pmatrix}
	(1+h_1) & 0^\top \\
	0 & (1+h_2)I_k
\end{pmatrix} (\Upsilon-\tilde{\Upsilon}) \Bigg\} 
\end{equation}
and differentiate with respect to either $h_1$ or $h_2$ at $0$ to isolate the respective part of the training error. The result is the identification~\eqref{eq:training-err-l2} and~\eqref{eq:training-err-h1k} as expected.

%%%%%%%%%%%%%%%%%%%%%%%%%%%%%%%%%%%%%%%%%%%%%%%%%%%%%%%%%%%%%%%

\section{Simplifications in the $L^2$ training setting}
\label{app:l2-simplify}

This appendix contains a number of technical details for the simplifications of the replica-symmetric saddlepoint equations to the case of $L^2$ training without gradients discussed in Remark~\ref{rem:l2-case}. This setting reduces to~\cite{gerace-loureiro-krzakala-etal:2021,goldt-loureiro-reeves-etal:2022}, except that we also compute the $H^1_k$ generalization error produced by training with $L^2$ loss.

\subsection{The distribution of $(\varpi, s_b)$ for $L^2$ training}
\label{app:varpi-sb-l2}

In this subsection, we motivate~\eqref{eq:varpi-sb-distr-l2-train} for the joint distribution of the alignment parameter $\varpi = V_k^\top \theta_0$ and the projected network gradient mean $s_b = \kappa_0' V_k^\top \Theta w^*$ for $L^2$ training. Notably, equation~\eqref{eq:varpi-sb-distr-l2-train} departs from~\eqref{eq:def-sab} where $\varpi$ and $s_b = \varpi \EE \left[\phi'(\omega) \right] \1_{\kappa_0' \neq 0}$ are perfectly correlated for any $\tau > 0$. The key difference between these two situations is that $w^*$ is independent of $V_k$ for $\tau = 0$ only, and there is otherwise some additional randomness in $s_b$ that is independent of $\varpi$. Let us assume based on the numerical evidence in Figure~\ref{fig:varpi-sb} that $(\varpi, s_b)$ for $\tau = 0$ are jointly normally distributed in the proportional asymptotics limit with a non-degenerate covariance matrix. Of course, their mean will be $\EE \left[(\varpi, s_b) \right] = 0$ by independence of $V_k$ from all other random quantities for $L^2$ training. It remains to evaluate their second moments:
\begin{align}
\begin{cases}
&\EE \left[\varpi_i \varpi_j \right] = \EE \left[ \left \langle v_i, \theta_0 \right \rangle \left \langle v_j, \theta_0 \right \rangle   \right] = \EE \left[\norm{\theta_0}^2 \right] \delta_{ij} = \delta_{ij}\,,\\
&\EE \left[\varpi_i s_{b,j} \right] = \kappa_0' \EE \left[\left \langle v_i, \theta_0 \right \rangle \left \langle v_j, \Theta w^* \right \rangle \right] = \kappa_0' \EE \left[ \left \langle \theta_0, \Theta w^* \right \rangle \right] \delta_{ij} \overset{\eqref{eq:overlap-def}}{=} f_a \delta_{ij}\,,\\
&\EE \left[s_{b,i} s_{b,j} \right] = \left(\kappa_0' \right)^2\EE \left[ \left \langle v_i, \Theta w^* \right \rangle \left \langle v_j, \Theta w^*\right \rangle   \right] = \left(\kappa_0' \right)^2 \EE \left[\norm{\Theta w^*}^2 \right] \delta_{ij} \overset{\eqref{eq:overlap-def}}{=} \left(q_a - \kappa_*^2 \norm{w^*}^2 \right) \delta_{ij}\,.
\end{cases}
\end{align}
These results lead us to~\eqref{eq:varpi-sb-distr-l2-train} in the main text. Conditioned on the alignment parameter, the distribution of the overlap parameter $s_b \mid \varpi$ becomes Gaussian with mean $f_a \varpi$ and variance $(q_a - \kappa_*^2 \|w^*\|^2 - f_a^2) I_k$.

\begin{figure}
\centering
\begin{subfigure}{0.49\textwidth}
\includegraphics[width = \textwidth]{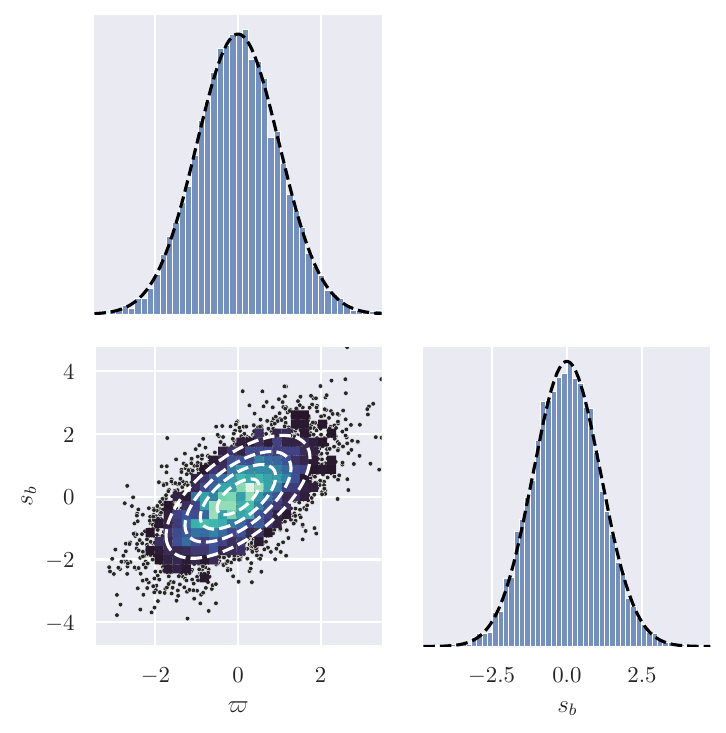}
\caption{$\tau = 0$ ($L^2$ training)}
\label{fig:varpi-sb-tau-0}
\end{subfigure}
\hfil
\begin{subfigure}{0.49\textwidth}
\includegraphics[width = \textwidth]{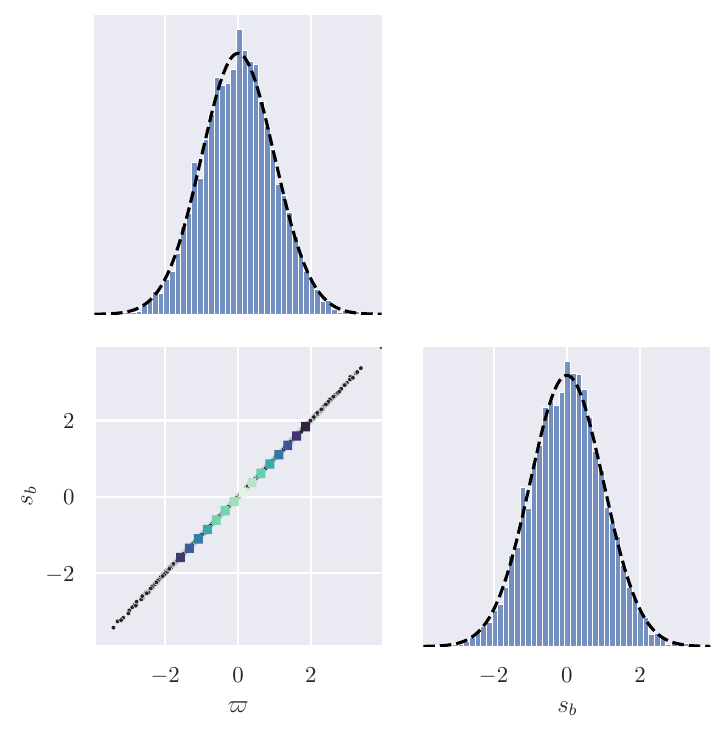}
\caption{$\tau = 1$ (standard Sobolev training)}
\label{fig:varpi-sb-tau-1}
\end{subfigure}
\caption{Joint distribution of $(\varpi, s_b)$ for $k=1$, comparing $L^2$ training ($\tau = 0$, left) and Sobolev training ($\tau = 1$, right) from samples. The histograms are generated from $5000$ samples of $\varpi = \left \langle v, \theta_0 \right \rangle$ and $s_b = \kappa_0' \left \langle v, \Theta w^* \right \rangle$ at $\alpha = 1.25$, $\gamma = 0.75$, $\lambda = 1.25 \cdot 10^{-4}$ in dimension $d = 1600$, with $\sigma = \text{SiLU}$, $\phi(\omega) = \omega + 1/\cosh \omega$, iid Gaussian random features, and noiseless data $C_\eta = 0$. The dashed lines are the theoretical predictions for the PDFs from solving the replica-symmetric saddlepoint equations: marginally $\varpi \sim {\cal N}(0,1)$ in both cases, but $s_b = \varpi \EE \left[\phi'(\omega) \right] = \varpi$ for $\tau > 0$ is expected to be perfectly correlated while for $\tau = 0$ the parameters $(\varpi, s_b)$ are jointly centered and nondegenerate normally distributed with $\EE \left[\varpi s_b\right] = f_a = 0.7102$ and $\EE \left[s_b^2 \right] = q_a - \kappa_*^2 \norm{w^*}^2 = 1.2683$.}
\label{fig:varpi-sb}
\end{figure}

\subsection{Expressing the random matrix traces for $L^2$ training as Stieltjes transforms}
\label{app:l2-stieltjes}

Here, we want to simplify the saddlepoint equations for $\Sigma_a, f_a, q_a$ in~\eqref{eq:update_nonhat_sobo_h1} for $L^2$ training where~\eqref{eq:zero-overlaps-l2} holds. We obtain from~\eqref{eq:update_nonhat_sobo_h1} that
\begin{align}
\begin{cases}
\Sigma_a &= \displaystyle \plim_{p \to \infty} \frac{1}{p} \left(\kappa_1^2 \trace \left[A^{-1} \Theta^\top \Theta \right] + \kappa_*^2 \trace\left[A^{-1} \right] \right)\\[6pt]
f_a &= \displaystyle \plim_{p \to \infty} \frac{1}{d} \kappa_1^2 \trace \left[A^{-1} \Theta^\top \Theta \right] \hat{f}_a\\[6pt]
q_a &=  \displaystyle \plim_{p \to \infty} \frac{1}{p}  \left(\kappa_1^2 \trace \left[A^{-1} \Xi A^{-1} \Theta^\top \Theta \right] + \kappa_*^2 \trace \left[ A^{-1} \Xi A^{-1}\right] \right)\\[6pt]
 &=  \displaystyle \plim_{p \to \infty} \frac{1}{p}  \left(\kappa_1^4 \left(\hat{q}_a + \hat{f}_a^2 / \gamma \right) \trace \left[\left(\Theta A^{-1} \Theta^\top \right)^2 \right] + \kappa_*^4 \hat{q}_a \trace \left[A^{-2}\right] + \kappa_1^2 \kappa_*^2 \left(2 \hat{q}_a + \hat{f}_a^2 / \gamma \right) \trace\left[\Theta A^{-2} \Theta^\top \right] \right)
\end{cases}
\label{eq:saddle-l2-partially-simplififed}
\end{align}
where $A$ and $\Xi$ are given by~\eqref{eq:l2-simplify-random-matrices}. Writing $A = c_2 \left(\frac{c_1}{c_2} I_p + \Theta^\top \Theta \right)$ with $c_2 =  \kappa_1^2 \hat{\Sigma}_a $ and defining
\begin{equation}
z := \frac{c_1}{c_2} = \frac{ \lambda + \kappa_*^2 \hat{\Sigma}_a}{\kappa_1^2 \hat{\Sigma}_a },
\end{equation}
for all $k \in \RR$ we have the following useful identities: 
\begin{align}
\trace \left[ \left(c_1 I_p + c_2 \Theta^\top \Theta \right)^k \right] &=  c_2^k \left(z^k (p-d) + \trace\left[ \left(z I_d + \Theta \Theta^\top\right)^k \right] \right) \;, \label{eq:svd1}\\
\Theta \left(c_1 I_p + c_2 \Theta^\top \Theta \right)^k \Theta^\top &= c_2^k \, \Theta \Theta^\top \left(z I_d + \Theta \Theta^\top \right)^k \;, \label{eq:svd2} \\
\trace \left[ \Theta \left(c_1 I_p + c_2 \Theta^\top \Theta \right)^k \Theta^\top \right] &\overset{\eqref{eq:svd2}}{=} c_2^k \trace\left[ \left(z I_d + \Theta\Theta^\top - z I_d \right) \left(z I_d + \Theta \Theta^\top \right)^k \right] \nonumber \\
&= c_2^{k}\trace \left[ \left(z I_d + \Theta \Theta^\top \right)^{k+1} \right]  - c_2^{k}z \trace \left[ \left(z I_d + \Theta \Theta^\top \right)^k \right]\,, \label{eq:svd3} \\
\left(\Theta \left(c_1 I_p + c_2 \Theta^\top \Theta \right)^k \Theta^\top \right)^2 &= c_2^{2k}\, \left(\Theta \Theta^\top \right)^2 \left(z I_d + \Theta \Theta^\top \right)^{2k} \;, \label{eq:svd22} \\
\trace \left[ \left( \Theta \left( c_1 I_p + c_2 \Theta^\top \Theta \right)^k \Theta^\top \right)^{2} \right] &\overset{\eqref{eq:svd22}}{=} c_2^{2k} \trace \left[ \left(z I_d + \Theta\Theta^\top - zI_d \right)^2
\left(z I_d + \Theta \Theta^\top \right)^{2k} \right] \nonumber \\
&= c_2^{2k} \trace \left[ \left(z I_d + \Theta \Theta^\top \right)^{2k+2} - 2z \left(z I_d + \Theta \Theta^\top \right)^{2k+1 } + z^2 \left(z I_d + \Theta \Theta^\top \right)^{2k}  \right]\,,
\end{align}
as well as 
\begin{equation}
\frac{\dd}{\dd z} \trace  \left[ \left(zI_d + \Theta\Theta^\top \right)^k \right] = k\trace \left[ \left(zI_d + \Theta\Theta^\top \right)^{k-1}\right] \label{eq:d_stieljes}.   
\end{equation}
All of these identities can be verified by inserting the singular value decomposition $\Theta = U \Sigma V^\top \in \RR^{d \times p}$. Introducing the Stieltjes transform
\begin{equation}
g_\mu(-z) := \plim_{p\to\infty} \frac{1}{d} \trace \left[ \left(z I_d + \Theta\Theta^\top \right)^{-1} \right],
\end{equation}
of $\Theta \Theta^\top \in \RR^{d \times d}$,
we can then re-write the saddle-point updates~\eqref{eq:saddle-l2-partially-simplififed} as given in~\eqref{eq:l2-nonhat-update} in the main text.

\subsection{Simplifying $q_c$ for $L^2$ training: factorization of the Hadamard product trace}
\label{app:hada-trace-l2}

Starting from the saddlepoint equation~\eqref{eq:update_nonhat_sobo_h1} for the ``non-hatted'' overlap parameters, in the $L^2$ training setting with the simplifications~\eqref{eq:zero-overlaps-l2} the equation for $q_c \in \RR^{k \times k}$ becomes
\begin{align}
q_c = \plim_{p \to \infty} \frac{1}{p} V_k^\top \Theta \left[ \left(A^{-1} \Xi A^{-1} \right) \odot \left( \left(\kappa_1'\right)^2 \Theta^\top \Theta + \left(\kappa_*'\right)^2 I_p \right) \right] \Theta^\top V_k\,,
\label{eq:qc-l2-full}
\end{align}
where $A$ and $\Xi$ are given by~\eqref{eq:l2-simplify-random-matrices}.
Replacing in distribution $\Theta^\top V_k = \zeta \in \RR^{p \times k}$ in the right-hand side of~\eqref{eq:qc-l2-full} with iid standard normal components, asymptotically independent of $\Theta^\top \Theta$, and assuming that the right-hand side concentrates onto its expectation over $\zeta$, yields
\begin{align}
q_c = \plim_{p \to \infty} \frac{1}{p} \zeta^\top \left[ \left(A^{-1} \Xi A^{-1} \right) \odot \left( \left(\kappa_1'\right)^2 \Theta^\top \Theta + \left(\kappa_*'\right)^2 I_p \right) \right] \zeta =  \plim_{p \to \infty} \frac{1}{p} \trace \left[ \left(A^{-1} \Xi A^{-1} \right) \odot \left( \left(\kappa_1'\right)^2 \Theta^\top \Theta + \left(\kappa_*'\right)^2 I_p \right) \right] I_k\,.
\label{eq:qc-l2-zeta}
\end{align}
The main difficulty in handling the Hadamard product $\odot$ is that it is not a ``spectral'' function but instead depends on the choice of basis with respect to which it is defined. We would hence like to eliminate it from our expressions as much as possible. In~\eqref{eq:qc-l2-zeta}, we can accomplish our goal by observing the following: abstractly, we are dealing with the evaluation of
\begin{align}
\frac{1}{p} \trace \left[f(C) \odot g(C) \right]\,,
\label{eq:trace-hadamard-prod-abstract}
\end{align}
where $C = \Theta^\top \Theta \in \RR^{p \times p}$ is a standard Wishart matrix with parameter $\gamma = d/p$---notably, this is the only random matrix in the expression---and $f$ and $g$ are spectral functions. Both $f(C)$ and $g(C)$ are diagonalized by the same set of orthonormal eigenvectors of $C$, which we summarize in an orthogonal ``eigenmatrix'' $U = [u_1, \dots, u_p] \in \RR^{p \times p}$. Consequently, we can write
\begin{align}
f(C) = \sum_{j = 1}^p f(\mu_j) u_j^{\otimes 2}\,, \quad g(C) = \sum_{k = 1}^p g(\mu_k) u_k^{\otimes 2} \,,
\end{align}
with the eigenvalues $\mu_j \geq 0$ of $C$. Computing the trace~\eqref{eq:trace-hadamard-prod-abstract} in the standard basis where the Hadamard product is defined using this eigen-decomposition then leads to
\begin{align}
\frac{1}{p} \trace \left[f(C) \odot g(C) \right] = \frac{1}{p} \sum_{i=1}^p \left( f(C) \right)_{ii} \left( g(C) \right)_{ii} = \frac{1}{p} \sum_{i,j,k=1}^p f(\mu_j) g(\mu_k) U_{ij}^2 U_{ik}^2\,.
\end{align}
For the standard Wishart matrix $C$, it is well-known~\cite{anderson:2003,bai-silverstein:2010} that the eigenmatrix $U$ is Haar-distributed on the orthogonal group $O(p)$ (a property which holds asymptotically for more general classes of random matrices but is true even pre-asymptotically for the normal case $\Theta_{ij} \sim {\cal N}(0, 1/d)$). As $p \to \infty$, we then replace almost surely
\begin{align}
\frac{1}{p} \sum_{i=1}^p U_{ij}^2 U_{ik}^2 \sim \frac{1}{p} \sum_{i=1}^p \EE_{U \sim {\cal U}(O(p))} \left[U_{ij}^2 U_{ik}^2 \right]\,,
\label{eq:haar-expec}
\end{align}
where $\sim$ denotes asymptotic equivalence. Expectations of matrix entries with respect to the Haar measure of the orthogonal group can be computed as~\citep{collins-sniady:2006}:
\begin{align}
\EE_{U \sim {\cal U}(O(p))} \left[U_{i_1j_1} \dots U_{i_{2n}j_{2n}} \right] = \sum_{p_1,p_2 \in P_{2n}} \delta_{i_1, i_{p_1(1)}} \dots  \delta_{i_{2n}, i_{p_1(2n)}} \delta_{j_1, j_{p_2(1)}} \dots  \delta_{j_{2n}, j_{p_2(2n)}} \left \langle p_1, \text{Wg} \; p_2 \right \rangle
\end{align}
where $P_{2n}$ is the set of all pairings of $[2n]$, and $\text{Wg}$ the orthogonal Weingarten function. We obtain two different cases in~\eqref{eq:haar-expec} for the number of pairings with nonzero contributions, depending on whether $j = k$ or $j \neq k$. Using the table provided by~\citet{collins-sniady:2006} for values of the orthogonal Weingarten function, we find
\begin{align}
\frac{1}{p} \sum_{i=1}^p \EE_{U \sim {\cal U}(O(p))} \left[U_{ij}^2 U_{ik}^2 \right] \sim \frac{1 + 2 \delta_{jk}}{p^2} \text{ as } p \to \infty\,,
\end{align}
such that
\begin{align}
\frac{1}{p} \trace \left[f(C) \odot g(C) \right] = \frac{1}{p} \sum_{i,j,k=1}^p f(\mu_j) g(\mu_k) U_{ij}^2 U_{ik}^2 \overset{p \to \infty}{\sim} \sum_{j,k=1}^p f(\mu_j) g(\mu_k)  \frac{1 + 2 \delta_{jk}}{p^2} \nonumber\\
= \left( \frac{1}{p} \sum_{j = 1}^p f(\mu_j) \right) \left( \frac{1}{p} \sum_{k = 1}^p g(\mu_k) \right) + \frac{2}{p} \left(\frac{1}{p} \sum_{j=1}^p f(\mu_j)g(\mu_j) \right)  \overset{p \to \infty}{\sim} \frac{1}{p} \trace \left[f(C) \right] \frac{1}{p} \trace \left[g(C) \right]\,,
\end{align}
with the diagonal term providing only a subleading correction. Applying this identity to~\eqref{eq:qc-l2-zeta} then leads to~\eqref{eq:trace-a-xi-a-l2} in the main text.

\subsection{Distribution of $H^1_k$ generalization error for $L^2$ training}
\label{app:eps-h1k-gen-l2}

For $L^2$ training, the corresponding $L^2$ generalization error~\eqref{eq:l2_gen_error} does not depend on the alignment~$\varpi$, as expected. However,  the $H^1_k$ generalization error~\eqref{eq:sobo_gen_error} is a random variable of both $(s_b, \varpi)$, whose joint law is given by~\eqref{eq:varpi-sb-distr-l2-train} in the main text. Writing the conditional random variable $s_b \mid \varpi$ as $f_a \varpi + \sqrt{q_a - \kappa_*^2 \|w^*\|^2 - f_a^2} \xi$, 
for $\xi \sim \mathcal{N}(0,I_k)$ independent of $\varpi \sim \mathcal{N}(0, I_k)$, 
the projected gradient error becomes
\begin{align}
\varepsilon_\text{gen}^{H^1_k} \mid \varpi, \xi &= \begin{pmatrix} \varpi^\top & \xi^\top \end{pmatrix} \begin{pmatrix} I_k \otimes \EE[(\phi'(\omega)-f_a)^2] & I_k \otimes -\EE[\phi(\omega)-f_a)] \sqrt{q_a - \kappa_*^2 \|w^*\|^2 - f_a^2} \\[5pt] I_k \otimes -\EE[\phi(\omega)-f_a)] \sqrt{q_a - \kappa_*^2 \|w^*\|^2 - f_a^2} &  I_k \otimes (q_a - \kappa_*^2 \|w^*\|^2 - f_a^2) \end{pmatrix}  
\begin{pmatrix} \varpi \\ \xi \end{pmatrix} \\
&\quad + \trace[C_{\eta,2:k+1,2:k+1}] + k \cdot q_c\;.
\end{align}
This recovers~\eqref{eq:h1k-gen-l2-train} in expectation, and demonstrates that marginally $\varepsilon_\text{gen}^{H^1_k}$ follows a generalized $\chi^2$-distribution with $2k$ degrees of freedom.

%%%%%%%%%%%%%%%%%%%%%%%%%%%%%%%%%%%%%%%%%%%
%%%%%%%%%%%%%%%%%%%%%%%%%%%%%%%%%%%%%%%%%%%
\section{Simplifications of the fixed-point equations: $\varpi$-dependence and random matrix traces}
\label{app:simplify-fp}

The right-hand sides of the saddlepoint equation~\eqref{eq:update_nonhat_sobo_h1} can be further simplified in the high-dimensional limit. Specifically, in this appendix, we first show that $\Sigma$ is a diagonal matrix and argue that only (specific combinations of) the diagonal elements of $q$ contribute to the training and generalization error. Furthermore, we demonstrate that the dependence of each of these relevant overlap parameters on the alignment $\varpi$ can be captured with only two degrees of freedom, which then leads to the simplified fixed-point equations~\eqref{eq:asymptotic_saddle_final_nonhat} and~\eqref{eq:asymptotic_saddle_final_hat} in the main text in terms of overlaps~\eqref{eq:varpi-scaling}.\\

We first observe that, conditioning on $V_k$, each component of the random variable $\zeta := \Theta^\top V_k \in \RR^{p \times k}$ is 
asymptotically equivalent to a standard Gaussian in law and
\emph{asymptotically uncorrelated} with each component of $\Theta^\top \Theta$. We further note that
\begin{align}
\label{eq:hadamard_i}
\plim_{p \to \infty} \frac{1}{p} \zeta_i^\top \left(A^{-1} \odot I_p \right) \1_p &= \frac{1}{p}\trace \left[ A^{-1} D_i \right],\\
\label{eq:hadamard_features}
\plim_{p \to \infty} \frac{1}{p} \zeta_i^\top \left( A^{-1} \odot \left(\Theta^\top \Theta \right) \right) \1_p &= \frac{1}{p}\trace \left[ A^{-1} \Theta^\top \Theta D_i \right],
\end{align}
where $\zeta_i$ is the $i$-th column of $\zeta$, and $D_i := \diag{\zeta_i}$. Since $A^{-1}$ in~\eqref{eq:def-random-matrices-for-saddlepoint-eqs} is positive definite, its diagonal elements are positive, and thus the elements on the diagonal of $A^{-1}D_i$ are equally likely to be positive or negative. Consequently, $\trace \left[ A^{-1} D_i \right] \sim O(\sqrt{p})$ follows the typical scaling of a sum of iid Bernoulli random variables, and 
$
\plim_{p \to \infty} \frac{1}{p} \zeta^\top (A^{-1} \odot I_p) \1_p = 0
$.
For~\eqref{eq:hadamard_features}, we apply the spectral theorem to $A^{-1}=U \diag{\Lambda} U^\top$ and $ (\Theta^\top \Theta D_i + D_i\Theta^\top \Theta ) = Q \diag{E} Q^\top $. Then, 
\begin{align}
	\trace \left[ A^{-1} \Theta^\top \Theta D_i \right]  &= \frac{1}{2}\trace\left[ U \diag{\Lambda} U^\top  Q \diag{E} Q^\top \right] = \frac{1}{2} \Lambda^\top \lp ( U^\top Q ) \odot (U^\top Q) \rp E
\end{align}
Note that the elements of $\Lambda^\top \lp ( U^\top Q ) \odot (U^\top Q) \rp$ are positive since $\Lambda \succ 0$, and $E$ is the vector of eigenvalues of $ (\Theta^\top \Theta D_i + D_i\Theta^\top \Theta ) $, which are symmetrically distributed. Hence $\trace \left[ A^{-1} \Theta^\top \Theta D_i \right] \sim O(\sqrt{p})$ and
\begin{align}
\label{eq:vanishing_terms}
\plim_{p \to \infty} \frac{1}{p} \zeta^\top (A^{-1} \odot (\Theta^\top \Theta)) \1_p  &= 0\,.
\end{align} 
As a result, with
\begin{align}
\begin{cases}
M_{00}  &= \kappa_1^2 I_p + \kappa_*^2 \Theta^\top \Theta\\[5pt]
M_{11} &= \left(\kappa_1'\right)^2 I_p + \left( \kappa_*' \right)^2 \Theta^\top \Theta \\[5pt]
A &=  \lambda I_p + \hat{\Sigma}_a M_{00} + \sum_{j \in[k]} \hat{\Sigma}_{c,jj} D_j M_{11} D_j \\[5pt]
\Xi &= \frac{1}{\gamma} \bigg( \kappa_1^2 \hat{f}_a^2 \Theta^\top \Theta + \kappa_1\kappa_1'\sum_{j\in[k]} \hat{f}_a \hat{f}_{b,j} (D_j \Theta^\top \Theta + \Theta^\top \Theta D_j) + \sum_{i,j \in [k]} (\kappa_1')^2 \hat{f}_{b,i} \hat{f}_{b,j} D_i \Theta^\top \Theta D_j \bigg) \notag \\
&\quad\quad + \hat{q}_a M_{00} + \kappa_* \kappa_*' \sum_{j \in [k]} \hat{q}_{b,j} (D_j \Theta^\top \Theta + \Theta^\top \Theta D_j) + \sum_{i,j\in[k]} \hat{q}_{c,ij} D_i M_{11} D_j \;,
\end{cases}
\end{align}
and replacing $\theta_0\theta_0^\top$ with its expectation,
equation~\eqref{eq:update_nonhat_sobo_h1} becomes
\begin{align}
\label{eq:asymptotic_simplified_saddle}
\begin{cases}
	\Sigma &= \displaystyle \plim_{p\to\infty} \frac{1}{p} \diag{ \trace \left[ A^{-1} M_{00}\right], \trace\left[A^{-1} D_1 M_{11} D_1\right], \dots,  \trace \left[A^{-1} D_k M_{11} D_k \right]}\\[10pt]
	f &= \displaystyle \plim_{p\to\infty} \frac{1}{p} \diag{\frac{1}{\gamma} \kappa_1^2 \trace \left[ A^{-1}  \Theta^\top \Theta \right], \frac{1}{\gamma} \left(\kappa_1' \right)^2 \trace \left[ A^{-1} D_1 \Theta^\top \Theta D_1 \right], \dots, \frac{1}{\gamma} \left(\kappa_1'\right)^2 \trace \left[ A^{-1} D_k \Theta^\top \Theta D_k \right]} \hat{f}\\[10pt]
	q &= \displaystyle \plim_{p\to\infty} \frac{1}{p} \Bigg( \begin{pmatrix} \kappa_1 \1_p^\top \\ \kappa_1' \zeta^\top  \end{pmatrix} \left( \left( A^{-1}\Xi A^{-1} \right) \odot \left(\Theta^\top\Theta \right) \right) \begin{pmatrix} \kappa_1 \1_p & \kappa_1' \zeta  \end{pmatrix} + \begin{pmatrix} \kappa_*^2 \trace \left[A^{-1}\Xi A^{-1} \right] & 0 \\ 0 & (\kappa_*')^2 \zeta^\top  \left( \left(A^{-1}\Xi A^{-1} \right) \odot I_p \right)  \zeta \end{pmatrix}  \Bigg)
\end{cases}.
\end{align}
Note that $\Sigma_{ij} = \hat{\Sigma}_{ij} = 0$ when $i \neq j$, which is consistent with~\eqref{eq:update_hat_sobo_h1}. By following the same arguments in \eqref{eq:hadamard_i} through \eqref{eq:vanishing_terms}, the dependence of $q$ on the hatted overlap parameters can be shown to be
\begin{align}
\begin{cases}
	q_a &\propto  \hat{f}_a^2, \; \hat{f}_b^2, \; \hat{q}_a, \; \hat{q}_{c,ii} \\
	q_b &\propto  \hat{f}_a\hat{f}_b,\; \hat{q}_{b} \\
	q_{c,ii} &\propto \hat{f}_a^2, \; \hat{f}_{b,i}^2, \; \hat{q}_a, \; \hat{q}_{c,ii}  \\
	q_{c,ij} &\propto \hat{f}_{b,i}\hat{f}_{b,j}, \; \hat{q}_{c,ij}  \;.
	\end{cases}
\end{align}
Specifically, defining $\Tr_p := \plim_{p \to \infty} \tfrac{1}{p}\trace$, we have 
\begin{align}
\begin{cases}
	q_a =&  \kappa_1^2 \frac{1}{\gamma} \hat{f}_a^2 \Tr_p \left[ A^{-1}\Theta^\top\Theta A^{-1} M_{00} \right]  + (\kappa_1')^2 \frac{1}{\gamma} \Tr_p \left[ A^{-1} D_1\Theta^\top \Theta D_1 A^{-1} M_{00} \right] \sum_{i \in [k]} \hat{f}_{b,i}^2  \\[5pt]
    & \quad + \hat{q}_a \Tr_p \left[ A^{-1} M_{00} A^{-1} M_{00} \right] + \Tr_p \left[ A^{-1}D_1 M_{11} D_1 A^{-1} M_{00} \right] \sum_{i \in [k]} \hat{q}_{c,ii} \;, \\[5pt]
	q_{c,ii} =&   \kappa_1^2 \frac{1}{\gamma} \hat{f}_a^2 \Tr_p \left[ A^{-1}\Theta^\top\Theta A^{-1} D_1M_{11}D_1  \right]  + (\kappa_1')^2 \frac{1}{\gamma} \hat{f}_{b,i}^2 \Tr_p \left[ A^{-1} D_1\Theta^\top \Theta D_1 A^{-1}D_1M_{11}D_1 \right]  \\[5pt]
    &\quad + (\kappa_1')^2 \frac{1}{\gamma}  \Tr_p \left[ A^{-1} D_1\Theta^\top \Theta D_1 A^{-1}D_2M_{11}D_2 \right]\sum_{j \neq i}\hat{f}_{b,j}^2  \\[5pt]
    &\quad + \;  \hat{q}_a \Tr_p \left[ A^{-1} M_{00} A^{-1}D_1M_{11}D_1 \right] +  \hat{q}_{c,ii} \Tr_p \left[ A^{-1}D_1 M_{11} D_1 A^{-1}  D_1M_{11}D_1 \right]   \\[5pt]
    &\quad + \;   \Tr_p \left[ A^{-1}D_1 M_{11} D_1 A^{-1}  D_2M_{11}D_2 \right] \sum_{j\neq i} \hat{q}_{c,jj} \,. 
    \end{cases} 
    \label{eq:q-eqs-symmetric}
\end{align}
Without loss of generality we fix $D_i=D_1$ and $D_j=D_2$ since (i) we expect the traces of the random matrices to converge to their expectation, and (ii) the components of each overlap parameters are permutation symmetric with respect to the indices $i \in [k]$ and $j\in [k]$, hence equivalent in law. Note that when $k=1$, terms dependent on $D_2$ drop out. Similar reductions can be obtained for $q_b$ and $q_{c,ij}$ when $i \neq j$, but we omit these here as these parameters do not contribute to the training or generalization error in this setting.\\

Finally, observe that the equation for $q_a$ only depends on the trace of $\hat{q}_c$. It follows that $(q_a, \trace{q_c}, \hat{q}_a, \trace{\hat{q}_c})$ form a closed system, i.e., it is not necessary to solve for the individual diagonal entries of $q_c$ or $\hat{q}_c$. As such, we note below the fixed point equation
\begin{align}
	\trace{q_c} &=  k \cdot \kappa_1^2 \frac{1}{\gamma} \hat{f}_a^2 \Tr_p \left[ A^{-1}\Theta^\top\Theta A^{-1} D_1M_{11}D_1 \right] + (\kappa_1')^2 \frac{1}{\gamma}  \Tr_p \left[ A^{-1} D_1\Theta^\top \Theta D_1 A^{-1}D_1M_{11}D_1 \right] \sum_{i\in[k]} \hat{f}_{b,i}^2  \\
    &\quad + (\kappa_1')^2 \frac{1}{\gamma}  \Tr_p \left[ A^{-1} D_1\Theta^\top \Theta D_1 A^{-1}D_2M_{11}D_2 \right] \cdot (k-1) \cdot \sum_{j \in[k] }\hat{f}_{b,j}^2  \\
    &\quad + \;  k \cdot \hat{q}_a \Tr_p \left[ A^{-1} M_{00} A^{-1}D_1M_{11}D_1 \right] + \Tr_p \left[ A^{-1}D_1 M_{11} D_1 A^{-1}  D_1M_{11}D_1 \right] \trace{\hat{q}_{c}}  \\[5pt]
    &\quad + \;   \Tr_p \left[ A^{-1}D_1 M_{11} D_1 A^{-1}  D_2M_{11}D_2 \right] \cdot (k-1) \cdot \trace{\hat{q}_{c}} \;,
\end{align}
which is obtained from summing~\eqref{eq:q-eqs-symmetric} over $i = 1, \ldots, k$.\\

The overlap parameters depend on $\varpi = V_k^\top \theta_0 \in \RR^k$, the random alignment between the subspace and teacher vectors, through the right-hand side of the saddle-point equations~\eqref{eq:update_hat_sobo_h1}. Asymptotically, $\varpi$ is distributed as ${\cal N}(0, I_k)$ and is uncorrelated with both $\zeta = V_k^\top \Theta$ and $\Theta^\top \theta_0$. We now make explicit the dependence on $\varpi$ and $k$ of the various overlap parameters, their hatted counterparts, and the resulting errors. In particular, this analysis allows us to characterize the \textit{distribution} of the overlap parameters and errors in the proportional asymptotics limit. As a consequence, the fixed-point iteration for~\eqref{eq:update_hat_sobo_h1} and~\eqref{eq:update_nonhat_sobo_h1} only needs to be solved numerically \textit{once} for a given set of parameters $\alpha$ and $\gamma$. Then, for all $k \geq 0$ and any realization of $\varpi=V_k^\top \theta_0$, we can predict the generalization error or compute any statistics of the error distributions.\\

As the equations for $\Sigma$ and $\hat{\Sigma}$ do not depend on $\varpi$, these matrices are constant with respect to $\varpi$. Then, by~\eqref{eq:update_hat_sobo_h1} through~\eqref{eq:def-random-matrices-for-saddlepoint-eqs}, all components of $f$ and $\hat{f}$ are  at most be linear in $\varpi$, and all components of $q$ and $\hat{q}$ at most quadratic in $\varpi$. Specifically, using the simplifications of the random matrices discussed above, we arrive at the ansatz~\eqref{eq:varpi-scaling} in the main text, where the superscript $(i)$ denotes coefficients of $i$-th order monomials in~$\varpi$.
Matching the terms in~\eqref{eq:asymptotic_simplified_saddle} by their order with respect to $\varpi$, and considering the corresponding fixed point equations for the hatted overlap parameters from~\eqref{eq:update_hat_sobo_h1} as well, yields the fixed-point equations~\eqref{eq:asymptotic_saddle_final_nonhat} and \eqref{eq:asymptotic_saddle_final_hat} in the main text.

%%%%%%%%%%%%%%%%%%%%%%%%%%%%%%%%%%%%%%%%%%%
%%%%%%%%%%%%%%%%%%%%%%%%%%%%%%%%%%%%%%%%%%%

\section{Brief introduction of selected ideas from free probability and operator-valued free probability}
\label{app:free-prob-intro}

In this appendix, we introduce some of the tools necessary to ``close'' the system of saddle-point equations~\eqref{eq:update_hat_sobo_h1} and~\eqref{eq:update_nonhat_sobo_h1} and hence evaluate the high-dimensional limits $\plim_{p \to \infty}$ on the right-hand side of~\eqref{eq:update_hat_sobo_h1} in terms of a purely finite-dimensional system of equations. We must use operator-valued free probability theory, as we exemplify in section~\ref{sec:linearization-main-text} of the main text, in order to evaluate the limits of the form $\plim_{p \to \infty}\tfrac{1}{p} \trace \left[ r (\Theta^\top \Theta, D_1, \dots, D_k ) \right]$ where $r$ is a rational function, $ \Theta^\top \Theta$ a Wishart matrix, and $D_i = \diag{\zeta_i}$  with $\zeta_1, \dots, \zeta_k \sim {\cal N}(0, I_p)$ iid. The presentation here is non-exhaustive and informal and closely follows the monograph by~\citet{mingo-speicher:2017} on the same topic where technical details and proofs can be found. Our goal is to provide a short and mostly self-contained practical exposition of some aspects of the theory that we use in the main text for those readers who are unfamiliar with free probability or its operator-valued extension.\\

%%%%%%%%%%%%%%%%%%%%%%%%%%%%%%%%%%%%%%%%%%%

\subsection{Non-commutative probability spaces and freeness}

First, as a reminder:
\begin{definition}
An algebra ${\cal A}$ over a field $K$ is a $K$-vector space equipped with a product operation $\cdot \colon {\cal A} \times {\cal A} \to {\cal A}$, $(a,b) \mapsto a \cdot b = ab$ that is $K$-bilinear---for example, matrices with real or complex entries where $\cdot$ is matrix multiplication. The algebra ${\cal A}$ is called unital if there exists $1 \in {\cal A}$ such that $1 \cdot a = a \cdot 1 = a$ for all $a \in {\cal A}$. A unital linear function $\varphi \colon \, {\cal A} \mapsto K$ is $K$-linear and maps $\varphi(1) = 1$. A $*$-algebra generalizes complex conjugation in a formal way---e.g.~complex matrices with conjugate transposition.
\end{definition}

This foundation is important for the following definition of a non-commutative probability space, which is the necessary space to discuss limits  as $N \to \infty$ of random matrices $X_N \in \CC^{N \times N}$ and their distributions, spectral densities, moments, Cauchy transforms, and so forth. The objects in a non-commutative probability space can be given directly by such limits (weakly/in distribution), so they may effectively be like ``infinitely large'' random matrices, as well-defined elements in an abstract space.

\begin{definition}
A non-commutative probability space $({\cal A}, \varphi)$ is a unital algebra ${\cal A}$ (always over $\CC$ in this appendix) together with a unital linear functional $\varphi \colon \,{\cal A} \to \CC$. An $a \in {\cal A}$ is called a non-commutative random variable or simply an element. If ${\cal A}$ is also a $*$-algebra and $\varphi(a^*a) \geq 0$ for all $a \in {\cal A}$, then $\varphi$ is called a ``state''.
\end{definition}

This definition is purely algebraic; there is no measure theory yet. The state $\varphi$ plays the role of an expectation, and when discussing limits of random matrices, we can for instance think of it as
\begin{align}
\varphi(\cdot) \; \text{``$=$''} \lim_{N \to \infty} \EE \left[\frac{1}{N} \trace \left[\cdot \right]\right]\,,
\label{eq:limit-state-trace}
\end{align}
with the left-hand side acting on the limiting object for the family of $N \times N$ matrices on the right.\\

The most important concept for our purposes is the following, which can to some extent be seen as a generalization of, or at least related to, the concept of independence of standard (commuting) random variables:

\begin{definition}
Let $({\cal A}, \varphi)$ be a non-commutative probability space, and let ${\cal A}_1, \dots, {\cal A}_s \subset {\cal A}$ be unital subalgebras of ${\cal A}$, e.g.\ generated each by a different element $a_i \in {\cal A}$, with ${\cal A}_i = \CC[a_i]$, such that the subalgebra consists of polynomials in $a_i$. Then, ${\cal A}_1, \dots, {\cal A}_s $ are called free or freely independent with respect to $\varphi$ if for all $r \geq 2$, $a_1, \dots, a_r \in {\cal A}$ with
\begin{enumerate}
\item $\varphi(a_i) = 0$ \textit{(centered)}
\item $a_i \in {\cal A}_{j_i}$ for some $j_i \in [s]$ \textit{(belong to the subalgebras)}
\item $j_1 \neq j_2$, $j_2 \neq j_3$, $\dots$, $j_{r-1} \neq j_r$ \textit{(neighboring elements not in same subalgebra)}
\end{enumerate}
we have
$
\varphi(a_1 a_2 \dots a_r) = 0
$.
We call elements of ${\cal A}$ free if their generated subalgebras are free.\\
\end{definition}

The definition of freeness is reminiscent of independence of centered random variables, but there are important differences because of the neighboring condition and non-commutativity. Two examples to illustrate the comparison:

\begin{example}
Consider $a \in {\cal A}_1$, $b \in {\cal A}_2$ free and not necessarily centered. By freeness, we have
$
\varphi \left( (a - \varphi(a) 1) \cdot (b - \varphi(b) 1) \right) = 0
$,
and by linearity and unitality of $\varphi$ this becomes
$
\varphi \left(ab - \varphi(a) b - \varphi(b) a + \varphi(a) \varphi(b) 1 \right) =  \varphi(ab) - \varphi(a) \varphi(b) = 0 \quad \Rightarrow \quad \varphi(ab) = \varphi(a) \varphi(b)
$,
which is exactly the same as for independent random variables. Similarly, for $a_1, a_2 \in {\cal A}_1$, $b \in {\cal A}_2$ with ${\cal A}_1, {\cal A}_2$ free, we have $\varphi(a_1 b a_2) = \varphi(a_1 a_2) \varphi(b)$. \\
\end{example}

\begin{example}
\label{example-free-commute}
Still assuming $a,b$ free, one can similarly show that
$
\varphi(abab) = \varphi(a^2) \varphi(b)^2 + \varphi(a)^2 \varphi(b^2)- \varphi(a)^2 \varphi(b)^2
$.
However, we find 
$
\varphi(a^2 b^2) = \varphi(a^2) \varphi(b^2)\,,
$
using our result from the first example. So these two expressions are not equal in general, and we cannot commute elements in this sense even if they are free. More concretely, if we were to demand that
$
\varphi(abab) \overset{!}{=} \varphi(a^2 b^2) \,,
$
then that would imply by the result above that
$
\varphi\left((a - \varphi(a)1)^2 \cdot (b - \varphi(b)1)^2 \right) = \varphi\left((a - \varphi(a)1)^2\right) \varphi\left( (b - \varphi(b)1)^2 \right) =0\,,
$
which is true only when $a$ or $b$ is a scalar multiple of the identity.\\
\end{example}

In principle, freeness directly provides a way to compute mixed moments of sums and products of free elements from their individual moments as in these examples. But the combinatorics can be complicated, and besides moments, we would also like to compute other quantities like traces of inverses or spectral densities (to be defined formally below). For this task, we would need to compute and sum all moments, which is tedious. A simpler way of handling addition of free elements is given by \textit{free cumulants} and the integral transformation/resolvent theory of Cauchy transforms introduced in the next section. As an example of how the theory is built algebraically, consider the following definition:

\begin{definition}
Let $({\cal A}_k, \varphi_k)$ for all $k \in \NN$ and $({\cal A}, \varphi)$ be non-commutative probability spaces and $I$ some index set. We say that $\left(b_k^{(i)} \right)_{i \in I} \subset {\cal A}_k$ converges in distribution to $\left(b^{(i)} \right)_{i \in I} \subset {\cal A}$, if for all $i_1, \dots, i_n \in I$ we have
\begin{align}
\lim_{k \to \infty} \varphi_k \left(b^{(i_1)}_k \dots b^{(i_n)}_k  \right) = \varphi \left(b^{(i_1)} \dots b^{(i_n)} \right)\,.
\end{align}
\end{definition}

So, again, convergence is defined purely algebraically, via convergence of all moments. Note that in the classical setting of probability theory, convergence in moments is \textit{not} the same as weak convergence.\\

We call families of random matrices $\left( X_{N,i} \right)_{i \in I}$ in $\RR^{N \times N}$ asymptotically free if
\begin{align}
\lim_{N \to \infty} \EE \left[ \frac{1}{N} \trace \left[ \left( X_{N, i_1}^k - c_{N,i_1,k} 1 \right) \dots\left( X_{N, i_n}^k - c_{N,i_n,k} 1 \right)   \right] \right] = 0
\end{align}
for all moments $k$ and all $i_1, i_2, \dots, i_n$ pairwise distinct. Here, the constants $c$ center the corresponding $k$-th moment.\\

Lastly, we do not require the formal definition of free cumulants for our purposes here, but they are defined by a combinatorical formula from the moments. For $a \in {\cal A}$, we write $\alpha_n^a = \varphi(a^n)$ for the $n$-th moment, and $\kappa_n^a$ for the $n$-th free cumulant, which depends on moments up to order $n$.

\begin{proposition}
\label{prop-cumu}
For $a,b \in {\cal A}$ free, we have $\kappa_n^{a+b} = \kappa_n^a + \kappa_n^b$.
\end{proposition}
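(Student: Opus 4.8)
The plan is to establish the additivity of free cumulants under freeness via the moment--cumulant relations and the vanishing of mixed free cumulants of free elements. First I would recall the combinatorial moment--cumulant formula: for any $a \in {\cal A}$,
\begin{align}
\alpha_n^a = \varphi(a^n) = \sum_{\pi \in NC(n)} \prod_{V \in \pi} \kappa_{|V|}^a\,,
\end{align}
where $NC(n)$ is the lattice of non-crossing partitions of $[n]$, and this relation defines the $\kappa_n^a$ uniquely by M\"obius inversion on $NC(n)$. More generally, one extends $\kappa_n$ to a multilinear functional $\kappa_n[\cdot,\dots,\cdot]$ on ${\cal A}^n$, so that $\kappa_n^a = \kappa_n[a,\dots,a]$, and the moment--cumulant relation holds in the mixed form $\varphi(a_1 \cdots a_n) = \sum_{\pi \in NC(n)} \kappa_\pi[a_1,\dots,a_n]$.

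The key structural input is the characterization of freeness in terms of cumulants: elements $a$ and $b$ are free with respect to $\varphi$ if and only if all mixed free cumulants vanish, i.e., $\kappa_n[c_1,\dots,c_n] = 0$ whenever $n \ge 2$, each $c_i \in \{a,b\}$, and the arguments are not all equal to $a$ or all equal to $b$. This equivalence is itself a theorem (due to Speicher) whose proof uses the characterization of freeness via vanishing alternating centered moments together with an induction on $NC(n)$; I would either invoke it as an established result from \cite{mingo-speicher:2017} or sketch the induction. Granting this characterization, the proof of Proposition~\ref{prop-cumu} is short: by multilinearity of $\kappa_n$,
\begin{align}
\kappa_n^{a+b} = \kappa_n[a+b,\dots,a+b] = \sum_{c_1,\dots,c_n \in \{a,b\}} \kappa_n[c_1,\dots,c_n]\,,
\end{align}
and every term in this sum with a mixed choice of arguments vanishes by the cumulant characterization of freeness, leaving only the two ``pure'' terms $\kappa_n[a,\dots,a] + \kappa_n[b,\dots,b] = \kappa_n^a + \kappa_n^b$.

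The main obstacle is justifying the cumulant characterization of freeness, which is the genuine content here; the additivity itself is then a one-line consequence of multilinearity. If a self-contained argument is desired rather than a citation, I would set it up as follows: express a general mixed centered moment $\varphi((a_{i_1} - \varphi(a_{i_1}))\cdots(a_{i_r} - \varphi(a_{i_r})))$, with alternating indices, via the mixed moment--cumulant formula, and show by induction on $r$ that the right-hand side forces all mixed cumulants to be zero (using that the non-crossing partition structure lets one peel off an interval block). Conversely, vanishing mixed cumulants plug back into the moment--cumulant formula to reproduce exactly the alternating-centered-moments definition of freeness. This induction on the non-crossing partition lattice is the technically delicate step; everything downstream, including the statement of Proposition~\ref{prop-cumu}, is then immediate. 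I would also remark that this is precisely why free cumulants, rather than classical cumulants, are the natural linearizing coordinates for free additive convolution $\boxplus$, which motivates the $R$-transform machinery used in the operator-valued setting of the main text.
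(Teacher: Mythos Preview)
Your proposal is correct and follows the standard route via Speicher's characterization of freeness by vanishing mixed cumulants. The paper, however, does not actually prove this proposition: it states it as a known fact from the monograph \cite{mingo-speicher:2017} and immediately moves on to use it. So there is nothing to compare at the level of argument; your write-up supplies exactly the proof the paper omits, and your identification of the only nontrivial step (the equivalence between freeness and vanishing mixed cumulants) is accurate.
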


This property is crucial to using the free cumulants to build the theory detailed below.

%%%%%%%%%%%%%%%%%%%%%%%%%%%%%%%%%%%%%%%%%%%

\subsection{Transformations and spectral densities}

%%%%%%%%%%%%%%%%%%%%%%%%%%%%%%%%%%%%%%%%%%%

\subsubsection{Definitions of different transforms}

We collect here definitions and useful identities for a number of transforms related to the Cauchy transform---the central object of study in free probability, as it has nice algebraic and analytical properties and can be used to extract further information (e.g.~spectral densities) or to directly compute some traces. For our application, we will ultimately be interested in traces of rational functions of random matrices.

\begin{definition}
\label{def:GM}
For $a \in {\cal A}$, we define the Cauchy transform as $G_a \colon \, \CC \to \CC$ with
\begin{align}
G_a(z) = \varphi \left( \left(z 1 - a \right)^{-1} \right) =\sum_{n = 0}^\infty \frac{\varphi(a^n)}{z^{n+1}} = \sum_{n = 0}^\infty \frac{\alpha_n^a}{z^{n+1}} = \frac{1}{z} M_a \left(\frac{1}{z} \right)
\end{align}
where
$
M_a(z) = \sum_{n=0}^\infty \alpha_n^a z^n
$
is the moment series of $a$. Upon initial definition, it is just a formal power series. We also write
$
F_a(z) = \frac{1}{G_a(z)}
$
as well as
$
H_a(z) = F_a(z) - z \,.
$
Note that for large $\abs{z}$, we have $G_a(z) \sim 1/z$.
\end{definition}

\begin{definition}
\label{def:g}
The Stieltjes transform $g_a \colon\, \CC \to \CC$---with the opposite sign convention compared to the Cauchy transform---is defined as
$
g_a(z) = -G_a(z) = \varphi \left( \left(a - z 1 \right)^{-1} \right)\,
$
and used more commonly in random matrix theory. This sign convention would make some of the following identities slightly messier, so the Cauchy transform is typically preferred in free probability.
\end{definition}

\begin{definition}
\label{def:c}
The cumulant series of $a \in {\cal A}$ is defined as
$
C_a(z) = \sum_{n=0}^\infty \kappa_n^a z^n
$
in analogy to the moment series.
\end{definition}

Thus, if $a,b$ are free, we have
$
C_{a+b}(z) + 1 = C_a(z) + C_b(z)
$.
The following identity is proved through nontrivial combinatorics but serves as the key technical result for what follows:

\begin{theorem}
\label{thm:ma-ca-relation}
We have $M_a(z) = C_a(z M_a(z))$ for all $z$.
\end{theorem}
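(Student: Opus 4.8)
The plan is to prove the identity $M_a(z) = C_a(z M_a(z))$ at the level of formal power series, by expanding both sides in terms of the combinatorial definitions of moments and free cumulants and matching coefficients of $z^n$. Recall that the free cumulants $(\kappa_k^a)_{k\ge 1}$ are defined by the moment--cumulant relations
\begin{align}
\alpha_n^a = \sum_{\pi \in NC(n)} \prod_{B \in \pi} \kappa_{|B|}^a\,,
\label{eq:mc-rel}
\end{align}
where $NC(n)$ is the lattice of noncrossing partitions of $\{1,\dots,n\}$ and the product runs over the blocks $B$ of $\pi$. (If the paper has not yet introduced \eqref{eq:mc-rel} explicitly, I would first recall that this is precisely the combinatorial definition of free cumulants alluded to before Proposition~\ref{prop-cumu}, since Theorem~\ref{thm:ma-ca-relation} is the analytic repackaging of exactly this relation.) The strategy is then the standard noncrossing-partition decomposition: in any $\pi \in NC(n)$, isolate the block $B_1$ containing the element $1$, say $B_1 = \{1 = i_1 < i_2 < \dots < i_s\}$ with $|B_1| = s$; by noncrossing-ness, the remaining elements split into the $s$ ``gaps'' between consecutive $i_j$'s (and after $i_s$), and $\pi$ restricted to each gap is itself an arbitrary noncrossing partition of that gap's elements.

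Carrying this out: summing over the choice of $s$ and over the sizes $m_1,\dots,m_s \ge 0$ of the gaps with $m_1 + \dots + m_s = n - s$, the relation \eqref{eq:mc-rel} factorizes as
\begin{align}
\alpha_n^a = \sum_{s \ge 1} \kappa_s^a \sum_{\substack{m_1,\dots,m_s \ge 0\\ m_1 + \dots + m_s = n-s}} \prod_{j=1}^s \alpha_{m_j}^a\,.
\label{eq:alpha-recursion}
\end{align}
Multiplying \eqref{eq:alpha-recursion} by $z^n$ and summing over $n \ge 0$ (the $n=0$ term giving $\alpha_0^a = 1$ on both sides, matched by the $\kappa$-free constant term in $C_a$), the right-hand side becomes $\sum_{s\ge0}\kappa_s^a \big(z\, M_a(z)\big)^s$ after recognizing that $\sum_{m\ge0}\alpha_m^a z^m = M_a(z)$ and that the inner sum is the coefficient extraction of the $s$-fold product. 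With the convention $\kappa_0^a = 1$ built into $C_a(z) = \sum_{n\ge0}\kappa_n^a z^n$ (Definition~\ref{def:c}), this is exactly $C_a\big(z M_a(z)\big)$, giving the claim. One must be slightly careful bookkeeping the index shift by $s$ (the $z^s$ absorbed into $(zM_a(z))^s$ versus the $z^n$ we summed against), but this is a routine reindexing.

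The main obstacle—or rather, the main thing requiring care—is justifying that \eqref{eq:alpha-recursion} correctly enumerates $NC(n)$ without over- or under-counting, i.e.\ verifying that the map $\pi \mapsto (B_1, \pi|_{\text{gap}_1}, \dots, \pi|_{\text{gap}_s})$ is a bijection onto $\{$first block containing $1\} \times \prod_j NC(\text{gap}_j)$. This is the classical recursive structure of the noncrossing lattice and hinges on the noncrossing condition: no block of $\pi$ other than $B_1$ can straddle two distinct gaps, precisely because it would then cross $B_1$. I would state this as a short lemma and prove it by the interval-nesting characterization of noncrossing partitions, then feed it into the generating-function manipulation above. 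Since everything takes place in the ring of formal power series $\CC[[z]]$, no convergence or analytic-continuation issues arise at this stage; the analytic interpretation (and the resulting $R$-transform formalism) is a separate downstream matter not needed for the identity itself.
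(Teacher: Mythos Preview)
Your proof is correct and is precisely the standard noncrossing-partition argument. However, the paper does not actually prove this theorem: it merely states it as a known result, remarking that it ``is proved through nontrivial combinatorics'' and deferring to the monograph of Mingo--Speicher~\cite{mingo-speicher:2017} which the appendix follows. Your argument---isolating the block containing $1$, using the noncrossing condition to factorize over the gaps, and then summing into the generating-function identity---is exactly the ``nontrivial combinatorics'' the paper alludes to, so there is nothing to compare against; you have simply filled in what the paper left as a citation.
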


The logic here is that objects like the Cauchy transform involve the moment series, which relates to the cumulant series, which in turn is easy to calculate for sums of free elements. This approach essentially yields the free convolution and subordination theory below and also underlies the operator-valued equivalents.

\begin{definition}
\label{def-rks}
The $R$-transform of $a \in {\cal A}$ is defined as $R_a \colon\, \CC \to \CC$ with
\begin{align}
R_a(z) := \frac{C_a(z) - 1}{z} = \sum_{n=0}^\infty \kappa_{n+1}^a z^n\,,
\label{eq:def-r}
\end{align}
and the $K$-transform is
\begin{align}
K_a(z) := R_a(z) + \frac{1}{z} = \frac{C_a(z)}{z}\,.
\label{eq:def-k}
\end{align}
Lastly, the $S$-transform, which plays a similar role to the $R$-transform for products of free elements instead of sums, is defined as
\begin{align}
S_a(z) = \frac{1+z}{z} M_a^{-1}(z)\,,
\label{eq:def-s}
\end{align}
where $M_a^{-1}$ is the inverse function of $M_a$.
\end{definition}

A few simple observations follow directly from the definitions of the various transforms and the main technical result theorem~\ref{thm:ma-ca-relation}:

\begin{enumerate}
\item We have
$
G_a(K_a(z)) = K_a(G_a(z)) = z\,,
$
so these are inverse functions of each other. We can verify, for example, that
\begin{align}
K_a(G_a(z)) \overset{\text{def.~\ref{def-rks}}}{=} \frac{1}{G_a(z)} C_a(G_a(z)) \overset{\text{def.~\ref{def:GM}}}{=} \frac{1}{G_a(z)} C_a\left(\frac{1}{z} M_a\left(\frac{1}{z} \right)\right) \overset{\text{thm.~\ref{thm:ma-ca-relation}}}{=} \frac{M_a\left(\tfrac{1}{z} \right)}{G_a(z)} \overset{\text{def.~\ref{def:GM}}}{=} z\,.
\end{align}
\item Since the definition~\eqref{eq:def-r} removes the constant term in the series, we have, by the addition property prop.~\ref{prop-cumu} of free cumulants for free elements $a,b$:
\begin{align}
R_{a+b}(z) = R_a(z) + R_b(z)\,.
\label{eq:sum-r}
\end{align}
Equivalently, one can write $F_{a+b}^{-1}(z) = F_{a}^{-1}(z) + F_{b}^{-1}(z) - z$.
This identity is one way---the traditional one as developed by~\citet{voiculescu:1986} and summarized in \cite{voiculescu-dykema-nica:1992}---to compute the Cauchy transform of the sum $a+b$ of free elements from $G_a$ and $G_b$. Note that by the first observation, the $R$-transform is related to the inverse function of $G$. Hence, using Equation~\eqref{eq:sum-r} to obtain $G_{a+b}$ requires inverse functions, which can be difficult to compute, even numerically. For this reason, the subordinator approach to free convolutions introduced below is often preferred for numerical computations.
\item We also note that for the multiplication of free elements $a,b$, it holds that
\begin{align}
S_{ab}(z) = S_a(z) S_b(z)\,,
\label{eq:prod-s}
\end{align}
so free multiplicative convolutions also require inverse function computations if we find $G_{ab}$ using~\eqref{eq:prod-s}.
\end{enumerate}

%%%%%%%%%%%%%%%%%%%%%%%%%%%%%%%%%%%%%%%%%%%

\subsubsection{Spectral density definition and relation to Cauchy transform}

One defines a distribution associated with $a \in {\cal A}$, as before, algebraically:

\begin{definition}
For $a \in {\cal A}$, an element of a non-commutative probability space $({\cal A}, \varphi)$, we define $\mu_a \colon\, \CC[x] \to \CC$ as the map from polynomials $p$ in $a$ to their expectations
$
\mu_a[p] := \varphi(p(a))\,.
$
If $a$ is self-adjoint in a $C^*$ algebra with norm $1$ for $\varphi$ positive, then, under some additional assumptions, there exists a probability measure on $\RR$ such that
$
 \mu_a[p] = \int_{\RR} p(x) \dd \mu_a(x)
$
for all $p$.
\end{definition}

We can then compute e.g.\ the Cauchy transform from this probability measure via
$
G_a(z) =  \varphi \left( \left(z 1 - a \right)^{-1} \right) = \int_\RR \frac{\dd \mu_a(t)}{z-t}
$
which is well-defined for all $z \in \CC \setminus \text{supp}(\mu_a)$. Importantly, if $\mu_a$ has a Lebesgue density at $x \in \RR$, we can recover it from its Cauchy transform as follows. Note that for $\eta > 0$, we have
\begin{align}
G(x + i \eta) =  \int_\RR \frac{\dd \mu_a(t)}{x + i \eta -t} = \int_{\RR}  \frac{x-t}{(x-t)^2 + \eta^2} \dd \mu_a(t) -i \int_{\RR}  \frac{\eta}{(x-t)^2 + \eta^2} \dd \mu_a(t)\,,
\end{align}
so that
\begin{align}
-\tfrac{1}{\pi} \text{Im} \, G_a(x + i \eta) = \left(P_\eta * \tfrac{\dd \mu_a}{\dd x} \right)(x)
\label{eq:dens-convolution-cauchy}
\end{align}
is the convolution of the Lebesgue density of $\mu_a$, if it exists, at $x$ with the Poisson kernel
$
P_\eta(t) = \tfrac{1}{\pi} \tfrac{\eta}{t^2 + \eta^2}
$
which forms a Dirac sequence as $\eta \to 0$. Hence, $-\tfrac{1}{\pi} \text{Im} \, G_a(x + i \eta)$ gives the density at $x$ smeared out over a scale $\eta$, and we have
\begin{align}
\lim_{\eta \downarrow 0} -\frac{1}{\pi} \text{Im} \, G_a(x + i \eta) = \frac{\dd \mu_a}{\dd x}(x)\,.
\label{eq:density-from-cauchy}
\end{align}
If $\mu_a$ has atoms, one needs to be more careful; it still holds that
$
\lim_{\eta \downarrow 0} -\frac{1}{\pi} \int_{x_0}^{x_1} \text{Im} \, G_a(x + i \eta) \dd x = \mu_a((x_0, x_1)) + \frac{1}{2} \mu_a(\{x_0,x_1\})
$.\\

\begin{definition}
The distribution of $a+b$ for $a,b$ free with distributions $\mu$ for $a$ and $\nu$ for $b$ is called the free additive convolution and written as $\mu \boxplus \nu$. It is constructed from the $R$-transforms of the respective measures according to~\eqref{eq:sum-r}. Analogously, the free multiplicative convolution $\mu \boxtimes \nu$ is defined according to~\eqref{eq:prod-s}. We refer to~\cite{ji:2021} for a recent introduction and analysis of free multiplicative convolutions, as we only require free additive convolutions in the following.
\end{definition}

\subsection{Examples of random matrix ensembles}

We refer to~\citet{livan-novaes-vivo:2018} for an introduction and~\citet{bai-silverstein:2010} for further details on these standard ensembles.\\

\begin{example}
We call
$
A_N = \frac{1}{M} X_N X_N^\top \in \RR^{N \times N}
$
with $X_N \in \RR^{N \times M}$ and $X_{ij} \overset{\text{iid}}{\sim}{\cal N}(0, \sigma^2)$ a Wishart or Wishart--Laguerre matrix. For $N,M \to \infty$ with $c = N/M \to \text{const.} \in (0,\infty)$, its limiting spectral distribution is the Marchenko--Pastur (MP) law
\begin{align}
\rho_{\text{MP}}^{c,\sigma^2}(x) \,\dd x = \begin{cases}
\left(1 - \frac{1}{c} \right) \delta_0(x) + \frac{1}{\sigma^2} \rho_{\text{bulk}}^c\left(x/\sigma^2\right) \dd x\,, \quad & c >1\\[2pt]
\frac{1}{\sigma^2} \rho_{\text{bulk}}^c\left(x/\sigma^2\right) \dd x\,, \quad & 0 < c \leq 1\,.
\end{cases}
\label{eq:mp-law}
\end{align}
Here, the continuous part has the density
\begin{align}
\rho_{\text{bulk}}^c(x) = \frac{1}{2\pi c x} \sqrt{\left((1+\sqrt{c})^2 -x \right) \left(x - (1 - \sqrt{c})^2 \right)} \1_{((1 - \sqrt{c})^2, (1 + \sqrt{c})^2)}(x)\,.
\label{eq:mp-density}
\end{align}
The Stieltjes transform of this distribution can be computed to be
\begin{align}
g_{\rho_{\text{MP}}^{c,\sigma^2}}(z) = \frac{1}{2 c \sigma^2 z} \left(\sigma^2 (1-c) - z - \sqrt{\left(z - \sigma^2 (1 + c) \right)^2 - 4 c \sigma^4} \right) \qquad  \forall z \in \CC \setminus \text{supp}(\rho_{\text{MP}}^{c})\,.
\label{eq:mp-stieltjes-appendix}
\end{align}
A brief derivation of these well-known results from first principles can also be found in~\cite{aguirre-lopez-franz-pastore:2025} which proceeds by calculating the Stieltjes transform of $A_N$ using the saddlepoint method as $N,M \to \infty$.
\end{example}

\begin{example}
The Gaussian orthogonal ensemble (GOE) is the other standard ensemble one typically considers---it does not consist of orthogonal random matrices but rather Gaussian random matrices which have distribution invariant under orthogonal transformations. We consider random symmetric $N \times N$ matrices
$
A_N = \frac{1}{2\sqrt{N}} \left(X_N + X_N^\top \right) 
$
where $X_{ij} \sim {\cal N}(0,\sigma^2)$ iid, i.e.\ the entries of $A_N$ are independent Gaussian up to symmetry, and have different variances on the diagonal compared to the off-diagonals. It is well-known, and can be derived analogously to the MP law, that the distribution of eigenvalues of $A_N$, as $N \to \infty$, becomes the semicircle law
\begin{align}
\rho_{\text{semi-circ}}^{\sigma^2}(x)\, \dd x = \frac{1}{\pi \sigma^2} \sqrt{2 \sigma^2 - x^2} \,\1_{[-\sqrt{2}\sigma, \sqrt{2} \sigma]}(x) \dd x\,.
\end{align}
The Stieltjes transform of this measure can be computed as
$
g_{\rho_{\text{semi-circ}}^{\sigma^2}}(z) = \left(-z + \sqrt{z^2 - 2 \sigma^2} \right) / \sigma^2\,.
$

\end{example}

%%%%%%%%%%%%%%%%%%%%%%%%%%%%%%%%%%%%%%%%%%%

\subsection{An example of a free additive convolution}
\label{example:free-add-conv-n01}

In our Sobolev training setting, we construct $k+1$-dimensional structures on the right hand side of \eqref{eq:asymptotic_saddle_final_nonhat} using diagonal Gaussian matrices $D_i = \diag{\zeta_i}$ where $\zeta_i \sim {\cal N}(0, \text{Id}_p)$ iid. By example~\ref{example-free-commute}, these matrices are not free with respect to each other as they commute but are not multiples of the identity (still, Wishart matrices $\Theta^\top \Theta$ are asymptotically free of $\{D_1, \dots, D_k\}$ by~\cite[Chapter 4]{mingo-speicher:2017}).\\

We can also come to this conclusion as follows: the spectral distribution of $D_i$ is obviously ${\cal N}(0,1)$. The spectral distribution of $D_1 + D_2$ is ${\cal N}(0,2)$ by adding the independent Gaussian random variables on their diagonal. But if $D_1, D_2$ were free, the spectral distribution of their sum $D_1 + D_2$ would converge to ${\cal N}(0,1) \boxplus {\cal N}(0,1)$, which is \textit{not} equal to ${\cal N}(0,2)$ as we check below. By~\cite[Chapter 4]{mingo-speicher:2017}, conjugating one of the diagonal matrices, or even the same one, with a random orthogonal matrix ``randomizes the eigenvectors sufficiently'' to make them free: $D_1$ and $U D_2 U^\top$, with $U \sim \text{Haar}(O(N))$, are indeed asymptotically free, and their distribution is given by ${\cal N}(0,1) \boxplus {\cal N}(0,1)$.\\

We verify these properties via sampling and explicit computation of the free additive convolution in Figure~\ref{fig:free-diags}. For $\mu = {\cal N}(0,1)$, we find $\mu \boxplus \mu$ by computing its Stieltjes transform $g_{\mu \boxplus \mu}(x + i \eta)$ close to the real axis. We have
\begin{align}
g_\mu(z) = \int_{- \infty}^\infty \frac{1}{x-z} \frac{1}{\sqrt{2 \pi}} e^{-x^2/2} \dd x = i \sqrt{\frac{\pi}{2}} \frac{i}{\pi} \int_{- \infty}^\infty \frac{e^{-t^2}}{z/\sqrt{2} - t} \dd t =  i \sqrt{\frac{\pi}{2}} w\left(\frac{z}{\sqrt{2}} \right)\,,
\end{align}
with the Faddeeva function
$
w(z) =  \frac{i}{\pi} \int_{- \infty}^\infty \frac{e^{-t^2}}{z - t} \dd t  = e^{-z^2} \text{erfc}(-iz)
$.
Then, we set $F_\mu(z) = -1/g_\mu(z)$ and compute
$
F_{\mu \boxplus \mu}^{-1}(z) = 2 F_\mu^{-1}(z) - z
$.
Inverse functions are evaluated numerically with a standard root-finder for which we separate arguments and function values into vectors of real and imaginary parts. The resulting PDF of $\mu \boxplus \mu$ in Figure~\ref{fig:free-diags} looks relatively similar to a ${\cal N}(0,2)$ density, but slight differences are visible, and sampling confirms the theoretical result. We hence note that if we have $k \geq 2$ gradient observations, we need to be careful with the $D_i$ matrices in our application below as they are not free with respect to each other. Note that this differs from the computations of~\citet{adlam-pennington:2020} in a related precise asymptotic analysis, where they are able to linearize the problem (as detailed below) to a Gaussian block matrix with dense, asymptotically free blocks.

\begin{figure}
\centering
\includegraphics[width = .4 \textwidth]{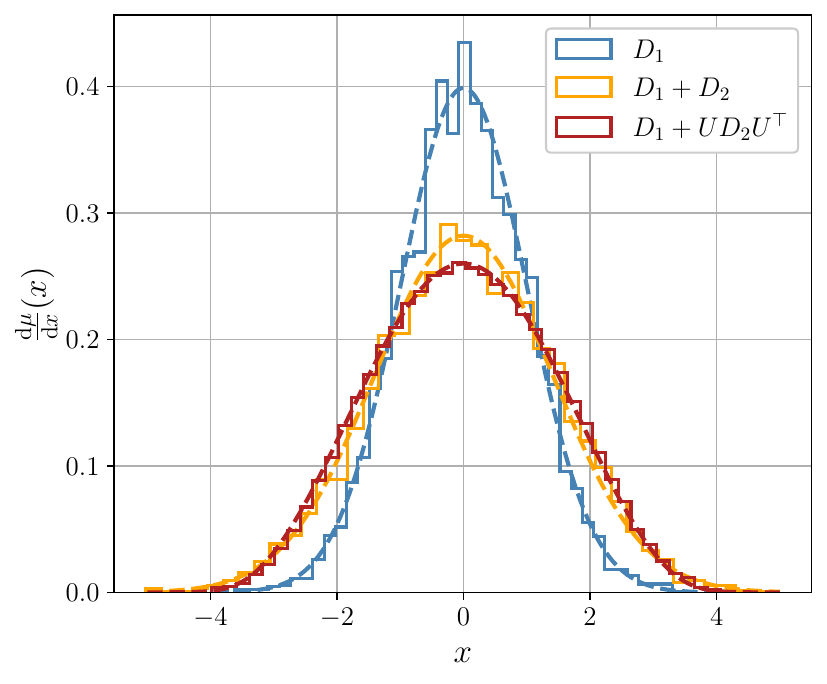}
\caption{Spectral densities for sums of diagonal Gaussian matrices $D_1, D_2$. Histograms: One $5000 \times 5000$ sample each. The orthogonal Haar matrix $U$ is sampled according to the method discussed in~\citep{mezzadri:2006}. Dashed lines: theoretically expected PDFs of ${\cal N}(0,1)$, ${\cal N}(0,2)$ and ${\cal N}(0,1) \boxplus {\cal N}(0,1)$, respectively. The latter is evaluated from its Cauchy transform according to~\eqref{eq:dens-convolution-cauchy} with $\eta = 10^{-5}$. We see that $D_1$ and $D_2$ are not free, but $D_1$ and $U D_2 U^\top$ are.}
\label{fig:free-diags}
\end{figure}

%%%%%%%%%%%%%%%%%%%%%%%%%%%%%%%%%%%%%%%%%%%

\subsection{Computing free additive convolutions via subordination}

Subordination is an alternative method of evaluating  $G_{a+b}(z)$ for the sum of free $a,b \in {\cal A}$. The basic idea is to find
\begin{align}
G_{a+b}(z) = G_a(\omega_a(z)) = G_b(\omega_b(z))\,.
\end{align}
The functions $\omega_a$, $\omega_b$ are called subordinators and can be computed by solving fixed-point equations that are formulated purely in terms of $G_a$ and $G_b$ or functions thereof, with no function inversions required.\\

As a motivation: for all $z$, we have
$
z = G_{a+b}(K_{a+b}(z)) = G_{a+b}\left(R_{a+b}(z) + \frac{1}{z} \right) = G_{a+b}\left(R_a(z) + R_b(z) +  \frac{1}{z} \right)
$.
We define $w = R_{a+b}(z) + \tfrac{1}{z}$, then
$
z = G_{a+b}(w) = G_a\left(R_a(z) + \frac{1}{z} \right) = G_a\left(w - R_b(z) \right) = G_a\left(w - R_b(G_{a+b}(w)) \right)
$.
If we then define $\omega_a(z) = z - R_b(G_{a+b}(z))$ and reverse the logic, then we get
$
G_{a+b}(z) = G_a(\omega_a(z))
$. 
From the definition of $\omega_a$, we also find
$
\omega_a(z) = z - R_b(G_{a+b}(z)) = z - R_b(G_{a}(\omega_a(z)))
$.
Thus, we have a fixed-point equation for $\omega_a$ in terms of known functions, in principle, but still encounter the undesirable inverse of $G_b$ within $R_b$. Through a rather long series of arguments (that mostly rely on complex analysis and inverse function theory on $\CC$), one can show that this result can alternatively be written as
\begin{align}
\omega_a(z) = z + H_b(H_a(\omega_a(z) + z)\,,
\label{eq:subord}
\end{align}
with $H(z) = 1/G(z) - z$ as defined above. Hence,~\eqref{eq:subord} achieves the goal of expressing the Cauchy transform of $a+b$ through a fixed-point equation that only requires knowledge of $G_a$ and $G_b$. The same can be done for $\omega_b$ and $G_{a+b}(z) = G_b(\omega_b(z))$ by symmetry, of course. There is exactly one solution of the subordinator equation~\eqref{eq:subord} in the upper complex half plane, i.e.\ with $\text{Im} \, \omega_a(z) > 0 $ for $\text{Im}  \, z > 0$. Properties and numerical solutions of a similar fixed-point equation with positivity constraints are discussed in~\cite{helton-far-speicher:2007}.

%%%%%%%%%%%%%%%%%%%%%%%%%%%%%%%%%%%%%%%%%%%

\subsection{Operator-valued free probability}
\label{app:operator-free}

The ability to compute Cauchy transforms and distributions of sums or products of free non-commutative elements is already useful, but the theory presented so far does not offer a similarly easy approach for many of the more complicated possible algebraic combinations of free elements, such as polynomials or rational functions in multiple free elements. Related to this challenge, as it turns out, is the fact that block-matrices of free elements are difficult to treat. The way out is operator-valued free probability, which relaxes the concept of a state $\varphi \colon\, {\cal A} \to \CC$ to a conditional expectation $E \colon \,{\cal A} \to {\cal B}$, e.g.\ over individual blocks, where ${\cal B}$ is generally some subalgebra of ${\cal A}$ in place of the field $\CC$. Developing analogous constructions to the previous sections---essentially replacing $\varphi$ with blockwise operations and any $z \in \CC$ with a matrix $Z \in \CC^{d \times d}$---makes it possibly to use the same (now operator-valued) Cauchy transform and subordination theory for block matrices of non-commutative elements.\\

First the formal definition:

\begin{definition}
An operator-valued non-commutative probability space $({\cal A}, E, {\cal B})$ is given by a unital algebra ${\cal A}$, a unital subalgebra ${\cal B} \subset {\cal A}$, and a linear map $E \colon\, {\cal A} \to {\cal B}$ called conditional expectation, which satisfies
\begin{enumerate}
\item $E(b) = b$ for all $b \in {\cal B}$.
\item $E(b_1 a b_2) = b_1 E(a) b_2$ for all $a \in {\cal A}$ and $b_1, b_2 \in {\cal B}$.
\end{enumerate}
\end{definition}

The usual situation is as follows:  start from a non-commutative probability space $({\cal C}, \varphi)$ which we care about, e.g.\ with some free elements of interest whose statistics we know individually. Then, lift this space to $d \times d$ block matrices by setting
\begin{enumerate}
\item ${\cal A} = M_d({\cal C})$ \textit{(arrange elements from ${\cal C}$ in a $d \times d$ matrix)}
\item ${\cal B} = M_d(\CC)$ \textit{(these are actual complex $d \times d$ matrices, which are a subset of ${\cal A}$ via the identification of $b \in M_d(\CC)$ with $b \otimes 1_{\cal C}$.)}
\item $E = \text{id} \otimes \varphi \colon\, M_d({\cal C}) \to M_d(\CC)$, $(c_{ij})_{1 \leq i,j \leq d} \mapsto (\varphi(c_{ij}))_{1 \leq i,j \leq d}$ via element-wise application of the state \textit{($\text{id}$ is the identity under the Kronecker product $\1_d \1_d^\top$ here if $\1_d$ is the one-vector in $\CC^d$)}
\end{enumerate}

Freeness, moments, free cumulants, and so on are then all defined with respect to $E$ instead of $\varphi$, but not much changes apart from that on a high level. We will mainly need the operator-valued equivalents of the various transformations introduced so far:

\begin{definition}
For $a \in {\cal A}$, an element in an operator-valued non-commutative probability space $({\cal A}, E, {\cal B})$, we define its operator-valued Cauchy transform $G_a \colon\, {\cal B} \to {\cal B}$ by
$
G_a(b) = E \left[(b-a)^{-1} \right]
$.
In the usual block-matrix settings and e.g.\ $d = 2$, $b = \begin{pmatrix}
b_{11} & b_{12}\\
b_{21} & b_{22}
\end{pmatrix} \in \CC^{2 \times 2}$, this definition means that $G_a(b)$ is a complex $2 \times 2$ matrix that is given by
\begin{align}
G_a(b) = \begin{pmatrix}
\varphi\left( (b \otimes 1_{\cal C} - a)^{-1}_{11} \right) & \varphi\left( (b \otimes 1_{\cal C} - a)^{-1}_{12} \right)\\
\varphi\left( (b \otimes 1_{\cal C} - a)^{-1}_{21} \right) & \varphi\left( (b \otimes 1_{\cal C} - a)^{-1}_{22} \right)
\end{pmatrix}\,,
\end{align}
with
\begin{align}
b \otimes 1_{\cal C} - a = \begin{pmatrix}
b_{11} 1_{\cal C} - a_{11} & b_{12} 1_{\cal C} - a_{12}\\
b_{21} 1_{\cal C} - a_{21} & b_{22} 1_{\cal C} - a_{22}
\end{pmatrix}\,.
\end{align}
\end{definition}

The ``scalar'' Cauchy transform of $a \in {\cal A}$ would naturally correspond to instead taking
\begin{align}
g_a(z) := \frac{1}{d} \trace \otimes \varphi \left( \left(z 1_{\cal A} -a \right)^{-1} \right) = \frac{1}{d} \sum_{i=1}^d \varphi \left( \left(z 1_{\cal A} - a \right)^{-1}_{ii} \right) = \frac{1}{d} \trace G_a(z I_d)\,,
\end{align}
i.e.\ put the same $z 1_{\cal C}$ in all diagonal blocks, take block-wise (normalized) traces and expectations $\varphi$, and take the (normalized) trace of the $d \times d$ matrix in the end. Using full block-arguments gives more flexibility, which we need below. Conveniently, a similar subordination result as before holds for the sum of free operator-valued variables. We define $H_a \colon\, {\cal B} \to {\cal B}$ as $H_a(b) = G_a(b)^{-1} - b$, where the inverse is taken in ${\cal B}$, so it corresponds to taking the inverse of the $d \times d$ matrix $G_a(b)$ in the block-settings. Then, we have

\begin{theorem}
Consider $a_1, a_2$ free elements of an operator-valued probability space. Then, we have
\begin{align}
G_{a_1 + a_2}(b) = G_{a_1}(\omega_{a_1}(b))
\,,
\label{eq:operator-subord}
\end{align}
for $\omega_{a_1} \colon\, {\cal B} \to {\cal B}$ a subordinator solving the fixed-point equation
\begin{align}
\omega_{a_1}(b) = H_{a_2}(H_{a_1}(\omega_{a_1}(b)) +b ) +b\,,
\label{eq:operator-subord-fixed-p}
\end{align}
just like in the ``scalar'' case.
\end{theorem}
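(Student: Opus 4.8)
The plan is to make rigorous the scalar derivation sketched around~\eqref{eq:subord}, replacing $z \in \mathbb{C}$ by $b \in {\cal B}$, $\varphi$ by $E$, and all reciprocals by inverses in ${\cal B}$, and then to supply the analytic input that was only alluded to there. Throughout, write $\mathbb{H}^+({\cal B}) = \{ b \in {\cal B} : \text{Im}\, b := (b-b^*)/(2i) \succ 0 \}$ for the operator upper half-plane. For self-adjoint $a$ the operator-valued Cauchy transform $G_a$ is holomorphic on $\mathbb{H}^+({\cal B})$, its reciprocal $F_a := G_a^{-1}$ (inverse in ${\cal B}$) maps $\mathbb{H}^+({\cal B})$ into itself with $\text{Im}\, F_a(b) \succeq \text{Im}\, b$, and hence $H_a = F_a - \text{id}$ maps $\mathbb{H}^+({\cal B})$ into the closed upper half-plane; these are the operator-valued analogues of the elementary properties of $F$ and $H$ used in the scalar case, and each follows from the series/integral representation of $G_a$ together with positivity of $E$.

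First I would set up the operator-valued $R$-transform together with the $R$-transform form of subordination. Operator-valued free cumulants $\kappa_n^a \colon {\cal B}^{n-1} \to {\cal B}$ are defined from the ${\cal B}$-valued moments by the moment--cumulant formula over non-crossing partitions, freeness of $a_1, a_2$ with respect to $E$ is equivalent to the vanishing of mixed cumulants, and therefore $\kappa_n^{a_1+a_2} = \kappa_n^{a_1} + \kappa_n^{a_2}$ — the operator-valued analogue of Proposition~\ref{prop-cumu}. Packaging the cumulants into $R_a$ and $K_a(b) = R_a(b) + b^{-1}$ as in~\eqref{eq:def-r}--\eqref{eq:def-k} gives $R_{a_1+a_2} = R_{a_1} + R_{a_2}$ and the functional inverse relation $K_a(G_a(b)) = b$. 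Running the scalar computation around~\eqref{eq:subord} in this setting then produces holomorphic maps $\omega_1, \omega_2 \colon \mathbb{H}^+({\cal B}) \to \mathbb{H}^+({\cal B})$ with $G_{a_1+a_2}(b) = G_{a_1}(\omega_1(b)) = G_{a_2}(\omega_2(b))$ and $\omega_1(b) = b - R_{a_2}(G_{a_1+a_2}(b))$. Using the identity $R_a(G_a(w)) = K_a(G_a(w)) - 1/G_a(w) = w - F_a(w) = -H_a(w)$, we get $R_{a_2}(G_{a_1+a_2}(b)) = R_{a_2}(G_{a_2}(\omega_2(b))) = -H_{a_2}(\omega_2(b))$, so $\omega_1(b) = b + H_{a_2}(\omega_2(b))$; by symmetry $\omega_2(b) = b + H_{a_1}(\omega_1(b))$, and substituting one into the other yields exactly~\eqref{eq:operator-subord-fixed-p}. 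This already establishes~\eqref{eq:operator-subord}; what remains is to show that~\eqref{eq:operator-subord-fixed-p} \emph{characterizes} $\omega_1$ uniquely, which is what makes the fixed-point system usable in practice.

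That uniqueness/convergence statement is the genuinely harder, analytic, step. Fix $b \in \mathbb{H}^+({\cal B})$ and consider $f_b \colon \mathbb{H}^+({\cal B}) \to \mathbb{H}^+({\cal B})$, $f_b(w) = b + H_{a_2}(H_{a_1}(w) + b)$; it is well defined because $H_{a_1}(w) + b \in \mathbb{H}^+({\cal B})$ (as $\text{Im}\, H_{a_1}(w) \succeq 0$) and $\text{Im}\, f_b(w) = \text{Im}\, b + \text{Im}\, H_{a_2}(\cdots) \succeq \text{Im}\, b \succ 0$, and it is holomorphic as a composition of holomorphic maps. The key point is that $f_b$ maps $\mathbb{H}^+({\cal B})$ into the sub-domain $\{ w : \text{Im}\, w \succeq \text{Im}\, b \}$, which sits at positive distance from $\partial \mathbb{H}^+({\cal B})$; combined with the norm bound $\| G_a(b) \| \le \| (\text{Im}\, b)^{-1} \|$, which controls $f_b$ on bounded sets, this lets one run a Denjoy--Wolff / Earle--Hamilton-type argument on the operator half-plane: $f_b$ is non-expansive for the Carath\'eodory--Kobayashi pseudometric, has no fixed point on the boundary, and therefore its iterates $f_b^{\circ n}(w_0)$ converge, uniformly on bounded sets and independently of the starting point $w_0 \in \mathbb{H}^+({\cal B})$, to the unique fixed point $\omega_1(b)$; holomorphy of $b \mapsto \omega_1(b)$ then follows from the uniform convergence. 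This is precisely the line of reasoning of~\cite{belinschi-mai-speicher:2017, helton-mai-speicher:2018} (with~\cite{helton-far-speicher:2007} treating the closely related positivity-constrained Dyson-type equation), and I expect this metric/fixed-point estimate to be the main obstacle; the cumulant bookkeeping of the second paragraph and the algebraic rewriting are routine. Finally, interchanging the roles of $a_1$ and $a_2$ gives the companion statement $G_{a_1+a_2}(b) = G_{a_2}(\omega_2(b))$ with $\omega_1(b) + \omega_2(b) - b = F_{a_1+a_2}(b)$.
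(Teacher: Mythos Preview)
The paper does not actually prove this theorem. Appendix~\ref{app:free-prob-intro} is explicitly a ``short and mostly self-contained practical exposition'' of background material following~\cite{mingo-speicher:2017}; the operator-valued subordination theorem is simply stated, with the remark that the fixed-point equation has a unique solution in the operator upper half-plane, and the reader is implicitly referred to~\cite{belinschi-mai-speicher:2017, helton-mai-speicher:2018, helton-far-speicher:2007} for details. The only derivation the paper offers is the scalar-case heuristic around~\eqref{eq:subord}, which stops at the $R$-transform form $\omega_a(z) = z - R_b(G_{a+b}(z))$ and then says ``through a rather long series of arguments (that mostly rely on complex analysis and inverse function theory on $\CC$), one can show that this result can alternatively be written as''~\eqref{eq:subord}.

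Your proposal goes well beyond what the paper provides and is a faithful outline of the actual proof in the cited references: the algebraic derivation via operator-valued cumulants and the identity $R_a(G_a(w)) = -H_a(w)$ to pass from the $R$-transform form to the $H$-form is exactly the route in~\cite{belinschi-mai-speicher:2017}, and the Earle--Hamilton/Denjoy--Wolff argument on $\mathbb{H}^+({\cal B})$ for uniqueness and convergence of iterates is the analytic core of that paper. So there is nothing to compare against in the paper itself; your sketch is correct and matches the standard literature approach that the paper defers to.
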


While this fixed-point equation may have many solutions, it always has just one solution with positive definite imaginary part of $\omega$, if the same holds for $b$. Hence, we should choose a fixed-point solver which remains in the upper half-space of the complex plane provided it is initialized there. This property holds for naive fixed-point iteration and damped versions thereof but not necessarily for Newton-type iterations~\cite{helton-far-speicher:2007}. In summary, it is possible to evaluate the operator-valued Cauchy transforms of the sum of free operator-valued elements if we know their individual Cauchy transforms. So how can we find the latter?\\

The only case that we will need is the following: in the standard block-matrix setting, suppose we have $a = b \otimes c$---i.e.\ take a $d \times d$ block matrix whose entries are all composed of some complex number $b_{ij}$ times the element $c$. First, if two free elements $c_1, c_2$ in ${\cal C}$ are lifted in this way, for instance by setting $a_1 = b_1 \otimes c_1$ and $a_2 = b_2 \otimes c_2$, then $a_1, a_2$ remain free. Now, if we know the Cauchy transform of $c$ or its distribution $\mu_c$, either analytically or implicitly through the real-axis limit of $g_c$ , then we simply have
\begin{align}
G_{b \otimes c}\left(\tilde{b} \right) = E\left[\left(\tilde{b} \otimes 1 - b \otimes c \right)^{-1} \right] = \int_\RR \underbrace{\left(\tilde{b} - \lambda b \right)^{-1}}_{\text{inverse of } d \times d \text{ matrix}} \dd \mu_c(\lambda)\,.
\label{eq:lifted-cauchy-int}
\end{align}
This elementwise integral can be straightforwardly approximated e.g.\ via quadrature, which is explained in~\cite[Theorem 4.1]{belinschi-mai-speicher:2017}. In~\cite[Remark 6.6]{helton-mai-speicher:2018}, the authors suggest a more efficient way of computing the integral which avoids numerical integration.

\subsubsection{Linearization: idea and definition}

Suppose we know the individual distributions of free variables $x_1, \dots, x_n \in {\cal C}$ but not of $p(x_1, \dots, x_n)$, where $p$ is some given, complicated function such as a polynomial. We want to lift $p$ to an operator-valued space ${\cal A} = M_d({\cal C})$. Specifically, we construct a corresponding block matrix $\hat{p} \in M_d({\cal C})$ to ensure its operator-valued Cauchy transform is related to the Cauchy transform of $p$  and to be affine-linear in all elements $x_1, \dots, x_n$ so the transform of $\hat{p}$ proceeds from subordination. Technically, $\hat{p}$ is found using only the Schur complement and a series of straightforward observations, but this sequence amounts to a concrete algorithm to  linearize $p$ and hence compute Cauchy transforms of any polynomials (or rational functions) in free variables, which is a major achievement of the theory.\\

The following definition is purely algebraic in nature, but it is set up in such a way that it facilitates calculating Cauchy transforms in our present context:

\begin{definition}
Given a polynomial $p \in \CC \langle x_1, \dots, x_n \rangle$ in $n$ non-commutative variables $x_1, \dots, x_n$ in a unital algebra~${\cal C}$, a $d \times d$ matrix with polynomial elements $ \hat{p} \in M_d(\CC) \otimes \CC \langle x_1, \dots, x_n \rangle$ is called a \textbf{linearization} of $p$ if
\begin{align}
\hat{p} = \begin{pmatrix}
0 & \aug & u\\
\hline
v & \aug & q
\end{pmatrix} \in M_d({\cal C}) \text{ with } u \in  {\cal C}^{1 \times (d-1)}\,, \quad v \in {\cal C}^{(d-1) \times 1}\,, \quad q \in {\cal C}^{(d-1) \times (d-1)}
\end{align}
such that
\begin{enumerate}
\item $q$ is invertible and $p = - u q^{-1} v$ \textit{(necessary for the Cauchy transform relation we need below, due to Schur complement formula for block inverses)}
\item $\hat{p} = b_0 \otimes 1_{{\cal C}} + b_1 \otimes x_1 + \dots + b_n \otimes x_n$ for some coefficient matrices $b_i \in M_d(\CC)$ \textit{(so that $\hat{p}$ is affine-linear and we can evaluate its operator-valued Cauchy transform)}
\end{enumerate}
\end{definition}

Obviously, we can evaluate the operator-valued Cauchy transform of such a linearization. This ability is useful due to

\begin{proposition}
\label{prop:cauchy-lin}
For a polynomial $p \in {\cal C}$ with linearization $\hat{p} \in M_d({\cal C})$ and $z \in \CC$, set $\Lambda(z) = \text{diag}(z,0,\dots,0) \in \CC^{d \times d}$. Then, we have
$
G_p(z) = \varphi\left((z-p)^{-1}\right) = \left( G_{\hat{p}}(\Lambda(z)) \right)_{11}
$,
i.e.\ the Cauchy transform of $p$ can be evaluated as an element of the operator-valued Cauchy transform of $\hat{p}$ for a particular choice of argument.
\end{proposition}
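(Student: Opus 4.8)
\textbf{Proof proposal for Proposition~\ref{prop:cauchy-lin}.}

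The plan is to exploit the block structure of the linearization $\hat{p}$ together with the Schur complement formula for block-matrix inverses, and then to apply the conditional expectation $E = \mathrm{id} \otimes \varphi$ blockwise. First I would write out the matrix whose inverse appears in the operator-valued Cauchy transform. By definition of $\hat p$ we have
\begin{align}
\Lambda(z) \otimes 1_{\cal C} - \hat{p} = \begin{pmatrix} z 1_{\cal C} & -u \\ -v & -q \end{pmatrix} \in M_d({\cal C})\,,
\end{align}
where $u \in {\cal C}^{1 \times (d-1)}$, $v \in {\cal C}^{(d-1)\times 1}$, $q \in {\cal C}^{(d-1)\times(d-1)}$ invertible, and $p = -u q^{-1} v$ by the defining property of the linearization. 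Since $q$ is invertible, the $(1,1)$ block of the inverse of this $2\times 2$ block matrix is governed by the Schur complement of the lower-right block: the $(1,1)$ entry of $(\Lambda(z)\otimes 1_{\cal C} - \hat p)^{-1}$ equals $\bigl(z 1_{\cal C} - u q^{-1} v\bigr)^{-1} = (z 1_{\cal C} - p)^{-1}$, the second equality using exactly $p = -uq^{-1}v$. This is the only nontrivial algebraic identity needed, and it is immediate from the standard Schur complement formula (valid in any unital algebra as long as the relevant blocks are invertible).

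Next I would take the conditional expectation. By definition $G_{\hat p}(\Lambda(z)) = E\bigl[(\Lambda(z)\otimes 1_{\cal C} - \hat p)^{-1}\bigr]$, which is the $d\times d$ complex matrix obtained by applying $\varphi$ entrywise. In particular its $(1,1)$ entry is $\varphi$ applied to the $(1,1)$ block of $(\Lambda(z)\otimes 1_{\cal C} - \hat p)^{-1}$, which by the previous step equals $\varphi\bigl((z 1_{\cal C} - p)^{-1}\bigr) = G_p(z)$. Hence $\bigl(G_{\hat p}(\Lambda(z))\bigr)_{11} = G_p(z)$, as claimed. The extension from polynomials to rational functions $p$ is then handled by the rational linearization algorithm referenced in Appendices~\ref{sec:linearize-polynomial} and~\ref{sec:linearize-rational}, which produces a $\hat p$ with the same two defining properties, so the argument above applies verbatim once such a $\hat p$ exists.

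The main obstacle is not in this short derivation but in the surrounding analytic bookkeeping: one must ensure that the inverses involved actually exist as elements of the relevant algebra (or, in the random-matrix interpretation, that $z$ lies outside the spectrum so that the resolvents are well-defined and the limits defining $\varphi$ and $E$ exist), and, crucially for the application in Section~\ref{sec:linearization-main-text}, that the argument remains valid when $z$ is taken to the boundary or to $z=0$ and when $\Lambda(z)$ is replaced by a general block argument $Z$. These are addressed by working first at $z$ in the upper half-plane where positivity of imaginary parts guarantees existence and uniqueness of the subordinator solving~\eqref{eq:fixed-point-subord}, and then invoking continuity of the Cauchy transform up to the relevant boundary point; the Schur-complement identity itself is purely formal and carries no difficulty.
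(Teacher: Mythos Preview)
Your proof is correct and follows exactly the same route as the paper: write $\Lambda(z)\otimes 1_{\cal C} - \hat p$ as a $2\times 2$ block matrix, read off the $(1,1)$ entry of its inverse via the Schur complement of the invertible lower-right block $-q$, identify it with $(z-p)^{-1}$ using $p = -uq^{-1}v$, and then apply $\varphi$ entrywise. One minor sign slip to fix: the Schur complement here gives $(z\,1_{\cal C} + uq^{-1}v)^{-1}$, not $(z\,1_{\cal C} - uq^{-1}v)^{-1}$, and it is this expression that equals $(z-p)^{-1}$ precisely because $p = -uq^{-1}v$.
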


\ref{prop:cauchy-lin} holds simply because
\begin{align}
\left(\Lambda(z) - \hat{p} \right)^{-1} = \begin{pmatrix}
z 1 & \aug & -u\\
\hline
-v & \aug & -q
\end{pmatrix}^{-1} = \begin{pmatrix}
(z + uq^{-1}v)^{-1} & \aug & *\\
\hline
* & \aug & *
\end{pmatrix} = \begin{pmatrix}
(z - p)^{-1} & \aug & *\\
\hline
* & \aug & *
\end{pmatrix}\,,
\end{align}
by the construction of the linearization, and the operator-valued Cauchy transform acts as $\left( G_{\hat{p}}(Z) \right)_{ij} = \varphi \left((Z - \hat{p})^{-1}_{ij} \right)$.\\

 A linearization always exists, as the constructive algorithm in the next subsection shows, but linearizations are not unique.
 
%%%%%%%%%%%%%%%%%%%%%%%%%%%%%%%%%%%%%%%%%%%
 
\subsubsection{Linearization algorithm for polynomials}
\label{sec:linearize-polynomial}

The original publication for this approach is~\cite{belinschi-mai-speicher:2017}. Consider $p \in \CC \langle x_1, \dots, x_n \rangle$, i.e.\ a polynomial of $n$ non-commutative variables $x_1, \dots, x_n \in {\cal C}$ over $\CC$. Here, we summarize an algorithm to find a linearization $\hat{p}$ of $p$, i.e.\ a matrix
$
\hat{p} = \begin{pmatrix}
0 & \aug & u\\
\hline
v & \aug & q
\end{pmatrix} \text{ with } p = -uq^{-1}v
$
and $\hat{p}$ only affine-linear in all $x_i$'s. The following steps can be used to linearize any polynomial:
\begin{enumerate}
\item The degree 1 monomial $x_j$ is obviously linearized by
$
x_j \xrightarrow{\text{lin}} \begin{pmatrix}
0 & \aug & x_j\\
\hline
1 & \aug & -1
\end{pmatrix} \in M_2({\cal C})
$
\item The degree $k \geq 2$ monomial $x_{i_1} x_{i_2} \dots x_{i_k}$ is linearized as
\begin{align}
x_{i_1} x_{i_2} \dots x_{i_k}\xrightarrow{\text{lin}}
\begin{pmatrix} 
0 & \aug & 0 & 0 & \dots & 0 & x_{i_1}\\
\hline
0 & \aug & 0 & 0 & \dots & x_{i_2} & -1\\
0 & \aug & 0 & 0 &  \dots & -1 & 0\\
\dots  & \aug & \dots& \dots& \dots & \dots & \dots\\
0 & \aug & x_{i_{k-1}} & -1 & \dots & 0 & 0 \\
x_{i_{k}} & \aug & -1 & 0 & \dots & 0 & 0
\end{pmatrix} \in M_k({\cal C})\,.
\end{align}
One can check the above via induction; explicitly, we have for $k=2$ and $k = 3$ that
\begin{align*}
&k=2\,, \quad \begin{pmatrix}
0 & \aug & x_{i_1}\\
\hline
x_{i_2} & \aug & -1
\end{pmatrix} \quad  \rightarrow \quad -uq^{-1}v = -x_{i_1} (-1) x_{i_2} = x_{i_1} x_{i_2} \checkmark\\
&k=3\,, \quad \begin{pmatrix}
0 & \aug & 0 & x_{i_1}\\
\hline
0 & \aug & x_{i_2} & -1\\
x_{i_3} & \aug & -1 & 0
\end{pmatrix}\quad  \rightarrow \quad -uq^{-1}v = -\begin{pmatrix}
0 & x_{i_1}
\end{pmatrix} \begin{pmatrix}
x_{i_2} & -1\\
-1 & 0
\end{pmatrix}^{-1} \begin{pmatrix}
0\\
x_{i_3}
\end{pmatrix} = -\begin{pmatrix}
0 & x_{i_1}
\end{pmatrix} \begin{pmatrix}
* & *\\
* & -x_{i_2}
\end{pmatrix} \begin{pmatrix}
0\\
x_{i_3}
\end{pmatrix} \checkmark
\end{align*}
\item If we have a sum $p = p_1 + \dots + p_k$ with known linearizations
$
\hat{p}_j = \begin{pmatrix}
0 & \aug & u_j\\
\hline
v_j & \aug & q_j
\end{pmatrix} \in M_{d_j}({\cal C})\,,
$
then their sum can be linearized by simply stacking
\begin{align}
\hat{p} = \begin{pmatrix}
0 & \aug & u_1 & u_2 & \dots & u_k\\
\hline
v_1 & \aug & q_1 & 0 & \dots & 0\\
v_2 & \aug & 0 & q_2 & \dots & 0\\
\dots & \aug  & \dots& \dots& \dots & \dots \\
v_k & \aug & 0 & 0 & \dots & q_k
\end{pmatrix} \in M_{d_1 + \dots + d_k - k + 1}({\cal C})\,,
\end{align}
because
\begin{align}
-uq^{-1}v = -\begin{pmatrix}
u_1 & u_2 & \dots & u_k
\end{pmatrix} \text{diag}\left(q_1^{-1},q_2^{-1},\dots,q_k^{-1} \right)
\begin{pmatrix}
v_1\\
v_2\\
\dots\\
v_k
\end{pmatrix} = p \; \checkmark
\end{align}
\item For manifestly symmetric linearizations: suppose $p$ is linearized by
$
\hat{p} = \begin{pmatrix}
0 & \aug & u\\
\hline
v & \aug & q
\end{pmatrix} \in M_d({\cal C})\,,
$
then, clearly, $p^*$ is linearized by
$
\hat{p}^* = \begin{pmatrix}
0 & \aug & v^*\\
\hline
u^* & \aug & q^*
\end{pmatrix} \in M_d({\cal C})
$, 
and their sum $p + p^*$, which is symmetric, has a symmetric linearization
\begin{align}
\begin{pmatrix}
0 & \aug & u & v^*\\
\hline
u^* & \aug & 0 & q^*\\
v & \aug & q & 0
\end{pmatrix} \in M_{2d-1}({\cal C})\,,
\end{align}
since
\begin{align}
-\begin{pmatrix}
u & v^*
\end{pmatrix}
\begin{pmatrix}
 0 & q^*\\
 q & 0
\end{pmatrix}^{-1}
\begin{pmatrix}
u^*\\
v
\end{pmatrix} = -\begin{pmatrix}
u & v^*
\end{pmatrix}
\begin{pmatrix}
 0 & q^{-1}\\
 (q^*)^{-1} & 0
\end{pmatrix}
\begin{pmatrix}
u^*\\
v
\end{pmatrix} = - u q^{-1} v - v^*  (q^*)^{-1} u^* = p + p^* \; \checkmark
\end{align}
\end{enumerate}

%%%%%%%%%%%%%%%%%%%%%%%%%%%%%%%%%%%%%%%%%%%

\subsubsection{Linearization algorithm for rational functions}
\label{sec:linearize-rational}

The original publication for this section is~\cite{helton-mai-speicher:2018}. For rational functions $r$ of non-commutative variables $x_1, \dots, x_n \in {\cal C}$, a slightly different definition of linearization is used. The main difference is that the vectors $u,v$ in the linearization must be constants here, independent of the $x_i$'s. This requirement is so that the product linearization below remains a valid linearization since otherwise $v_1 u_2$ may be polynomial in the $x_i$'s.

\begin{definition}
Given a rational function $r$ of $ x_1, \dots, x_n $ in $n$ non-commutative variables $x_1, \dots, x_n$ in a unital algebra~${\cal C}$, a $d \times d$ matrix with polynomial elements $ \hat{r} \in M_d(\CC) \otimes \CC \langle x_1, \dots, x_n \rangle$ is called a linearization of $r$ if
\begin{align}
\hat{r} = \begin{pmatrix}
0 & \aug & u\\
\hline
v & \aug & q
\end{pmatrix} \in M_d({\cal C}) \text{ with } u \in  {\cal C}^{1 \times (d-1)}\,, \quad v \in {\cal C}^{(d-1) \times 1}\,, \quad q \in {\cal C}^{(d-1) \times (d-1)}\,,
\end{align}
such that
\begin{enumerate}
\item $q$ is invertible, and $r = - u q^{-1} v$ 
\item $\hat{r} = b_0 \otimes 1_{{\cal C}} + b_1 \otimes x_1 + \dots + b_n \otimes x_n$ for some coefficient matrices $b_i \in M_d(\CC)$ such that $u,v$ are only constructed from the $b_0$ term and independent of $x_1, \dots, x_n$ 
\end{enumerate}
\end{definition}

This slightly modified definition suggests that we have to change certain steps in the algorithm of the previous subsection. Now, we do the following:
\begin{enumerate}
\item $\lambda \in \CC$ or $x_i \in {\cal C}$ are both linearized as
$
\lambda \xrightarrow{\text{lin}} \begin{pmatrix}
0 & \aug & 0 & 1\\
\hline
0 & \aug & \lambda & -1\\
1 & \aug & -1 & 0
\end{pmatrix}  \in M_{3}({\cal C})\,.
$
\item If two rational functions $r_1,r_2$ are linearized by
$
r_i \xrightarrow{\text{lin}} \begin{pmatrix}
0 & \aug & u_i\\
\hline
v_i & \aug & q_i
\end{pmatrix} \in M_{d_i}({\cal C})\,,
$
we still take the linearization of their sum to be
\begin{align}
r_1 + r_2  \xrightarrow{\text{lin}} \begin{pmatrix}
0 & \aug & u_1 & u_2\\
\hline
v_1 & \aug & q_1 & 0\\
v_2 & \aug & 0 & q_2
\end{pmatrix} \in M_{d_1 + d_2 - 1}({\cal C})\,,
\end{align}
but for their product, we use
\begin{align}
r_1 r_2  \xrightarrow{\text{lin}} \begin{pmatrix}
0 & \aug & 0 & u_1\\
\hline
0 & \aug & v_1 u_2 & q_1\\
v_2 & \aug & q_2 & 0
\end{pmatrix} \in M_{d_1 + d_2 - 1}({\cal C}) \,.
\end{align}
\item If $r$ is linearized by $\begin{pmatrix}
0 & \aug & u\\
\hline
v & \aug & q
\end{pmatrix} \in M_{d}({\cal C})$ and invertible, its inverse is linearized by
$
r^{-1} \xrightarrow{\text{lin}} \begin{pmatrix}
0 & \aug & 1 & 0\\
\hline
1 & \aug & 0 & u\\
0 & \aug & v & -q
\end{pmatrix} \in M_{d+1}({\cal C}) \,.
$
\end{enumerate}

%%%%%%%%%%%%%%%%%%%%%%%%%%%%%%%%%%%%%%%%%%%

\subsubsection{Toy examples of linearizations}
\label{app:linearization-examples}

\begin{example}
\label{ex:lin-poly}
(cf.~\citep[Example 5.2]{belinschi-mai-speicher:2017})
Consider the symmetric polynomial
$
p = p(x,y) = xy+yx + x^2
$
in two free self-adjoint elements $x, y$. Assume that we know the Cauchy transform and spectral density of $x$ and $y$ individually, say with $x$ a semicircle element and $y$ a Marchenko--Pastur element. We want to compute the Cauchy transform of $p$, and potentially its spectral density. To linearize the polynomial $p$ and keep the block-dimension $d$ small, we recognize that
\begin{align}
p = x \left(\frac{x}{2} + y \right) +  \left(\frac{x}{2} + y \right) x = \tilde{p} + \tilde{p}^*\,.
\end{align}
The first term can be linearized as
\begin{align}
\tilde{p} = x \left(\frac{x}{2} + y \right) \xrightarrow{\text{lin}}\begin{pmatrix}
0 & \aug & x\\
\hline
\frac{x}{2} + y & \aug &  -1
\end{pmatrix}\,,
\end{align}
according to the rule 2.\ for products. But then the rule 4.\ for symmetric sums gives
\begin{align}
p \xrightarrow{\text{lin}} \hat{p} = \begin{pmatrix}
0 & \aug & x & \frac{x}{2} + y\\
\hline
x & \aug & 0 & -1\\
\frac{x}{2} + y & \aug & -1 & 0
\end{pmatrix} = A \otimes x + B_0 \otimes 1 + B_1 \otimes y\,,
\end{align}
for
\begin{align}
A = \begin{pmatrix}
0 & 1 & \frac{1}{2}\\
1 & 0 & 0\\
\frac{1}{2} & 0 & 0
\end{pmatrix}\,, \quad B_0 = \begin{pmatrix}
0 & 0 & 0\\
0 & 0 & -1\\
0 & -1 & 0
\end{pmatrix}\,, \quad B_1 = \begin{pmatrix}
0 & 0 & 1\\
0 & 0 & 0\\
1 & 0 & 0
\end{pmatrix}\,.
\end{align}
So, with this way of rewriting the polynomial, $d = 3$ suffices as a lift dimension to linearize $p$. In accordance with the general linearization theory, we can (at least numerically) evaluate the lifted Cauchy transforms as
\begin{align}
G_{A \otimes x}(Z) = \int_\RR \left(Z - \lambda A \right)^{-1} \dd \mu_x(\lambda)\,, \quad G_{B_0 + B_1 \otimes y}(Z) = \int_\RR \left(Z - B_0  - \lambda B_1 \right)^{-1} \dd \mu_y(\lambda)\,,
\end{align}
for any $Z \in M_3(\CC)$. Since the lifted variables remain free, we can then use the subordination result 
$
G_{\hat{p}}(Z) = G_{A \otimes x}(\omega_{A \otimes x}(Z))
$
with fixed-point equation
$
\omega_{A \otimes x}(Z) = H_{B_0 + B_1 \otimes y}(H_{A \otimes x}(\omega_{A \otimes x}(Z)) + Z ) + Z\,,
$
to get the operator-valued Cauchy transform of $\hat{p}$ at any $Z \in \CC^{3 \times 3}$. Then, the Cauchy transform of the polynomial $p$ itself is
$
G_p(z) = G_{\hat{p}} \left(\begin{pmatrix}
z & 0 & 0\\
0 & 0 & 0\\
0 & 0 & 0 
\end{pmatrix} \right)_{11}
$
according to proposition~\ref{prop:cauchy-lin}.
Strictly speaking, we should perturb the diagonal by $i \eps$ for $\eps$ small and positive to stay within the upper half space and compute $G_{\hat{p}} \left(\text{diag}\left(z,i \eps,i \eps \right) \right)_{11}$
instead, assuming $z$ is already in the upper complex half plane. With these tools, we can then compute
$
G_p(z) = \varphi\left( \left(z - p \right)^{-1} \right)
$
for any $z$ with positive imaginary part, and if we want, we can also compute the distribution of $p$ via~\eqref{eq:density-from-cauchy} by taking $z = x + i \eta$ for small $\eta > 0$ and $x \in \RR$. We show the results of this procedure for the present example in Figure~\ref{fig:test-linearization} (left), where we compare the empirical spectral density of $p$ to the result of the linearization procedure and computation of the Cauchy transform of $p$ close to the real axis.\\
\end{example}

%%%%%%%%%%%%%%%%%%%%%%%%%%%%%%%%%%%%%%%%%%%

\begin{example}
\label{ex:lin-ratio}
We repeat the exercise of the previous example but now for a rational function $r$ of two free and invertible elements. We consider
$
r = r(x,y) = \left(x^{-1} + y^{-1} \right)^{-1}\,.
$
Again, assume that we know the Cauchy transform and spectral density of $x$ and $y$ individually, say with $x, y$ both Marchenko--Pastur elements with different parameters $c_x, c_y < 1$ such that their densities have no atoms at $0$ and $x,y$ are invertible. We want to compute the Cauchy transform of $r$, and potentially its spectral density. This time, we strictly follow the general linearization rules for rational functions:
\begin{align}
x \xrightarrow{\text{lin}} \begin{pmatrix}
0 & \aug & 0 & 1\\
\hline
0 & \aug & x & -1\\
1 & \aug & -1 & 0
\end{pmatrix}\,, \quad x^{-1} \xrightarrow{\text{lin}} \begin{pmatrix}
0 & \aug & 1 & 0 & 0\\
\hline
1 & \aug & 0 & 0 & 1\\
0 & \aug & 0 & -x & 1\\
0 & \aug & 1 & 1 & 0
\end{pmatrix}
\end{align}
The result for $y^{-1}$ is analogous, and by stacking their linearizations, we have
\begin{align}
x^{-1} + y^{-1}  \xrightarrow{\text{lin}} \begin{pmatrix}
0 & \aug & 1 & 0 & 0 & 1 & 0 & 0\\
\hline
1 & \aug & 0 & 0 & 1 & 0 & 0 & 0\\
0 & \aug & 0 &-x & 1 & 0 & 0 & 0\\
0 & \aug & 1 & 1 & 0 & 0 & 0 & 0\\
1 & \aug & 0 & 0 & 0 & 0 & 0 & 1\\
0 & \aug & 0 & 0 & 0 & 0 &-y & 1\\
0 & \aug & 0 & 0 & 0 & 1 & 1 & 0
\end{pmatrix}\,,
\end{align}
and finally by linearizing the inverse
\begin{align}
r = \left(x^{-1} + y^{-1} \right)^{-1} \xrightarrow{\text{lin}} \hat{r} =
\begin{pmatrix}
0 & \aug & 1 & 0 & 0 & 0 & 0 & 0 & 0\\
\hline
1 & \aug & 0 & 1 & 0 & 0 & 1 & 0 & 0\\
0 & \aug & 1 & 0 & 0 &-1 & 0 & 0 & 0\\
0 & \aug & 0 & 0 & x &-1 & 0 & 0 & 0\\
0 & \aug & 0 &-1 &-1 & 0 & 0 & 0 & 0\\
0 & \aug & 1 & 0 & 0 & 0 & 0 & 0 &-1\\
0 & \aug & 0 & 0 & 0 & 0 & 0 & y &-1\\
0 & \aug & 0 & 0 & 0 & 0 &-1 &-1 & 0
\end{pmatrix} = \begin{pmatrix}
0 & \aug & u\\
\hline
v & \aug & q
\end{pmatrix} = A \otimes x + B_0 \otimes 1 + B_1 \otimes y\,.
\end{align}
So, the naive application of the algorithm lifts to $8 \times 8$ block matrices to linearize the problem, such that $r  = -u q^{-1} v$, and $\left (z e_1^{\otimes 2} - \hat{r} \right)^{-1}_{11} = (z-r)^{-1}$ by construction. The coefficient matrices are
\begin{align*}
A = \begin{pmatrix}
0 & 0 & 0 & 0 & 0 & 0 & 0 & 0\\
0 & 0 & 0 & 0 & 0 & 0 & 0 & 0\\
0 & 0 & 0 & 0 & 0 & 0 & 0 & 0\\
0 & 0 & 0 & 1 & 0 & 0 & 0 & 0\\
0 & 0 & 0 & 0 & 0 & 0 & 0 & 0\\
0 & 0 & 0 & 0 & 0 & 0 & 0 & 0\\
0 & 0 & 0 & 0 & 0 & 0 & 0 & 0\\
0 & 0 & 0 & 0 & 0 & 0 & 0 & 0
\end{pmatrix}, \quad B_0 = \begin{pmatrix}
0 & 1 & 0 & 0 & 0 & 0 & 0 & 0\\
1 & 0 & 1 & 0 & 0 & 1 & 0 & 0\\
0 & 1 & 0 & 0 &-1 & 0 & 0 & 0\\
0 & 0 & 0 & 0 &-1 & 0 & 0 & 0\\
0 & 0 &-1 &-1 & 0 & 0 & 0 & 0\\
0 & 1 & 0 & 0 & 0 & 0 & 0 &-1\\
0 & 0 & 0 & 0 & 0 & 0 & 0 &-1\\
0 & 0 & 0 & 0 & 0 &-1 &-1 & 0
\end{pmatrix}, \quad  B_1 = \begin{pmatrix}
0 & 0 & 0 & 0 & 0 & 0 & 0 & 0\\
0 & 0 & 0 & 0 & 0 & 0 & 0 & 0\\
0 & 0 & 0 & 0 & 0 & 0 & 0 & 0\\
0 & 0 & 0 & 0 & 0 & 0 & 0 & 0\\
0 & 0 & 0 & 0 & 0 & 0 & 0 & 0\\
0 & 0 & 0 & 0 & 0 & 0 & 0 & 0\\
0 & 0 & 0 & 0 & 0 & 0 & 1 & 0\\
0 & 0 & 0 & 0 & 0 & 0 & 0 & 0
\end{pmatrix}\,.
\end{align*}
The rest of the computation of $G_r(z)$ or $(\dd \mu_r / \dd x)(x)$ from this linearization remains formally unchanged from before. Numerical results, comparing the empirical spectral density of $r$ from a sample with the theoretically expected one from the operator-valued Cauchy transform of $\hat{r}$ are shown in Figure~\ref{fig:test-linearization} (center).\\
\end{example}

%%%%%%%%%%%%%%%%%%%%%%%%%%%%%%%%%%%%%%%%%%%

\begin{example}
\label{ex:lin-trace}
Finally, we turn to the computation of a trace that is of a similar type to what we care about in the Sobolev training setting. Take the same polynomial as in Example~\ref{ex:lin-poly}, that is
$
p = p(x,y) = xy+yx + x^2\,,
$
but now we specifically want to compute
\begin{align}
f(z_0) := \varphi \left( \left(z_0 1 - p \right)^{-1} y \right) =  \varphi \left( \left(z_0 1 - \left(xy+yx + x^2\right) \right)^{-1} y \right)\,.
\end{align}
This expression does not immediately look like a Cauchy transform. Assuming $y$ to be invertible, we use the following trick:
\begin{align}
-f(z_0) =  \varphi \left( \left( 0 - y^{-1} \left(z_0 1 - \left(xy+yx + x^2\right) \right) \right)^{-1}\right) = G_{s}(z = 0)\,,
\end{align}
for the Cauchy transform of the rational function
\begin{align}
s = s(x,y) = y^{-1} \left( z_0 1 - \left(xy+yx + x^2\right) \right)\,,
\end{align}
which is not manifestly symmetric here. We can linearize $s$ as before:
\begin{align}
 z_0 1 \xrightarrow{\text{lin}} \begin{pmatrix}
 0 & \aug & 0 & 1\\
 \hline
 0 & \aug & z_0 & -1\\
 1 & \aug & -1 & 0
 \end{pmatrix}\,, xy+yx + x^2 \xrightarrow{\text{lin}} \begin{pmatrix}
0 & \aug & x & \frac{x}{2} + y\\
\hline
x & \aug & 0 & -1\\
\frac{x}{2} + y & \aug & -1 & 0
\end{pmatrix}\,,\\
z_0 1 - \left(xy+yx + x^2\right) \xrightarrow{\text{lin}} \begin{pmatrix}
0 & \aug & 0 & 1 & x & \frac{x}{2} + y\\
\hline
0 & \aug &z_0&-1 & 0 & 0 \\
1 & \aug &-1 & 0 & 0 & 0 \\
x & \aug & 0 & 0 & 0 & 1 \\
\frac{x}{2} + y & \aug & 0 & 0 & 1 & 0
\end{pmatrix}\,, y^{-1} \xrightarrow{\text{lin}} \begin{pmatrix}
0 & \aug & 1 & 0 & 0\\
\hline
1 & \aug & 0 & 0 & 1\\
0 & \aug & 0 & -y & 1\\
0 & \aug & 1 & 1 & 0
\end{pmatrix}\,,
\end{align}
so that finally their product is linearized as
\begin{align}
s \xrightarrow{\text{lin}} \hat{s} = \begin{pmatrix}
0 & \aug & 0 & 0 & 0 & 0 & 1 & 0 & 0\\
\hline
0 & \aug & 0 & 1 & x & \frac{x}{2} + y & 0 & 0 & 1\\
0 & \aug & 0 & 0 & 0 & 0 & 0 &-y & 1\\
0 & \aug & 0 & 0 & 0 & 0 & 1 & 1 & 0\\
0 & \aug &z_0&-1 & 0 & 0 & 0 & 0 & 0\\
1 & \aug &-1 & 0 & 0 & 0 & 0 & 0 & 0\\
x & \aug & 0 & 0 & 0 & 1 & 0 & 0 & 0\\
\frac{x}{2} + y & \aug & 0 & 0 & 1 & 0 & 0 & 0 & 0
\end{pmatrix} =  A \otimes x + B_0 \otimes 1 + B_1 \otimes y\,,
\end{align}
where we again end up lifting the problem to $d = 8$. We compute $G_{\hat{s}}(Z)$ in exactly the same way as before (see Figure~\ref{fig:test-linearization} (right) for a plot of the spectral density of $s$) and specifically evaluate $f(z_0) = G_{\hat{s}}(0)_{11}$ to compute the state we were interested in.
\end{example}

\begin{figure}
\centering
\includegraphics[width = .325 \textwidth]{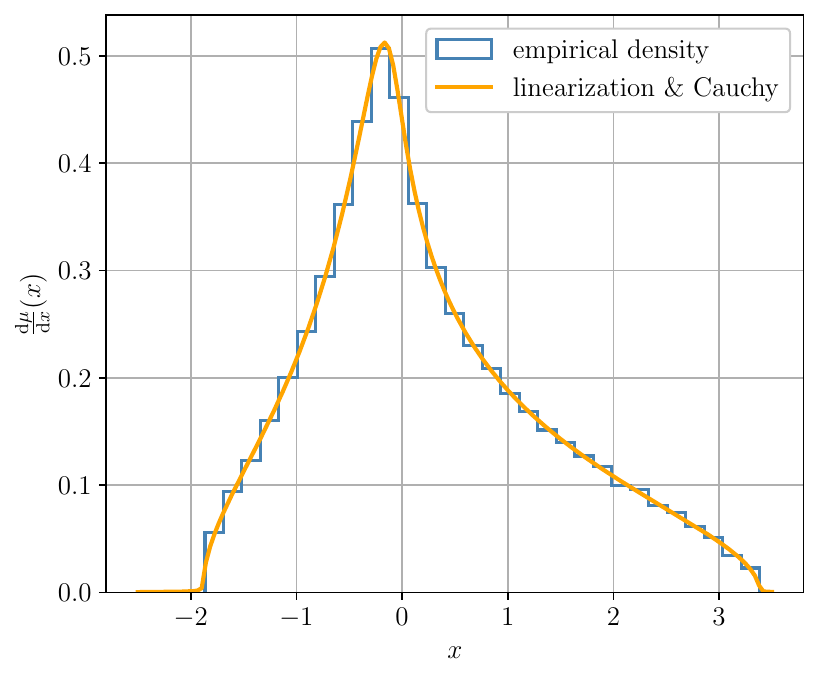}
\includegraphics[width = .325 \textwidth]{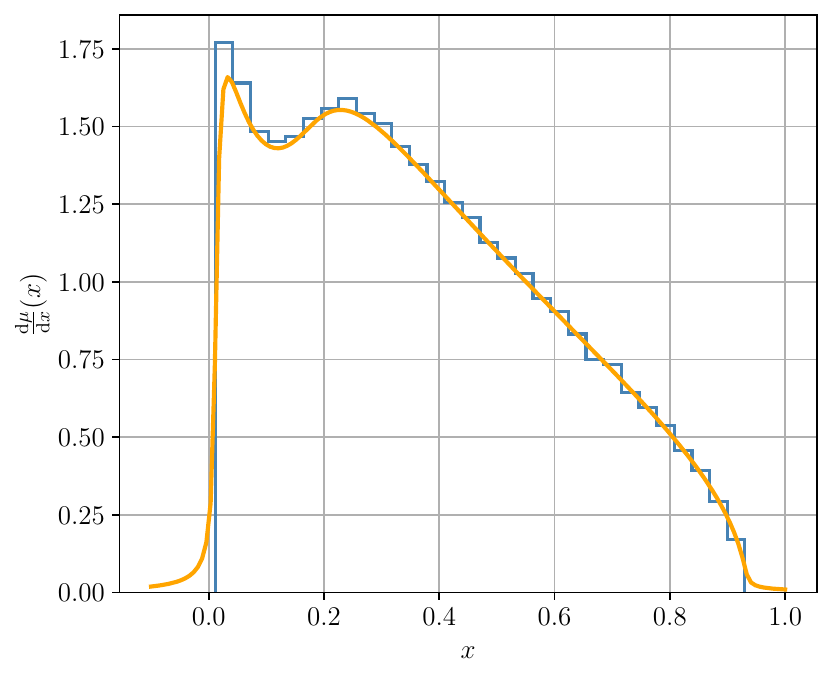}
\includegraphics[width = .325 \textwidth]{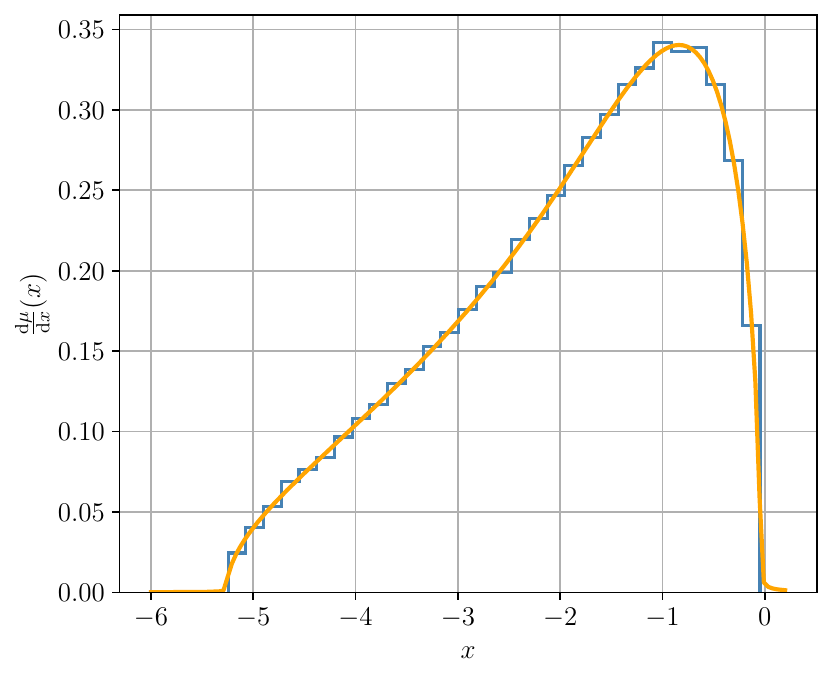}
\caption{Spectral densities, computed using the linearization approach to evaluate a Cauchy transform at $x + i\eta$ slightly above the real axis, compared to empirical spectral densities from one $4000 \times 4000$ random matrix realization each. Left: Example~\ref{ex:lin-poly}, with $p = xy+yx+x^2$ with $x$ a GOE matrix and $y$ a Wishart matrix with parameter $c = 0.3$. Center: Example~\ref{ex:lin-ratio}, for $r = \left(x^{-1} + y^{-1} \right)^{-1}$, and $x,y$ Wishart with parameters $c_x = 0.3$, $c_y = 0.8$. Right: Example~\ref{ex:lin-trace}, for $s = y^{-1} \left( z_0 1 - \left(xy+yx + x^2\right) \right)$ with $z_0 = -1.5$ for $x$ a GOE matrix and $y$ a Wishart matrix with parameter $c = 0.1$. The finite $\eta$ indeed smooths out the spectral density as expected from~\eqref{eq:dens-convolution-cauchy}, which can be seen at the edges of the support of the density. Numerical details: variance parameter $\sigma^2 = 1$ for all Wishart matrices and $\sigma^2 = 1/2$ for all GOE matrices, damped fixed-point iteration with update weight $0.2$ for subordination, 501 Gauss--Legendre quadrature points for evaluating the integrals~\eqref{eq:lifted-cauchy-int}, Cauchy transforms evaluated at $x + i \eta$ with $\eta = 0.005$.}
\label{fig:test-linearization}
\end{figure}

%%%%%%%%%%%%%%%%%%%%%%%%%%%%%%%%%%%%%%%%%%%%%%%%%%%%%%%%%%%%%%

\section{Further numerical results}
\label{app:experiment}

%%%%%%%%%%%%%%%%%%%%%%%%%%%%%%%%%%%%%%%%%%%%%%%%%%%%%%%%%%%%%%

\subsection{Error landscape plots and spectral densities}
\label{app:landscapes}

In this appendix, we show further expected generalization error plots as a function of $n/d$ and $p/d$, similar to Figure~\ref{fig:2d-errors-arctan-plus-reci-cosh-relu} in the main text---where $\sigma = \text{ReLU}$, $\phi = \arctan + 1 / \cosh$---for other activation functions and ridge functions. In Figure~\ref{fig:2d-errors-arctan-relu}, $\sigma = \text{ReLU}$, $\phi = \arctan$, in Figure~\ref{fig:2d-errors-reci-cosh-relu}, $\sigma = \text{ReLU}$, $\phi = 1 / \cosh$, in Figure~\ref{fig:2d-errors-arctan-erf}, $\sigma = \text{erf}$, $\phi = \arctan$, and in Figure~\ref{fig:2d-errors-reci-cosh-erf}, $\sigma = \text{erf}$, $\phi = 1 / \cosh$. Furthermore, Figure~\ref{fig:spectral-density-app} shows the spectral density of the feature matrix $K$ for additional activation functions compared to Figure~\ref{fig:spectral-density} in the main text. Noteworthy observations concerning these additional figures are summarized in Section~\ref{sec:landscapes} of the main text.

\begin{figure}
\centering
\includegraphics[width = \textwidth]{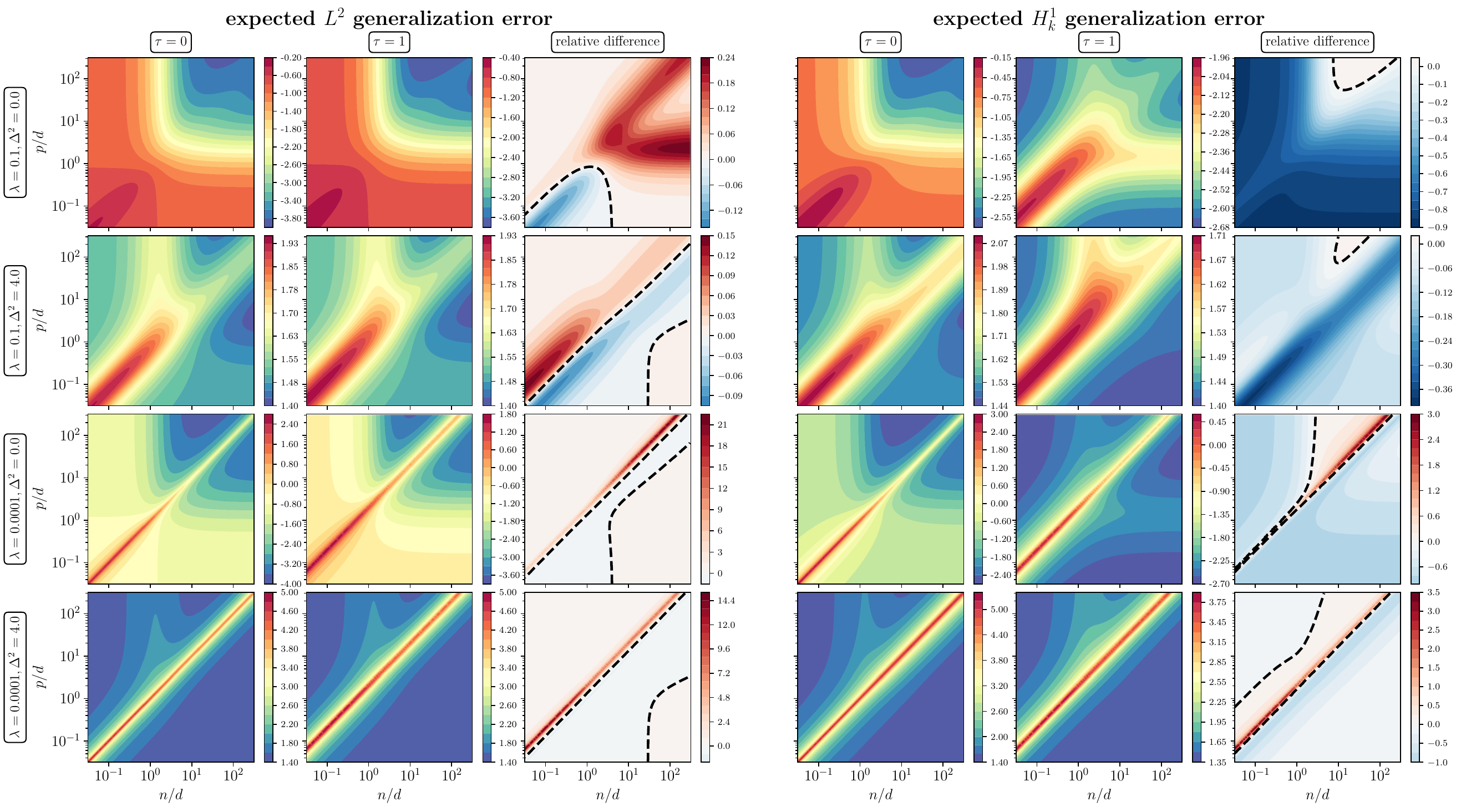}
\caption{See figure~\ref{fig:2d-errors-arctan-plus-reci-cosh-relu} in the main text for explanations; we use $\sigma = \text{ReLU}$, $\phi = \arctan$ here.}
\label{fig:2d-errors-arctan-relu}
\end{figure}

\begin{figure}
\centering
\includegraphics[width = \textwidth]{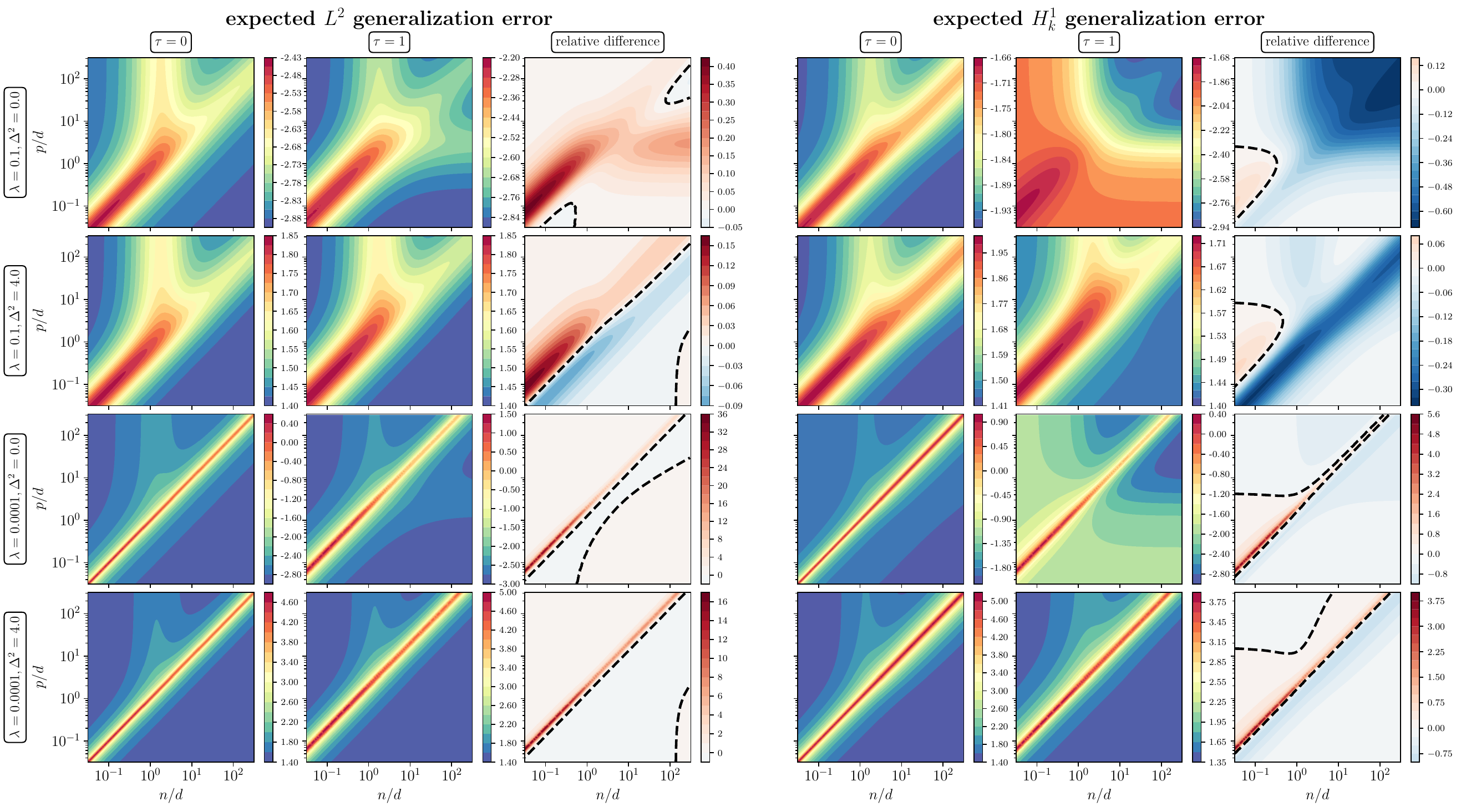}
\caption{See figure~\ref{fig:2d-errors-arctan-plus-reci-cosh-relu} in the main text for explanations; we use $\sigma = \text{ReLU}$, $\phi = 1 / \cosh$ here.}
\label{fig:2d-errors-reci-cosh-relu}
\end{figure}

\begin{figure}
\centering
\includegraphics[width = \textwidth]{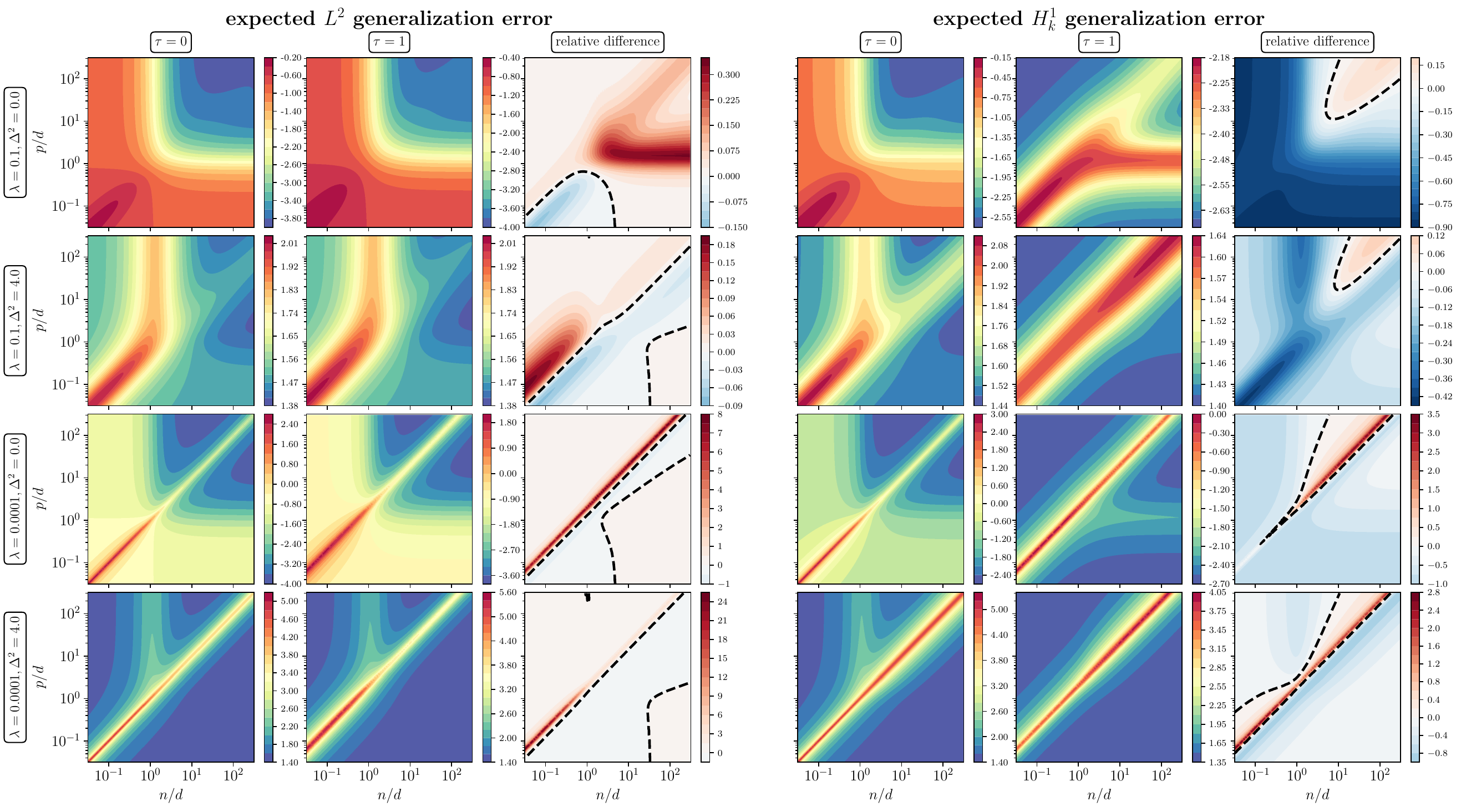}
\caption{See figure~\ref{fig:2d-errors-arctan-plus-reci-cosh-relu} in the main text for explanations; we use $\sigma = \text{erf}$, $\phi = \arctan$ here.}
\label{fig:2d-errors-arctan-erf}
\end{figure}

\begin{figure}
\centering
\includegraphics[width = \textwidth]{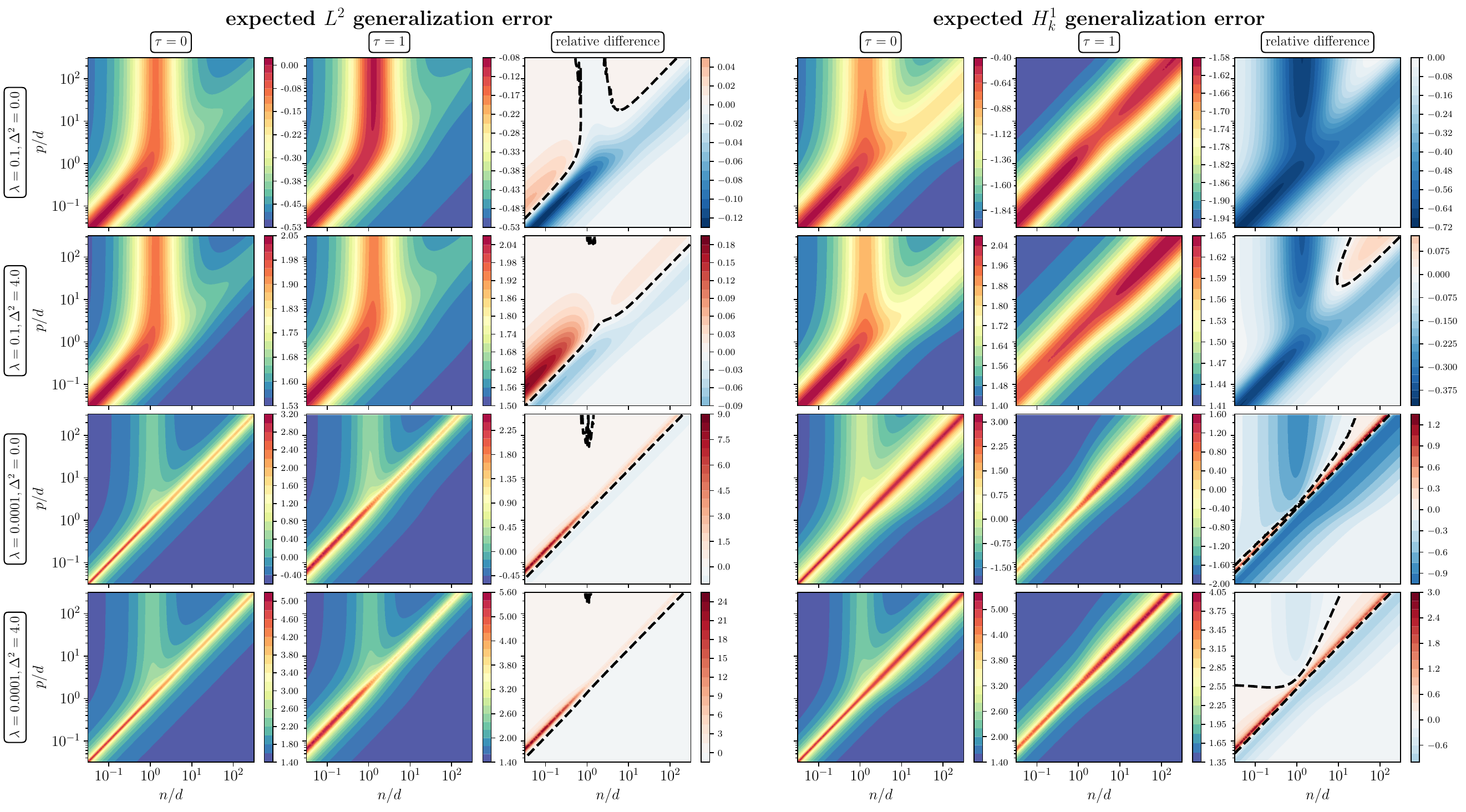}
\caption{See figure~\ref{fig:2d-errors-arctan-plus-reci-cosh-relu} in the main text for explanations; we use $\sigma = \text{erf}$, $\phi = 1 / \cosh$ here.}
\label{fig:2d-errors-reci-cosh-erf}
\end{figure}

\begin{figure}
\centering
\includegraphics[width = \textwidth]{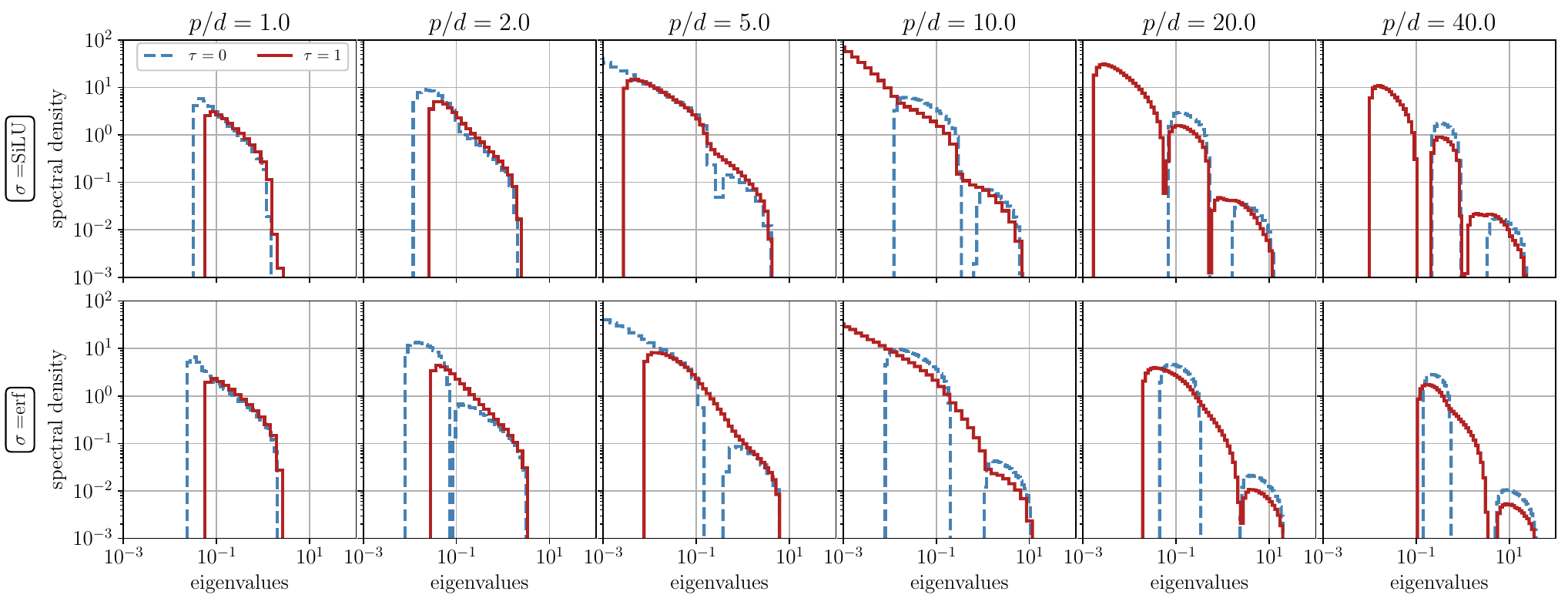}
\caption{See figure~\ref{fig:spectral-density} in the main text for explanations; we use $\sigma = \text{SiLU}$ and $\sigma = \text{erf}$ here instead of $\sigma = \text{ReLU}$.}
\label{fig:spectral-density-app}
\end{figure}

%%%%%%%%%%%%%%%%%%%%%%%%%%%%%%%%%%%%%%%%%%%%%%%%%%%%%%%%%%%%%%

\subsection{Varying the observational noise strength}
\label{app:noise}

In Figure~\ref{fig:atan_noise-app}, we show the influence of observational noise on generalization performance, similar to Section~\ref{sec:noise-influence} of the main text, but for $\sigma = \text{erf}$, $\phi = \arctan$ here. Since both functions are odd, the first Hermite coefficient of their derivatives vanishes, so this corresponds to a setting where neither the true function gradient, nor the network gradient, depends on $x$ in the proportional asymptotics limit. Consequently, the $H^1_k$ error under Sobolev training in the bottom right of Figure~\ref{fig:atan_noise-app} is  comparatively unusual in that (i) the gradient predictions from highly underparameterized networks generalize as well as those from highly overparameterized networks, and (ii) the generalization errors as $p/n\to\infty$ saturate to the same level independently of the noise~$\Delta$.

\begin{figure}
            \centering
            \begin{tabular}{c c c}
            \includegraphics[width=0.45\linewidth]{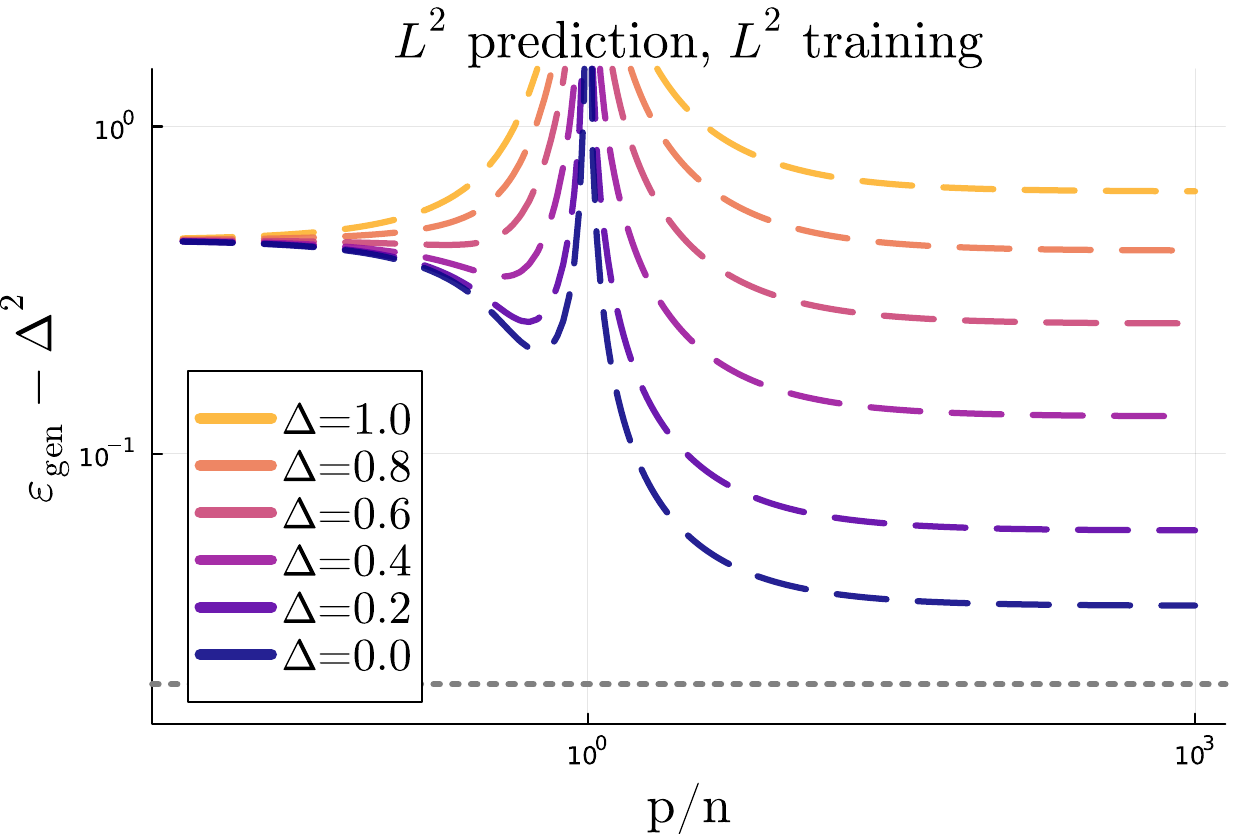}
            & & \includegraphics[width=0.45\linewidth]{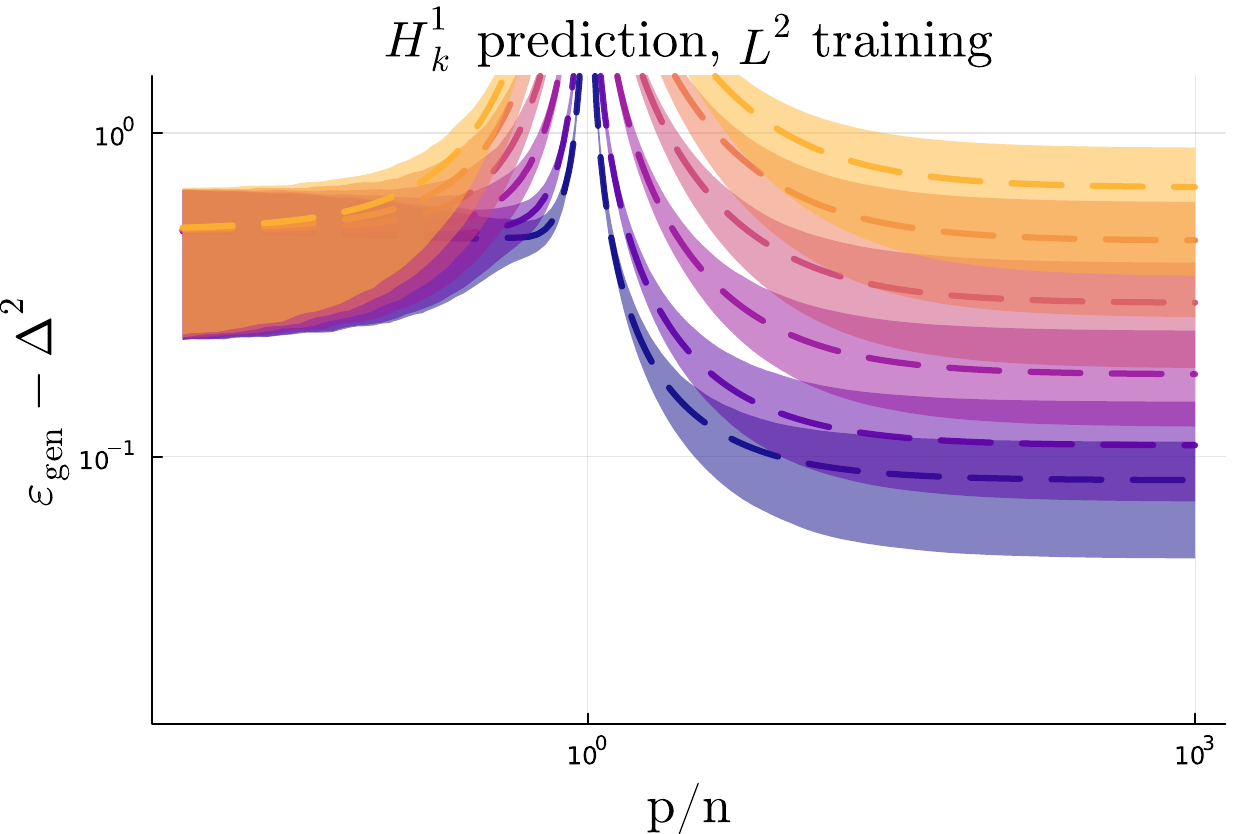} \\
            \includegraphics[width=0.45\linewidth]{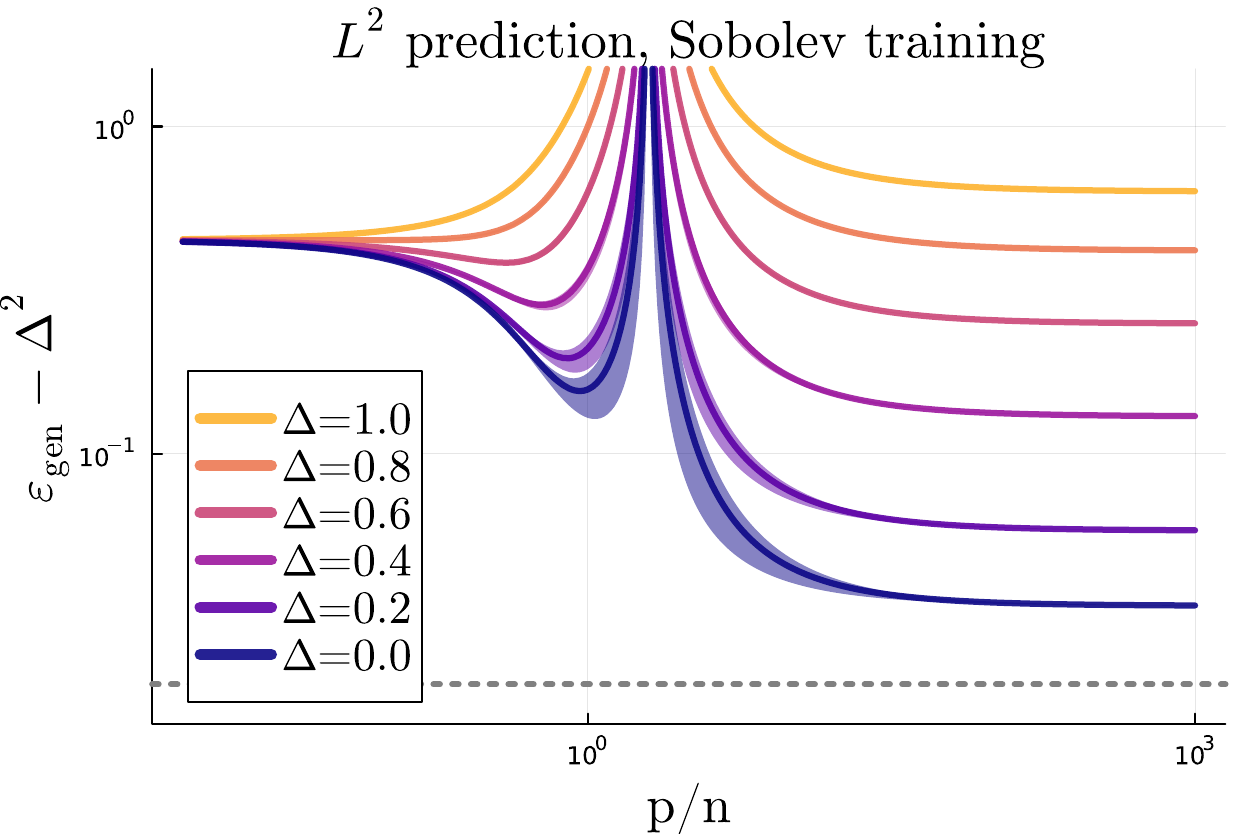}
            & & \includegraphics[width=0.45\linewidth]{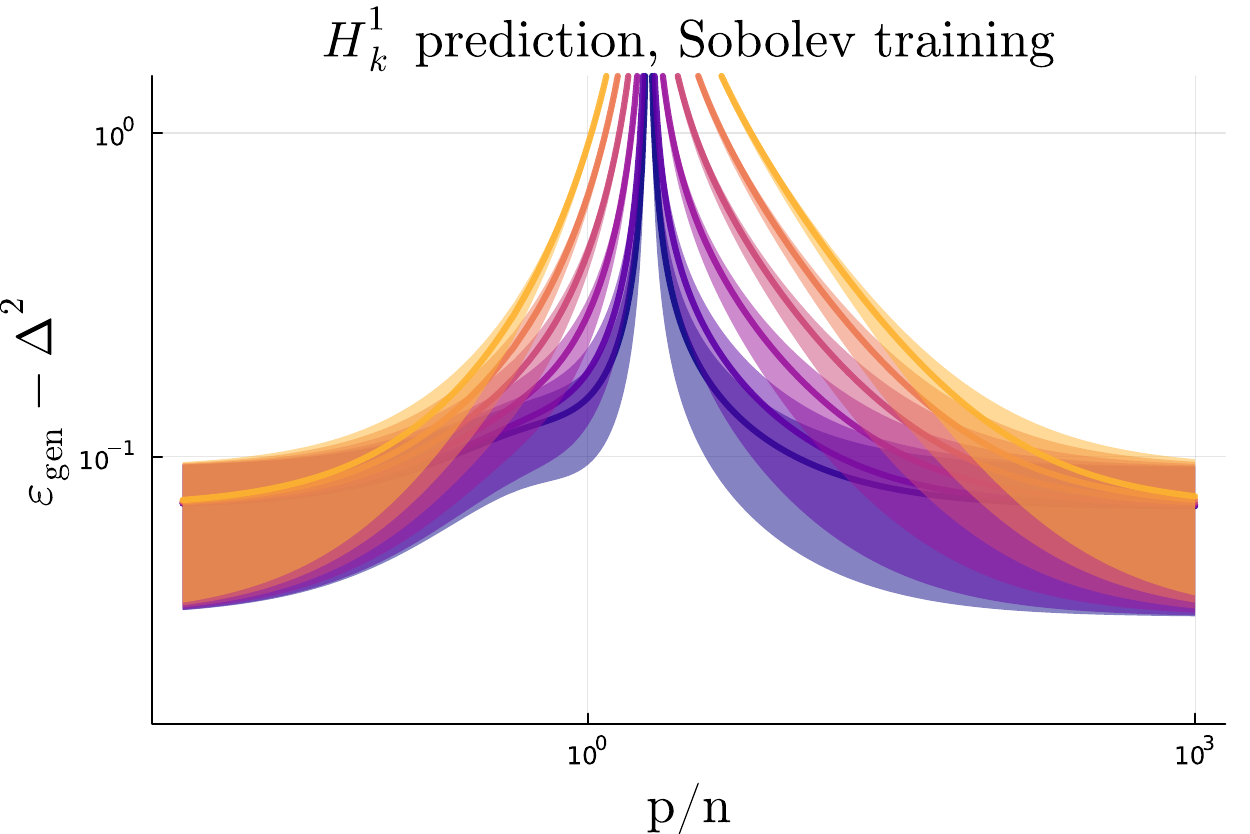} \\
            \end{tabular}
            \caption{See figure~\ref{fig:diff_noise} in the main text for explanations; we use $\sigma = \text{erf}$, $\phi = \arctan$ here.}
       \label{fig:atan_noise-app} 
       \end{figure}

%%%%%%%%%%%%%%%%%%%%%%%%%%%%%%%%%%%%%%%%%%%%%%%%%%%%%%%%%%%%%%

\subsection{Varying $\lambda$}
\label{app:lbda}

Complementing the results shown in Figure~\ref{fig:lbda} in the main text with $\sigma = \text{ReLU}$, $\phi = \arctan + 1 / \cosh$, 
we show the effect of varying $\lambda$ for odd $\phi = \arctan$, $\sigma = \text{ReLU}$, in Figure~\ref{fig:lbda-arctan-relu}, and for even $\phi = 1 / \cosh$, $\sigma = \text{erf}$, in Figure~\ref{fig:lbda-reci-cosh-erf}. In Figure~\ref{fig:lbda-arctan-relu}, for underparameterized models---both in the high and low signal-to-noise regimes---we observe that the inclusion of gradient information uniformly improves on the gradient predictions from $L^2$ training for all $\lambda$. Past the interpolation threshold, however, incorporating gradient information becomes detrimental to predicting the teacher gradient at new inputs when regularization is small. While this degradation may be expected when there is strong noise in the data, Figure~\ref{fig:lbda-arctan-relu} demonstrates that even interpolating \emph{noiseless} gradient training data is unfavorable when compared to not having this additional information altogether. Optimal gradient prediction performance of Sobolev training in Figure~\ref{fig:lbda-arctan-relu} is achieved with~$\lambda \uparrow \infty$, meaning with optimal readout weights $w^* \approx 0$, independently of whether there is noise in the data. It is hence optimal to only learn the mean $s_b$ and to ignore all other information at large~$\lambda$. We note that optimality of large regularization has also been observed in different contexts, e.g., by~\citet{baglioni-pacelli-aiudi-etal:2024} for shallow Bayesian neural networks in the proportional asymptotics limit, and is also present already for $L^2$ training when $\phi$ is even as in Figure~\ref{fig:lbda-reci-cosh-erf}. As discussed throughout the main text, in Figure~\ref{fig:lbda-arctan-relu}, it can be traced back to $\phi'$ being even, so that the true projected gradient effectively does not depend on $x$. As a consequence, large regularization is optimal as it leads to the Sobolev-trained network gradient correctly representing the gradient mean, but none of the additional noise from the linearization of $\phi'$ (as in~\eqref{eq:get-sigma-replacement}) in the proportional asymptotics limit~\eqref{eq:prop-asymp-def}.

\begin{figure}
\centering
\includegraphics[width = .8\textwidth]{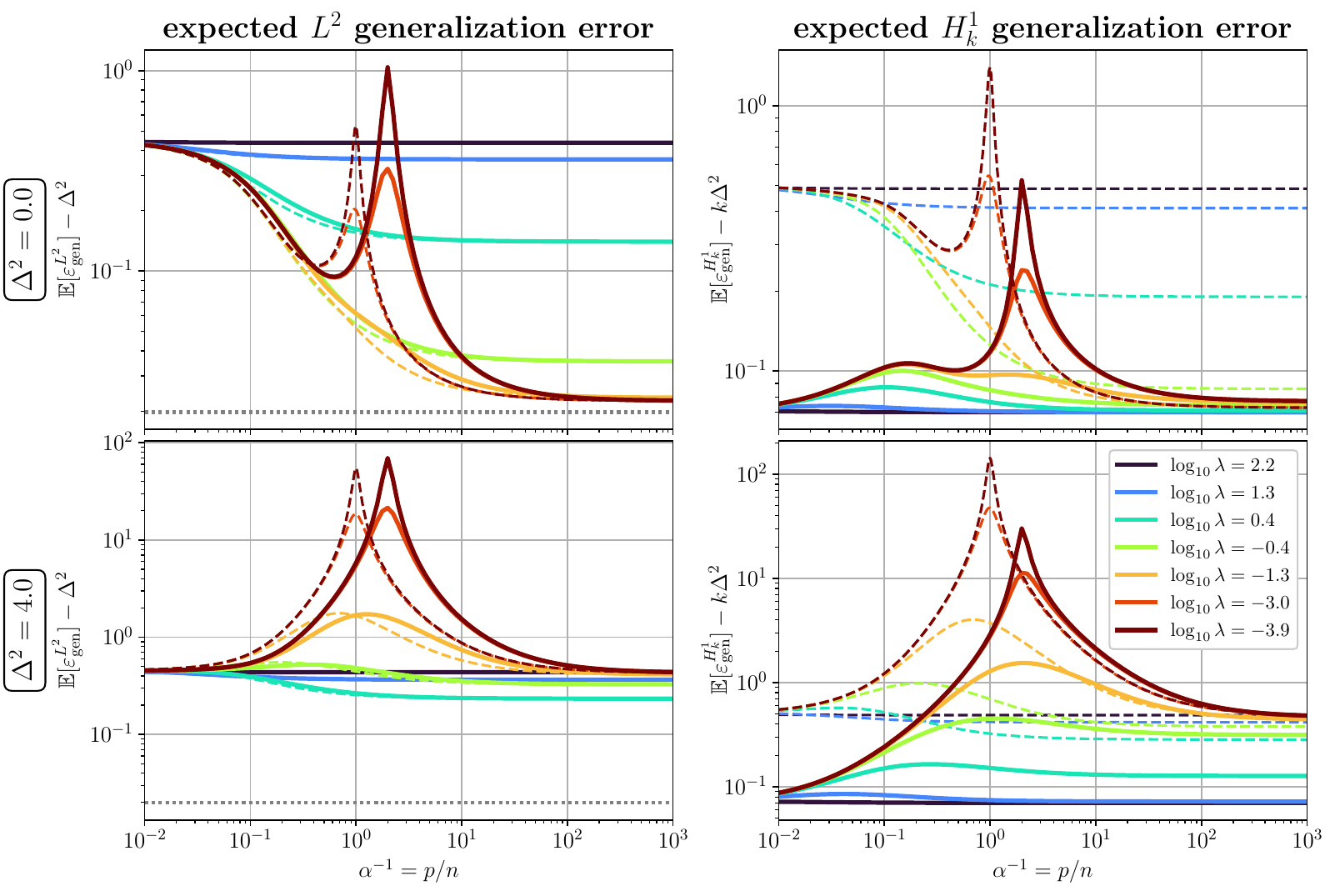}
\caption{See figure~\ref{fig:lbda} in the main text for explanations; we use $\sigma = \text{ReLU}$, $\phi = \arctan$ here.}
\label{fig:lbda-arctan-relu}
\end{figure}

\begin{figure}
\centering
\includegraphics[width = .8\textwidth]{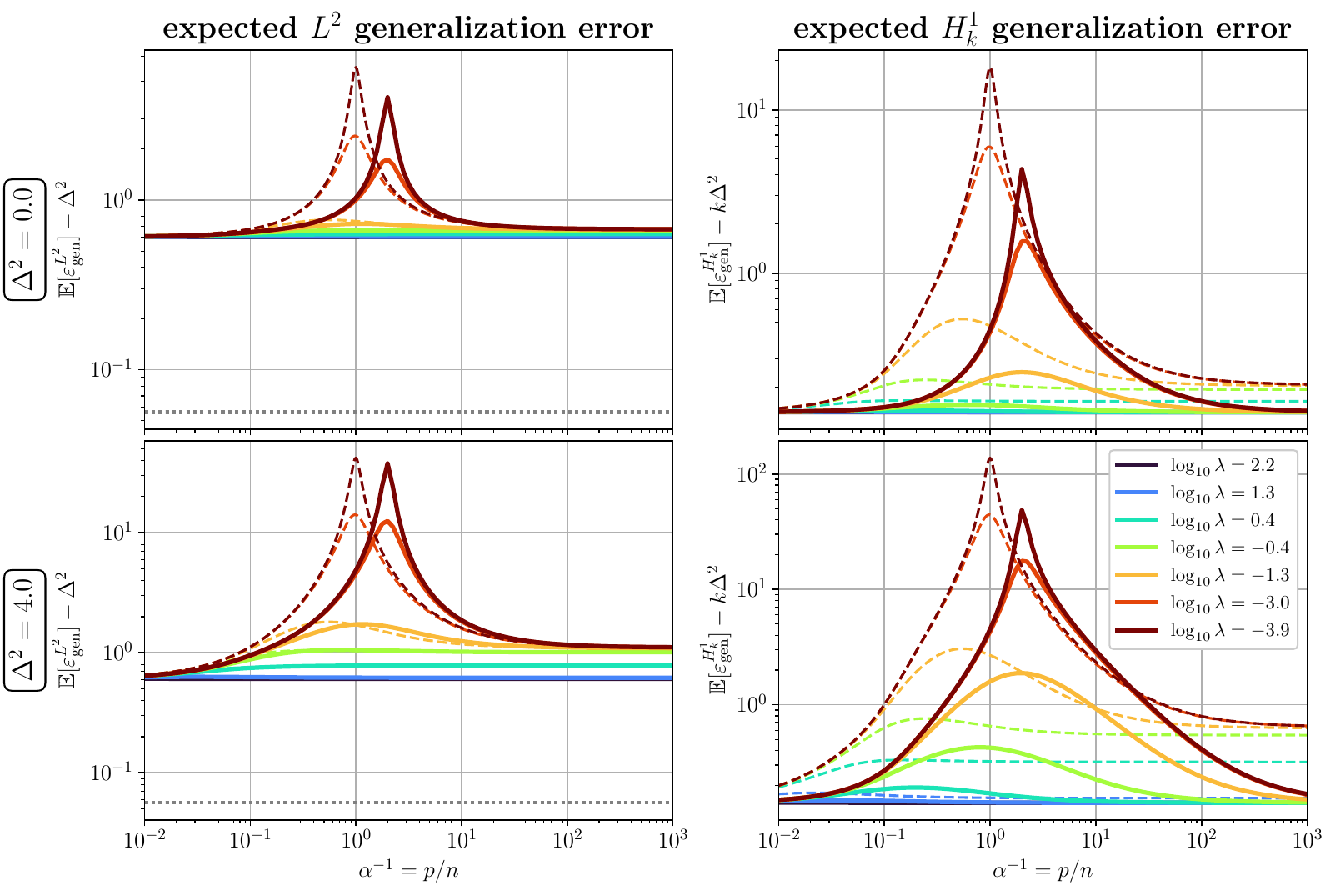}
\caption{See figure~\ref{fig:lbda} in the main text for explanations; we use $\sigma = \text{erf}$, $\phi = 1  /\cosh$ here.}
\label{fig:lbda-reci-cosh-erf}
\end{figure}

%%%%%%%%%%%%%%%%%%%%%%%%%%%%%%%%%%%%%%%%%%%%%%%%%%%%%%%%%%%%%%

\subsection{Gradient cost model comparison}
\label{app:cost}

\begin{figure}
            \centering
            \begin{tabular}{c c c}
            \includegraphics[width=0.45\linewidth]{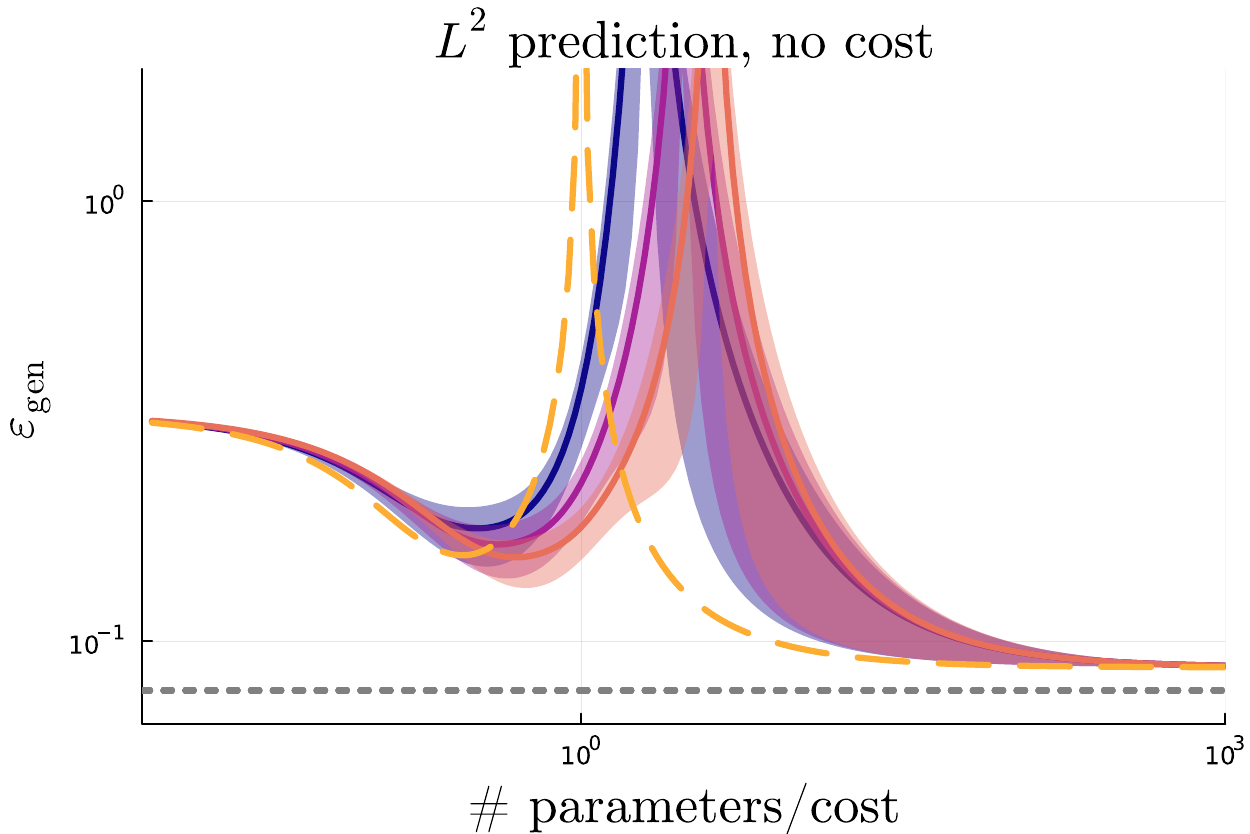}
            & & \includegraphics[width=0.45\linewidth]{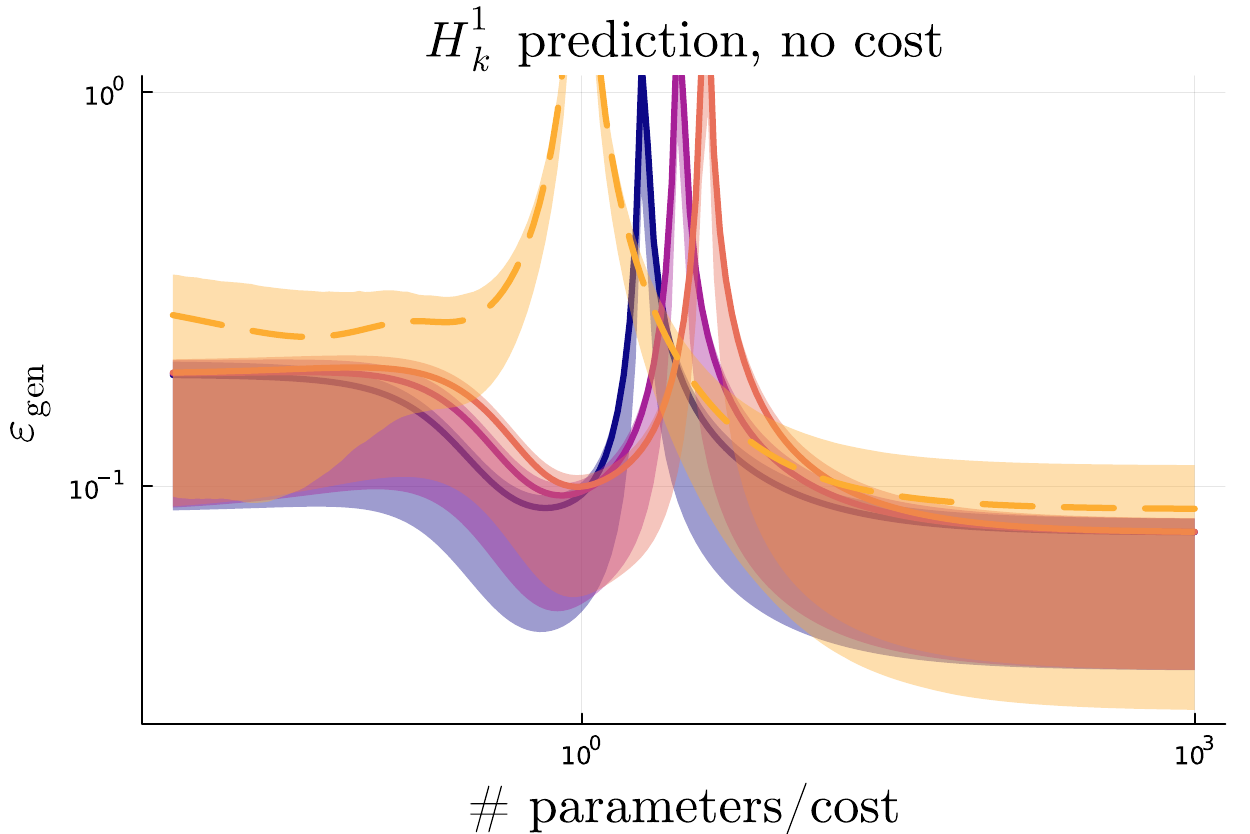} \\
            \includegraphics[width=0.45\linewidth]{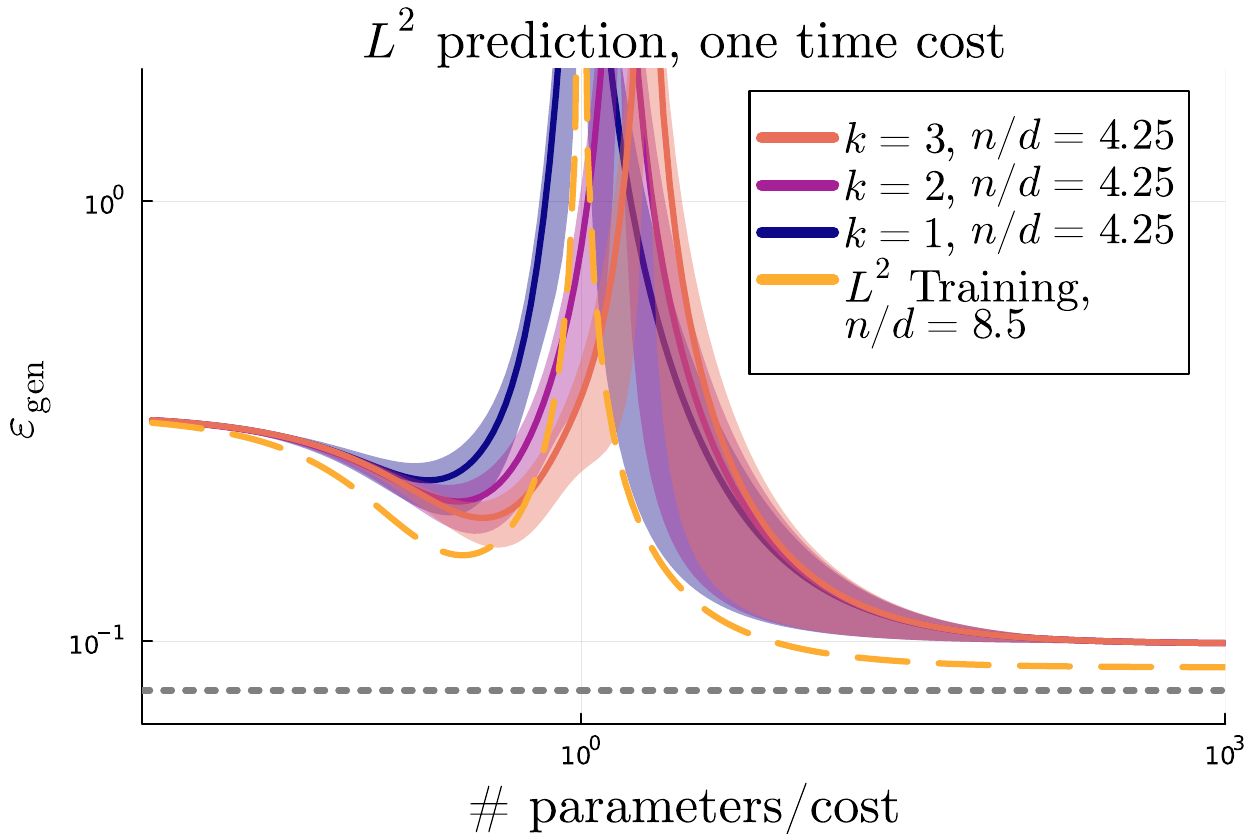}
            & & \includegraphics[width=0.45\linewidth]{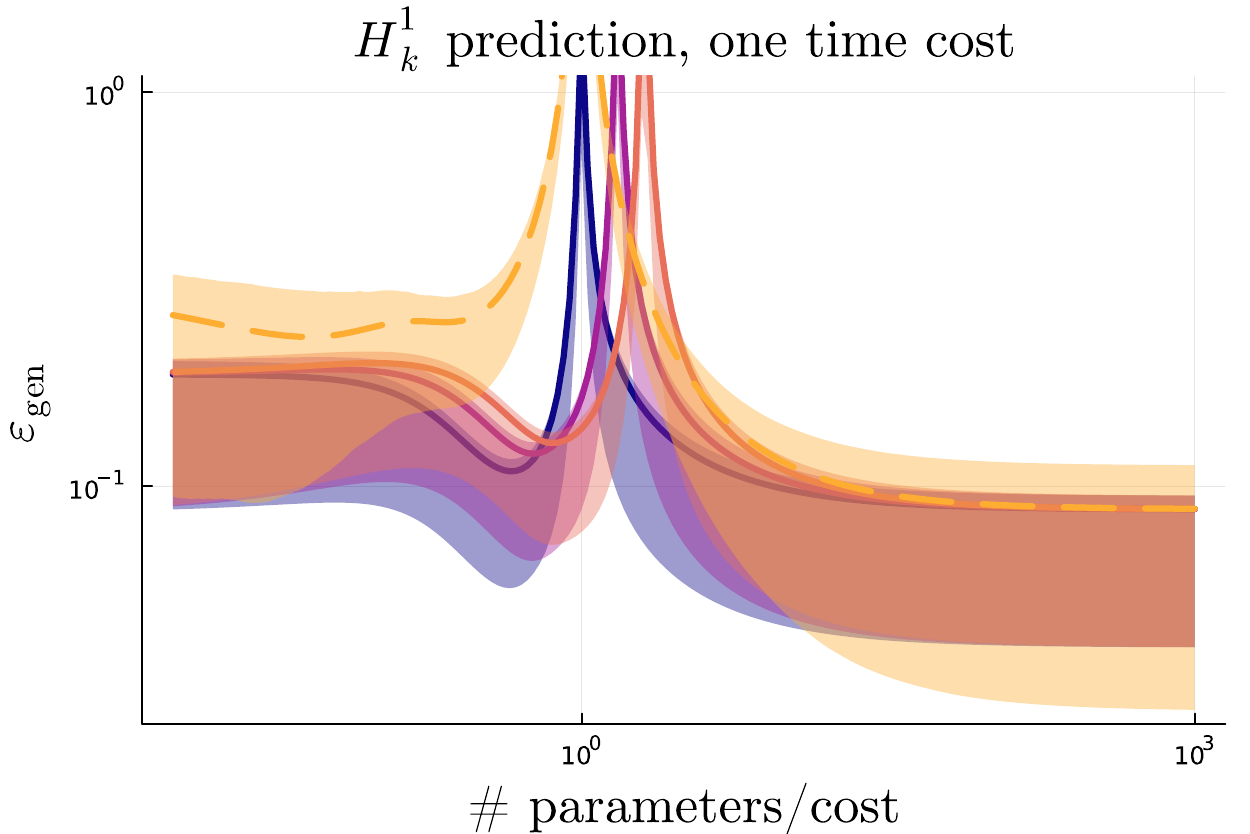} \\
            \end{tabular}
            \caption{See Figure~\ref{fig:k_effect} in the main text for explanations. As in Figure~\ref{fig:k_effect}, we use $\sigma = \text{SiLU}$, $\phi(\omega) = \omega / 2 - \exp \{-\omega^2 / 2\}$ here. First row: case where the cost of obtaining gradients is negligible next to the cost of observations. Consequently, for all curves in these subfigures, $n/d=8.5$. Second row: computing all gradients incurs a one time cost equivalent to the cost of computing observations. To capture the disparity in training settings while keeping $d$ constant, we plot $L^2$ training results for $n/d=8.5$ and Sobolev training results for $n/d=4.25$. For the case where each projected dimension of the gradients costs as much as the function observation, see Figure~\ref{fig:k_effect}.
}
\label{fig:k_effect_app}
\end{figure}

\begin{figure}
            \centering
            \begin{tabular}{c c c}\includegraphics[width=0.45\linewidth]{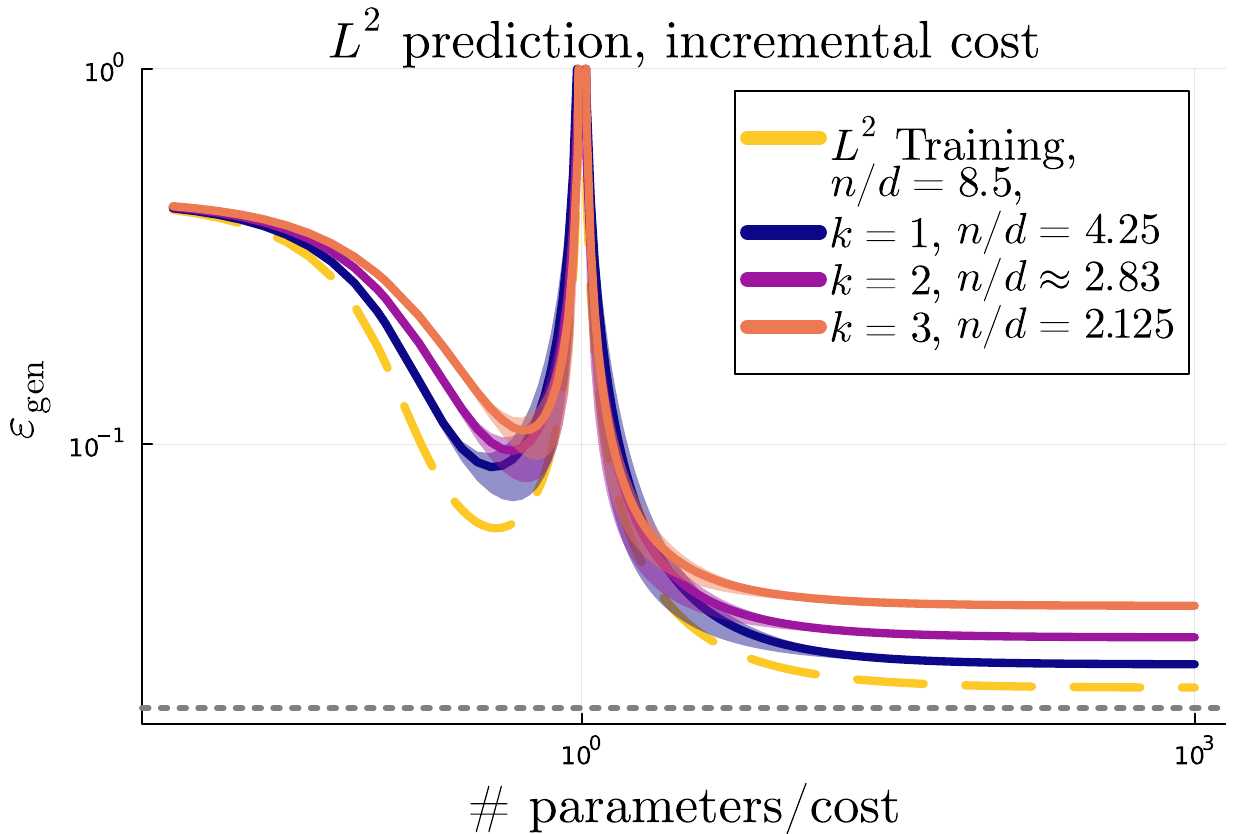}
            & & \includegraphics[width=0.45\linewidth]{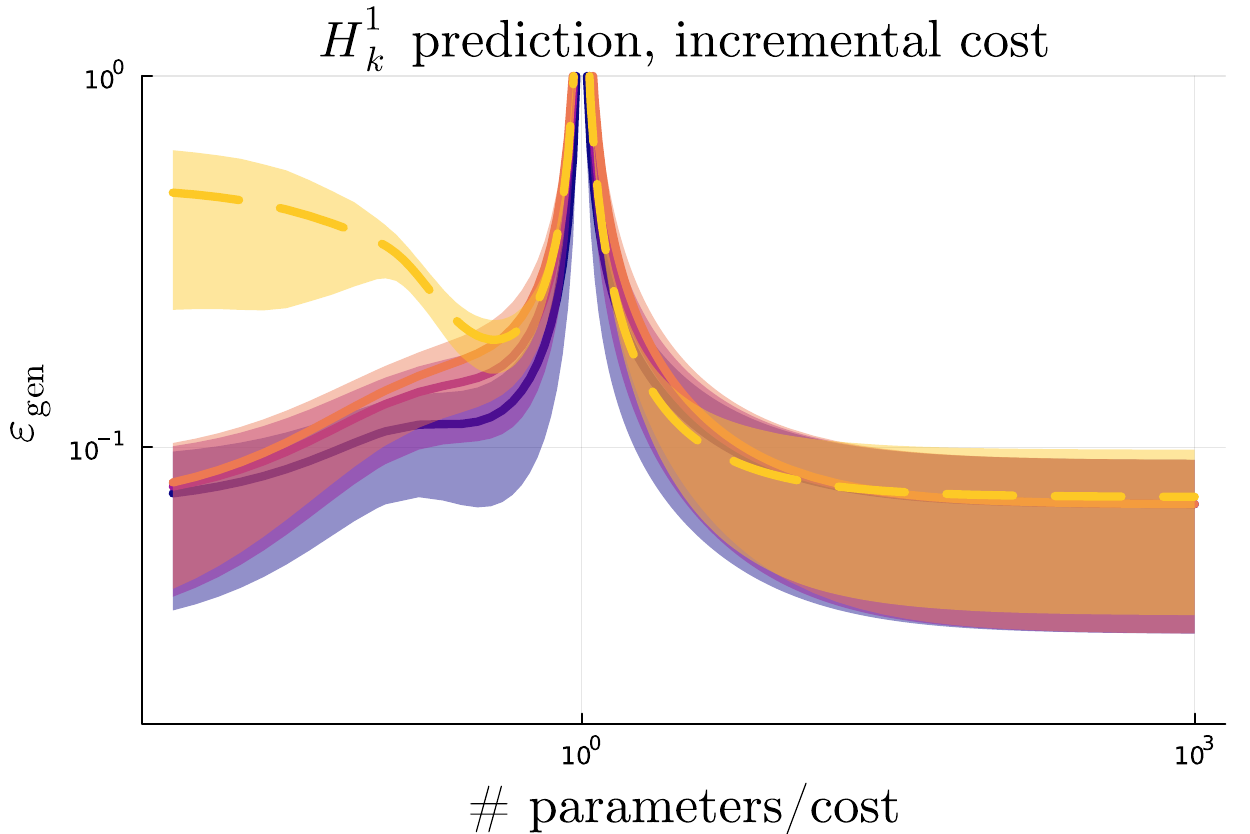} \\
            \includegraphics[width=0.45\linewidth]{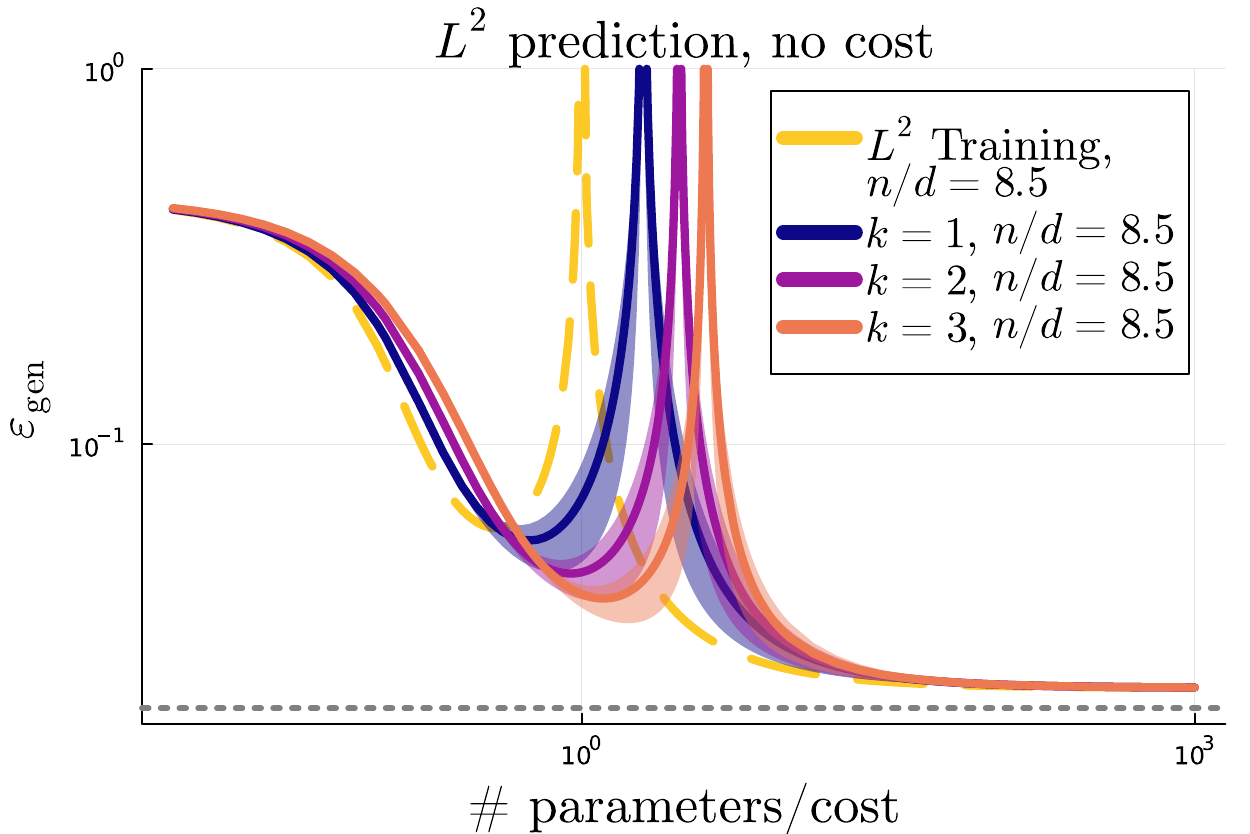}
            & & \includegraphics[width=0.45\linewidth]{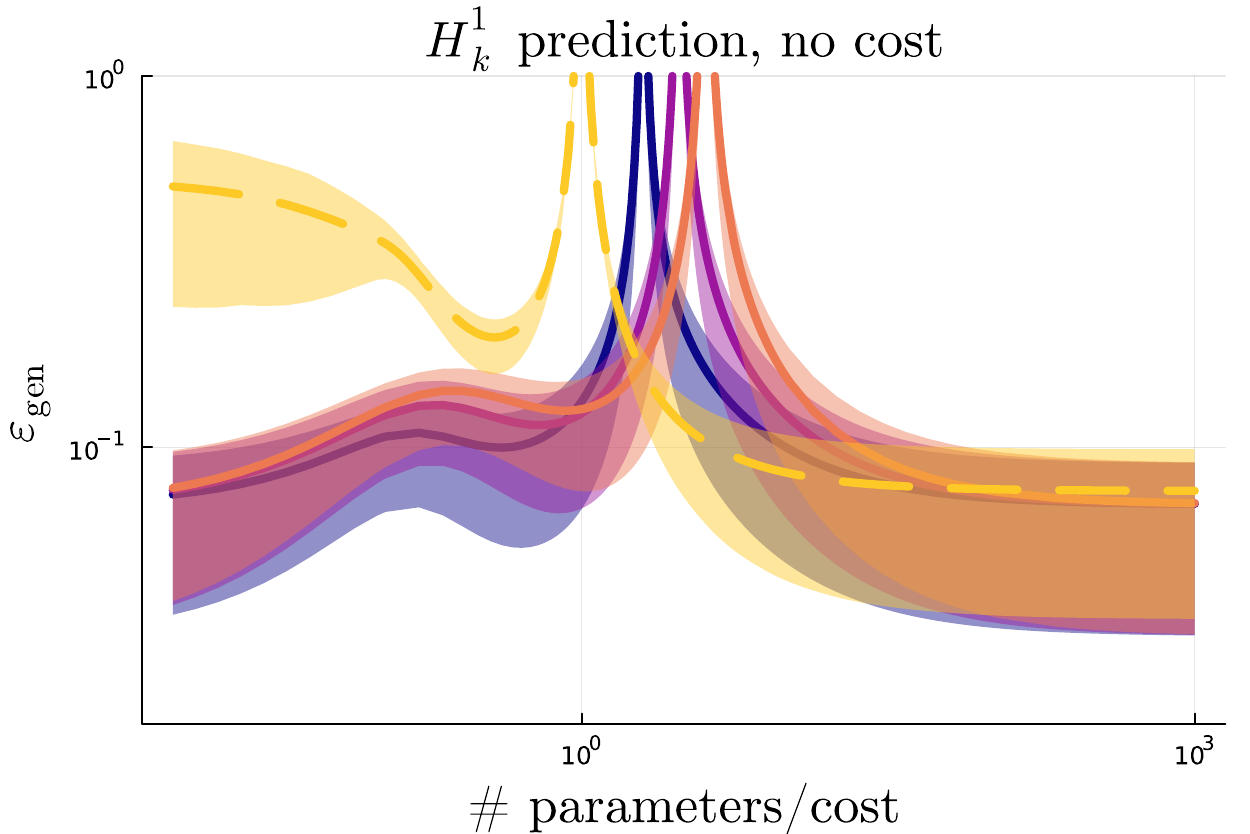} \\
            \includegraphics[width=0.45\linewidth]{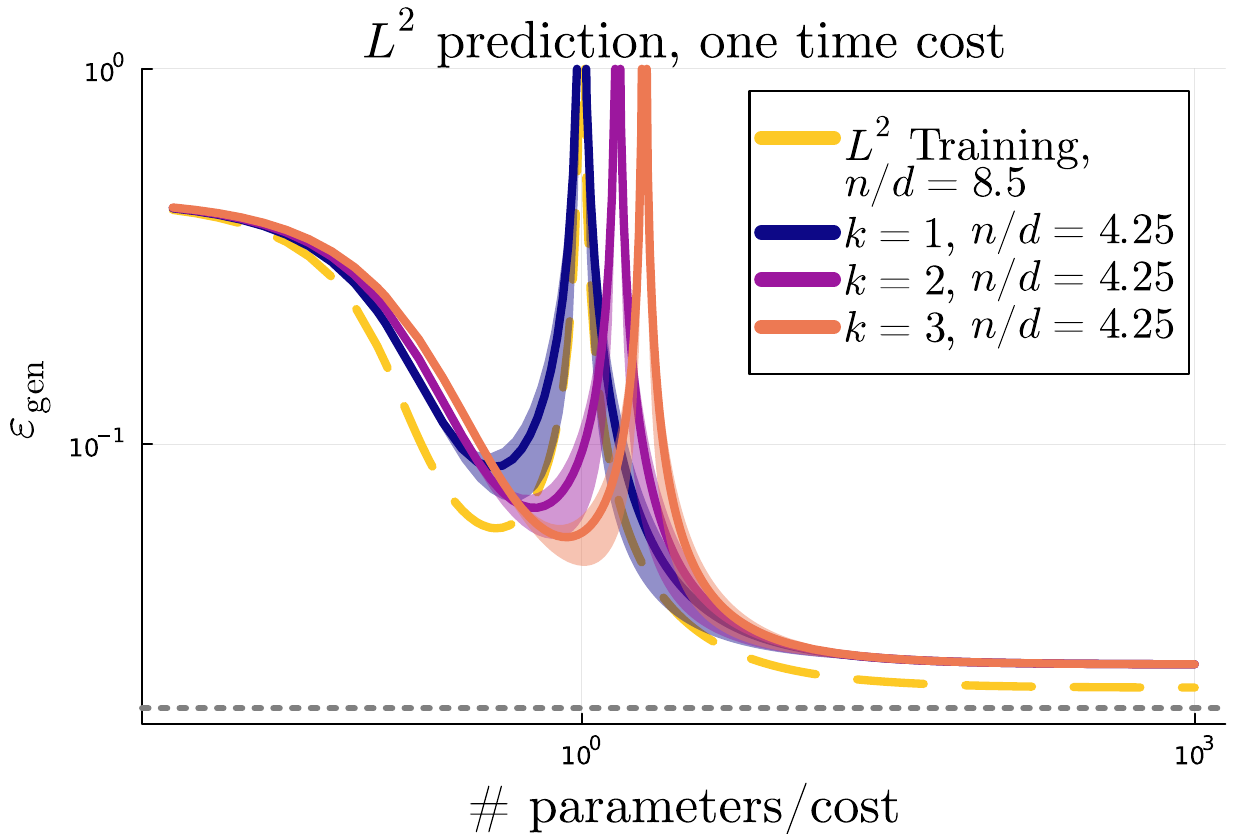}
            & & \includegraphics[width=0.45\linewidth]{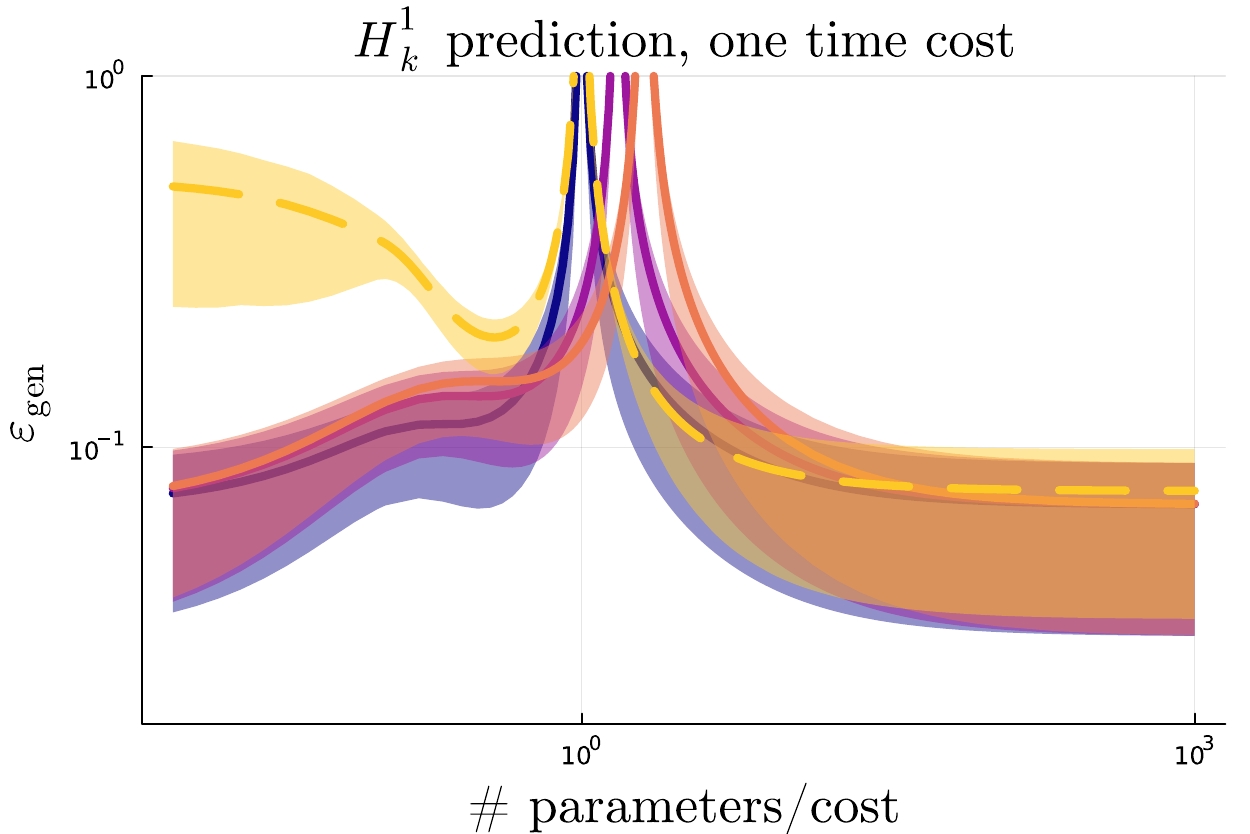} \\
            \end{tabular}
            \caption{See Figures~\ref{fig:k_effect} and~\ref{fig:k_effect_app} for explanations; we use $\sigma=\textrm{erf}$, $\phi=\arctan$ here.}\label{fig:k_effect_app_erf_arctan}
\end{figure}
       
Here, we expand on the study of gradient cost presented in Section~\ref{sec:varying-k}. We consider three cost models: (i) the ``no cost'' setting (top row of Figure~\ref{fig:k_effect_app}) where gradients are obtained with no expense beyond the function computation, e.g., when they are analytically available; (ii) the ``one time cost'' model (bottom row of Figure~\ref{fig:k_effect_app}) in which the entire gradient is computed at cost commensurate to sampling the function data e.g., determining derivatives via adjoints~\cite{plessix:2006}; and (iii) the ``incremental cost'' case (Figure~\ref{fig:k_effect}) where each dimension of the projected gradient is as expensive as a function evaluation e.g., found through a finite difference scheme in directions defined by $V_k$. The costs associated with each gradient sampling model scale as $n$, $2n$, and $(k+1)n$, respectively. In the first row of Figure~\ref{fig:k_effect_app}, the cost model is the same considered for Figures~\ref{fig:intro},~\ref{fig:2d-errors-arctan-plus-reci-cosh-relu},~\ref{fig:diff_noise}, and~\ref{fig:lbda}; for all curves in these subfigures, the ratio of the number of parameters to the cost is equal to $p/n$ for $L^2$ training, the dashed curve. Because gradients have non-negligible cost in Figure~\ref{fig:k_effect} and the second row of Figure~\ref{fig:k_effect_app}, for a given point on the horizontal axis, curves may not share the same number of training locations $n$.\\ 

For the ``no cost'' model, the horizontal axis corresponds to $p/n$, and we observe a shift in the interpolation threshold to~$k+1$. Consequently, the parameter to cost ratio determines whether it is advantageous to incorporate more derivative projections or to disregard them altogether. However, asymptotically for $p/n \uparrow \infty$, we observe that incorporating an arbitrary number of derivative projections~$k$ achieves the same generalization performance as pure $L^2$ training. For the $H^1_k$ error at large overparameterization, we see that Sobolev training at any $k \in \{1,2,3\}$ yields the same generalization error in this cost model, which lowers the mean error and contracts the quantiles compared to $L^2$ training.\\

Similarly, the double-descent peak shifts under the ``one time cost'' model (second row in Figure~\ref{fig:k_effect_app}) although here the interpolation threshold for $L^2$ training aligns with that of Sobolev training when $k=1$ because in both settings the total cost units equal the number of training points, function evaluation or gradient. The lowest $L^2$ generalization error is obtained in the asymptotic limit of parameter to cost ratio, and we observe a clear detriment from gradient data.  This trend is further exacerbated under the ``incremental cost'' model as discussed in Section~\ref{sec:varying-k} in the main text.\\

In addition to the results shown in Figures~\ref{fig:k_effect} and~\ref{fig:k_effect_app} for the non-degenerate choice $\sigma = \text{SiLU}$, $\phi(\omega) = \omega / 2 - \exp \{-\omega^2 / 2\}$ (where all low-order Hermite coefficients are non-vanishing), we show in Figure~\ref{fig:k_effect_app_erf_arctan} the same cost comparison for $\sigma = \text{erf}$, $\phi = \arctan$ (where the first Hermite coefficients of both first derivatives vanishes). The main qualitative difference is that in Figure~\ref{fig:k_effect_app_erf_arctan}, there is a slight benefit to using Sobolev training at large overparameterization for $H^1_k$ prediction within all cost models considered.\\

While we do not explore this direction further here, we can also consider different \emph{noise models} associated with each gradient sampling model. For example, if gradients are computed via finite differencing, it is natural to assume the gradient errors from truncating the Taylor series are strongly correlated with the function data. While we show for Gaussian noise models that correlations do not impact generalization, it is unclear whether Gaussianity adequately captures these noise statistics in this setting. We leave this investigation to future work.

%%%%%%%%%%%%%%%%%%%%%%%%%%%%%%%%%%%%%%%%%%%%%%%%%%%%%%%%%%%%%%
%%%%%%%%%%%%%%%%%%%%%%%%%%%%%%%%%%%%%%%%%%%%%%%%%%%%%%%%%%%%%%
%%%%%%%%%%%%%%%%%%%%%%%%%%%%%%%%%%%%%%%%%%%%%%%%%%%%%%%%%%%%%%

\bibliography{bib}
 
\end{document}